\documentclass[onecolumn,10pt,journal,compsoc]{IEEEtran}
\usepackage{epsfig}
\usepackage{amsmath}
\usepackage{amssymb}
\usepackage{algorithm}
\usepackage{algorithmic}
\usepackage{multirow}
\usepackage{url}
\usepackage[tight,footnotesize]{subfigure}
\usepackage{amsthm}
\usepackage{mathrsfs}
\usepackage[pagebackref=false,breaklinks=true,letterpaper=true,colorlinks,citecolor=blue,bookmarks=true]{hyperref}
\usepackage{enumerate}

\newtheorem{theorem}{Theorem}
\newtheorem{lemma}{Lemma}

\newtheorem{corollary}{Corollary}
\newtheorem{assumption}{Assumption}

\setlength{\abovecaptionskip}{1pt}
\setlength{\belowcaptionskip}{-1pt}

\linespread{1.36}

\hyphenation{op-tical net-works semi-conduc-tor}
\begin{document}

\vspace{3mm}
\title{Larger is Better: The Effect of Learning Rates Enjoyed by Stochastic Optimization with Progressive Variance Reduction}

\author{~\\
\large{Fanhua~Shang}\\
\vspace{3mm}
\small{Department of Computer Science and Engineering, The Chinese University of Hong Kong}\\
\small{fhshang@cse.cuhk.edu.hk}\\
~\\
\vspace{-3mm}
\today
\IEEEcompsocitemizethanks{\IEEEcompsocthanksitem All the codes of VR-SGD and some related variance reduced stochastic methods can be downloaded from the author's website: \url{https://sites.google.com/site/fanhua217/publications}.}
}

\IEEEtitleabstractindextext{
\begin{abstract}
In this paper, we propose a simple variant of the original stochastic \emph{variance reduction} gradient (SVRG)~\cite{johnson:svrg}, where hereafter we refer to as the \emph{variance reduced stochastic gradient descent} (VR-SGD). Different from the choices of the \emph{snapshot point} and \emph{starting point} in SVRG and its proximal variant, Prox-SVRG~\cite{xiao:prox-svrg}, the two vectors of each epoch in VR-SGD are set to the \emph{average} and \emph{last iterate} of the previous epoch, respectively. This setting allows us to use much larger learning rates or step sizes than SVRG, e.g., $3/(7L)$ for VR-SGD vs.\ $1/(10L)$ for SVRG, and also makes our convergence analysis more challenging. In fact, a larger learning rate enjoyed by VR-SGD means that the variance of its stochastic gradient estimator asymptotically approaches zero more rapidly. Unlike common stochastic methods such as SVRG and proximal stochastic methods such as Prox-SVRG, we design two different update rules for \emph{smooth} and \emph{non-smooth} objective functions, respectively. In other words, VR-SGD can tackle non-smooth and/or non-strongly convex problems directly without using any reduction techniques such as quadratic regularizers. Moreover, we analyze the \emph{convergence properties} of VR-SGD for \emph{strongly convex} problems, which show that VR-SGD attains a \emph{linear} convergence rate. We also provide the \emph{convergence guarantees} of VR-SGD for \emph{non-strongly convex} problems. Experimental results show that the performance of VR-SGD is significantly better than its counterparts, SVRG and Prox-SVRG, and it is also much better than the \emph{best known} stochastic method, Katyusha~\cite{zhu:Katyusha}.
\end{abstract}

\begin{IEEEkeywords}
Stochastic optimization, stochastic gradient descent (SGD), proximal stochastic gradient, variance reduction, iterate averaging, snapshot and starting points, strongly convex and non-strongly convex, smooth and non-smooth
\end{IEEEkeywords}}

\maketitle

\newpage
\section{Introduction}
In this paper, we focus on the following \emph{composite} convex optimization problem:
\vspace{-1mm}
\begin{equation}\label{equ01}
\min_{x\in\mathbb{R}^{d}} F(x)\stackrel{\rm{def}}{=}\frac{1}{n}\sum^{n}_{i=1}f_{i}(x)+g(x)
\end{equation}
where $f(x)\!:=\!\frac{1}{n}\!\sum^{n}_{i=1}\!f_{i}(x)$, $f_{i}(x)\!:\!\mathbb{R}^{d}\!\rightarrow\!\mathbb{R},\,i\!=\!1,\ldots,n$ are the smooth convex functions, and $g(x)$ is a relatively simple (but possibly non-differentiable) convex function (referred to as a regularizer). The formulation (\ref{equ01}) arises in many places in machine learning, signal processing, data science, statistics and operations research, such as \emph{regularized empirical risk minimization} (ERM). For instance, one popular choice of the component function $f_{i}(\cdot)$ in binary classification problems is the logistic loss, i.e., $f_{i}(x)\!=\!\log(1+\exp(-b_{i}a^{T}_{i}x))$, where $\{(a_{1},b_{1}),\ldots,(a_{n},b_{n})\}$ is a collection of training examples, and $b_{i}\!\in\!\{\pm1\}$. Some popular choices for the regularizer include the $\ell_{2}$-norm regularizer (i.e., $g(x)\!=\!(\lambda_{1}/2)\|x\|^{2}$), the $\ell_{1}$-norm regularizer (i.e., $g(x)\!=\!\lambda_{2}\|x\|_{1}$), and the elastic-net regularizer (i.e., $g(x)\!=\!(\lambda_{1}/2)\|x\|^{2}\!+\!\lambda_{2}\|x\|_{1}$), where $\lambda_{1}\!\geq\!0$ and $\lambda_{2}\!\geq\!0$ are two regularization parameters. So far examples of some other applications include deep neural networks~\cite{johnson:svrg,krizhevsky:deep,sutskever:sgd,zhu:vrnc}, group Lasso~\cite{ouyang:sadmm,liu:avrrg}, sparse learning and coding~\cite{qu:svrg,paquette:catalyst}, phase retrieval~\cite{duchi:ssgd}, matrix completion~\cite{recht:psgd,zhang:svrgd}, conditional random fields~\cite{schmidt:crf}, eigenvector computation~\cite{shamir:pca,garber:svd} such as principal component analysis (PCA) and singular value decomposition (SVD), generalized eigen-decomposition and canonical correlation analysis (CCA)~\cite{zhu:cca}.

\subsection{Stochastic Gradient Descent}
In this paper, we are especially interested in developing efficient algorithms to solve regularized ERM problems involving a large sum of $n$ component functions. The standard and effective method for solving Problem (\ref{equ01}) is the (proximal) gradient descent (GD) method, including accelerated proximal gradient (APG)~\cite{nesterov:fast,nesterov:co,teng:apg,beck:fista}. For \emph{smooth} objective functions, the update rule of GD is
\begin{equation}\label{equ02}
x_{k+1}=x_{k}-\eta_{k}\!\left[\frac{1}{n}\sum^{n}_{i=1}\nabla\! f_{i}(x_{k})+\nabla\! g(x_{k})\right]
\end{equation}
for $k\!=\!1,2,\ldots$, where $\eta_{k}\!>\!0$ is commonly referred to as the step-size in optimization or the learning rate in machine learning. When the regularizer $g(\cdot)$ is \emph{non-smooth}, e.g., the $\ell_{1}$-norm regularizer, we need to introduce the following proximal operator into (\ref{equ02}),
\begin{equation}\label{equ03}
x_{k+1}=\textup{Prox}_{\,\eta_{k},g}(y_{k}):=\mathop{\arg\min}_{x\in\mathbb{R}^{d}}\,({1}/{2\eta_{k}})\!\cdot\!\|x-y_{k}\|^{2}+g(x)
\end{equation}
where $y_{k}\!=x_{k}\!-\!(\eta_{k}/n)\sum^{n}_{i=1}\!\nabla\! f_{i}(x_{k}).$ The GD methods mentioned above have been proven to achieve linear convergence for \emph{strongly convex} problems, and APG attains the optimal convergence rate of $\mathcal{O}(1/T^2)$ for \emph{non-strongly convex} problems, where $T$ denotes the number of iterations. However, the per-iteration cost of all the batch (or deterministic) methods is $O(nd)$, which is expensive.

Instead of evaluating the full gradient of $f(\cdot)$ at each iteration, an effective alternative is the stochastic (or incremental) gradient descent (SGD) method~\cite{robbins:sgd}. SGD only evaluates the gradient of a single component function at each iteration, thus it has much \emph{lower} per-iteration cost, $O(d)$, and has been successfully applied to many large-scale learning problems~\cite{krizhevsky:deep,zhang:sgd,hu:sgd,bubeck:sgd}. The update rule of SGD is formulated as follows:
\begin{equation}\label{equ04}
x_{k+1}=x_{k}-\eta_{k}[\nabla\!f_{i_{k}}\!(x_{k})+\nabla\!g(x_{k})]
\end{equation}
where $\eta_{k}\!\propto\!1/k$, and the index $i_{k}$ is chosen uniformly at random from $\{1,\ldots,n\}$. Although the expectation of $\nabla\!f_{i_{k}}\!(x_{k})$ is an \emph{unbiased} estimation for $\nabla\!f(x_{k})$, i.e., $\mathbb{E}[\nabla\!f_{i_{k}}\!(x_{k})]\!=\!\nabla\! f(x_{k})$, the variance of the stochastic gradient estimator $\nabla\!f_{i_{k}}\!(x_{k})$ may be large due to the variance of random sampling~\cite{johnson:svrg}. Thus, stochastic gradient estimators are also called ``noisy gradients", and we need to gradually reduce its step size, leading to slow convergence. In particular, even under the strongly convex condition, standard SGD attains a slower \emph{sub-linear} convergence rate of $\mathcal{O}(1/T)$~\cite{rakhlin:sgd,shamir:sgd}.

\subsection{Accelerated SGD}
Recently, many SGD methods with \emph{variance reduction} techniques were proposed, such as stochastic average gradient (SAG)~\cite{roux:sag}, stochastic variance reduced gradient (SVRG)~\cite{johnson:svrg}, stochastic dual coordinate ascent (SDCA)~\cite{shalev-Shwartz:sdca}, SAGA~\cite{defazio:saga}, stochastic primal-dual coordinate (SPDC)~\cite{zhang:spdc}, and their proximal variants, such as Prox-SAG~\cite{schmidt:sag}, Prox-SVRG~\cite{xiao:prox-svrg} and Prox-SDCA~\cite{shalev-Shwartz:prox-sdca}. All these accelerated SGD methods can use a constant step size $\eta$ instead of diminishing step sizes for SGD, and fall into the following three categories: \emph{primal} methods such as SVRG and SAGA, \emph{dual} methods such as SDCA, and \emph{primal-dual} methods such as SPDC. In essence, many of primal methods use the full gradient at the snapshot $\widetilde{x}$ or average gradients to progressively reduce the variance of stochastic gradient estimators, as well as dual and primal-dual methods, \emph{which leads to a revolution in the area of first-order methods}~\cite{shang:fsvrg}. Thus, they are also known as the \emph{hybrid gradient descent} method~\cite{zhang:svrg} or \emph{semi-stochastic gradient descent} method~\cite{koneeny:mini}. In particular, under the strongly convex condition, the accelerated SGD methods enjoy linear convergence rates and the overall complexity of $\mathcal{O}\!\left((n\!+\!{L}/{\mu})\log({1}/{\epsilon})\right)$ to obtain an $\epsilon$-suboptimal solution, where each $f_{i}(\cdot)$ is $L$-smooth and $g(\cdot)$ is $\mu$-strongly convex. The complexity bound shows that they converge significantly faster than deterministic APG methods, whose complexity is $\mathcal{O}((n\sqrt{L/\mu})\log({1}/{\epsilon}))$~\cite{koneeny:mini}.

SVRG~\cite{johnson:svrg} and its proximal variant, Prox-SVRG~\cite{xiao:prox-svrg}, are particularly attractive because of their \emph{low storage} requirement compared with other stochastic methods such as SAG, SAGA and SDCA, which require to store all the gradients of the $n$ component functions $f_{i}(\cdot)$ or dual variables. At the beginning of each epoch in SVRG, the full gradient $\nabla\! f(\widetilde{x})$ is computed at the \emph{snapshot point} $\widetilde{x}$, which is updated periodically. The update rule for the \emph{smooth} optimization problem (\ref{equ01}) is given by
\addtocounter{equation}{1}
\begin{align}
&\widetilde{\nabla}\! f_{i_{k}}\!(x_{k})=\nabla\! f_{i_{k}}\!(x_{k})-\nabla\! f_{i_{k}}\!(\widetilde{x})+\nabla\! f(\widetilde{x})\tag{\theequation a},\label{equ051}\\
&x_{k+1}=x_{k}-\eta[\widetilde{\nabla}\! f_{i_{k}}\!(x_{k})+\nabla\!g(x_{k})]\tag{\theequation b}.\label{equ052}
\end{align}
When $g(\cdot)\!\equiv\!0$, the update rule in (\ref{equ052}) becomes the original one in~\cite{johnson:svrg}, i.e., $x_{k+1}\!=\!x_{k}\!-\!\eta\widetilde{\nabla}\! f_{i_{k}}\!(x_{k})$. It is not hard to verify that the variance of the SVRG estimator $\widetilde{\nabla}\! f_{i_{k}}\!(x_{k})$, i.e., $\mathbb{E}\|\widetilde{\nabla}\! f_{i_{k}}\!(x_{k})\!-\!\nabla\!f(x_{k})\|^2$, can be much smaller than that of the SGD estimator $\nabla\! f_{i_{k}}\!(x_{k})$, i.e., $\mathbb{E}\|\nabla\! f_{i_{k}}\!(x_{k})\!-\!\nabla\!f(x_{k})\|^2$. However, for \emph{non-strongly convex} problems, the accelerated SGD methods mentioned above converge much slower than batch APG methods such as FISTA~\cite{beck:fista}, namely, $\mathcal{O}(1/T)$ vs.\ $\mathcal{O}(1/T^2)$.

More recently, many \emph{acceleration} techniques were proposed to further speed up those variance-reduced stochastic methods mentioned above. These techniques mainly include the Nesterov's acceleration technique used in~\cite{hu:sgd,nitanda:svrg,lan:rpdg,frostig:sgd,lin:vrsg}, reducing the number of gradient calculations in the early iterations \cite{shang:fsvrg,babanezhad:vrsg,zhu:univr}, the projection-free property of the conditional gradient method (also known as the Frank-Wolfe algorithm~\cite{frank:cg}) as in~\cite{hazan:svrf}, the stochastic sufficient decrease technique~\cite{shang:vrsgd}, and the momentum acceleration trick in~\cite{zhu:Katyusha,shang:fsvrg,hien:asmd}. More specifically, \cite{lin:vrsg} proposed an accelerating Catalyst framework and achieved the complexity of $\mathcal{O}((n\!+\!\!\sqrt{n{L}/{\mu}})\log({L}/{\mu})\log({1}/{\epsilon}))$ for strongly convex problems. \cite{zhu:Katyusha} and \cite{hien:asmd} proved that their accelerated methods can attain the best known complexity of $\mathcal{O}(n\log(1/\epsilon)\!+\!\sqrt{nL/\epsilon})$ for non-strongly convex problems. The overall complexity matches the theoretical upper bound provided in \cite{woodworth:bound}. Katyusha~\cite{zhu:Katyusha} and point-SAGA~\cite{defazio:sagab} achieve the best-known complexity of $\mathcal{O}((n\!+\!\sqrt{nL/\mu})\log(1/\epsilon))$ for strongly convex problems, which is identical to the upper complexity bound in~\cite{woodworth:bound}. That is, \emph{Katyusha} is the \emph{best known} stochastic optimization method for both \emph{strongly convex} and \emph{non-strongly convex} problems. Its proximal gradient update rules are formulated as follows:
\addtocounter{equation}{1}
\begin{align}
&x_{k+1}=w_{1}y_{k}+w_{2}\widetilde{x}+(1-w_{1}-w_{2})z_{k}\tag{\theequation a},\label{equ061}\\
&y_{k+1}=\mathop{\arg\min}_{y\in\mathbb{R}^{d}} \left\{\frac{1}{2\eta}\|y-y_{k}\|^2+y^{T}\widetilde{\nabla}\! f_{i_{k}}\!(x_{k+1})+g(y)\right\}\tag{\theequation b},\label{equ062}\\
&z_{k+1}=\mathop{\arg\min}_{z\in\mathbb{R}^{d}} \left\{\frac{3L}{2}\|z-x_{k+1}\|^2+z^{T}\widetilde{\nabla}\! f_{i_{k}}\!(x_{k+1})+g(z)\right\}\tag{\theequation c}\label{equ063}
\end{align}
where $w_{1},w_{2}\!\in\![0,1]$ are two momentum parameters. To eliminate the need for parameter tuning, $\eta$ is set to $1/(3w_{1}L)$, and $w_{2}$ is fixed to $0.5$ in~\cite{zhu:Katyusha}. Unfortunately, most of the accelerated methods mentioned above, including Katyusha, require at least two auxiliary variables and two momentum parameters, which lead to complicated algorithm design and high per-iteration complexity~\cite{shang:fsvrg}.

\subsection{Our Contributions}
From the above discussion, one can see that most of accelerated stochastic variance reduction methods such as~\cite{zhu:Katyusha,shang:fsvrg,nitanda:svrg,zhu:univr,hazan:svrf,shang:vrsgd} and applications such as~\cite{qu:svrg,paquette:catalyst,zhang:svrgd,shamir:pca,garber:svd,li:svrg} are based on the \emph{stochastic variance reduced gradient} (SVRG) method~\cite{johnson:svrg}. Thus, any key improvement on SVRG is very important for the research of stochastic optimization. In this paper, we propose a simple variant of the original SVRG~\cite{johnson:svrg}, which is referred to as the \emph{variance reduced stochastic gradient descent} (VR-SGD). The \emph{snapshot point} and \emph{starting point} of each epoch in VR-SGD are set to the \emph{average} and \emph{last iterate} of the previous epoch, respectively. Different from the settings of SVRG and Prox-SVRG~\cite{xiao:prox-svrg} (i.e., the last iterate for the two points of the former, while the average of the previous epoch for those of the latter), the two points in VR-SGD are different, which makes our convergence analysis \emph{more challenging} than SVRG and Prox-SVRG. Our empirical results show that the performance of VR-SGD is significantly better than its counterparts, SVRG and Prox-SVRG. Impressively, VR-SGD with a \emph{sufficiently large} learning rate performs much better than the \emph{best known} stochastic method, Katyusha~\cite{zhu:Katyusha}. The main contributions of this paper are summarized below.
\begin{itemize}
  \item The \emph{snapshot point} and \emph{starting point} of VR-SGD are set to two different vectors. That is, for all epochs, except the first one, $\widetilde{x}^{s}\!=\!\frac{1}{m-\!1}\!\sum^{m-\!1}_{k=1}x^{s}_{k}$ (denoted by \emph{Option I}) or $\widetilde{x}^{s}\!=\!\frac{1}{m}\!\sum^{m}_{k=1}x^{s}_{k}$ (denoted by \emph{Option II}), and $x^{s+\!1}_{0}\!=\!x^{s}_{m}$. In particular, we find that the setting of VR-SGD allows us take much \emph{larger learning rates} or step sizes than SVRG, e.g., $3/(7L)$ vs.\ $1/(10L)$, and thus significantly speeds up the convergence of SVRG and Prox-SVRG in practice. Moreover, VR-SGD has an advantage over SVRG in terms of \emph{robustness of learning rate selection}.
  \item Different from \emph{proximal stochastic gradient} methods, e.g., Prox-SVRG and Katyusha, which have a unified update rule for the two cases of \emph{smooth} and \emph{non-smooth} objectives (see Section~\ref{sec22} for details), VR-SGD employs \emph{two different update rules} for the two cases, respectively, as in (\ref{equ21}) and (\ref{equ22}) below. Empirical results show that gradient update rules as in (\ref{equ21}) for \emph{smooth} optimization problems are better choices than proximal update formulas as in \eqref{equ14}.
  \item Finally, we theoretically analyze the \emph{convergence properties} of VR-SGD with Option I or Option II for \emph{strongly convex} problems, which show that VR-SGD attains a \emph{linear} convergence rate. We also give the \emph{convergence guarantees} of VR-SGD with Option I or Option II for \emph{non-strongly convex} objective functions.
\end{itemize}

\section{Preliminary and Related Work}
Throughout this paper, we use $\|\cdot\|$ to denote the $\ell_{2}$-norm (also known as the standard Euclidean norm), and $\|\!\cdot\!\|_{1}$ is the $\ell_{1}$-norm, i.e., $\|x\|_{1}\!=\!\sum^{d}_{i=1}\!|x_{i}|$. $\nabla\!f(\cdot)$ denotes the full gradient of $f(\cdot)$ if it is differentiable, or $\partial\!f(\cdot)$ the subgradient if $f(\cdot)$ is only Lipschitz continuous. For each epoch $s\!\in\![S]$ and inner iteration $k\!\in\!\{0,1,\ldots,m\!-\!1\}$, $i^{s}_{k}\!\in\![n]$ is the random chosen index. We mostly focus on the case of Problem~\eqref{equ01} when each component function $f_{i}(\cdot)$ is $L$-smooth\footnote{Actually, we can extend all the theoretical results in this paper for the case, when the gradients of all component functions have the same Lipschitz constant $L$, to the more general case, when some $f_{i}(\cdot)$ have different degrees of smoothness.}, and $F(\cdot)$ is $\mu$-strongly convex. The two common assumptions are defined as follows.

\subsection{Basic Assumptions}
\begin{assumption}[Smoothness]\label{assum1}
Each convex function $f_{i}(\cdot)$ is $L$-smooth, that is, there exists a constant $L\!>\!0$ such that for all $x,y\!\in\!\mathbb{R}^{d}$,
\begin{equation}\label{equ11}
\|\nabla f_{i}(x)-\nabla f_{i}(y)\|\leq L\|x-y\|.
\end{equation}
\end{assumption}

\begin{assumption}[Strong Convexity]\label{assum2}
The convex function $F(x)$ is $\mu$-strongly convex, i.e., there exists a constant $\mu\!>\!0$ such that for all $x,y\!\in\! \mathbb{R}^{d}$,
\begin{equation}\label{equ12}
F(y)\geq F(x)+\langle\nabla\!F(x),\,y-x\rangle+\frac{\mu}{2}\|x-y\|^{2}.
\end{equation}
\end{assumption}
Note that when the regularizer $g(\cdot)$ is \emph{non-smooth}, the inequality in~\eqref{equ12} needs to be revised by simply replacing the gradient $\nabla\!F(x)$ in~\eqref{equ12} with an arbitrary sub-gradient of $F(\cdot)$ at $x$. In contrast, for a non-strongly convex or \emph{general convex} function, the inequality in~\eqref{equ12} can always be satisfied with $\mu\!=\!0$.

\begin{algorithm}[t]
\caption{SVRG (Option I) and Prox-SVRG (Option II)}
\label{alg1}
\renewcommand{\algorithmicrequire}{\textbf{Input:}}
\renewcommand{\algorithmicensure}{\textbf{Initialize:}}
\renewcommand{\algorithmicoutput}{\textbf{Output:}}
\renewcommand{\baselinestretch}{1.5}
\begin{algorithmic}[1]
\REQUIRE The number of epochs $S$, the number of iterations $m$ per epoch, and step size $\eta$.\\
\ENSURE $\widetilde{x}^{0}$.\\
\FOR{$s=1,2,\ldots,S$}
\STATE {$x^{s}_{0}=\widetilde{x}^{s-1}$; \hfill $\%$\:\emph{Initiate the variable $x^{s}_{0}$}}
\STATE {$\widetilde{\mu}^{s}=\frac{1}{n}\!\sum^{n}_{i=1}\!\nabla\!f_{i}(\widetilde{x}^{s-1})$; \hfill $\%$\:\emph{Compute the full gradient}}
\FOR{$k=0,1,\ldots,m-1$}
\STATE {Pick $i^{s}_{k}$ uniformly at random from $[n]$;}
\STATE {$\widetilde{\nabla}\! f_{i^{s}_{k}}(x^{s}_{k})=\nabla\! f_{i^{s}_{k}}(x^{s}_{k})-\nabla\! f_{i^{s}_{k}}(\widetilde{x}^{s-1})+\widetilde{\mu}^{s}$; \hfill $\%$\:\emph{The stochastic gradient estimator}}
\STATE {Option I:\, $x^{s}_{k+1}=x^{s}_{k}\!-\eta\!\left[\widetilde{\nabla}\!f_{i^{s}_{k}}(x^{s}_{k})+\!\nabla g(x^{s}_{k})\right]$, \hfill $\%$\:\emph{Smooth case of $g(\cdot)$}\\
\qquad\quad or \;\!$x^{s}_{k+1}=\textrm{Prox}_{\,\eta,\,g}\!\left(x^{s}_{k}\!-\eta\widetilde{\nabla}\!f_{i^{s}_{k}}(x^{s}_{k})\right)$; \hfill $\%$\:\emph{Non-smooth case of $g(\cdot)$}}
\STATE {Option II: $x^{s}_{k+1}=\arg\min_{y\in\mathbb{R}^{d}}\!\left\{g(y)+y^{T}\widetilde{\nabla}\!f_{i^{s}_{k}}(x^{s}_{k})+\!\frac{1}{2\eta}\|y-x^{s}_{k}\|^2\right\}$; \hfill $\%$\:\emph{Proximal update}}
\ENDFOR
\STATE {Option I:\, $\widetilde{x}^{s}=x^{s}_{m}$;  \hfill $\%$\:\emph{Last iterate for snapshot $\widetilde{x}$}}
\STATE {Option II:\,$\widetilde{x}^{s}\!=\!\frac{1}{m}\!\sum^{m}_{k=1}\!x^{s}_{k}$; \hfill $\%$\:\emph{Iterate averaging for snapshot $\widetilde{x}$}}
\ENDFOR
\OUTPUT {$\widetilde{x}^{S}$}
\end{algorithmic}
\end{algorithm}

\subsection{Related Work}
\label{sec22}
To speed up standard and proximal SGD methods, many variance reduced stochastic methods~\cite{roux:sag,shalev-Shwartz:sdca,defazio:saga,zhang:svrg} have been proposed for some special cases of Problem \eqref{equ01}. In the case when each $f_{i}(x)$ is $L$-smooth, $f(x)$ is $\mu$-strongly convex, and $g(x)\!\equiv\!0$, Roux \emph{et al.} \cite{roux:sag} proposed a stochastic average gradient (SAG) method, which attains a linear convergence rate. However, SAG needs to store all gradients as well as other incremental aggregated gradient methods such as SAGA~\cite{defazio:saga}, so that $O(nd)$ storage is required in general problems~\cite{babanezhad:vrsg}. Similarly, SDCA~\cite{shalev-Shwartz:sdca} requires storage of all dual variables~\cite{johnson:svrg}, which scales as $O(n)$. In contrast, SVRG~\cite{johnson:svrg}, as well as its proximal variant, Prox-SVRG~\cite{xiao:prox-svrg}, has the similar convergence rate to SAG and SDCA but without the memory requirements of all gradients and dual variables. In particular, the SVRG estimator in~\eqref{equ051} (independently introduced in~\cite{johnson:svrg,zhang:svrg}) may be the most popular choice for \emph{stochastic gradient estimators}. Besides, other stochastic gradient estimators include the SAGA estimator in~\cite{defazio:saga} and the stochastic recursive gradient estimator in~\cite{nguyen:srg}. Although the original SVRG in~\cite{johnson:svrg} only has convergence guarantees for a special case of Problem \eqref{equ01}, when each $f_{i}(x)$ is $L$-smooth, $f(x)$ is $\mu$-strongly convex, and $g(x)\!\equiv\!0$, one can extend SVRG to the proximal setting by introducing the proximal operator in~(\ref{equ03}), as shown in Line 7 of Algorithm~\ref{alg1}. In other words, when $g(\cdot)$ is \emph{non-smooth}, the update rule of SVRG becomes
\begin{equation}\label{equ13}
x^{s}_{k+1}=\mathop{\arg\min}_{x\in\mathbb{R}^{d}}\left\{({1}/{2\eta})\!\cdot\!\|x-[x^{s}_{k}-\eta\widetilde{\nabla}\!f_{i^{s}_{k}}(x^{s}_{k})]\|^{2}+g(x)\right\}.
\end{equation}

Some researchers~\cite{liu:avrrg,zhong:fsadmm,zheng:fadmm} have borrowed some \emph{variance reduction} techniques into ADMM for minimizing convex composite objective functions subject to an equality constraint. \cite{shamir:pca,garber:svd,zhu:cca} applied efficient stochastic solvers to compute leading eigenvectors of a symmetric matrix or generalized eigenvectors of two symmetric matrices. The first such method is VR-PCA by Shamir~\cite{shamir:pca}, and the convergence properties of the VR-PCA algorithm for such a non-convex problem are also provided. Garber \emph{et al.}~\cite{garber:svd} analyzed the convergence rate of SVRG when $f(\cdot)$ is a convex function that is a sum of non-convex component functions. Moreover, \cite{zhu:vrnc} and \cite{reddi:saga} proved that SVRG and SAGA with minor modifications can converge asymptotically to a stationary point of non-convex finite-sum problems. Some distributed variants~\cite{reddi:sgd,lee:dsgd} of accelerated SGD methods have also been proposed.

An important class of stochastic methods is the \emph{proximal stochastic gradient} (Prox-SG) method, such as Prox-SVRG~\cite{xiao:prox-svrg}, SAGA~\cite{defazio:saga}, and Katyusha~\cite{zhu:Katyusha}. Different from standard variance reduction SGD methods such as SVRG, which have a stochastic gradient update as in~\eqref{equ052}, the Prox-SG method has a unified update rule for both smooth and non-smooth cases of $g(\cdot)$. For instance, the update rule of Prox-SVRG~\cite{xiao:prox-svrg} is formulated as follows:
\begin{equation}\label{equ14}
x^{s}_{k+1}=\mathop{\arg\min}_{y\in\mathbb{R}^{d}}\left\{g(y)+y^{T}\widetilde{\nabla}\!f_{i^{s}_{k}}(x^{s}_{k})+\!\frac{1}{2\eta}\|y-x^{s}_{k}\|^2\right\}.
\end{equation}
For the sake of completeness, the details of Prox-SVRG~\cite{xiao:prox-svrg} are shown in Algorithm~\ref{alg1} with Option II. When $g(\cdot)$ is the widely used $\ell_{2}$-norm regularizer, i.e., $g(\cdot)=(\lambda_{1}/2)\|\cdot\|^{2}$, the proximal update formula in \eqref{equ14} becomes
\begin{equation}\label{equ15}
x^{s}_{k+1}=\frac{1}{1+\lambda_{1}\eta}\left[x^{s}_{k}-\eta\widetilde{\nabla}\!f_{i^{s}_{k}}(x^{s}_{k})\right].
\end{equation}

\section{Variance-Reduced Stochastic Gradient Descent}
In this section, we propose an efficient variance reduced stochastic gradient descent (VR-SGD) method with \emph{iterate averaging}. Different from the choices of the snapshot and starting points in SVRG~\cite{johnson:svrg} and Prox-SVRG~\cite{xiao:prox-svrg}, the two vectors of each epoch in VR-SGD are set to the average and last iterate of the previous epoch, respectively. Unlike common stochastic gradient methods such as SVRG and proximal stochastic gradient methods such as Prox-SVRG, we design two different update rules for smooth and non-smooth objective functions, respectively.

\subsection{Iterate Averaging}
Like SVRG and Katyusha, VR-SGD is also divided into $S$ epochs, and each epoch consists of $m$ stochastic gradient steps, where $m$ is usually chosen to be $\Theta(n)$ as in~\cite{johnson:svrg,xiao:prox-svrg,zhu:Katyusha}. Within each epoch, we need to compute the full gradient $\nabla\! f(\widetilde{x}^{s})$ at the snapshot $\widetilde{x}^{s}$ and use it to define the variance reduced stochastic gradient estimator $\widetilde{\nabla}\! f_{i_{k}}\!(x^{s}_{k})$ as in~\cite{johnson:svrg}. Unlike SVRG whose snapshot point is set to the last iterate of the previous epoch, the \emph{snapshot} $\widetilde{x}^{s}$ of each epoch in VR-SGD is set to the \emph{average} of the previous epoch, e.g., $\widetilde{x}^{s}\!=\!\frac{1}{m-\!1}\!\sum^{m-\!1}_{k=1}x^{s}_{k}$ in Option I of Algorithm~\ref{alg2}, which leads to better robustness to gradient noise\footnote{It should be emphasized that the noise introduced by random sampling is inevitable, and generally slows down the convergence speed in a sense. However, SGD and its variants are probably the most used optimization algorithms for deep learning~\cite{bengio:deep}. In particular, \cite{ge:sgd} has shown that by adding noise at each step, noisy gradient descent can escape the saddle points efficiently and converge to a local minimum of non-convex optimization problems, as well as the application of deep neural networks in~\cite{neelakantan:noise}.}, as also suggested in~\cite{shang:fsvrg,shang:vrsgd,flammarion:sgd}. In fact, the choice of Option II in Algorithm~\ref{alg2}, i.e., $\widetilde{x}^{s}\!=\!\frac{1}{m}\!\sum^{m}_{k=1}x^{s}_{k}$, also works well in practice, as shown in Fig.\ \ref{figs00}. Therefore, we provide the convergence guarantees for both our algorithms (including Algorithm~\ref{alg2}) with Option I and our algorithms with Option II in the next section. In particular, we find that the one of the effects of the choice in Option I or Option II of Algorithm~\ref{alg2} is to allow taking much larger learning rates or step sizes than SVRG in practice, e.g., $3/(7L)$ for VR-SGD vs.\ $1/(10L)$ for SVRG (see Fig.\ \ref{figs01} and Section~\ref{sec53} for details). This is the main reason why VR-SGD converges significantly faster than SVRG. Actually, a larger learning rate enjoyed by VR-SGD means that the variance of its stochastic gradient estimator goes asymptotically to zero faster.

Unlike Prox-SVRG~\cite{xiao:prox-svrg} whose starting point is initialized to the average of the previous epoch, the starting point $x^{s+1}_{0}$ of each epoch in VR-SGD is set to the last iterate $x^{s}_{m}$ of the previous epoch. That is, the last iterate of the previous epoch becomes the new starting point in VR-SGD, while those of Prox-SVRG are completely different, thereby leading to relatively slow convergence in general. It is clear that both the starting point and snapshot point of each epoch in the original SVRG~\cite{johnson:svrg} are set to the last iterate of the previous epoch{\footnote{Note that the theoretical convergence of the original SVRG~\cite{johnson:svrg} relies on its Option II, i.e., both $\widetilde{x}^{s}$ and $x^{s+\!1}_{0}$ are set to $x^{s}_{k}$, where $k$ is randomly chosen from $\{1,2,\ldots,m\}$. However, the empirical results in~\cite{johnson:svrg} suggest that Option I is a better choice than its Option II, and the convergence guarantee of SVRG with Option I for strongly convex objective functions is provided in~\cite{tan:sgd}.}}, while the two points of Prox-SVRG~\cite{xiao:prox-svrg} are set to the average of the previous epoch (also suggested in~\cite{johnson:svrg}). Different from the settings in SVRG and Prox-SVRG, the starting and snapshot points in VR-SGD are set to the two different vectors mentioned above, which makes the convergence analysis of VR-SGD more challenging than SVRG and Prox-SVRG, as shown in Section~\ref{sec4}.

\begin{algorithm}[t]
\caption{VR-SGD for strongly convex objectives}
\label{alg2}
\renewcommand{\algorithmicrequire}{\textbf{Input:}}
\renewcommand{\algorithmicensure}{\textbf{Initialize:}}
\renewcommand{\algorithmicoutput}{\textbf{Output:}}
\begin{algorithmic}[1]
\REQUIRE The number of epochs $S$, the number of iterations $m$ per epoch, and step size $\eta$.\\
\ENSURE $x^{1}_{0}=\widetilde{x}^{0}$.\\
\FOR{$s=1,2,\ldots,S$}
\STATE {$\widetilde{\mu}^{s}=\frac{1}{n}\!\sum^{n}_{i=1}\!\nabla\!f_{i}(\widetilde{x}^{s-1})$; \hfill $\%$\:\emph{Compute the full gradient}}
\FOR{$k=0,1,\ldots,m-1$}
\STATE {Pick $i^{s}_{k}$ uniformly at random from $[n]$;}
\STATE {$\widetilde{\nabla}\! f_{i^{s}_{k}}(x^{s}_{k})=\nabla\! f_{i^{s}_{k}}(x^{s}_{k})-\nabla\! f_{i^{s}_{k}}(\widetilde{x}^{s-1})+\widetilde{\mu}^{s}$;  \hfill $\%$\:\emph{The stochastic gradient estimator}}
\STATE {$x^{s}_{k+1}=x^{s}_{k}\!-\eta\!\left[\widetilde{\nabla}\!f_{i^{s}_{k}}(x^{s}_{k})+\nabla g(x^{s}_{k})\right]$,    \hfill $\%$\:\emph{Smooth case of $g(\cdot)$}\\
or\; $x^{s}_{k+1}=\textrm{Prox}_{\,\eta,\,g}\!\left(x^{s}_{k}-\eta\widetilde{\nabla}\!f_{i^{s}_{k}}(x^{s}_{k})\right)$;     \hfill $\%$\:\emph{Non-smooth case of $g(\cdot)$}}
\ENDFOR
\STATE {Option I:\,$\widetilde{x}^{s}\!=\frac{1}{m-\!1}\!\sum^{m-\!1}_{k=1}x^{s}_{k}$; \hfill $\%$\:\emph{Iterate averaging for snapshot $\widetilde{x}$}}
\STATE {Option II:\,$\widetilde{x}^{s}\!=\!\frac{1}{m}\!\sum^{m}_{k=1}\!x^{s}_{k}$; \hfill $\%$\:\emph{Iterate averaging for snapshot $\widetilde{x}$}}
\STATE {$x^{s+1}_{0}\!=x^{s}_{m}$; \hfill $\%$\:\emph{Initiate $x^{s+1}_{0}$ for the next epoch}}
\ENDFOR
\OUTPUT {$\widehat{x}^{S}\!=\widetilde{x}^{S}$ if $F(\widetilde{x}^{S})\leq F(\frac{1}{S}\!\sum^{S}_{s=1}\!\widetilde{x}^{s})$, and $\widehat{x}^{S}\!=\!\frac{1}{S}\!\sum^{S}_{s=1}\!\widetilde{x}^{s}$ otherwise.}
\end{algorithmic}
\end{algorithm}

\begin{figure}[h]
\centering
\subfigure[Logistic regression]{\includegraphics[width=0.469\columnwidth]{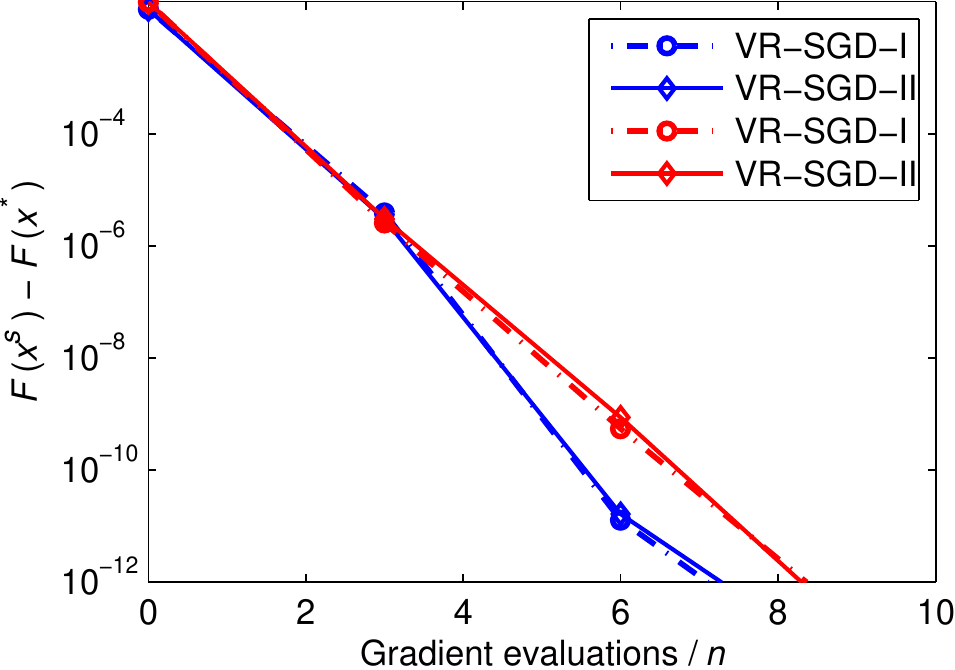}\label{figs11a}}\;\;\;
\subfigure[Ridge regression]{\includegraphics[width=0.469\columnwidth]{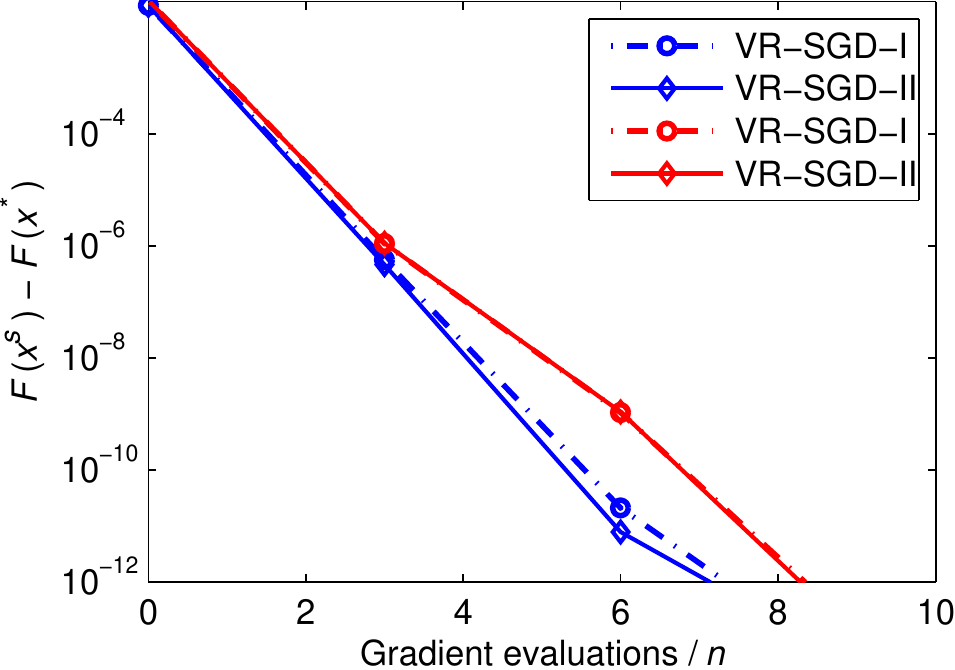}}
\caption{Comparison of VR-SGD with Option I (denoted by VR-SGD-I) and Option II (denoted by VR-SGD-II) for solving $\ell_{2}$-norm (i.e., $(\lambda/2)\|\!\cdot\!\|^{2}$) regularized logistic regression and ridge regression problems on the Covtype data set. In each plot, the vertical axis shows the objective value minus the minimum, and the horizontal axis is the number of effective passes. Note that the blue lines stand for the results where $\lambda=10^{-4}$, while the red lines correspond to the results where $\lambda=10^{-5}$ (best viewed in colors).}
\label{figs00}
\end{figure}

\subsection{The Algorithm for Strongly Convex Objectives}
In this part, we propose an efficient VR-SGD algorithm to solve \emph{strongly convex} objective functions, as outlined in \textbf{Algorithm} \ref{alg2}. It is well known that the original SVRG~\cite{johnson:svrg} only works for the case of smooth and strongly convex objective functions. However, in many machine learning applications, e.g., elastic net regularized logistic regression, the strongly convex objective function $F(x)$ is non-smooth. To solve this class of problems, the proximal variant of SVRG, Prox-SVRG~\cite{xiao:prox-svrg}, was subsequently proposed. Unlike SVRG and Prox-SVRG, VR-SGD can not only solve \emph{smooth} objective functions, but directly tackle \emph{non-smooth} ones. That is, when the regularizer $g(x)$ is smooth, e.g., the $\ell_{2}$-norm regularizer, the update rule of VR-SGD is
\begin{equation}\label{equ21}
x^{s}_{k+1}=x^{s}_{k}-\eta[\widetilde{\nabla}f_{i^{s}_{k}}(x^{s}_{k})+\nabla g(x^{s}_{k})].
\end{equation}
When $g(x)$ is non-smooth, e.g., the $\ell_{1}$-norm regularizer, the update rule of VR-SGD becomes
\begin{equation}\label{equ22}
x^{s}_{k+1}=\textrm{Prox}_{\,\eta,\,g}\!\left(x^{s}_{k}-\eta\widetilde{\nabla}\!f_{i^{s}_{k}}(x^{s}_{k})\right).
\end{equation}

Different from the proximal stochastic gradient methods such as Prox-SVRG~\cite{xiao:prox-svrg}, all of which have a unified update rule as in (\ref{equ14}) for both the smooth and non-smooth cases of $g(\cdot)$, VR-SGD has two different update rules for the two cases, as stated in (\ref{equ21}) and (\ref{equ22}). This leads to the following advantage over the Prox-SG methods: the stochastic gradient update rule in (\ref{equ21}) usually outperforms the proximal stochastic gradient update rule in (\ref{equ15}), as well as the two classes of update rules for Katyusha~\cite{zhu:Katyusha} (see Section~\ref{sec54} for details).

Fig.\ \ref{figs01} demonstrates that VR-SGD has a significant advantage over SVRG in terms of robustness of learning rate selection. That is, VR-SGD yields good performance within the range of the learning rate between $0.1/L$ and $0.4/L$, whereas the performance of SVRG is very \emph{sensitive} to the selection of learning rates. Thus, VR-SGD is convenient to apply in various real-world problems of large-scale machine learning. In fact, VR-SGD can use much larger learning rates than SVRG for logistic regression problems in practice, e.g., $6/(5L)$ for VR-SGD vs.\ $1/(10L)$ for SVRG, as shown in Fig.\ \ref{figs01a}.

\begin{figure}[t]
\centering
\subfigure[Logistic regression: $\lambda=10^{-4}$ (left) \;and\; $\lambda=10^{-5}$ (right)]{
\includegraphics[width=0.489\columnwidth]{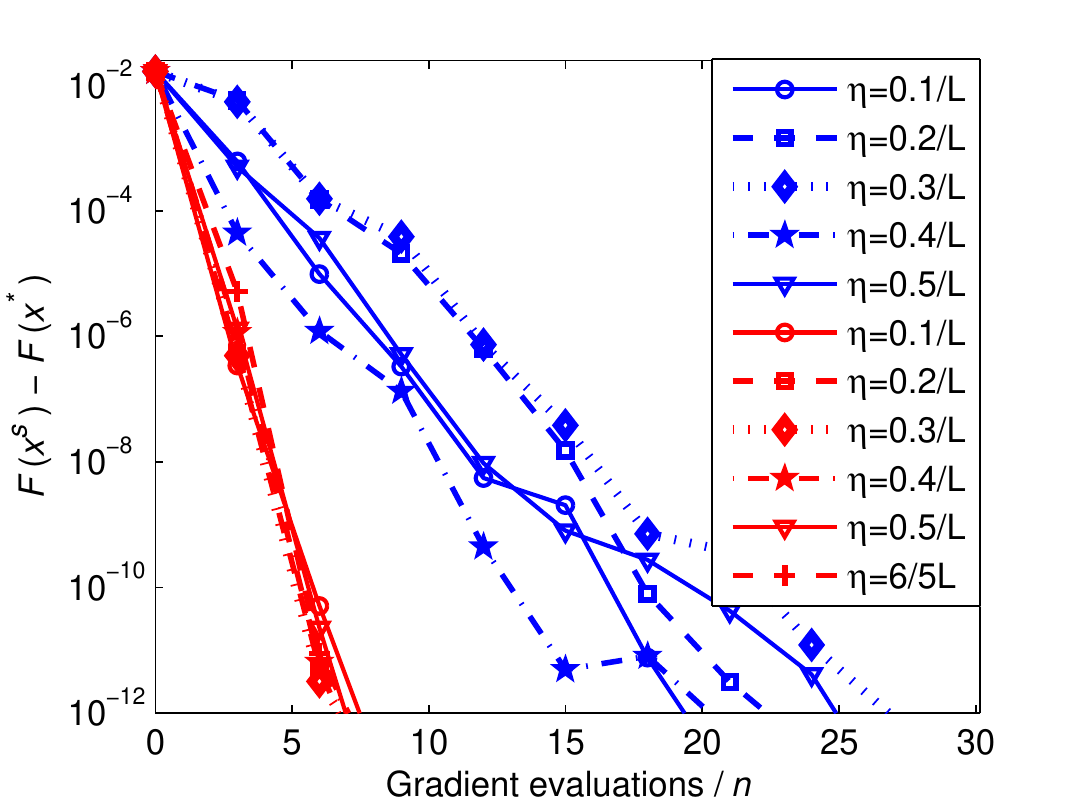}
\includegraphics[width=0.489\columnwidth]{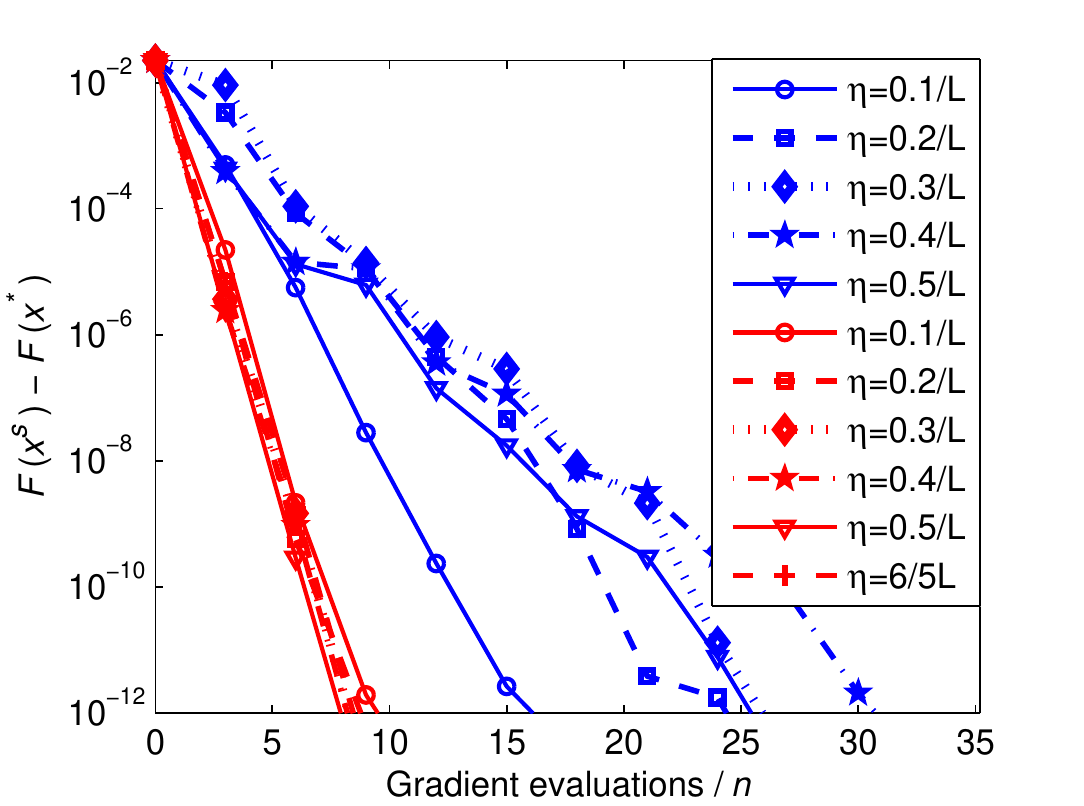}\label{figs01a}}
\vspace{0.6mm}

\subfigure[Ridge regression: $\lambda=10^{-4}$ (left) \;and\; $\lambda=10^{-5}$ (right)]{
\includegraphics[width=0.489\columnwidth]{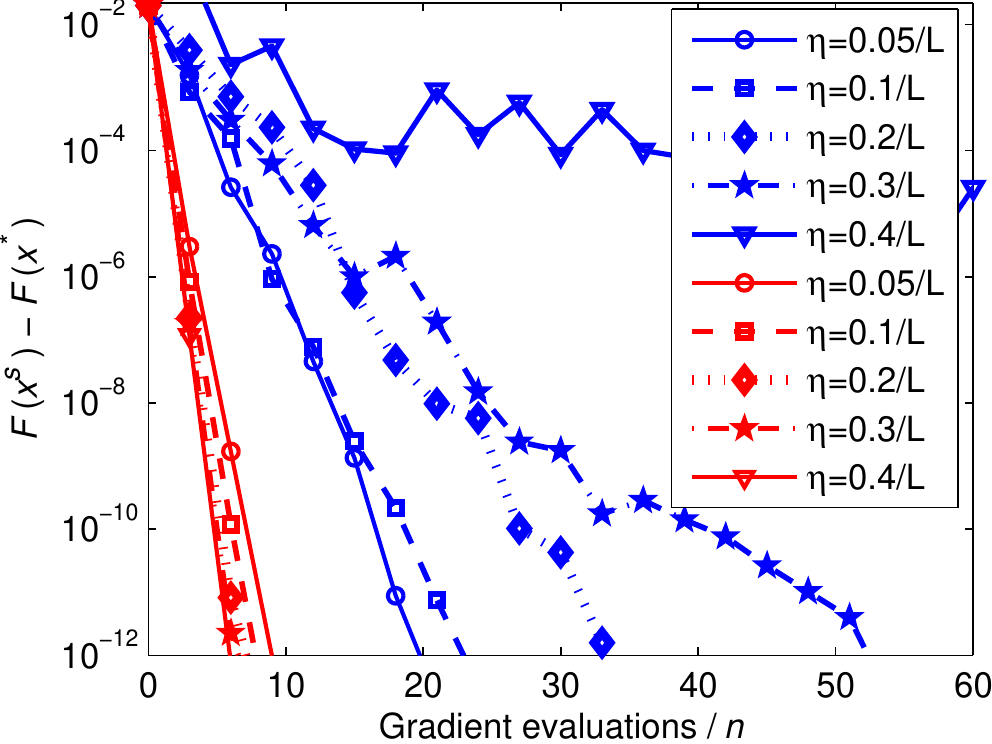}
\includegraphics[width=0.489\columnwidth]{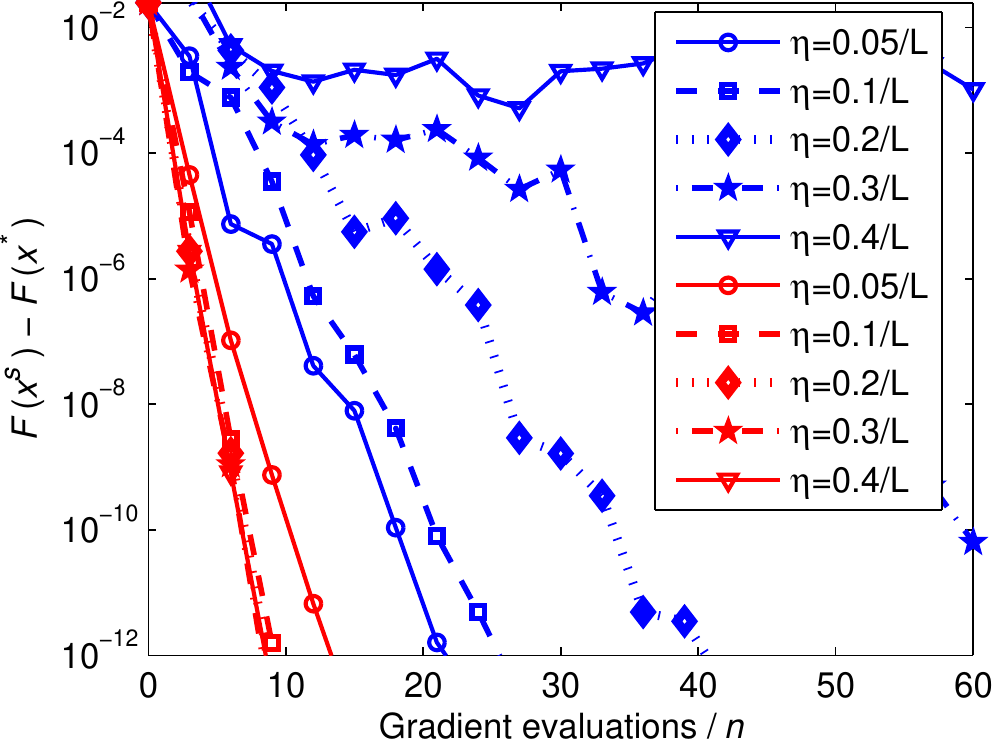}}
\caption{Comparison of SVRG~\cite{johnson:svrg} and VR-SGD with different learning rates for solving $\ell_{2}$-norm (i.e., $(\lambda/2)\|\cdot\|^{2}$) regularized logistic regression and ridge regression problems on the Covtype data set. In each plot, the vertical axis shows the objective value minus the minimum, and the horizontal axis is the number of effective passes. Note that the \emph{blue} lines stand for the results of SVRG with different learning rates, while the \emph{red} lines correspond to the results of VR-SGD with different learning rates (best viewed in colors).}
\label{figs01}
\end{figure}

\subsection{The Algorithm for Non-Strongly Convex Objectives}
Although many variance reduced stochastic methods have been proposed, most of them, including SVRG and Prox-SVRG, only have convergence guarantees for the case of Problem (\ref{equ01}) when the objective function $F(x)$ is strongly convex. However, $F(x)$ may be non-strongly convex in many machine learning applications such as Lasso and $\ell_{1}$-norm regularized logistic regression. As suggested in~\cite{zhu:Katyusha,zhu:box}, this class of problems can be transformed into strongly convex ones by adding a proximal term $(\tau/2)\|x-x^{s}_{0}\|^2$, which can be efficiently solved by Algorithm~\ref{alg2}. However, the reduction technique may degrade the performance of the involved algorithms both in theory and in practice~\cite{zhu:univr}. Thus, we present an efficient VR-SGD algorithm for directly solving the \emph{non-strongly convex} problem (\ref{equ01}), as outlined in \textbf{Algorithm} \ref{alg3}.

The main difference between Algorithm~\ref{alg2} and Algorithm~\ref{alg3} is the setting for the learning rate. Similar to Algorithm~\ref{alg2}, the learning rate $\eta_{s}$ of Algorithm~\ref{alg3} can also be fixed to a constant. Inspired by existing accelerated stochastic algorithms~\cite{zhu:Katyusha,shang:fsvrg}, the learning rate $\eta_{s}$ in Algorithm~\ref{alg3} can be gradually \emph{increased}, which in principle leads to faster convergence (see Section~\ref{sec55} for details). Different from existing stochastic methods such as Katyusha~\cite{zhu:Katyusha}, the update rule of the learning rate $\eta_{s}$ for \textbf{Algorithm} \ref{alg3} is defined as follows: $\eta_{0}\!=\!c$, where $c\!>\!0$ is an initial learning rate, and for any $s\!\geq\!1$,
\begin{equation}\label{equ23}
\eta_{s}=\eta_{0}/\max\{\alpha,\,2/(s+1)\}
\end{equation}
where $0\!<\!\alpha\!\leq\!1$ is a given constant, e.g., $\alpha\!=\!0.2$.

\subsection{Complexity Analysis}
From Algorithms~\ref{alg2} and \ref{alg3}, we can see that the per-iteration cost of VR-SGD is dominated by the computation of $\nabla\! f_{i^{s}_{k}}\!(x^{s}_{k})$, $\nabla\! f_{i^{s}_{k}}\!(\widetilde{x}^{s-1})$, and $\nabla\!g(x^{s}_{k})$ or the proximal update in~\eqref{equ22}. Thus, the complexity is $O(d)$, as low as that of SVRG~\cite{johnson:svrg} and Prox-SVRG~\cite{xiao:prox-svrg}. In fact, for some ERM problems, we can save the intermediate gradients $\nabla\! f_{i}(\widetilde{x}^{s-1})$ in the computation of $\widetilde{\mu}^{s}$, which generally requires $O(n)$ additional storage. As a result, each epoch only requires $(n+m)$ component gradient evaluations. In addition, for extremely sparse data, we can introduce the lazy update tricks in~\cite{koneeny:mini,carpenter:lazy,langford:online} to our algorithms, and perform the update steps in (\ref{equ21}) and (\ref{equ22}) only for the non-zero dimensions of each example, rather than all dimensions. In other words, the per-iteration complexity of VR-SGD can be improved from $O(d)$ to $O(d')$, where $d'\!\leq\!d$ is the sparsity of feature vectors. Moreover, VR-SGD has a much lower per-iteration complexity than existing accelerated stochastic variance reduction methods such as Katyusha~\cite{zhu:Katyusha}, which have at least two more update rules for additional variables, as shown in~\eqref{equ061}-\eqref{equ063}.

\begin{algorithm}[t]
\caption{VR-SGD for non-strongly convex objectives}
\label{alg3}
\renewcommand{\algorithmicrequire}{\textbf{Input:}}
\renewcommand{\algorithmicensure}{\textbf{Initialize:}}
\renewcommand{\algorithmicoutput}{\textbf{Output:}}
\begin{algorithmic}[1]
\REQUIRE The number of epochs $S$, and the number of iterations $m$ per epoch.\\
\ENSURE $x^{1}_{0}=\widetilde{x}^{0}$, \,$\alpha\!>\!0$\, and\, $\eta_{0}$.\\
\FOR{$s=1,2,\ldots,S$}
\STATE {$\widetilde{\mu}^{s}=\frac{1}{n}\!\sum^{n}_{i=1}\!\nabla\!f_{i}(\widetilde{x}^{s-1})$; \hfill $\%$\:\emph{Compute full gradient}}
\STATE {$\eta_{s}=\eta_{0}/\max\{\alpha, 2/(s\!+\!1)\}$; \hfill $\%$\:\emph{Compute step sizes}}
\FOR{$k=0,1,\ldots,m-1$}
\STATE {Pick $i^{s}_{k}$ uniformly at random from $[n]$;}
\STATE {$\widetilde{\nabla}\! f_{i^{s}_{k}}(x^{s}_{k})=\nabla\! f_{i^{s}_{k}}(x^{s}_{k})-\nabla\! f_{i^{s}_{k}}(\widetilde{x}^{s-1})+\widetilde{\mu}^{s}$;  \hfill $\%$\:\emph{The stochastic gradient estimator}}
\STATE {$x^{s}_{k+1}=x^{s}_{k}\!-\eta_{s}\!\left[\widetilde{\nabla}\!f_{i^{s}_{k}}(x^{s}_{k})+\nabla g(x^{s}_{k})\right]$,    \hfill $\%$\:\emph{Smooth case of $g(\cdot)$}\\
or\; $x^{s}_{k+1}=\textrm{Prox}_{\,\eta_{s},\,g}\!\left(x^{s}_{k}-\eta_{s}\widetilde{\nabla}\!f_{i^{s}_{k}}(x^{s}_{k})\right)$;     \hfill $\%$\:\emph{Non-smooth case of $g(\cdot)$}}
\ENDFOR
\STATE {Option I:\,$\widetilde{x}^{s}\!=\frac{1}{m-\!1}\!\sum^{m-\!1}_{k=1}x^{s}_{k}$; \hfill $\%$\:\emph{Iterate averaging for snapshot $\widetilde{x}$}}
\STATE {Option II:\,$\widetilde{x}^{s}\!=\!\frac{1}{m}\!\sum^{m}_{k=1}\!x^{s}_{k}$; \hfill $\%$\:\emph{Iterate averaging for snapshot $\widetilde{x}$}}
\STATE {$x^{s+1}_{0}\!=x^{s}_{m}$; \hfill $\%$\:\emph{Initiate $x^{s+1}_{0}$ for the next epoch}}
\ENDFOR
\OUTPUT {$\widehat{x}^{S}\!=\widetilde{x}^{S}$ if $F(\widetilde{x}^{S})\leq F(\frac{1}{S}\!\sum^{S}_{s=1}\!\widetilde{x}^{s})$, and $\widehat{x}^{S}\!=\!\frac{1}{S}\!\sum^{S}_{s=1}\!\widetilde{x}^{s}$ otherwise.}
\end{algorithmic}
\end{algorithm}

\subsection{Extensions of VR-SGD}
It has been shown in~\cite{koneeny:mini,nitanda:svrg} that \emph{mini-batching} can effectively decrease the variance of stochastic gradient estimates. In this part, we first extend the proposed VR-SGD method to the mini-batch setting, as well as its convergence results below. Here, we denote by $b$ the mini-batch size and $I^{s}_{k}$ the selected random index set $I_{k}\!\subset\![n]$ for each outer-iteration $s\!\in\![S]$ and inner-iteration $k\!\in\!\{0,1,\ldots,m\!-\!1\}$. The variance reduced stochastic gradient estimator in (\ref{equ051}) becomes
\begin{equation*}
\widetilde{\nabla}\! f_{I^{s}_{k}}(x^{s}_{k})=\frac{1}{b}\!\sum_{i\in I^{s}_{k}}\!\left[\nabla\!f_{i}(x^{s}_{k})\!-\!\nabla\! f_{i}(\widetilde{x}^{s-1})\right]\!+\!{\nabla}\! f(\widetilde{x}^{s-1})
\end{equation*}
where $I^{s}_{k}\!\subset\![n]$ is a mini-batch of size $b$. If some component functions are non-smooth, we can use the proximal operator oracle~\cite{zhu:box} or the Nesterov's smoothing~\cite{nesterov:smooth} and homotopy smoothing~\cite{xu:hs} techniques to smoothen them, and thereby obtain the smoothed approximations of the functions $f_{i}(x)$. In addition, we can directly extend our algorithms to the non-smooth setting as in~\cite{shang:fsvrg}, e.g., Algorithm 3 in~\cite{shang:fsvrg}.

Considering that each component function $f_{i}(x)$ maybe have different degrees of smoothness, picking the random index $i^{s}_{k}$ from a non-uniform distribution is a much better choice than the commonly used uniform random sampling~\cite{zhao:prox-smd,needell:sgd}, as well as without-replacement sampling vs.\ with-replacement sampling~\cite{shamir:sgd}. This can be done using the same techniques in~\cite{xiao:prox-svrg,zhu:Katyusha}, i.e., the sampling probabilities for all $f_{i}(x)$ are proportional to their Lipschitz constants, i.e., $p_{i}\!=\!L_{i}/\sum^{n}_{j=1}\!L_{j}$. Moreover, our VR-SGD method can also be combined with other \emph{accelerated} techniques proposed for SVRG. For instance, the epoch length of VR-SGD can be automatically determined by the techniques in~\cite{zhu:univr,konecny:s2gd} instead of a fixed epoch length. We can reduce the number of gradient calculations in the early iterations as in~\cite{shang:fsvrg,babanezhad:vrsg,zhu:univr}, which leads to faster convergence in general. Moreover, we can also introduce the Nesterov's acceleration technique as in~\cite{hu:sgd,nitanda:svrg,lan:rpdg,frostig:sgd,lin:vrsg} and momentum acceleration trick as in~\cite{zhu:Katyusha,shang:fsvrg} to further improve the performance of VR-SGD.

\section{Convergence Analysis}
\label{sec4}
In this section, we provide the convergence guarantees of VR-SGD for solving both smooth and non-smooth general convex problems. We also extend these results to the mini-batch setting. Moreover, we analyze the convergence properties of VR-SGD for solving both smooth and non-smooth strongly convex objective functions. We first introduce the following lemma which is useful in our analysis.

\begin{lemma}[3-point property, \cite{lan:sgd}]
\label{lemm1}
Let $\hat{z}$ be the optimal solution of the following problem,
\begin{displaymath}
\min_{z\in\mathbb{R}^{d}}\frac{\tau}{2}\|z-z_{0}\|^{2}+r(z)
\end{displaymath}
where $r(z)$ is a convex function (but possibly non-differentiable), and $\tau\!\geq\!0$. Then for any $z\!\in\!\mathbb{R}^{d}$, the following inequality holds
\begin{displaymath}
r(\hat{z})+\frac{\tau}{2}\|\hat{z}-z_{0}\|^{2}\leq r(z)+\frac{\tau}{2}\|z-z_{0}\|^{2}-\frac{\tau}{2}\|z-\hat{z}\|^{2}.
\end{displaymath}
\end{lemma}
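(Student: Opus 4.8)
The plan is to exploit the fact that the objective $h(z) := \frac{\tau}{2}\|z - z_{0}\|^{2} + r(z)$ is $\tau$-strongly convex whenever $\tau > 0$, being the sum of the $\tau$-strongly convex quadratic $\frac{\tau}{2}\|z - z_{0}\|^{2}$ and the convex function $r$. Since $\hat{z}$ is its global minimizer, the first-order optimality condition reads $0 \in \partial h(\hat{z})$; concretely, there exists a subgradient $\xi \in \partial r(\hat{z})$ with $\tau(\hat{z} - z_{0}) + \xi = 0$, i.e.\ $\xi = \tau(z_{0} - \hat{z})$.

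First I would invoke the convexity of $r$ at $\hat{z}$ with this particular subgradient: for every $z \in \mathbb{R}^{d}$,
\[
r(z) \geq r(\hat{z}) + \langle \xi,\, z - \hat{z}\rangle = r(\hat{z}) + \tau\langle z_{0} - \hat{z},\, z - \hat{z}\rangle .
\]
Next I would rewrite the cross term via the elementary three-point identity
\[
\frac{\tau}{2}\|z - z_{0}\|^{2} - \frac{\tau}{2}\|\hat{z} - z_{0}\|^{2} - \frac{\tau}{2}\|z - \hat{z}\|^{2} = -\tau\langle z_{0} - \hat{z},\, z - \hat{z}\rangle ,
\]
which follows by expanding $\|z - \hat{z}\|^{2} = \|(z - z_{0}) - (\hat{z} - z_{0})\|^{2}$. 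Substituting this into the convexity bound and rearranging produces precisely the claimed inequality.

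I expect the only subtlety to be twofold. First, because $r$ may be non-differentiable, one must carry a subgradient rather than a gradient throughout, which is exactly why the optimality condition is phrased as $0 \in \partial h(\hat{z})$ and the convexity inequality is written with $\xi \in \partial r(\hat{z})$. Second, when $\tau = 0$ the strong-convexity viewpoint degenerates, but the claim then reduces to $r(\hat{z}) \leq r(z)$, which is immediate since $\hat{z}$ minimizes $r$ and the quadratic terms vanish. Alternatively, both the subgradient step and the identity can be packaged together by invoking the growth inequality $h(z) \geq h(\hat{z}) + \frac{\tau}{2}\|z - \hat{z}\|^{2}$, valid for any $\tau$-strongly convex $h$ at its minimizer, and then expanding $h$ on both sides; this is the most economical route, and I do not anticipate any genuine obstacle beyond this bookkeeping.
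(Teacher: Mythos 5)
Your proof is correct. Note that the paper itself does not prove this lemma at all: it is imported verbatim from the cited reference \cite{lan:sgd} and used as a black box, so there is no internal proof to compare against. Your argument --- first-order optimality $0\in\partial h(\hat{z})$ giving a subgradient $\xi=\tau(z_{0}-\hat{z})\in\partial r(\hat{z})$, the subgradient inequality for $r$, and the three-point expansion of $\|z-z_{0}\|^{2}$ around $\hat{z}$ --- is the standard derivation, and your handling of the two edge cases (carrying a subgradient rather than a gradient, and the degenerate case $\tau=0$ where the claim collapses to $r(\hat{z})\leq r(z)$) is exactly the right bookkeeping. The alternative packaging you mention, namely the growth bound $h(z)\geq h(\hat{z})+\frac{\tau}{2}\|z-\hat{z}\|^{2}$ for a $\tau$-strongly convex $h$ at its minimizer followed by expanding both sides, is equivalent and equally valid.
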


\subsection{Convergence Properties for Non-strongly Convex Problems}
In this part, we analyze the convergence properties of VR-SGD for solving the more general non-strongly convex problems. Considering that the two proposed algorithms (i.e., Algorithms~\ref{alg2} and \ref{alg3}) have two different update rules for the smooth and non-smooth problems, we give the convergence guarantees of VR-SGD for the two cases as follows.

\subsubsection{Smooth Objectives}
We first give the convergence guarantee of Problem \eqref{equ01} when the objective function $F(x)$ is \emph{smooth} and non-strongly convex. In order to simplify analysis, we denote $F(x)$ by $f(x)$, that is, $f_{i}(x)\!:=\!f_{i}(x)\!+\!g(x)$ for all $i\!=\!1,2,\ldots,n$.

\begin{lemma}[Variance bound of smooth objectives]
\label{lemm2}
Let $x^{*}$ be the optimal solution of Problem \eqref{equ01}. Suppose Assumption~\ref{assum1} holds, and let $\widetilde{\nabla}\! f_{i^{s}_{k}}(x^{s}_{k})\!:=\!\nabla\! f_{i^{s}_{k}}(x^{s}_{k})-\nabla\! f_{i^{s}_{k}}(\widetilde{x}^{s-1})+\nabla\! f(\widetilde{x}^{s-1})$. Then the following inequality holds
\begin{equation*}
\mathbb{E}\!\left[\|\widetilde{\nabla}\! f_{i^{s}_{k}}(x^{s}_{k})-\nabla\! f(x^{s}_{k})\|^{2}\right]\leq4L[f(x^{s}_{k})-f(x^{*})+f(\widetilde{x}^{s-1})-f(x^{*})].
\end{equation*}
\end{lemma}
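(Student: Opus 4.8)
The plan is to exploit the unbiasedness of the estimator together with a standard per-component smoothness-convexity (co-coercivity) inequality, using the optimal point $x^*$ as a common anchor. Throughout, I would let $\mathbb{E}$ denote the expectation over the random index $i^{s}_{k}$ drawn uniformly from $[n]$, conditioned on $x^{s}_{k}$ and $\widetilde{x}^{s-1}$.

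First I would record that the estimator is unbiased: since $\mathbb{E}[\nabla\! f_{i^{s}_{k}}(x^{s}_{k})]=\nabla\! f(x^{s}_{k})$ and $\mathbb{E}[\nabla\! f_{i^{s}_{k}}(\widetilde{x}^{s-1})]=\nabla\! f(\widetilde{x}^{s-1})$, we have $\mathbb{E}[\widetilde{\nabla}\! f_{i^{s}_{k}}(x^{s}_{k})]=\nabla\! f(x^{s}_{k})$. Writing $v_{i}:=\nabla\! f_{i}(x^{s}_{k})-\nabla\! f_{i}(\widetilde{x}^{s-1})$, the deviation $\widetilde{\nabla}\! f_{i^{s}_{k}}(x^{s}_{k})-\nabla\! f(x^{s}_{k})$ equals $v_{i^{s}_{k}}-\mathbb{E}[v_{i^{s}_{k}}]$, so the elementary bound $\mathbb{E}\|X-\mathbb{E}X\|^{2}\leq\mathbb{E}\|X\|^{2}$ lets me drop the centering and reduce the problem to bounding $\frac{1}{n}\sum_{i=1}^{n}\|\nabla\! f_{i}(x^{s}_{k})-\nabla\! f_{i}(\widetilde{x}^{s-1})\|^{2}$.

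Next I would split each summand around the optimum via $\|a-b\|^{2}\leq 2\|a\|^{2}+2\|b\|^{2}$ with $a=\nabla\! f_{i}(x^{s}_{k})-\nabla\! f_{i}(x^{*})$ and $b=\nabla\! f_{i}(\widetilde{x}^{s-1})-\nabla\! f_{i}(x^{*})$. The crux is then the co-coercivity consequence of $L$-smoothness and convexity of each $f_{i}$: for all $x$,
\[\|\nabla\! f_{i}(x)-\nabla\! f_{i}(x^{*})\|^{2}\leq 2L\left[f_{i}(x)-f_{i}(x^{*})-\langle\nabla\! f_{i}(x^{*}),\,x-x^{*}\rangle\right].\]
Averaging this over $i$ collapses the right-hand side to $2L[f(x)-f(x^{*})-\langle\nabla\! f(x^{*}),\,x-x^{*}\rangle]$, and since $x^{*}$ minimizes the smooth objective $f=F$ we have $\nabla\! f(x^{*})=0$, leaving just $2L[f(x)-f(x^{*})]$. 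Applying this with $x=x^{s}_{k}$ and $x=\widetilde{x}^{s-1}$ and combining with the factor-of-two split yields exactly $4L[f(x^{s}_{k})-f(x^{*})+f(\widetilde{x}^{s-1})-f(x^{*})]$.

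The only delicate point is the co-coercivity inequality itself, which I would establish by applying $L$-smoothness to the auxiliary convex function $h_{i}(x):=f_{i}(x)-\langle\nabla\! f_{i}(x^{*}),\,x\rangle$, whose gradient vanishes at $x^{*}$ so that $x^{*}$ is its global minimizer; the inequality $h_{i}(x^{*})\leq h_{i}(x-\tfrac{1}{L}\nabla\! h_{i}(x))$ then rearranges into the claimed bound. The one place the \emph{smoothness} of the full objective is genuinely used is in the vanishing of the linear cross-terms after averaging, which hinges on $\frac{1}{n}\sum_{i}\nabla\! f_{i}(x^{*})=\nabla\! f(x^{*})=0$, i.e.\ a first-order optimality condition rather than a mere subgradient condition; this is precisely why the bound is stated for the smooth case where $g(\cdot)$ has been absorbed into the $f_{i}(\cdot)$.
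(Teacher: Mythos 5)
Your proposal is correct and takes essentially the same route as the paper's proof: reduce the variance to $\mathbb{E}\|\nabla f_{i^{s}_{k}}(x^{s}_{k})-\nabla f_{i^{s}_{k}}(\widetilde{x}^{s-1})\|^{2}$ by dropping the centering term, split each summand around $x^{*}$ via $\|a-b\|^{2}\leq 2\|a\|^{2}+2\|b\|^{2}$, and finish with the averaged per-component co-coercivity bound (the paper's Lemma~\ref{lemm11} in Appendix A) together with the first-order optimality condition $\nabla f(x^{*})=0$. The only difference is cosmetic: you derive the co-coercivity inequality from scratch via the auxiliary function $h_{i}(x)=f_{i}(x)-\langle\nabla f_{i}(x^{*}),x\rangle$, whereas the paper simply cites Theorem 2.1.5 of~\cite{nesterov:co} and Lemma 3.4 of~\cite{xiao:prox-svrg}.
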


The proof of this lemma is included in APPENDIX A. Lemma~\ref{lemm2} provides the upper bound on the expected variance of the variance reduced gradient estimator in \eqref{equ051}, i.e., the SVRG estimator independently introduced in~\cite{johnson:svrg,zhang:svrg}. For Algorithm \ref{alg3} with \emph{Option I} and a fixed learning rate, we give the following key result for our analysis.

\begin{lemma}[Option I and smooth objectives]
\label{lemm4}
Let $\beta\!=\!1/(L\eta)$. If each $f_{i}(\cdot)$ is convex and $L$-smooth, then the following inequality holds for all $s=1,2,\ldots,S$,
\begin{equation*}
\begin{split}
&\left(1-\frac{2}{\beta\!-\!1}\right)\mathbb{E}\!\left[f(\widetilde{x}^{s})-f(x^{*})\right]+\frac{1}{m\!-\!1}\mathbb{E}\!\left[f(x^{s}_{m})-f(x^{*})\right]\\
\leq&\frac{2m}{(\beta\!-\!1)(m\!-\!1)}\mathbb{E}\!\left[f(\widetilde{x}^{s-1})\!-\!f(x^{*})\right]+\frac{2}{(\beta\!-\!1)(m\!-\!1)}\mathbb{E}\!\left[f(x^{s}_{0})\!-\!f(x^{*})\right]+\frac{L\beta }{2(m\!-\!1)}\mathbb{E}\!\left[\|x^{*}\!-\!x^{s}_{0}\|^2-\|x^{*}\!-\!x^{s}_{m}\|^2\right].
\end{split}
\end{equation*}
\end{lemma}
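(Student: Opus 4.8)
The plan is to prove a per-inner-iteration recursion that telescopes the squared distances to $x^{*}$ across $k=0,1,\ldots,m-1$, and then convert the interior sum $\sum_{k=1}^{m-1}[f(x^{s}_{k})-f(x^{*})]$ into $(m-1)[f(\widetilde{x}^{s})-f(x^{*})]$ by Jensen's inequality, using that in \emph{Option I} the snapshot is the average $\widetilde{x}^{s}=\frac{1}{m-1}\sum_{k=1}^{m-1}x^{s}_{k}$. Recall that in the smooth case $g$ is folded into $f$, so the update is simply $x^{s}_{k+1}=x^{s}_{k}-\eta\widetilde{\nabla}f_{i^{s}_{k}}(x^{s}_{k})$, and the natural distance coefficient is $\frac{1}{2\eta}=\frac{L\beta}{2}$.

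First I would derive a per-iteration bound on $\mathbb{E}[f(x^{s}_{k+1})-f(x^{*})]$. Combining the $L$-smoothness descent inequality at $x^{s}_{k+1}$ with convexity of $f$ at $x^{s}_{k}$ gives $f(x^{s}_{k+1})-f(x^{*})\le\langle\nabla f(x^{s}_{k}),\,x^{s}_{k+1}-x^{*}\rangle+\frac{L}{2}\|x^{s}_{k+1}-x^{s}_{k}\|^{2}$. I would then write $\nabla f(x^{s}_{k})=\widetilde{\nabla}f_{i^{s}_{k}}(x^{s}_{k})-\delta_{k}$ with $\delta_{k}:=\widetilde{\nabla}f_{i^{s}_{k}}(x^{s}_{k})-\nabla f(x^{s}_{k})$, and apply the $3$-point property (Lemma~\ref{lemm1}) to the update viewed as $x^{s}_{k+1}=\mathop{\arg\min}_{x}\{\frac{1}{2\eta}\|x-x^{s}_{k}\|^{2}+\langle\widetilde{\nabla}f_{i^{s}_{k}}(x^{s}_{k}),x\rangle\}$, with $\tau=1/\eta$ and $z=x^{*}$. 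This yields $\langle\widetilde{\nabla}f_{i^{s}_{k}}(x^{s}_{k}),x^{s}_{k+1}-x^{*}\rangle\le\frac{L\beta}{2}[\|x^{*}-x^{s}_{k}\|^{2}-\|x^{*}-x^{s}_{k+1}\|^{2}]-\frac{L\beta}{2}\|x^{s}_{k+1}-x^{s}_{k}\|^{2}$, and combining with the smoothness term leaves the telescoping distance together with the crucial negative quadratic term $-\frac{L(\beta-1)}{2}\|x^{s}_{k+1}-x^{s}_{k}\|^{2}$.

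Next I would take expectation over $i^{s}_{k}$. The delicate point is the cross term $-\langle\delta_{k},x^{s}_{k+1}-x^{*}\rangle$, since $x^{s}_{k+1}$ depends on $i^{s}_{k}$ and the noise is not orthogonal to it. I would split $x^{s}_{k+1}-x^{*}=(x^{s}_{k+1}-x^{s}_{k})+(x^{s}_{k}-x^{*})$: the second piece is mean-zero because $\mathbb{E}[\delta_{k}]=0$ with $x^{s}_{k},x^{*}$ fixed given the past, while substituting $x^{s}_{k+1}-x^{s}_{k}=-\eta(\nabla f(x^{s}_{k})+\delta_{k})$ into the first leaves only $\eta\,\mathbb{E}\|\delta_{k}\|^{2}$. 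I would also expand $\mathbb{E}\|x^{s}_{k+1}-x^{s}_{k}\|^{2}=\eta^{2}(\|\nabla f(x^{s}_{k})\|^{2}+\mathbb{E}\|\delta_{k}\|^{2})$ and fold the negative quadratic term into the $\eta\,\mathbb{E}\|\delta_{k}\|^{2}$ contribution; the $\|\delta_{k}\|^{2}$ terms then carry the shrunken coefficient $\frac{\beta+1}{2L\beta^{2}}$, while $\|\nabla f(x^{s}_{k})\|^{2}\ge0$ is discarded. Applying Lemma~\ref{lemm2}, $\mathbb{E}\|\delta_{k}\|^{2}\le4L[f(x^{s}_{k})-f(x^{*})+f(\widetilde{x}^{s-1})-f(x^{*})]$, produces a recursion $\mathbb{E}[f(x^{s}_{k+1})-f(x^{*})]\le\frac{L\beta}{2}\mathbb{E}[\|x^{*}-x^{s}_{k}\|^{2}-\|x^{*}-x^{s}_{k+1}\|^{2}]+c\,\mathbb{E}[f(x^{s}_{k})-f(x^{*})+f(\widetilde{x}^{s-1})-f(x^{*})]$ with $c=\frac{2(\beta+1)}{\beta^{2}}\le\frac{2}{\beta-1}$; replacing $c$ by the larger $\frac{2}{\beta-1}$ only weakens the bound (the nonnegativity of all function-gap terms makes this monotone), so I may use $\frac{2}{\beta-1}$ henceforth.

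Finally I would sum over $k=0,\ldots,m-1$: the distances telescope to $\|x^{*}-x^{s}_{0}\|^{2}-\|x^{*}-x^{s}_{m}\|^{2}$; the index shift turns $\sum_{k}[f(x^{s}_{k+1})-f(x^{*})]$ into $\sum_{k=1}^{m-1}[\cdots]+[f(x^{s}_{m})-f(x^{*})]$ and isolates the boundary term $f(x^{s}_{0})-f(x^{*})$, while the $f(\widetilde{x}^{s-1})-f(x^{*})$ terms accumulate a factor $m$. Moving $\frac{2}{\beta-1}\sum_{k=1}^{m-1}[f(x^{s}_{k})-f(x^{*})]$ to the left yields the factor $(1-\frac{2}{\beta-1})$; since $\widetilde{x}^{s}$ averages $x^{s}_{1},\ldots,x^{s}_{m-1}$, Jensen replaces $\sum_{k=1}^{m-1}[f(x^{s}_{k})-f(x^{*})]$ by $(m-1)[f(\widetilde{x}^{s})-f(x^{*})]$, and dividing by $m-1$ gives the claim. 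I expect the main obstacle to lie in the variance bookkeeping: besides correctly handling the $i^{s}_{k}$-dependent cross term, one must \emph{retain and use} the negative term $-\frac{L(\beta-1)}{2}\|x^{s}_{k+1}-x^{s}_{k}\|^{2}$ to drive the variance coefficient down to at most $\frac{2}{\beta-1}$. Simply discarding that term and invoking Lemma~\ref{lemm2} leaves a coefficient of $4/\beta$, which in the regime $\beta>3$ needed to keep $1-\frac{2}{\beta-1}>0$ (so that the Jensen step points the right way) is too large; the negative quadratic term is exactly what closes this gap.
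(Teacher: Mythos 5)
Your proposal is correct: it establishes exactly the same per-iteration recursion as the paper, namely
\begin{equation*}
\mathbb{E}[f(x^{s}_{k+1})]-f(x^{*})\leq\frac{2}{\beta\!-\!1}\!\left[f(x^{s}_{k})-f(x^{*})+f(\widetilde{x}^{s-1})-f(x^{*})\right]+\frac{L\beta}{2}\mathbb{E}\!\left[\|x^{*}-x^{s}_{k}\|^{2}-\|x^{*}-x^{s}_{k+1}\|^{2}\right],
\end{equation*}
and from there your summation over $k$, the index shift, moving $\frac{2}{\beta-1}\sum_{k=1}^{m-1}[f(x^{s}_{k})-f(x^{*})]$ to the left, the Jensen step with $\widetilde{x}^{s}=\frac{1}{m-1}\sum_{k=1}^{m-1}x^{s}_{k}$, and division by $m-1$ coincide with the paper's ending. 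Where you genuinely diverge is the variance bookkeeping inside the recursion. The paper splits $\frac{L}{2}\|x^{s}_{k+1}-x^{s}_{k}\|^{2}=\frac{L\beta}{2}\|\cdot\|^{2}-\frac{L(\beta-1)}{2}\|\cdot\|^{2}$ and bounds the cross term $\langle\nabla f(x^{s}_{k})-v^{s}_{k},\,x^{s}_{k+1}-x^{s}_{k}\rangle$ by Young's inequality with the matched parameter $L(\beta-1)$, so the negative quadratic cancels exactly and Lemma~\ref{lemm2} delivers the coefficient $\frac{1}{2L(\beta-1)}\cdot 4L=\frac{2}{\beta-1}$; notably, this argument never uses a closed form for $x^{s}_{k+1}$, only that it minimizes $\frac{1}{2\eta}\|x-x^{s}_{k}\|^{2}+\langle v^{s}_{k},x\rangle$, which is why the identical argument transfers verbatim to the proximal update (Lemma~\ref{lemm6}, Appendix~D) and the mini-batch setting (Lemma~\ref{lemm9}). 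You instead exploit the explicit gradient step $x^{s}_{k+1}-x^{s}_{k}=-\eta(\nabla f(x^{s}_{k})+\delta_{k})$ to evaluate the noise term $-\langle\delta_{k},x^{s}_{k+1}-x^{*}\rangle$ exactly: unbiasedness annihilates the $\langle\delta_{k},x^{s}_{k}-x^{*}\rangle$ piece, the remainder contributes $\eta\,\mathbb{E}\|\delta_{k}\|^{2}$, and folding in the negative quadratic via $\mathbb{E}\|x^{s}_{k+1}-x^{s}_{k}\|^{2}=\eta^{2}(\|\nabla f(x^{s}_{k})\|^{2}+\mathbb{E}\|\delta_{k}\|^{2})$ yields the slightly sharper constant $\frac{2(\beta+1)}{\beta^{2}}\leq\frac{2}{\beta-1}$, which you then legitimately relax (the gap terms are nonnegative since $x^{*}$ is the minimizer). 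Your route buys a marginally better per-iteration constant and makes fully explicit why retaining $-\frac{L(\beta-1)}{2}\|x^{s}_{k+1}-x^{s}_{k}\|^{2}$ is essential (dropping it degrades the coefficient to $4/\beta$, which exceeds $\frac{2}{\beta-1}$ whenever $\beta>2$, exactly as you observe); the paper's Young's-inequality route buys generality, since it survives when the iterate difference has no explicit formula. Both are valid proofs of the stated lemma.
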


\begin{proof}
In order to simplify notation, the stochastic gradient estimator is defined as: $v^{s}_{k}\!:=\!\nabla f_{i^{s}_{k}}(x^{s}_{k})-\nabla f_{i^{s}_{k}}(\widetilde{x}^{s-\!1})+\nabla f(\widetilde{x}^{s-\!1})$. Since each component function $f_{i}(x)$ is $L$-smooth, which implies that the gradient of the average function $f(x)$ is also $L$-smooth, i.e., for all $x,y\!\in\! \mathbb{R}^{d}$,
\begin{displaymath}
\|\nabla f(x)-\nabla f(y)\|\leq L\|x-y\|,
\end{displaymath}
whose equivalent form is
\begin{displaymath}
f(y)\leq f(x)+\langle\nabla f(x),\;y-x\rangle+\frac{L}{2}\|y-x\|^{2}.
\end{displaymath}
Applying the above smoothness inequality, we have
\begin{equation}\label{equ31}
\begin{split}
f(x^{s}_{k+1})\leq\,& f(x^{s}_{k})+\left\langle\nabla f(x^{s}_{k}),\,x^{s}_{k+1}-x^{s}_{k}\right\rangle+\frac{L}{2}\!\left\|x^{s}_{k+1}-x^{s}_{k}\right\|^{2}\\
=\,& f(x^{s}_{k})+\left\langle\nabla f(x^{s}_{k}),\,x^{s}_{k+1}-x^{s}_{k}\right\rangle+\frac{L\beta }{2}\!\left\|x^{s}_{k+1}-x^{s}_{k}\right\|^{2}-\frac{L(\beta \!-\!1)}{2}\!\left\|x^{s}_{k+1}-x^{s}_{k}\right\|^{2}\\
=\,& f(x^{s}_{k})+\left\langle v^{s}_{k},\,x^{s}_{k+1}-x^{s}_{k}\right\rangle+\frac{L\beta }{2}\|x^{s}_{k+1}-x^{s}_{k}\|^2\\
&+\left\langle\nabla f(x^{s}_{k})-v^{s}_{k},\,x^{s}_{k+1}-x^{s}_{k}\right\rangle-\frac{L(\beta \!-\!1)}{2}\|x^{s}_{k+1}-x^{s}_{k}\|^{2},
\end{split}
\end{equation}
where $\beta\!=\!1/(L\eta)\!>\!3$ is a constant. Using Lemma~\ref{lemm2}, then we get
\begin{equation}\label{equ32}
\begin{split}
&\mathbb{E}\!\left[\left\langle\nabla\! f(x^{s}_{k})-v^{s}_{k},\,x^{s}_{k+1}-x^{s}_{k}\right\rangle-\frac{L(\beta \!-\!1)}{2}\|x^{s}_{k+1}-x^{s}_{k}\|^{2}\right]\\
\leq\,& \mathbb{E}\!\left[\frac{1}{2L(\beta \!-\!1)}\|\nabla\!f(x^{s}_{k})-v^{s}_{k}\|^{2}+\frac{L(\beta \!-\!1)}{2}\|x^{s}_{k+1}\!-\!x^{s}_{k}\|^{2}-\frac{L(\beta \!-\!1)}{2}\|x^{s}_{k+1}\!-\!x^{s}_{k}\|^{2}\right]\\
\leq\,& \frac{2}{\beta \!-\!1}\!\left[f(x^{s}_{k})-f(x^{*})+f(\widetilde{x}^{s-1})-f(x^{*})\right],
\end{split}
\end{equation}
where the first inequality holds due to the Young's inequality (i.e., $y^{T}z\!\leq\!{\|y\|^2}/{(2\gamma)}\!+\!{\gamma\|z\|^2}/{2}$ for all $\gamma\!>\!0$ and $y,z\!\in\! \mathbb{R}^{d}$), and the second inequality follows from Lemma~\ref{lemm2}.

Substituting the inequality in \eqref{equ32} into the inequality in \eqref{equ31}, and taking the expectation with respect to the random choice $i^{s}_{k}$, we have
\begin{equation*}
\begin{split}
&\quad\;\, \mathbb{E}[f(x^{s}_{k+1})]\\
&\leq \mathbb{E}[f(x^{s}_{k})]+\mathbb{E}\!\left[\left\langle v^{s}_{k}, \,x^{s}_{k+\!1}\!-\!x^{s}_{k}\right\rangle+\frac{L\beta }{2}\|x^{s}_{k+\!1}\!-\!x^{s}_{k}\|^2\right]+\frac{2}{\beta \!-\!1}\!\left[f(x^{s}_{k})\!-\!f(x^{*})\!+\!f(\widetilde{x}^{s-\!1})\!-\!f(x^{*})\right]\\
&\leq \mathbb{E}[f(x^{s}_{k})]+\mathbb{E}\!\left[\left\langle v^{s}_{k}, \,x^{*}\!-\!x^{s}_{k}\right\rangle+\frac{L\beta }{2}(\|x^{*}\!-\!x^{s}_{k}\|^2-\|x^{*}\!-\!x^{s}_{k+\!1}\|^2)\right]+\frac{2}{\beta \!-\!1}\!\left[f(x^{s}_{k})\!-\!f(x^{*})\!+\!f(\widetilde{x}^{s-\!1})\!-\!f(x^{*})\right]\\
&\leq \mathbb{E}[f(x^{s}_{k})]+\mathbb{E}[\left\langle \nabla\! f(x^{s}_{k}), \,x^{*}\!-\!x^{s}_{k}\right\rangle]+\mathbb{E}\!\left[\frac{L\beta }{2}(\|x^{*}\!-\!x^{s}_{k}\|^2-\|x^{*}\!-\!x^{s}_{k+\!1}\|^2)\right]+\frac{2}{\beta \!-\!1}\!\left[f(x^{s}_{k})\!-\!f(x^{*})\!+\!f(\widetilde{x}^{s-\!1})\!-\!f(x^{*})\right]\\
&\leq f(x^{*})+\frac{L\beta }{2}\mathbb{E}\!\left[\left(\|x^{*}\!-x^{s}_{k}\|^2-\|x^{*}\!-x^{s}_{k+1}\|^2\right)\right]+\frac{2}{\beta \!-\!1}\!\left[f(x^{s}_{k})-f(x^{*})+f(\widetilde{x}^{s-1})-f(x^{*})\right].
\end{split}
\end{equation*}
Here, the first inequality holds due to the inequality in \eqref{equ31} and the inequality in \eqref{equ32}; the second inequality follows from Lemma~\ref{lemm1} with $\hat{z}\!=\!x^{s}_{k+1}$, $z\!=\!x^{*}$, $z_{0}\!=\!x^{s}_{k}$, $\tau\!=\!L\beta \!=\!1/\eta$, and $r(z)\!:=\!\langle v^{s}_{k}, \,z\!-\!x^{s}_{k}\rangle$; the third inequality holds due to the fact that $\mathbb{E}[v^{s}_{k}]\!=\!\nabla\! f(x^{s}_{k})$; and the last inequality follows from the convexity of the smooth function $f(\cdot)$, i.e., $f(x^{s}_{k})\!+\!\langle\nabla\! f(x^{s}_{k}), \,x^{*}\!-\!x^{s}_{k}\rangle\!\leq\!f(x^{*})$. The above inequality can be rewritten as follows:
\begin{equation*}
\begin{split}
\mathbb{E}[f(x^{s}_{k+1})]-f(x^{*})
\leq \frac{2}{\beta \!-\!1}\!\left[f(x^{s}_{k})-f(x^{*})+f(\widetilde{x}^{s-1})-f(x^{*})\right]+ \frac{L\beta }{2}\mathbb{E}\!\left[\|x^{*}-x^{s}_{k}\|^2-\|x^{*}-x^{s}_{k+1}\|^2\right].
\end{split}
\end{equation*}
Summing the above inequality over $k=0,1,\ldots,m\!-\!1$, then
\begin{equation}\label{equ33}
\begin{split}
\sum^{m}_{k=1}\mathbb{E}\!\left[f(x^{s}_{k})-f(x^{*})\right]
\leq \sum^{m}_{k=1}\!\left\{\frac{2}{\beta \!-\!1}\!\left[f(x^{s}_{k-1})-f(x^{*})+f(\widetilde{x}^{s-1})-f(x^{*})\right]+\frac{L\beta }{2}\mathbb{E}\!\left[\|x^{*}\!-x^{s}_{k-1}\|^2-\|x^{*}\!-x^{s}_{k}\|^2\right]\right\}.
\end{split}
\end{equation}

Due to the setting of $\widetilde{x}^{s}\!=\!\frac{1}{m-1}\sum^{m-1}_{k=1}x^{s}_{k}$ in Option I, and the convexity of $f(\cdot)$, then we have
\begin{equation}\label{equ34}
f(\widetilde{x}^{s})\leq \frac{1}{m-1}\sum^{m-1}_{k=1}f(x^{s}_{k}).
\end{equation}
The left and right hand sides of the inequality in \eqref{equ33} can be rewritten as follows:
\begin{equation*}
\begin{split}
&\qquad\quad\qquad\sum^{m}_{k=1}\mathbb{E}\!\left[f(x^{s}_{k})-f(x^{*})\right]=\sum^{m-1}_{k=1}\mathbb{E}\!\left[f(x^{s}_{k})-f(x^{*})\right]+\mathbb{E}\!\left[f(x^{s}_{m})-f(x^{*})\right],\\
&\sum^{m}_{k=1}\left\{\frac{2}{\beta \!-\!1}\!\left[f(x^{s}_{k-1})-f(x^{*})+f(\widetilde{x}^{s-1})-f(x^{*})\right]+\frac{L\beta }{2}\mathbb{E}\!\left[\|x^{*}-x^{s}_{k-1}\|^2-\|x^{*}-x^{s}_{k}\|^2\right]\right\}\\
=& \frac{2}{\beta\!-\!1}\sum^{m-1}_{k=1}\!\left[f(x^{s}_{k})\!-\!f(x^{*})\right]+\frac{2}{\beta\!-\!1}\!\left\{f(x^{s}_{0})\!-\!f(x^{*})+m[f(\widetilde{x}^{s-1})\!-\!f(x^{*})]\right\}+\frac{L\beta }{2}\mathbb{E}\!\left[\|x^{*}\!-\!x^{s}_{0}\|^2\!-\!\|x^{*}\!-\!x^{s}_{m}\|^2\right]\!.
\end{split}
\end{equation*}
Subtracting $\frac{2}{\beta-1}\sum^{m-1}_{k=1}\!\left[f(x^{s}_{k})\!-\!f(x^{*})\right]$ from both sides of the inequality in \eqref{equ33}, then we obtain
\begin{equation*}
\begin{split}
&\left(1-\frac{2}{\beta\!-\!1}\right)\sum^{m-1}_{k=1}\mathbb{E}\!\left[f(x^{s}_{k})-f(x^{*})\right]+\mathbb{E}\!\left[f(x^{s}_{m})-f(x^{*})\right]\\
\leq& \frac{2}{\beta\!-\!1}\mathbb{E}\!\left\{f(x^{s}_{0})\!-\!f(x^{*})+m[f(\widetilde{x}^{s-1})\!-\!f(x^{*})]\right\}+\frac{L\beta }{2}\mathbb{E}\!\left[\|x^{*}\!-\!x^{s}_{0}\|^2-\|x^{*}\!-\!x^{s}_{m}\|^2\right].
\end{split}
\end{equation*}
Applying the inequality in \eqref{equ34}, we have
\begin{equation*}
\begin{split}
&\left(1-\frac{2}{\beta\!-\!1}\right)(m\!-\!1)\mathbb{E}\!\left[f(\widetilde{x}^{s})-f(x^{*})\right]+\mathbb{E}\!\left[f(x^{s}_{m})-f(x^{*})\right]\\
\leq&\left(1-\frac{2}{\beta\!-\!1}\right)\sum^{m-1}_{k=1}\mathbb{E}\!\left[f(x^{s}_{k})-f(x^{*})\right]+\mathbb{E}\!\left[f(x^{s}_{m})-f(x^{*})\right]\\
\leq& \frac{2}{\beta\!-\!1}\mathbb{E}\!\left\{f(x^{s}_{0})\!-\!f(x^{*})+m[f(\widetilde{x}^{s-1})\!-\!f(x^{*})]\right\}+\frac{L\beta }{2}\mathbb{E}\!\left[\|x^{*}\!-\!x^{s}_{0}\|^2-\|x^{*}\!-\!x^{s}_{m}\|^2\right].
\end{split}
\end{equation*}
Dividing both sides of the above inequality by $(m\!-\!1)$, we arrive at
\begin{equation*}
\begin{split}
&\left(1-\frac{2}{\beta\!-\!1}\right)\mathbb{E}\!\left[f(\widetilde{x}^{s})-f(x^{*})\right]+\frac{1}{m\!-\!1}\mathbb{E}\!\left[f(x^{s}_{m})-f(x^{*})\right]\\
\leq& \frac{2}{(\beta\!-\!1)(m\!-\!1)}\mathbb{E}\!\left[f(x^{s}_{0})\!-\!f(x^{*})\right]\!+\!\frac{2m}{(\beta\!-\!1)(m\!-\!1)}\mathbb{E}\!\left[f(\widetilde{x}^{s-\!1})\!-\!f(x^{*})\right]\!+\!\frac{L\beta }{2(m\!-\!1)}\mathbb{E}\!\left[\|x^{*}\!-\!x^{s}_{0}\|^2\!-\!\|x^{*}\!-\!x^{s}_{m}\|^2\right].
\end{split}
\end{equation*}
This completes the proof.
\end{proof}
\vspace{3mm}

The first main result is the following theorem, which provides the convergence guarantee of VR-SGD with Option I for solving smooth and general convex minimization problems.

\begin{theorem}[Option I and smooth objectives]
\label{the1}
Suppose Assumption~\ref{assum1} holds. Then the following inequality holds
\begin{equation*}
\begin{split}
\mathbb{E}\!\left[f(\widehat{x}^{S})\right]-f(x^{*})
\leq\frac{2(m+1)}{[(\beta\!-\!1)(m\!-\!1)\!-\!4m\!+\!2]S}[f(\widetilde{x}^{0})-f(x^{*})]+\frac{\beta(\beta-1)L}{2[(\beta\!-\!1)(m\!-\!1)\!-\!4m\!+\!2]S}\|\widetilde{x}^{0}\!-x^{*}\|^{2},
\end{split}
\end{equation*}
where $\widehat{x}^{S}=\widetilde{x}^{S}$ if $f(\widetilde{x}^{S})\leq f(\overline{x}^{S})$, and $\overline{x}^{S}=\frac{1}{S}\!\sum^{S}_{s=1}\widetilde{x}^{s}$. Otherwise,  $\widehat{x}^{S}=\overline{x}^{S}$.
\end{theorem}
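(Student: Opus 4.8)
The plan is to chain the single-epoch estimate of Lemma~\ref{lemm4} across all $S$ epochs, exploiting the two structural identities built into Algorithm~\ref{alg2}: the starting point of each epoch is the previous last iterate, $x^{s+1}_0=x^s_m$, and the very first starting point is the initial snapshot, $x^1_0=\widetilde{x}^0$. To keep the bookkeeping transparent I would introduce the shorthand $a_s:=\mathbb{E}[f(\widetilde{x}^s)-f(x^*)]$, $b_s:=\mathbb{E}[f(x^s_m)-f(x^*)]$ and $D_s:=\mathbb{E}[\|x^*-x^s_0\|^2]$. Because $x^{s+1}_0=x^s_m$, the quantity $\mathbb{E}[f(x^s_0)-f(x^*)]$ on the right of Lemma~\ref{lemm4} equals $b_{s-1}$, and the distance difference there equals $D_s-D_{s+1}$; the initialization $x^1_0=\widetilde{x}^0$ then yields the two boundary simplifications $b_0=a_0$ and $D_1=\|\widetilde{x}^0-x^*\|^2$.

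With this notation Lemma~\ref{lemm4} reads $(1-\tfrac{2}{\beta-1})a_s+\tfrac{1}{m-1}b_s\le \tfrac{2m}{(\beta-1)(m-1)}a_{s-1}+\tfrac{2}{(\beta-1)(m-1)}b_{s-1}+\tfrac{L\beta}{2(m-1)}(D_s-D_{s+1})$. I would multiply through by $(m-1)$ and sum over $s=1,\dots,S$. The distance terms telescope to $D_1-D_{S+1}\le D_1=\|\widetilde{x}^0-x^*\|^2$, while the $a_{s-1}$ and $b_{s-1}$ sums become $\sum_{s=0}^{S-1}a_s$ and $\sum_{s=0}^{S-1}b_s$, i.e.\ the same sums as on the left shifted by one epoch. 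Collecting all $\sum_{s=1}^S a_s$ contributions, writing $c=\tfrac{2}{\beta-1}$, a short computation gives the net coefficient $(m-1)(1-c)-cm=\tfrac{(\beta-1)(m-1)-4m+2}{\beta-1}$, which is precisely the denominator appearing in the theorem; the boundary terms on the right collapse to $\tfrac{2(m+1)}{\beta-1}a_0+\tfrac{L\beta}{2}\|\widetilde{x}^0-x^*\|^2$ after invoking $b_0=a_0$.

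The remaining left-hand leftovers after the index shift — namely a positive multiple $(1-c)$ of $\sum_{s=1}^S b_s$ together with the end terms $cm\,a_S$ and $c\,b_S$ — are all nonnegative, since $a_s,b_s\ge0$ (as $x^*$ minimizes $f$) and $1-c=\tfrac{\beta-3}{\beta-1}>0$ under the standing assumption $\beta>3$ from Lemma~\ref{lemm4}; hence they may be dropped. Solving the resulting inequality for $\tfrac1S\sum_{s=1}^S a_s$ and clearing the common factor $\beta-1$ produces exactly the two advertised terms $\tfrac{2(m+1)}{[(\beta-1)(m-1)-4m+2]S}a_0$ and $\tfrac{\beta(\beta-1)L}{2[(\beta-1)(m-1)-4m+2]S}\|\widetilde{x}^0-x^*\|^2$. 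Finally, convexity of $f$ gives $f(\overline{x}^S)\le\tfrac1S\sum_{s=1}^S f(\widetilde{x}^s)$, so the min-selection defining $\widehat{x}^S$ yields $\mathbb{E}[f(\widehat{x}^S)-f(x^*)]\le\mathbb{E}[f(\overline{x}^S)-f(x^*)]\le\tfrac1S\sum_{s=1}^S a_s$, closing the argument. I expect the main obstacle to be purely organizational: the two distinct telescoping mechanisms (function values and squared distances, both driven by $x^{s+1}_0=x^s_m$) must be tracked simultaneously, and one must verify that after the shift every retained left-hand coefficient is genuinely nonnegative so that it can be discarded — this nonnegativity, together with the implicit requirement $(\beta-1)(m-1)-4m+2>0$, is what pins down the admissible range of the learning rate $\eta=1/(L\beta)$.
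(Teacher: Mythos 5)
Your proposal is correct and follows essentially the same route as the paper's own proof: both chain Lemma~\ref{lemm4} over the epochs, telescope the function-value and squared-distance terms via $x^{s+1}_0=x^s_m$ and $x^1_0=\widetilde{x}^0$, drop the nonnegative leftover terms, and finish with convexity of $f$ together with the min-selection defining $\widehat{x}^S$. The only cosmetic difference is bookkeeping: the paper first weakens the left-hand coefficient of $\mathbb{E}[f(x^s_m)-f(x^*)]$ from $\tfrac{1}{m-1}$ to $\tfrac{2}{(\beta-1)(m-1)}$ so that these terms telescope exactly before summing, whereas you keep the full coefficient and discard the nonnegative surplus $(1-c)\sum_{s} b_s + c\,b_S$ at the end --- both yield the identical final bound with the same admissibility condition $(\beta-1)(m-1)-4m+2>0$.
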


\vspace{3mm}

\begin{proof}
Since $2/(\beta\!-\!1)<1$, it is easy to verify that
\begin{equation}
\frac{2}{(\beta\!-\!1)(m\!-\!1)}\left\{\mathbb{E}[f(x^{s}_{m})]-f(x^{*})\right\}\leq\frac{1}{m\!-\!1}\left\{\mathbb{E}[f(x^{s}_{m})]-f(x^{*})\right\}.
\end{equation}
Applying the above inequality and Lemma~\ref{lemm4}, we have
\begin{equation*}
\begin{split}
&\left(1-\frac{2}{\beta\!-\!1}\right)\mathbb{E}\!\left[f(\widetilde{x}^{s})-f(x^{*})\right]+\frac{2}{(\beta\!-\!1)(m\!-\!1)}\mathbb{E}[f(x^{s}_{m})-f(x^{*})]\\
\leq&\left(1-\frac{2}{\beta\!-\!1}\right)\mathbb{E}\!\left[f(\widetilde{x}^{s})-f(x^{*})\right]+\frac{1}{m\!-\!1}\mathbb{E}[f(x^{s}_{m})-f(x^{*})]\\
\leq&\frac{2}{(\beta\!-\!1)(m\!-\!1)}\mathbb{E}\!\left[f(x^{s}_{0})-f(x^{*})\right]+\frac{2m}{(\beta\!-\!1)(m\!-\!1)}\mathbb{E}\!\left[f(\widetilde{x}^{s-1})-f(x^{*})\right]+\frac{L\beta}{2(m\!-\!1)}\mathbb{E}\!\left[\|x^{*}\!-x^{s}_{0}\|^{2}-\|x^{*}\!-x^{s}_{m}\|^{2}\right].
\end{split}
\end{equation*}
Summing the above inequality over $s\!=\!1,2,\ldots,S$, taking expectation with respect to the history of random variables $i^{s}_{k}$, and using the setting of $x^{s+1}_{0}\!=\!x^{s}_{m}$, we obtain
\begin{equation*}
\begin{split}
&\sum^{S}_{s=1}\left(1-\frac{2}{\beta\!-\!1}\right)\mathbb{E}\!\left[f(\widetilde{x}^{s})-f(x^{*})\right]\\
\leq&\:\sum^{S}_{s=1}\left\{\frac{2}{(\beta\!-\!1)(m\!-\!1)}\mathbb{E}\!\left[f(x^{s}_{0})-f(x^{*})-(f(x^{s}_{m})-f(x^{*}))\right]+\frac{2m}{(\beta\!-\!1)(m\!-\!1)}\mathbb{E}[f(\widetilde{x}^{s-1})-f(x^{*})]\right\}\\
&+\frac{L\beta}{2(m\!-\!1)}\sum^{S}_{s=1}\mathbb{E}\!\left[\|x^{*}-x^{s}_{0}\|^{2}-\|x^{*}-x^{s}_{m}\|^{2}\right].
\end{split}
\end{equation*}

Subtracting $\frac{2m}{(\beta-1)(m-1)}\!\sum^{S-1}_{s=1}\left[f(\widetilde{x}^{s})\!-\!f(x^{*})\right]$ from both sides of the above inequality, we have
\begin{equation*}
\begin{split}
&\:\frac{2m}{(\beta\!-\!1)(m\!-\!1)}{\mathbb{E}[f(\widetilde{x}^{S})-f(x^{*})}]+\sum^{S}_{s=1}\left(1-\frac{4}{\beta\!-\!1}-\frac{2}{(\beta\!-\!1)(m\!-\!1)}\right)\mathbb{E}\!\left[f(\widetilde{x}^{s})-f(x^{*})\right]\\
\leq&\:\frac{2}{(\beta\!-\!1)(m\!-\!1)}\mathbb{E}\!\left[f(x^{1}_{0})-f(x^{*})-(f(x^{S}_{m})-f(x^{*}))\right]+\frac{2m}{(\beta\!-\!1)(m\!-\!1)}\mathbb{E}[f(\widetilde{x}^{0})-f(x^{*})]\\
&+\frac{L\beta}{2(m-1)}\mathbb{E}\!\left[\|x^{*}-x^{1}_{0}\|^{2}-\|x^{*}-x^{S}_{m}\|^{2}\right].
\end{split}
\end{equation*}
Dividing both sides of the above inequality by $S$, and using the setting of $\widetilde{x}^{0}\!=\!x^{1}_{0}$, we arrive at
\begin{equation*}
\begin{split}
&\:\frac{1}{S}\left(1-\frac{4}{\beta\!-\!1}-\frac{2}{(\beta\!-\!1)(m\!-\!1)}\right)\sum^{S}_{s=1}\mathbb{E}\!\left[f(\widetilde{x}^{s})-f(x^{*})\right]\\
\leq&\,\frac{2m}{(\beta\!-\!1)(m\!-\!1)S}{\mathbb{E}[f(\widetilde{x}^{S})-f(x^{*})}]+\frac{1}{S}\!\left(1-\frac{4}{\beta\!-\!1}-\frac{2}{(\beta\!-\!1)(m\!-\!1)}\right)\sum^{S}_{s=1}\mathbb{E}\!\left[f(\widetilde{x}^{s})-f(x^{*})\right]\\
\leq&\,\frac{2}{(\beta\!-\!1)(m\!-\!1)S}[f(x^{1}_{0})-f(x^{*})]+\frac{2m}{(\beta\!-\!1)(m\!-\!1)S}[f(\widetilde{x}^{0})-f(x^{*})]+\frac{L\beta}{2(m\!-\!1)S}\|x^{*}-x^{1}_{0}\|^{2}\\
=&\,\frac{2(m+1)}{(\beta\!-\!1)(m\!-\!1)S}[f(\widetilde{x}^{0})-f(x^{*})]+\frac{L\beta}{2(m\!-\!1)S}\|\widetilde{x}^{0}-x^{*}\|^{2},
\end{split}
\end{equation*}
where the first inequality holds due to the fact that $f(\widetilde{x}^{S})\!-\!f(x^{*})\!\geq\!0$; the second inequality holds due to the facts that $f(x^{S}_{m})\!-\!f(x^{*})\!\geq\!0$ and $\|x^{*}\!-\!x^{S}_{m}\|^{2}\!\geq\!0$; and the last equality follows from the setting of $\widetilde{x}^{0}\!=\!x^{1}_{0}$.

Due to the definition of $\overline{x}^{S}$ and the convexity of $f(\cdot)$, we have $f(\overline{x}^{S})\!\leq\! \frac{1}{S}\sum^{S}_{s=1}f(\widetilde{x}^{s})$, and therefore the above inequality becomes:
\begin{equation}
\begin{split}
&\:\left(1-\frac{4}{\beta\!-\!1}-\frac{2}{(\beta\!-\!1)(m\!-\!1)}\right)\mathbb{E}\!\left[f(\overline{x}^{S})-f(x^{*})\right]\\
\leq&\,\frac{2(m+1)}{(\beta\!-\!1)(m\!-\!1)S}[f(\widetilde{x}^{0})-f(x^{*})]+\frac{L\beta}{2(m\!-\!1)S}\|\widetilde{x}^{0}-x^{*}\|^{2}.
\end{split}
\end{equation}
Dividing both sides of the above inequality by $c_1\!=\!1\!-\!\frac{4}{\beta-1}\!-\!\frac{2}{(\beta-1)(m-1)}\!>\!0$, we have
\begin{equation*}
\mathbb{E}\!\left[f(\overline{x}^{S})\right]-f(x^{*})\leq\frac{2(m+1)}{c_1(\beta\!-\!1)(m\!-\!1)S}[f(\widetilde{x}^{0})-f(x^{*})]+\frac{L\beta}{2c_1(m\!-\!1)S}\|\widetilde{x}^{0}-x^{*}\|^{2}.
\end{equation*}
Due to the setting for the output of Algorithm~\ref{alg3}, $\widehat{x}^{S}=\widetilde{x}^{S}$ if $f(\widetilde{x}^{S})\leq f(\overline{x}^{S})$. Then
\begin{equation*}
\begin{split}
\mathbb{E}\!\left[f(\widehat{x}^{S})\right]-f(x^{*})\leq\mathbb{E}\!\left[f(\overline{x}^{S})\right]-f(x^{*})\leq\frac{2(m+1)}{c_1(\beta\!-\!1)(m\!-\!1)S}[f(\widetilde{x}^{0})-f(x^{*})]+\frac{L\beta}{2c_1(m\!-\!1)S}\|\widetilde{x}^{0}-x^{*}\|^{2}.
\end{split}
\end{equation*}
Alternatively, when $f(\widetilde{x}^{S})\geq f(\overline{x}^{S})$, let $\widehat{x}^{S}=\overline{x}^{S}$, and the above inequality still holds. This completes the proof.
\end{proof}
\vspace{3mm}

From Lemma~\ref{lemm4}, Theorem~\ref{the1}, and their proofs, one can see that our convergence analysis is very different from those of existing stochastic methods, such as SVRG~\cite{johnson:svrg} and Prox-SVRG~\cite{xiao:prox-svrg}. For Algorithm \ref{alg3} with \emph{Option II} and a fixed learning rate, we give the following key lemma for our convergence analysis.

\begin{lemma}[Option II and smooth objectives]
\label{lemm5}
If each $f_{i}(\cdot)$ is convex and $L$-smooth, then the following inequality holds for all $s=1,2,\ldots,S$,
\begin{equation*}
\begin{split}
&\left(1-\frac{2}{\beta\!-\!1}\right)\mathbb{E}\!\left[f(\widetilde{x}^{s})-f(x^{*})\right]+\frac{2}{(\beta\!-\!1)m}\mathbb{E}[f(x^{s}_{m})-f(x^{*})]\\
\leq&\: \frac{2}{(\beta\!-\!1)}\mathbb{E}\!\left[f(\widetilde{x}^{s-1})-f(x^{*})\right]+\frac{2}{(\beta\!-\!1)m}\mathbb{E}\!\left[f(x^{s}_{0})-f(x^{*})\right]+\frac{L\beta}{2m}\mathbb{E}\!\left[\|x^{*}-x^{s}_{0}\|^2-\|x^{*}-x^{s}_{m}\|^2\right].
\end{split}
\end{equation*}
\end{lemma}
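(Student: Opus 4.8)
The plan is to reuse verbatim the per-iteration estimate established in the proof of Lemma~\ref{lemm4}, and to diverge only in how the iterate average is handled at the very end. Writing $v^{s}_{k}$ for the variance-reduced estimator as before, I would first reproduce the same chain of inequalities: the $L$-smoothness expansion of $f(x^{s}_{k+1})$, the splitting of $\frac{L}{2}\|x^{s}_{k+1}-x^{s}_{k}\|^{2}$ into a $\frac{L\beta}{2}$ part and a $-\frac{L(\beta-1)}{2}$ part, Young's inequality combined with the variance bound of Lemma~\ref{lemm2}, the $3$-point property (Lemma~\ref{lemm1}) applied with $\hat{z}=x^{s}_{k+1}$, $z=x^{*}$, $z_{0}=x^{s}_{k}$, $\tau=L\beta$, and $r(z):=\langle v^{s}_{k},z-x^{s}_{k}\rangle$, and finally $\mathbb{E}[v^{s}_{k}]=\nabla f(x^{s}_{k})$ together with the convexity of $f$. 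Since none of these steps uses the definition of the snapshot, they yield exactly the same one-step bound as in Lemma~\ref{lemm4}, namely
\begin{equation*}
\mathbb{E}[f(x^{s}_{k+1})]-f(x^{*})\leq\frac{2}{\beta-1}\bigl[f(x^{s}_{k})-f(x^{*})+f(\widetilde{x}^{s-1})-f(x^{*})\bigr]+\frac{L\beta}{2}\mathbb{E}\bigl[\|x^{*}-x^{s}_{k}\|^{2}-\|x^{*}-x^{s}_{k+1}\|^{2}\bigr].
\end{equation*}

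Next I would sum this bound over $k=0,1,\ldots,m-1$, so that the squared-distance terms telescope to $\|x^{*}-x^{s}_{0}\|^{2}-\|x^{*}-x^{s}_{m}\|^{2}$ and the right-hand side contributes $\frac{2}{\beta-1}\sum^{m-1}_{k=0}[f(x^{s}_{k})-f(x^{*})]$ together with $\frac{2m}{\beta-1}[f(\widetilde{x}^{s-1})-f(x^{*})]$. The decisive difference from Lemma~\ref{lemm4} appears here: because Option~II sets $\widetilde{x}^{s}=\frac{1}{m}\sum^{m}_{k=1}x^{s}_{k}$, I want to retain the \emph{full} sum $\sum^{m}_{k=1}[f(x^{s}_{k})-f(x^{*})]$ rather than the truncated sum $\sum^{m-1}_{k=1}$ that was used for Option~I. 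After subtracting $\frac{2}{\beta-1}\sum^{m-1}_{k=1}[f(x^{s}_{k})-f(x^{*})]$ from both sides, the left side reads $(1-\frac{2}{\beta-1})\sum^{m-1}_{k=1}[f(x^{s}_{k})-f(x^{*})]+[f(x^{s}_{m})-f(x^{*})]$, and I would split the isolated last-iterate term as $(1-\frac{2}{\beta-1})[f(x^{s}_{m})-f(x^{*})]+\frac{2}{\beta-1}[f(x^{s}_{m})-f(x^{*})]$ so as to fold its first piece into the sum, recovering the complete $(1-\frac{2}{\beta-1})\sum^{m}_{k=1}[f(x^{s}_{k})-f(x^{*})]$ plus a leftover $\frac{2}{\beta-1}[f(x^{s}_{m})-f(x^{*})]$.

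The final step is to apply Jensen's inequality in the form $f(\widetilde{x}^{s})\leq\frac{1}{m}\sum^{m}_{k=1}f(x^{s}_{k})$, valid by convexity of $f$. Since $\beta=1/(L\eta)>3$ guarantees $1-\frac{2}{\beta-1}>0$, this lower-bounds the complete sum by $m\,\mathbb{E}[f(\widetilde{x}^{s})-f(x^{*})]$; dividing the resulting inequality through by $m$ then produces precisely the claimed bound, with the $\frac{2}{(\beta-1)m}$ coefficients on the $f(x^{s}_{0})$ and $f(x^{s}_{m})$ terms and the $\frac{L\beta}{2m}$ factor on the telescoped distances. I expect the only genuine subtlety to be this recombination bookkeeping: the last-iterate term must be re-added to the truncated sum in exactly the right proportion so that Jensen can be invoked against the true Option~II average over all $m$ inner iterates. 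Notably, this is cleaner than the Option~I case precisely because the average here is normalized by $m$ rather than $m-1$, so no extra $(m-1)$ factors propagate into the coefficients.
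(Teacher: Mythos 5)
Your proposal is correct and follows essentially the same route as the paper's proof in Appendix B: the identical per-iteration bound inherited from the proof of Lemma~\ref{lemm4}, summation over $k=0,\ldots,m-1$ with telescoping distances, cancellation of the intermediate sum $\frac{2}{\beta-1}\sum^{m-1}_{k=1}[f(x^{s}_{k})-f(x^{*})]$, and Jensen's inequality against the full $m$-term Option~II average, requiring $\beta>3$. The only difference is the order of the final bookkeeping (you subtract the truncated sum and recombine the last-iterate term before invoking Jensen and dividing by $m$, whereas the paper applies Jensen first, divides by $m$, and then subtracts), which is an algebraically equivalent rearrangement.
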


This lemma is a slight generalization of Lemma~\ref{lemm4}, and we give the proof in APPENDIX B for completeness.

\begin{theorem}[Option II and smooth objectives]
\label{the2}
If each $f_{i}(\cdot)$ is convex and $L$-smooth, then the following inequality holds
\begin{equation*}
\mathbb{E}\!\left[f(\widehat{x}^{S})\right]-f(x^{*})\leq\frac{2(m\!+\!1)}{mS(\beta\!-\!5)}[f(\widetilde{x}^{0})-f(x^{*})]+\frac{L\beta(\beta\!-\!1)}{2mS(\beta\!-\!5)}\|\widetilde{x}^{0}-x^{*}\|^{2},
\end{equation*}
where $\widehat{x}^{S}=\widetilde{x}^{S}$ if $f(\widetilde{x}^{S})\leq f(\overline{x}^{S})$, and $\overline{x}^{S}=\frac{1}{S}\!\sum^{S}_{s=1}\widetilde{x}^{s}$. Otherwise,  $\widehat{x}^{S}=\overline{x}^{S}$.
\end{theorem}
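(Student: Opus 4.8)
The plan is to mirror the argument used for Theorem~\ref{the1}, but starting from Lemma~\ref{lemm5} and tracking the slightly different boundary terms produced by Option~II. I first abbreviate the per-epoch gaps, writing $D^s:=f(\widetilde{x}^s)-f(x^*)$, $P^s_0:=f(x^s_0)-f(x^*)$, $P^s_m:=f(x^s_m)-f(x^*)$, and $R^s:=\|x^*-x^s_0\|^2$. The two structural identities of Algorithm~\ref{alg3} are the key: since $x^{s+1}_0=x^s_m$, we have both $\|x^*-x^s_m\|^2=R^{s+1}$ and $P^s_0=P^{s-1}_m$ for $s\geq 2$, while $P^1_0=f(\widetilde{x}^0)-f(x^*)=D^0$ because $x^1_0=\widetilde{x}^0$.

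Summing the inequality of Lemma~\ref{lemm5} over $s=1,\ldots,S$ and taking total expectation, the distance terms telescope to $\frac{L\beta}{2m}(R^1-\mathbb{E}[R^{S+1}])$. Using $P^s_0=P^{s-1}_m$, the two sums $\frac{2}{(\beta-1)m}\sum_s\mathbb{E}[P^s_m]$ and $\frac{2}{(\beta-1)m}\sum_s\mathbb{E}[P^s_0]$ cancel except for the boundary, leaving $\frac{2}{(\beta-1)m}(\mathbb{E}[P^S_m]-D^0)$. For the function-value terms on the snapshots, I shift the index in $\sum_{s=1}^S\mathbb{E}[D^{s-1}]=D^0+\sum_{s=1}^{S-1}\mathbb{E}[D^s]$ and, as in the proof of Theorem~\ref{the1}, subtract $\frac{2}{\beta-1}\sum_{s=1}^{S-1}\mathbb{E}[D^s]$ from both sides. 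Writing $1-\frac{2}{\beta-1}=\bigl(1-\frac{4}{\beta-1}\bigr)+\frac{2}{\beta-1}$, this collapses the left-hand snapshot sum into $\bigl(1-\frac{4}{\beta-1}\bigr)\sum_{s=1}^S\mathbb{E}[D^s]+\frac{2}{\beta-1}\mathbb{E}[D^S]$.

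At this point I discard the nonnegative quantities $\frac{2}{\beta-1}\mathbb{E}[D^S]$, $\frac{2}{(\beta-1)m}\mathbb{E}[P^S_m]$ on the left and $-\frac{L\beta}{2m}\mathbb{E}[R^{S+1}]$ on the right, and merge the $D^0$ contributions via $\frac{2}{\beta-1}+\frac{2}{(\beta-1)m}=\frac{2(m+1)}{(\beta-1)m}$, giving
\begin{equation*}
\left(1-\frac{4}{\beta-1}\right)\sum_{s=1}^{S}\mathbb{E}[D^s]\leq\frac{2(m+1)}{(\beta-1)m}[f(\widetilde{x}^0)-f(x^*)]+\frac{L\beta}{2m}\|\widetilde{x}^0-x^*\|^2,
\end{equation*}
where I used $R^1=\|\widetilde{x}^0-x^*\|^2$. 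Dividing by $S$ and applying convexity $f(\overline{x}^S)\leq\frac{1}{S}\sum_{s=1}^S f(\widetilde{x}^s)$, then dividing by $c:=1-\frac{4}{\beta-1}=\frac{\beta-5}{\beta-1}$, yields the stated bound for $\overline{x}^S$; the output rule $f(\widehat{x}^S)\leq f(\overline{x}^S)$ transfers it to $\widehat{x}^S$.

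The main obstacle is the simultaneous bookkeeping of the two coupled boundary effects (the telescoping of $R^s$ and the cancellation of $P^s_0$ against $P^{s-1}_m$), together with the reindexing that merges the $\sum_{s=1}^S\mathbb{E}[D^s]$ coefficient into the single factor $1-\frac{4}{\beta-1}$. This is also where the admissibility condition sharpens: for the final division by $c$ to preserve the inequality, one needs $c>0$, i.e.\ $\beta>5$, which is precisely the source of the $(\beta-5)$ factor in the denominators and is strictly more restrictive than the $\beta>3$ that sufficed for Option~I in Theorem~\ref{the1}.
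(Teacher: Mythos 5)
Your proof is correct and follows essentially the same route as the paper's own argument in Appendix C: sum the Option II lemma over epochs, telescope the distance terms and the $f(x^s_0)$-vs-$f(x^s_m)$ terms via $x^{s+1}_0=x^s_m$, discard the nonnegative leftovers at $s=S$, divide by $S$, apply convexity to $\overline{x}^S$, and divide by $1-\frac{4}{\beta-1}=\frac{\beta-5}{\beta-1}>0$, transferring the bound to $\widehat{x}^S$ by the output rule. The only inaccuracy is in your closing comparison, which does not affect the proof: Theorem~\ref{the1} does not get by with $\beta>3$ (that is merely a standing assumption in Lemma~\ref{lemm4}'s proof); its final division is by $c_1=1-\frac{4}{\beta-1}-\frac{2}{(\beta-1)(m-1)}>0$, i.e.\ $\beta>5+\frac{2}{m-1}$, so Option I's admissibility condition is in fact slightly \emph{more} restrictive than the $\beta>5$ needed here, not less.
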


The proof of this theorem can be found in APPENDIX C. Clearly, Theorems~\ref{the1} and \ref{the2} show that VR-SGD with Option I or Option II attains a sub-linear convergence rate for smooth general convex objective functions. This means that VR-SGD is guaranteed to have a similar convergence rate with other variance reduced stochastic methods such as SAGA~\cite{defazio:saga}, and a slower theoretical rate than accelerated methods such as Katyusha~\cite{zhu:Katyusha}. Nevertheless, VR-SGD usually converges much faster than the best known stochastic method, Katyusha~\cite{zhu:Katyusha} in practice (see Section~\ref{sec53} for details).

\subsubsection{Non-Smooth Objectives}
Next, we provide the convergence guarantee of Problem \eqref{equ01} when the objective function $F(x)$ is \emph{non-smooth} (i.e., the regularizer $g(x)$ is non-smooth) and non-strongly convex. We first give the following lemma.

\begin{lemma}[Variance bound of non-smooth objectives]
\label{lemm5}
Let $x^{*}$ be the optimal solution of Problem \eqref{equ01}. If each $f_{i}(\cdot)$ is convex and $L$-smooth, and $\widetilde{\nabla}\! f_{i^{s}_{k}}(x^{s}_{k})\!:=\!\nabla\! f_{i^{s}_{k}}(x^{s}_{k})-\nabla\! f_{i^{s}_{k}}(\widetilde{x}^{s-1})+\nabla\! f(\widetilde{x}^{s-1})$, then the following inequality holds
\begin{equation*}
\mathbb{E}\!\left[\|\widetilde{\nabla}\! f_{i^{s}_{k}}(x^{s}_{k})-\nabla\! f(x^{s}_{k})\|^{2}\right]\leq4L[F(x^{s}_{k})-F(x^{*})+F(\widetilde{x}^{s-1})-F(x^{*})].
\end{equation*}
\end{lemma}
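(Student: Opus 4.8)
The plan is to mirror the smooth variance bound of Lemma~\ref{lemm2}, but to account for the fact that here $f$ denotes only the \emph{smooth} part of $F$, so that $\nabla\! f(x^*)$ no longer vanishes at the optimum. First I would record the standard consequence of convexity and $L$-smoothness of each $f_i$: introducing $h_i(x):=f_i(x)-f_i(x^*)-\langle\nabla\! f_i(x^*),\,x-x^*\rangle$, which is convex, $L$-smooth, and minimized at $x^*$, the co-coercivity inequality $\frac{1}{2L}\|\nabla h_i(x)\|^2\le h_i(x)$ gives
\[
\|\nabla\! f_i(x)-\nabla\! f_i(x^*)\|^2\le 2L\big[f_i(x)-f_i(x^*)-\langle\nabla\! f_i(x^*),\,x-x^*\rangle\big].
\]
Averaging over $i\in[n]$ and using $\frac1n\sum_i\nabla\! f_i(x^*)=\nabla\! f(x^*)$ produces the corresponding pointwise bound in terms of the average function $f$.

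Next I would peel off the variance exactly as in the SVRG analysis. Since the estimator is unbiased, one checks that $\widetilde{\nabla}\! f_{i^s_k}(x^s_k)-\nabla\! f(x^s_k)=X-\mathbb{E}X$ with $X=\nabla\! f_{i^s_k}(x^s_k)-\nabla\! f_{i^s_k}(\widetilde{x}^{s-1})$, so the identity $\mathbb{E}\|X-\mathbb{E}X\|^2\le\mathbb{E}\|X\|^2$ removes the full-gradient correction; then $\|a-b\|^2\le 2\|a\|^2+2\|b\|^2$, with both terms centered at $\nabla\! f_{i^s_k}(x^*)$, splits the bound into a contribution at $x^s_k$ and one at $\widetilde{x}^{s-1}$. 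Applying the pointwise bound above to each contribution yields
\[
\mathbb{E}\|\widetilde{\nabla}\! f_{i^s_k}(x^s_k)-\nabla\! f(x^s_k)\|^2\le 4L\big[f(x^s_k)-f(x^*)-\langle\nabla\! f(x^*),\,x^s_k-x^*\rangle\big]+4L\big[f(\widetilde{x}^{s-1})-f(x^*)-\langle\nabla\! f(x^*),\,\widetilde{x}^{s-1}-x^*\rangle\big].
\]

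The main obstacle, and the only genuine departure from Lemma~\ref{lemm2}, is disposing of the two inner-product terms, which vanished in the smooth case because there $\nabla\! f(x^*)=0$. Here $x^*$ minimizes $F=f+g$ with $g$ possibly non-smooth, so first-order optimality gives only the subgradient inclusion $-\nabla\! f(x^*)\in\partial g(x^*)$. I would set $\xi:=-\nabla\! f(x^*)\in\partial g(x^*)$ and invoke the subgradient inequality for the convex function $g$, namely $\langle\xi,\,x^s_k-x^*\rangle\le g(x^s_k)-g(x^*)$ and $\langle\xi,\,\widetilde{x}^{s-1}-x^*\rangle\le g(\widetilde{x}^{s-1})-g(x^*)$. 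Substituting $-\langle\nabla\! f(x^*),\,\cdot-x^*\rangle=\langle\xi,\,\cdot-x^*\rangle$ together with these two bounds converts $f(x^s_k)-f(x^*)+\langle\xi,\,x^s_k-x^*\rangle$ into $f(x^s_k)+g(x^s_k)-f(x^*)-g(x^*)=F(x^s_k)-F(x^*)$, and likewise at $\widetilde{x}^{s-1}$, which is precisely the claimed right-hand side $4L[F(x^s_k)-F(x^*)+F(\widetilde{x}^{s-1})-F(x^*)]$. I expect the remaining bookkeeping to be routine once the optimality condition is correctly exploited; the conceptual crux is recognizing that the non-smoothness of $g$ must be absorbed through its subgradient at $x^*$ rather than through a vanishing gradient.
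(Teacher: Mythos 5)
Your proof is correct, and it is essentially the argument the paper relies on: the paper omits its own proof of this lemma, citing Corollary 3.5 of Prox-SVRG~\cite{xiao:prox-svrg}, whose proof is exactly your route---co-coercivity applied to $h_i(x)=f_i(x)-f_i(x^*)-\langle\nabla f_i(x^*),x-x^*\rangle$, the variance decomposition $\mathbb{E}\|X-\mathbb{E}X\|^2\le\mathbb{E}\|X\|^2$ with $\|a-b\|^2\le 2\|a\|^2+2\|b\|^2$, and absorption of the inner-product terms via the optimality condition $-\nabla f(x^*)\in\partial g(x^*)$ and the subgradient inequality for $g$. You correctly identified the one genuine difference from the smooth case (Lemma~\ref{lemm2}), namely that $\nabla f(x^*)\neq 0$ must be handled through the subgradient of $g$ at $x^*$, which converts the $f$-gaps into $F$-gaps.
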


This lemma can be viewed as a generalization of Lemma~\ref{lemm2}, and is essentially identical to Corollary 3.5 in~\cite{xiao:prox-svrg} and Lemma 1 in~\cite{shang:vrsgd}, and hence its proof is omitted. For Algorithm \ref{alg3} with \emph{Option II} and a fixed learning rate, we give the following results.

\begin{lemma}[Option II and non-smooth objectives]
\label{lemm6}
If each $f_{i}(\cdot)$ is convex and $L$-smooth, then the following inequality holds for all $s=1,2,\ldots,S$,
\begin{equation}\label{equ39}
\begin{split}
&\left(1-\frac{2}{\beta\!-\!1}\right)\mathbb{E}\!\left[F(\widetilde{x}^{s})-F(x^{*})\right]+\frac{2}{(\beta\!-\!1)m}\mathbb{E}\!\left[F(x^{s}_{m})\!-\!F(x^{*})\right]\\
\leq&\frac{2}{(\beta\!-\!1)m}\mathbb{E}\!\left[F(x^{s}_{0})-F(x^{*})\right]+\frac{2}{\beta\!-\!1}\mathbb{E}\!\left[F(\widetilde{x}^{s-1})\!-\!F(x^{*})\right]+ \frac{L\beta}{2m}\mathbb{E}\!\left[\|x^{*}\!-\!x^{s}_{0}\|^2\!-\!\|x^{*}\!-\!x^{s}_{m}\|^2\right].
\end{split}
\end{equation}
\end{lemma}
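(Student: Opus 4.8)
The plan is to mirror the structure of the smooth Option~II argument (Lemma~\ref{lemm5}), but to carry the full composite objective $F=f+g$ throughout, letting the non-smooth regularizer enter only through the proximal optimality of the update rather than through a gradient step. The one genuinely new ingredient, relative to the smooth case, is that the $L$-smoothness inequality now applies only to the smooth part $f$, so I would fold $g$ into the auxiliary convex function handled by the $3$-point property. Write $v^{s}_{k}:=\widetilde{\nabla}\!f_{i^{s}_{k}}(x^{s}_{k})$ for brevity, and recall $\beta=1/(L\eta)>3$.

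First I would establish a per-iteration bound. Starting from the smoothness of $f$, namely $f(x^{s}_{k+1})\le f(x^{s}_{k})+\langle\nabla\!f(x^{s}_{k}),x^{s}_{k+1}-x^{s}_{k}\rangle+\tfrac{L}{2}\|x^{s}_{k+1}-x^{s}_{k}\|^{2}$, I would split the quadratic into a $\tfrac{L\beta}{2}$ piece and a $-\tfrac{L(\beta-1)}{2}$ piece and add $g(x^{s}_{k+1})$ to both sides, exactly as in the proof of Lemma~\ref{lemm4}. The decisive step is to apply the $3$-point property (Lemma~\ref{lemm1}) to the proximal subproblem defining $x^{s}_{k+1}$, with $\hat z=x^{s}_{k+1}$, $z=x^{*}$, $z_{0}=x^{s}_{k}$, $\tau=L\beta=1/\eta$, and the convex (possibly non-differentiable) function $r(z):=g(z)+\langle v^{s}_{k},z-x^{s}_{k}\rangle$; absorbing $g$ into $r$ is precisely what makes the non-smooth term cancel against $g(x^{s}_{k+1})$ and leaves $F$ on the left-hand side. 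The remaining cross term $\langle\nabla\!f(x^{s}_{k})-v^{s}_{k},x^{s}_{k+1}-x^{s}_{k}\rangle-\tfrac{L(\beta-1)}{2}\|x^{s}_{k+1}-x^{s}_{k}\|^{2}$ I would bound in expectation by Young's inequality (with parameter $\gamma=L(\beta-1)$) followed by the non-smooth variance bound of Lemma~\ref{lemm5}, yielding $\tfrac{2}{\beta-1}[F(x^{s}_{k})-F(x^{*})+F(\widetilde{x}^{s-1})-F(x^{*})]$. Finally, the unbiasedness $\mathbb{E}[v^{s}_{k}]=\nabla\!f(x^{s}_{k})$ together with the convexity of $f$ collapses $f(x^{s}_{k})+\langle\nabla\!f(x^{s}_{k}),x^{*}-x^{s}_{k}\rangle+g(x^{*})$ into $F(x^{*})$, giving
\begin{equation*}
\mathbb{E}[F(x^{s}_{k+1})]-F(x^{*})\le\frac{2}{\beta-1}\big[F(x^{s}_{k})-F(x^{*})+F(\widetilde{x}^{s-1})-F(x^{*})\big]+\frac{L\beta}{2}\mathbb{E}\big[\|x^{*}-x^{s}_{k}\|^{2}-\|x^{*}-x^{s}_{k+1}\|^{2}\big].
\end{equation*}

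Next I would sum this inequality over $k=0,1,\ldots,m-1$, so that the distance terms telescope to $\|x^{*}-x^{s}_{0}\|^{2}-\|x^{*}-x^{s}_{m}\|^{2}$ and the $F(\widetilde{x}^{s-1})$ contribution picks up a factor $m$. Writing $a_{k}=\mathbb{E}[F(x^{s}_{k})-F(x^{*})]$, the summed inequality reads $\sum_{k=1}^{m}a_{k}\le\tfrac{2}{\beta-1}\sum_{k=0}^{m-1}a_{k}+\tfrac{2m}{\beta-1}\mathbb{E}[F(\widetilde{x}^{s-1})-F(x^{*})]+\tfrac{L\beta}{2}\mathbb{E}[\|x^{*}-x^{s}_{0}\|^{2}-\|x^{*}-x^{s}_{m}\|^{2}]$. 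Using $\sum_{k=0}^{m-1}a_{k}=\sum_{k=1}^{m}a_{k}+a_{0}-a_{m}$ and collecting the $\sum_{k=1}^{m}a_{k}$ terms, I would factor out $1-\tfrac{2}{\beta-1}$, which is strictly positive because $\beta>3$. The Option~II choice $\widetilde{x}^{s}=\tfrac{1}{m}\sum_{k=1}^{m}x^{s}_{k}$ then enters through the convexity of the \emph{full} objective $F$, namely $\mathbb{E}[F(\widetilde{x}^{s})-F(x^{*})]\le\tfrac{1}{m}\sum_{k=1}^{m}a_{k}$; substituting this lower bound for $\sum_{k=1}^{m}a_{k}$ and dividing by $m$ delivers exactly the claimed inequality, with the $a_{m}$ term relocated to the left-hand side as $\tfrac{2}{(\beta-1)m}\mathbb{E}[F(x^{s}_{m})-F(x^{*})]$.

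I expect the main obstacle to be the composite $3$-point step: one must verify that incorporating the non-differentiable $g$ into $r(z)$ is legitimate (Lemma~\ref{lemm1} permits a possibly non-differentiable $r$) and that the resulting inequality pairs cleanly with the $f$-only smoothness bound, so that no stray subgradient terms of $g$ survive and $F$ appears throughout in place of $f$. Once that cancellation is arranged, the rest is the same Young's-inequality-plus-variance-bound estimate and the same telescoping-and-averaging bookkeeping as in the smooth Option~II case.
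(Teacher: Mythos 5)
Your proposal is correct and follows essentially the same route as the paper's own proof in Appendix D: smoothness of $f$ alone with the $\tfrac{L\beta}{2}$/$-\tfrac{L(\beta-1)}{2}$ split, the $3$-point property applied with $r(z)=g(z)+\langle v^{s}_{k},z-x^{s}_{k}\rangle$ so that $g(x^{s}_{k+1})$ is exchanged for $g(x^{*})$, Young's inequality plus the non-smooth variance bound, and then the same telescoping, term-collection, and convexity-of-$F$ averaging before dividing by $m$. The only difference is cosmetic bookkeeping (your identity $\sum_{k=0}^{m-1}a_{k}=\sum_{k=1}^{m}a_{k}+a_{0}-a_{m}$ versus the paper's subtraction of $\tfrac{2}{\beta-1}\sum_{k=0}^{m-2}\mathbb{E}[F(x^{s}_{k+1})-F(x^{*})]$ from both sides), which yields the identical intermediate inequality.
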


The proof of this lemma can be found in APPENDIX D. Using the above lemma, we give the following convergence result for VR-SGD.

\begin{theorem}[Option II and non-smooth objectives]
\label{the3}
Suppose Assumption~\ref{assum1} holds. Then the following inequality holds
\begin{equation*}
\begin{split}
\mathbb{E}\!\left[F(\widehat{x}^{S})\right]-F(x^{*})\leq\,\frac{2(m\!+\!1)}{(\beta\!-\!5)mS}[F(\widetilde{x}^{0})-F(x^{*})]+\frac{\beta(\beta\!-\!1)L}{2(\beta\!-\!5)mS}\|\widetilde{x}^{0}-x^{*}\|^{2}.
\end{split}
\end{equation*}
\end{theorem}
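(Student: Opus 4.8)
The plan is to reproduce, with $f$ replaced by $F$ throughout, exactly the telescoping argument used to prove Theorem~\ref{the1}, now starting from the per-epoch estimate of Lemma~\ref{lemm6} rather than from Lemma~\ref{lemm4}. Lemma~\ref{lemm6} already carries out the nontrivial work (the variance bound of Lemma~\ref{lemm5} for non-smooth objectives together with the $3$-point property of Lemma~\ref{lemm1}), so the remaining task is purely the bookkeeping of summing a one-step recursion across the $S$ epochs and extracting a bound on $\mathbb{E}[F(\overline{x}^{S})]-F(x^{*})$.

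First I would sum the inequality \eqref{equ39} over $s=1,\ldots,S$ and invoke the coupling $x^{s+1}_{0}=x^{s}_{m}$ built into Algorithm~\ref{alg3}. Because the coefficient of $\mathbb{E}[F(x^{s}_{m})-F(x^{*})]$ on the left-hand side equals the coefficient $\tfrac{2}{(\beta-1)m}$ of $\mathbb{E}[F(x^{s}_{0})-F(x^{*})]$ on the right-hand side, the interior function-value terms collapse and only the endpoints $F(x^{1}_{0})-F(x^{*})$ and $-(F(x^{S}_{m})-F(x^{*}))$ survive; the squared-distance terms telescope in the same way to $\|x^{*}-x^{1}_{0}\|^{2}-\|x^{*}-x^{S}_{m}\|^{2}$. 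Next I would subtract $\tfrac{2}{\beta-1}\sum_{s=1}^{S-1}\mathbb{E}[F(\widetilde{x}^{s})-F(x^{*})]$ from both sides to telescope the snapshot terms, which turns the per-epoch coefficient $1-\tfrac{2}{\beta-1}$ into the effective coefficient $1-\tfrac{4}{\beta-1}=\tfrac{\beta-5}{\beta-1}=:c_{1}$ for every $\widetilde{x}^{s}$ with $s<S$; since $1-\tfrac{2}{\beta-1}>c_{1}$, the leading $\widetilde{x}^{S}$ term is lower-bounded by $c_{1}\,\mathbb{E}[F(\widetilde{x}^{S})-F(x^{*})]$, so that $c_{1}\sum_{s=1}^{S}\mathbb{E}[F(\widetilde{x}^{s})-F(x^{*})]$ bounds the entire left side.

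To finish I would discard the nonnegative quantities $F(x^{S}_{m})-F(x^{*})$ and $\|x^{*}-x^{S}_{m}\|^{2}$ and use $\widetilde{x}^{0}=x^{1}_{0}$, which merges the two resulting copies of $F(\widetilde{x}^{0})-F(x^{*})$ (with coefficients $\tfrac{2}{\beta-1}$ and $\tfrac{2}{(\beta-1)m}$) into a single $\tfrac{2(m+1)}{(\beta-1)m}$ factor and leaves $\tfrac{L\beta}{2m}\|\widetilde{x}^{0}-x^{*}\|^{2}$. Dividing by $S$, applying convexity in the form $F(\overline{x}^{S})\le\tfrac{1}{S}\sum_{s=1}^{S}F(\widetilde{x}^{s})$, and then dividing through by $c_{1}$ yields $\mathbb{E}[F(\overline{x}^{S})]-F(x^{*})\le\tfrac{2(m+1)}{(\beta-5)mS}[F(\widetilde{x}^{0})-F(x^{*})]+\tfrac{\beta(\beta-1)L}{2(\beta-5)mS}\|\widetilde{x}^{0}-x^{*}\|^{2}$. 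The stated bound for $\widehat{x}^{S}$ then follows because the output rule of Algorithm~\ref{alg3} guarantees $F(\widehat{x}^{S})\le F(\overline{x}^{S})$ in both branches.

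The only points requiring care -- and the place where such a proof can go wrong -- are keeping the two telescopes aligned (the matching coefficients of the $x^{s}_{m}$ versus $x^{s}_{0}$ terms, and of the consecutive $\widetilde{x}^{s}$ terms) and tracking the constant $c_{1}=\tfrac{\beta-5}{\beta-1}$, which is positive and hence admissible only when $\beta>5$, i.e.\ $\eta<1/(5L)$; this is exactly the step-size regime implicitly assumed by the theorem. Everything else is routine algebra identical in form to the proof of Theorem~\ref{the1}, with the factor $m-1$ appearing there replaced by $m$ here, owing to the Option~II averaging $\widetilde{x}^{s}=\tfrac{1}{m}\sum_{k=1}^{m}x^{s}_{k}$.
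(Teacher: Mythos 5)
Your proposal is correct and takes essentially the same route as the paper's proof in Appendix~E: sum Lemma~\ref{lemm6} over the $S$ epochs, telescope the function-value and squared-distance terms via $x^{s+1}_{0}=x^{s}_{m}$, telescope the snapshot terms to obtain the coefficient $1-\tfrac{4}{\beta-1}=\tfrac{\beta-5}{\beta-1}$, discard the nonnegative terminal quantities, divide by $S$, apply convexity to $\overline{x}^{S}$, and finish with the output rule of Algorithm~\ref{alg3}. The only immaterial difference is bookkeeping: you subtract the snapshot sum only up to $s=S-1$ and lower-bound the coefficient of the $\widetilde{x}^{S}$ term, whereas the paper subtracts through $s=S$ and then uses $\mathbb{E}[F(\widetilde{x}^{0})-F(\widetilde{x}^{S})]\leq F(\widetilde{x}^{0})-F(x^{*})$; both steps rest on the same fact $F(\widetilde{x}^{S})-F(x^{*})\geq 0$.
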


The proof of this theorem is included in APPENDIX E. Similarly, for Algorithm \ref{alg3} with \emph{Option I} and a fixed learning rate, we give the following results.

\begin{corollary}[Option I and non-smooth objectives]
\label{lemm7}
If each $f_{i}(\cdot)$ is convex and $L$-smooth, then the following inequality holds for all $s=1,2,\ldots,S$,
\begin{equation*}
\begin{split}
&\left(1-\frac{2}{\beta\!-\!1}\right)\mathbb{E}\!\left[F(\widetilde{x}^{s})-F(x^{*})\right]+\frac{1}{m\!-\!1}\mathbb{E}\!\left[F(x^{s}_{m})-F(x^{*})\right]\\
\leq&\frac{2m}{(\beta\!-\!1)(m\!-\!1)}\mathbb{E}\!\left[F(\widetilde{x}^{s-1})\!-\!F(x^{*})\right]+\frac{2}{(\beta\!-\!1)(m\!-\!1)}\mathbb{E}\!\left[F(x^{s}_{0})\!-\!F(x^{*})\right]+\frac{L\beta }{2(m\!-\!1)}\mathbb{E}\!\left[\|x^{*}\!-\!x^{s}_{0}\|^2-\|x^{*}\!-\!x^{s}_{m}\|^2\right].
\end{split}
\end{equation*}
\end{corollary}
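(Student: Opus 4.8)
The plan is to mirror the proof of Lemma~\ref{lemm4} almost verbatim, but with the objective $F$ in place of the smooth surrogate $f$ and with the non-smooth regularizer $g$ absorbed through the proximal update. Write $F=f+g$ and define $v^s_k:=\nabla f_{i^s_k}(x^s_k)-\nabla f_{i^s_k}(\widetilde{x}^{s-1})+\nabla f(\widetilde{x}^{s-1})$ as before. Since only $f$ is $L$-smooth, I would start from the descent inequality $f(x^s_{k+1})\le f(x^s_k)+\langle\nabla f(x^s_k),x^s_{k+1}-x^s_k\rangle+\tfrac{L}{2}\|x^s_{k+1}-x^s_k\|^2$, add $g(x^s_{k+1})$ to both sides to recover $F(x^s_{k+1})$ on the left, split the gradient into $v^s_k$ plus $\nabla f(x^s_k)-v^s_k$, and rewrite $\tfrac{L}{2}=\tfrac{L\beta}{2}-\tfrac{L(\beta-1)}{2}$, exactly as in \eqref{equ31}.

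The one genuinely different step --- and the main obstacle --- is the treatment of the non-smooth term. Whereas Lemma~\ref{lemm4} applies Lemma~\ref{lemm1} with the linear choice $r(z)=\langle v^s_k,z-x^s_k\rangle$, here the prox update $x^s_{k+1}=\mathrm{Prox}_{\eta,g}(x^s_k-\eta v^s_k)$ is the exact minimizer of $\tfrac{1}{2\eta}\|z-x^s_k\|^2+\langle v^s_k,z-x^s_k\rangle+g(z)$, so I would invoke Lemma~\ref{lemm1} with $r(z):=\langle v^s_k,z-x^s_k\rangle+g(z)$, $\hat z=x^s_{k+1}$, $z=x^*$, $z_0=x^s_k$, and $\tau=L\beta=1/\eta$. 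This yields $\langle v^s_k,x^s_{k+1}-x^s_k\rangle+g(x^s_{k+1})+\tfrac{L\beta}{2}\|x^s_{k+1}-x^s_k\|^2\le \langle v^s_k,x^*-x^s_k\rangle+g(x^*)+\tfrac{L\beta}{2}(\|x^*-x^s_k\|^2-\|x^*-x^s_{k+1}\|^2)$, and the care required is purely bookkeeping: the $g(x^s_{k+1})$ on the left must combine with $f(x^s_{k+1})$ to give $F(x^s_{k+1})$, while $g(x^*)$ on the right combines with the convexity bound $f(x^s_k)+\langle\nabla f(x^s_k),x^*-x^s_k\rangle\le f(x^*)$ to give $F(x^*)=f(x^*)+g(x^*)$.

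Next I would bound the residual $\langle\nabla f(x^s_k)-v^s_k,x^s_{k+1}-x^s_k\rangle-\tfrac{L(\beta-1)}{2}\|x^s_{k+1}-x^s_k\|^2$ in expectation by $\tfrac{1}{2L(\beta-1)}\mathbb{E}\|\nabla f(x^s_k)-v^s_k\|^2$ through Young's inequality with $\gamma=L(\beta-1)$, and then apply the non-smooth variance bound (Lemma~\ref{lemm5}, whose right-hand side is stated in terms of $F$ rather than $f$) to obtain the bound $\tfrac{2}{\beta-1}[F(x^s_k)-F(x^*)+F(\widetilde{x}^{s-1})-F(x^*)]$. Taking expectation over $i^s_k$ and using $\mathbb{E}[v^s_k]=\nabla f(x^s_k)$ together with convexity gives the per-step recursion $\mathbb{E}[F(x^s_{k+1})]-F(x^*)\le\tfrac{2}{\beta-1}[F(x^s_k)-F(x^*)+F(\widetilde{x}^{s-1})-F(x^*)]+\tfrac{L\beta}{2}\mathbb{E}[\|x^*-x^s_k\|^2-\|x^*-x^s_{k+1}\|^2]$, which is the verbatim non-smooth analogue of the displayed recursion in the proof of Lemma~\ref{lemm4}.

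From this point the argument is purely algebraic and identical to Lemma~\ref{lemm4}: sum over $k=0,1,\ldots,m-1$ so the quadratic terms telescope, invoke the Option~I averaging inequality $F(\widetilde{x}^s)\le\tfrac{1}{m-1}\sum_{k=1}^{m-1}F(x^s_k)$ (convexity of $F$), subtract $\tfrac{2}{\beta-1}\sum_{k=1}^{m-1}[F(x^s_k)-F(x^*)]$ from both sides, and divide through by $m-1$ to reach the claimed inequality. I expect no difficulty in this telescoping stage; the only place demanding attention is the handling of the $g$-terms in the 3-point step described above.
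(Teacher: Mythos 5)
Your proposal is correct and is exactly the argument the paper intends: the paper omits this proof because it is the routine combination of Lemma~\ref{lemm4}'s Option~I telescoping with the non-smooth treatment spelled out in APPENDIX~D for Lemma~\ref{lemm6}, and your key step---invoking Lemma~\ref{lemm1} with $r(z):=\langle v^{s}_{k},\,z-x^{s}_{k}\rangle+g(z)$, $\tau=L\beta=1/\eta$, so that $g(x^{s}_{k+1})$ joins $f(x^{s}_{k+1})$ on the left and $g(x^{*})$ joins the convexity bound on the right to form $F(x^{*})$---is verbatim the paper's handling in that appendix. The remaining bookkeeping (Young's inequality with Lemma~\ref{lemm5}'s variance bound in terms of $F$, summation over $k$, the Option~I averaging $F(\widetilde{x}^{s})\leq\frac{1}{m-1}\sum^{m-1}_{k=1}F(x^{s}_{k})$, and division by $m-1$) matches the paper's proof of Lemma~\ref{lemm4} step for step.
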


\begin{corollary}[Option I and non-smooth objectives]
\label{the4}
Suppose Assumption~\ref{assum1} holds. Then the following inequality holds
\begin{equation*}
\begin{split}
&\:\mathbb{E}\!\left[F(\widehat{x}^{S})\right]-F(x^{*})\\
\leq&\:\frac{2(m+1)}{[(\beta\!-\!1)(m\!-\!1)\!-\!4m\!+\!2]S}[F(\widetilde{x}^{0})-F(x^{*})]+\frac{\beta(\beta-1)L}{2[(\beta\!-\!1)(m\!-\!1)\!-\!4m\!+\!2]S}\|\widetilde{x}^{0}-x^{*}\|^{2},
\end{split}
\end{equation*}
where $\widehat{x}^{S}=\widetilde{x}^{S}$ if $F(\widetilde{x}^{S})\leq F(\overline{x}^{S})$, and $\overline{x}^{S}=\frac{1}{S}\!\sum^{S}_{s=1}\widetilde{x}^{s}$. Otherwise,  $\widehat{x}^{S}=\overline{x}^{S}$.
\end{corollary}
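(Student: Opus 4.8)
The plan is to follow the proof of Theorem~\ref{the1} essentially verbatim, replacing $f(\cdot)$ throughout by the composite objective $F(\cdot)$ and replacing the per-epoch recursion of Lemma~\ref{lemm4} by its non-smooth counterpart, Corollary~\ref{lemm7}. Since Corollary~\ref{lemm7} already has exactly the same shape as Lemma~\ref{lemm4} and has absorbed all of the non-smooth analysis into a single one-epoch inequality, everything that remains is the epoch-level telescoping bookkeeping, which transfers unchanged.

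Concretely, I would first use $\beta=1/(L\eta)>3$, so that $2/(\beta-1)<1$, together with $\mathbb{E}[F(x^s_m)]-F(x^*)\geq 0$, to relax the last-iterate coefficient:
\begin{equation*}
\frac{2}{(\beta\!-\!1)(m\!-\!1)}\{\mathbb{E}[F(x^s_m)]-F(x^*)\}\leq\frac{1}{m\!-\!1}\{\mathbb{E}[F(x^s_m)]-F(x^*)\}.
\end{equation*}
Combining this with Corollary~\ref{lemm7} gives, for each $s$, a bound on $(1-\frac{2}{\beta-1})\mathbb{E}[F(\widetilde{x}^s)-F(x^*)]+\frac{2}{(\beta-1)(m-1)}\mathbb{E}[F(x^s_m)-F(x^*)]$ by the snapshot, starting-point, and squared-distance terms. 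Next I would sum over $s=1,\ldots,S$, exploiting the setting $x^{s+1}_0=x^s_m$ so that the consecutive $\|x^*-x^s_0\|^2$ and $-\|x^*-x^s_m\|^2$ terms telescope (as do the $F(x^s_0)-F(x^s_m)$ terms), and $\widetilde{x}^0=x^1_0$. Subtracting $\frac{2m}{(\beta-1)(m-1)}\sum_{s=1}^{S-1}[F(\widetilde{x}^s)-F(x^*)]$ from both sides isolates the coefficient $c_1=1-\frac{4}{\beta-1}-\frac{2}{(\beta-1)(m-1)}$ in front of $\frac{1}{S}\sum_{s=1}^S\mathbb{E}[F(\widetilde{x}^s)-F(x^*)]$, and the leftover nonnegative terms $F(\widetilde{x}^S)-F(x^*)$, $F(x^S_m)-F(x^*)$, and $\|x^*-x^S_m\|^2$ are simply discarded. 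Dividing by $S$, invoking convexity of $F$ to obtain $F(\overline{x}^S)\leq\frac{1}{S}\sum_{s=1}^S F(\widetilde{x}^s)$, dividing through by $c_1>0$, and using $c_1(\beta-1)(m-1)=(\beta-1)(m-1)-4m+2$ yields the stated denominator; the output rule $\widehat{x}^S=\widetilde{x}^S$ when $F(\widetilde{x}^S)\leq F(\overline{x}^S)$ (and $\widehat{x}^S=\overline{x}^S$ otherwise) then forces $\mathbb{E}[F(\widehat{x}^S)]-F(x^*)\leq\mathbb{E}[F(\overline{x}^S)]-F(x^*)$, completing the argument.

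The only genuine obstacle lies in establishing Corollary~\ref{lemm7} itself, i.e.\ reproducing the one-epoch descent estimate of Lemma~\ref{lemm4} in the non-smooth regime. This is where the proximal update~\eqref{equ22} enters: the 3-point property of Lemma~\ref{lemm1} must now be applied with $r(z):=\langle v^s_k,\,z-x^s_k\rangle+g(z)$ so that the regularizer is handled exactly, and the non-smooth variance bound (Lemma~\ref{lemm5}, stated in terms of $F$) replaces the smooth one used in the proof of Lemma~\ref{lemm4}. Once that per-epoch inequality is in hand, the telescoping above is routine, and the sole side condition to verify is the positivity $c_1>0$, equivalently $(\beta-1)(m-1)>4m-2$, which holds for the admissible large $\beta$ (roughly $\beta>5+2/(m-1)$) and $m=\Theta(n)$.
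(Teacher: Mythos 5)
Your proposal is correct and is exactly the argument the paper intends: the paper omits this proof precisely because Corollary~\ref{the4} follows from Theorem~\ref{the1}'s proof verbatim, with $f(\cdot)$ replaced by $F(\cdot)$ and Lemma~\ref{lemm4} replaced by Corollary~\ref{lemm7} (itself obtained, as you note, by rerunning the one-epoch analysis with the 3-point property applied to $r(z)=\langle v^{s}_{k},z-x^{s}_{k}\rangle+g(z)$ and the non-smooth variance bound, mirroring the paper's Appendix D). Your telescoping bookkeeping, the identification $c_{1}(\beta-1)(m-1)=(\beta-1)(m-1)-4m+2$, and the positivity condition $\beta>5+2/(m-1)$ all check out.
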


Corollaries~\ref{lemm7} and~\ref{the4} can be viewed as the generalizations of Lemma~\ref{lemm4} and Theorem~\ref{the1}, respectively, and hence their proofs are omitted. Obviously, Theorems~\ref{the3} and Corollary~\ref{the4} show that VR-SGD with Option I or Option II attains a sub-linear convergence rate for non-smooth general convex objective functions.

\subsubsection{Mini-Batch Settings}

The upper bound on the variance of the stochastic gradient estimator $\widetilde{\nabla}\!f_{i^{s}_{k}}(x^{s}_{k})$ in Lemma~\ref{lemm5} is extended to the \emph{mini-batch setting} as follows.

\begin{lemma}[Variance bound of mini-batch]
\label{lemm8}
If each $f_{i}(\cdot)$ is convex and $L$-smooth, and $\widetilde{\nabla}\! f_{I^{s}_{k}}(x^{s}_{k})\!:=\!\frac{1}{b}\!\sum_{i\in{I}^{s}_{k}}\!\left[\nabla\! f_{i}(x^{s}_{k})-\nabla\! f_{i}(\widetilde{x}^{s-1})\right]+\nabla\! f(\widetilde{x}^{s-1})$, then the following inequality holds
\begin{equation*}
\mathbb{E}\!\left[\|\widetilde{\nabla}\! f_{I^{s}_{k}}(x^{s}_{k})-\nabla\! f(x^{s}_{k})\|^{2}\right]\leq4L\delta(b)[F(x^{s}_{k})-F(x^{*})+F(\widetilde{x}^{s-1})-F(x^{*})],
\end{equation*}
where $\delta(b)\!=\!(n\!-\!b)/[(n\!-\!1)b]$.
\end{lemma}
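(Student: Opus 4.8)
The plan is to reduce the mini-batch variance to the single-sample variance already controlled by Lemma~\ref{lemm5}, with the reduction factor $\delta(b)$ emerging from the finite-population correction for sampling without replacement. First I would introduce the shorthand $\zeta_i\!:=\!\nabla\! f_i(x^s_k)-\nabla\! f_i(\widetilde{x}^{s-1})$, so that $\frac{1}{n}\sum_{i=1}^n\zeta_i=\nabla\! f(x^s_k)-\nabla\! f(\widetilde{x}^{s-1})=:\overline{\zeta}$, and rewrite the estimator error as
$$\widetilde{\nabla}\! f_{I^s_k}(x^s_k)-\nabla\! f(x^s_k)=\frac{1}{b}\sum_{i\in I^s_k}\zeta_i-\overline{\zeta}.$$
The single-sample case $b\!=\!1$ of the expected squared norm is exactly $\frac{1}{n}\sum_{i=1}^n\|\zeta_i-\overline{\zeta}\|^2$, which is precisely the quantity that Lemma~\ref{lemm5} bounds by $4L[F(x^s_k)-F(x^*)+F(\widetilde{x}^{s-1})-F(x^*)]$.

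The crux is a covariance computation over the random subset $I^s_k$. Setting $\eta_i\!:=\!\zeta_i-\overline{\zeta}$ so that $\sum_{i=1}^n\eta_i=0$ and $\frac{1}{b}\sum_{i\in I^s_k}\zeta_i-\overline{\zeta}=\frac{1}{b}\sum_{i\in I^s_k}\eta_i$, I would expand
$$\mathbb{E}\!\left\|\frac{1}{b}\sum_{i\in I^s_k}\eta_i\right\|^2=\frac{1}{b^2}\sum_{i,j=1}^n\mathbb{E}\!\left[\mathbf{1}_{\{i\in I^s_k\}}\mathbf{1}_{\{j\in I^s_k\}}\right]\langle\eta_i,\eta_j\rangle,$$
and insert the without-replacement inclusion probabilities $\mathbb{E}[\mathbf{1}_{\{i\in I^s_k\}}]=b/n$ and $\mathbb{E}[\mathbf{1}_{\{i\in I^s_k\}}\mathbf{1}_{\{j\in I^s_k\}}]=b(b\!-\!1)/[n(n\!-\!1)]$ for $i\!\neq\!j$. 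The diagonal contributes $\frac{b}{n}\sum_i\|\eta_i\|^2$, while the cross terms collapse via $\sum_{i\neq j}\langle\eta_i,\eta_j\rangle=\|\sum_i\eta_i\|^2-\sum_i\|\eta_i\|^2=-\sum_i\|\eta_i\|^2$ because $\sum_i\eta_i=0$. Simplifying the resulting scalar factor, $\frac{b}{n}-\frac{b(b-1)}{n(n-1)}=\frac{b(n-b)}{n(n-1)}$, gives
$$\mathbb{E}\!\left\|\frac{1}{b}\sum_{i\in I^s_k}\zeta_i-\overline{\zeta}\right\|^2=\frac{n-b}{(n-1)b}\cdot\frac{1}{n}\sum_{i=1}^n\|\zeta_i-\overline{\zeta}\|^2=\delta(b)\cdot\frac{1}{n}\sum_{i=1}^n\|\zeta_i-\overline{\zeta}\|^2.$$

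Finally, I would invoke Lemma~\ref{lemm5} to bound $\frac{1}{n}\sum_{i=1}^n\|\zeta_i-\overline{\zeta}\|^2\le4L[F(x^s_k)-F(x^*)+F(\widetilde{x}^{s-1})-F(x^*)]$ and multiply through by $\delta(b)$, which yields the claim. As a sanity check on the factor, $\delta(1)=1$ recovers Lemma~\ref{lemm5} exactly, while $\delta(n)=0$ correctly reflects that the full-batch estimator has zero variance. I expect the main obstacle to be this covariance step: one must use the pair-inclusion probabilities of without-replacement sampling correctly and exploit the centering $\sum_i\eta_i=0$ to collapse the cross terms; every other step is a direct appeal to the already-established single-sample bound.
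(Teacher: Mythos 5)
Your proof is correct, and it is worth noting that the paper itself does not prove Lemma~\ref{lemm8} at all: it declares the result ``essentially identical to Theorem 4 in~\cite{koneeny:mini}'' and omits the argument. Your proposal is a self-contained derivation of exactly what that citation hides, and every step checks out. The reduction $\widetilde{\nabla}\! f_{I^s_k}(x^s_k)-\nabla\! f(x^s_k)=\frac{1}{b}\sum_{i\in I^s_k}\eta_i$ with $\eta_i=\zeta_i-\overline{\zeta}$ and $\sum_i\eta_i=0$ is right; the without-replacement pair-inclusion probabilities $\mathbb{E}[\mathbf{1}_{\{i\in I^s_k\}}]=b/n$ and $\mathbb{E}[\mathbf{1}_{\{i\in I^s_k\}}\mathbf{1}_{\{j\in I^s_k\}}]=b(b-1)/[n(n-1)]$ ($i\neq j$) are the correct ones for a uniformly random size-$b$ subset; and the collapse of the cross terms via $\sum_{i\neq j}\langle\eta_i,\eta_j\rangle=-\sum_i\|\eta_i\|^2$ together with
\begin{equation*}
\frac{1}{b^2}\left[\frac{b}{n}-\frac{b(b-1)}{n(n-1)}\right]=\frac{n-b}{(n-1)b}\cdot\frac{1}{n}=\frac{\delta(b)}{n}
\end{equation*}
gives precisely the finite-population correction factor. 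The final appeal to the single-sample bound of Lemma~\ref{lemm5}, which controls $\frac{1}{n}\sum_i\|\eta_i\|^2$ by $4L[F(x^s_k)-F(x^*)+F(\widetilde{x}^{s-1})-F(x^*)]$ under the same hypotheses (each $f_i$ convex and $L$-smooth), closes the argument, and your sanity checks $\delta(1)=1$, $\delta(n)=0$ confirm consistency with Lemma~\ref{lemm5} and Theorem~\ref{the5} respectively. So relative to the paper your route is not merely equivalent but strictly more informative: it makes the lemma independent of the external reference, and it isolates the one genuinely probabilistic ingredient (the without-replacement covariance computation) from the purely analytic ingredient (the smoothness-based bound already established for $b=1$).
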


It is not hard to verify that $0\leq\delta(b)\leq1$. This lemma is essentially identical to Theorem 4 in~\cite{koneeny:mini}, and hence its proof is omitted. Based on the variance upper bound in Lemma~\ref{lemm8}, we further analyze the convergence properties of VR-SGD for the mini-batch setting. Lemma~\ref{lemm6} is first extended to the mini-batch setting as follows.

\begin{lemma}[Mini-batch]
\label{lemm9}
Using the same notation as in Lemma~\ref{lemm8}, we have
\begin{equation*}
\begin{split}
&\:\frac{2\delta(b)}{(\beta\!-\!1)m}\left\{\mathbb{E}[F(x^{s}_{m})]-F(x^{*})\right\}+\left(1-\frac{2\delta(b)}{\beta\!-\!1}\right)\left\{\mathbb{E}[F(\widetilde{x}^{s})]-F(x^{*})\right\}\\
\leq&\: \frac{2\delta(b)}{(\beta\!-\!1)}\left[F(\widetilde{x}^{s-1})-F(x^{*})\right]+\frac{2\delta(b)}{(\beta\!-\!1)m}\left[F(x^{s}_{0})-F(x^{*})\right]+\frac{L\beta}{2m}\mathbb{E}\!\left[\|x^{*}-x^{s}_{0}\|^2-\|x^{*}-x^{s}_{m}\|^2\right].
\end{split}
\end{equation*}
\end{lemma}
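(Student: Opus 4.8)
The plan is to repeat, essentially line by line, the argument behind Lemma~\ref{lemm6} (Option~II, non-smooth), with two substitutions: the single-sample estimator is replaced by the mini-batch estimator $v^s_k:=\widetilde{\nabla} f_{I^s_k}(x^s_k)$, and the variance bound is taken from Lemma~\ref{lemm8} rather than from its single-sample counterpart. The only quantitative effect is that every factor $4L$ becomes $4L\delta(b)$, so each coefficient $\tfrac{2}{\beta-1}$ that originates from variance control is replaced by $\tfrac{2\delta(b)}{\beta-1}$; setting $b=1$ (hence $\delta(b)=1$) then recovers Lemma~\ref{lemm6} verbatim, which is a useful sanity check on the final constants.

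First I would note that the mini-batch estimator is still unbiased, $\mathbb{E}[v^s_k]=\nabla f(x^s_k)$, and start from the $L$-smoothness of $f$ applied to $(x^s_k,x^s_{k+1})$, splitting the quadratic term with $\beta=1/(L\eta)$ into a $\tfrac{L\beta}{2}\|x^s_{k+1}-x^s_k\|^2$ piece and a $-\tfrac{L(\beta-1)}{2}\|x^s_{k+1}-x^s_k\|^2$ piece exactly as in \eqref{equ31}, isolating the cross term $\langle\nabla f(x^s_k)-v^s_k,\,x^s_{k+1}-x^s_k\rangle$. Applying Young's inequality with weight $L(\beta-1)$ cancels the negative quadratic piece and leaves $\tfrac{1}{2L(\beta-1)}\|\nabla f(x^s_k)-v^s_k\|^2$; taking expectations and invoking Lemma~\ref{lemm8} bounds this by $\tfrac{2\delta(b)}{\beta-1}[F(x^s_k)-F(x^*)+F(\widetilde{x}^{s-1})-F(x^*)]$. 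This is the single step where the factor $\delta(b)$ enters.

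Next I would handle the remaining term $\langle v^s_k,\,x^s_{k+1}-x^s_k\rangle+\tfrac{L\beta}{2}\|x^s_{k+1}-x^s_k\|^2$ with the 3-point property (Lemma~\ref{lemm1}) applied with $\hat z=x^s_{k+1}$, $z=x^*$, $z_0=x^s_k$, $\tau=L\beta=1/\eta$, and $r(z):=\langle v^s_k,\,z-x^s_k\rangle+g(z)$; this is where the non-smooth regularizer is absorbed, since the proximal update defining $x^s_{k+1}$ is exactly the minimizer of $\tfrac{\tau}{2}\|z-x^s_k\|^2+r(z)$. Combining this with $\mathbb{E}[v^s_k]=\nabla f(x^s_k)$ and the convexity of $f$ yields the per-iteration recursion
\[\mathbb{E}[F(x^s_{k+1})]-F(x^*)\leq \tfrac{2\delta(b)}{\beta-1}[F(x^s_k)-F(x^*)+F(\widetilde{x}^{s-1})-F(x^*)]+\tfrac{L\beta}{2}\mathbb{E}[\|x^*-x^s_k\|^2-\|x^*-x^s_{k+1}\|^2],\]
which is the $F$-analogue of the displayed inequality in the proof of Lemma~\ref{lemm4}, now carrying $\delta(b)$.

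Finally I would sum this recursion over $k=0,1,\ldots,m-1$: the quadratic terms telescope to $\tfrac{L\beta}{2}\mathbb{E}[\|x^*-x^s_0\|^2-\|x^*-x^s_m\|^2]$, and on the right I would re-index $\sum_{k=0}^{m-1}$ as $\sum_{k=1}^{m}$ plus the boundary correction (the $k=0$ term minus the $k=m$ term). Moving $\tfrac{2\delta(b)}{\beta-1}\sum_{k=1}^m[F(x^s_k)-F(x^*)]$ to the left, invoking the Option~II convexity bound $F(\widetilde{x}^s)\le\tfrac1m\sum_{k=1}^m F(x^s_k)$, and dividing by $m$ turns the coefficient of $\mathbb{E}[F(\widetilde{x}^s)-F(x^*)]$ into $1-\tfrac{2\delta(b)}{\beta-1}$ and sends the $x^s_m$ correction (with coefficient $\tfrac{2\delta(b)}{(\beta-1)m}$) to the left, leaving the $x^s_0$ and $\widetilde{x}^{s-1}$ terms on the right exactly as claimed. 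I expect the only delicate point to be this boundary bookkeeping in the re-indexing, together with the requirement $1-\tfrac{2\delta(b)}{\beta-1}>0$ needed before applying convexity; since $\delta(b)\le1$, this positivity is in fact easier to secure than in the $b=1$ case, and the telescoping and $\delta(b)$-tracking are otherwise routine.
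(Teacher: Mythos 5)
Your proposal is correct and follows essentially the same route as the paper's own proof: smoothness split with $\beta=1/(L\eta)$, Young's inequality plus Lemma~\ref{lemm8} to bring in the factor $\delta(b)$, the 3-point property with $r(z)=\langle v^{s}_{k},z-x^{s}_{k}\rangle+g(z)$, and then the sum-telescope-average-divide-by-$m$ bookkeeping, including the boundary correction that places the $x^{s}_{m}$ term on the left with coefficient $\tfrac{2\delta(b)}{(\beta-1)m}$. The $b=1$ sanity check and the observation that positivity of $1-\tfrac{2\delta(b)}{\beta-1}$ is easier to secure than in the single-sample case are both consistent with the paper.
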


\begin{proof}
In order to simplify notation, the stochastic gradient estimator of mini-batch is defined as:
\begin{equation*}
v^{s}_{k}:=\frac{1}{b}\!\sum_{i\in{I}^{s}_{k}}\!\left[\nabla\! f_{i}(x^{s}_{k})-\nabla\! f_{i}(\widetilde{x}^{s-1})\right]+\nabla\! f(\widetilde{x}^{s-1}).
\end{equation*}
\begin{equation}\label{equ61}
\begin{split}
F(x^{s}_{k+1})\leq\,&\, g(x^{s}_{k+1})+f(x^{s}_{k})+\left\langle\nabla f(x^{s}_{k}),\,x^{s}_{k+1}-x^{s}_{k}\right\rangle+\frac{L\beta}{2}\!\left\|x^{s}_{k+1}-x^{s}_{k}\right\|^{2}-\frac{L(\beta\!-\!1)}{2}\!\left\|x^{s}_{k+1}-x^{s}_{k}\right\|^{2}\\
=&\, g(x^{s}_{k+1})+f(x^{s}_{k})+\left\langle v^{s}_{k},\,x^{s}_{k+1}-x^{s}_{k}\right\rangle+\frac{L\beta}{2}\|x^{s}_{k+1}-x^{s}_{k}\|^2\\
&\,+\left\langle\nabla f(x^{s}_{k})-v^{s}_{k},\,x^{s}_{k+1}-x^{s}_{k}\right\rangle-\frac{L(\beta\!-\!1)}{2}\|x^{s}_{k+1}-x^{s}_{k}\|^{2}.
\end{split}
\end{equation}
Using Lemma~\ref{lemm8}, then we obtain
\begin{equation}\label{equ62}
\begin{split}
&\mathbb{E}\!\left[\left\langle\nabla\! f(x^{s}_{k})-v^{s}_{k},\,x^{s}_{k+1}-x^{s}_{k}\right\rangle-\frac{L(\beta\!-\!1)}{2}\|x^{s}_{k+1}-x^{s}_{k}\|^{2}\right]\\
\leq&\, \mathbb{E}\!\left[\frac{1}{2L(\beta\!-\!1)}\|\nabla\!f(x^{s}_{k})-v^{s}_{k}\|^{2}+\frac{L(\beta\!-\!1)}{2}\|x^{s}_{k+1}\!-\!x^{s}_{k}\|^{2}-\frac{L(\beta\!-\!1)}{2}\|x^{s}_{k+1}\!-\!x^{s}_{k}\|^{2}\right]\\
\leq&\, \frac{2\delta(b)}{\beta\!-\!1}\!\left[F(x^{s}_{k})-F(x^{*})+F(\widetilde{x}^{s-1})-F(x^{*})\right],
\end{split}
\end{equation}
where the first inequality holds due to the Young's inequality, and the second inequality follows from Lemma~\ref{lemm8}. Substituting the inequality \eqref{equ62} into the inequality \eqref{equ61}, and taking the expectation over the random mini-batch set $I^{s}_{k}$, we have
\begin{equation*}
\begin{split}
&\,\mathbb{E}[F(x^{s}_{k+1})]\\
\leq&\,\mathbb{E}[g(x^{s}_{k+\!1})]+\mathbb{E}[f(x^{s}_{k})]+\mathbb{E}\!\left[\left\langle v^{s}_{k}, \,x^{s}_{k+\!1}\!-\!x^{s}_{k}\right\rangle+\!\frac{L\beta}{2}\|x^{s}_{k+\!1}\!-\!x^{s}_{k}\|^2\right]+\frac{2\delta(b)}{\beta\!-\!1}\!\left[F(x^{s}_{k})\!-\!F(x^{*})\!+\!F(\widetilde{x}^{s-\!1})\!-\!F(x^{*})\right]\\
\leq&\, g(x^{*})+\mathbb{E}[f(x^{s}_{k})]+\mathbb{E}\!\left[\left\langle v^{s}_{k}, \,x^{*}\!-\!x^{s}_{k}\right\rangle+\frac{L\beta}{2}(\|x^{*}\!-\!x^{s}_{k}\|^2\!-\!\|x^{*}\!-\!x^{s}_{k+1}\|^2)\right]+\frac{2\delta(b)}{\beta\!-\!1}\!\left[F(x^{s}_{k})\!-\!F(x^{*})\!+\!F(\widetilde{x}^{s-\!1})\!-\!F(x^{*})\right]\\
\leq&\, g(x^{*})+ f(x^{*})+\mathbb{E}\!\left[\frac{L\beta}{2}(\|x^{*}-x^{s}_{k}\|^2-\|x^{*}-x^{s}_{k+1}\|^2)\right]+\frac{2\delta(b)}{\beta\!-\!1}\!\left[F(x^{s}_{k})\!-\!F(x^{*})\!+\!F(\widetilde{x}^{s-\!1})\!-\!F(x^{*})\right]\\
=&\, F(x^{*})+\frac{L\beta}{2}\mathbb{E}\!\left[\left(\|x^{*}-x^{s}_{k}\|^2-\|x^{*}-x^{s}_{k+1}\|^2\right)\right]+\frac{2\delta(b)}{\beta-1}\left[F(x^{s}_{k})\!-\!F(x^{*})\!+\!F(\widetilde{x}^{s-\!1})\!-\!F(x^{*})\right],\\
\end{split}
\end{equation*}
where the second inequality holds from Lemma~\ref{lemm1}. Then the above inequality is rewritten as follows:
\begin{equation}\label{equ63}
\begin{split}
\mathbb{E}\!\left[F(x^{s}_{k+1})\right]-F(x^{*})\leq \frac{2\delta(b)}{\beta\!-\!1}\!\left[F(x^{s}_{k})-F(x^{*})+F(\widetilde{x}^{s-1})-F(x^{*})\right]+ \frac{L\beta}{2}\mathbb{E}\!\left[\|x^{*}-x^{s}_{k}\|^2-\|x^{*}-x^{s}_{k+1}\|^2\right].
\end{split}
\end{equation}
Summing the above inequality over $k=0,1,\cdots,(m-1)$, then
\begin{equation*}
\begin{split}
&\,\sum^{m-1}_{k=0}\left\{\mathbb{E}[F(x^{s}_{k+1})]-F(x^{*})\right\}\\
\leq&\, \sum^{m-1}_{k=0}\left\{\frac{2\delta(b)}{\beta\!-\!1}\!\left[F(x^{s}_{k})-F(x^{*})+F(\widetilde{x}^{s-1})-F(x^{*})\right]+ \frac{L\beta}{2}\mathbb{E}\!\left[\|x^{*}-x^{s}_{k}\|^2-\|x^{*}-x^{s}_{k+1}\|^2\right]\right\}.
\end{split}
\end{equation*}

Since $\widetilde{x}^{s}=\frac{1}{m}\sum^{m}_{k=1}x^{s}_{k}$, we have $F(\widetilde{x}^{s})\leq \frac{1}{m}\sum^{m}_{k=1}F(x^{s}_{k})$, and
\begin{equation*}
\begin{split}
&\:\frac{2\delta(b)}{(\beta\!-\!1)m}\left\{\mathbb{E}[F(x^{s}_{m})]-F(x^{*})\right\}+\left(1-\frac{2\delta(b)}{\beta\!-\!1}\right)\left\{\mathbb{E}[F(\widetilde{x}^{s})]-F(x^{*})\right\}\\
\leq&\: \frac{2\delta(b)}{(\beta\!-\!1)}\left[F(\widetilde{x}^{s-1})-F(x^{*})\right]+\frac{2\delta(b)}{(\beta\!-\!1)m}\left[F(x^{s}_{0})-F(x^{*})\right]+\frac{L\beta}{2m}\mathbb{E}\!\left[\|x^{*}-x^{s}_{0}\|^2-\|x^{*}-x^{s}_{m}\|^2\right].
\end{split}
\end{equation*}
This completes the proof.
\end{proof}
\vspace{3mm}

Similar to Lemma~\ref{lemm9}, we can also extend Lemma~\ref{lemm4} and Corollary~\ref{lemm7} to the mini-batch setting.

\begin{theorem}[Mini-batch]
\label{the5}
If each $f_{i}(\cdot)$ is convex and $L$-smooth, then the following inequality holds
\begin{equation}\label{equ55}
\begin{split}
\mathbb{E}\!\left[F(\widehat{x}^{S})\right]-F(x^{*})\leq\,\frac{2\delta(b)(m+1)}{(\beta\!-\!1\!-\!4\delta(b))mS}\mathbb{E}\!\left[F(\widetilde{x}^{0})\!-\!F(x^{*})\right]+\!\frac{L\beta(\beta\!-\!1)}{2(\beta\!-\!1\!-\!4\delta(b))mS}\mathbb{E}\!\left[\|x^{*}\!-\!\widetilde{x}^{0}\|^2\right].
\end{split}
\end{equation}
\end{theorem}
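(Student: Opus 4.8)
The plan is to mirror the telescoping argument used in the proof of Theorem~\ref{the1}, but starting from the mini-batch per-epoch recursion established in Lemma~\ref{lemm9}. To streamline the bookkeeping I would first introduce the shorthand $c:=2\delta(b)/(\beta-1)$, $a_{s}:=\mathbb{E}[F(\widetilde{x}^{s})-F(x^{*})]$, and $b_{s}:=\mathbb{E}[F(x^{s}_{m})-F(x^{*})]$, and observe that the initialization $x^{1}_{0}=\widetilde{x}^{0}$ together with the epoch-linking rule $x^{s+1}_{0}=x^{s}_{m}$ lets me write $\mathbb{E}[F(x^{s}_{0})-F(x^{*})]=b_{s-1}$ for every $s\geq 1$, where I set $b_{0}:=a_{0}=\mathbb{E}[F(\widetilde{x}^{0})-F(x^{*})]$. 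With this notation Lemma~\ref{lemm9} reads
\begin{equation*}
\frac{c}{m}b_{s}+(1-c)a_{s}\leq c\,a_{s-1}+\frac{c}{m}b_{s-1}+\frac{L\beta}{2m}\mathbb{E}\!\left[\|x^{*}-x^{s}_{0}\|^{2}-\|x^{*}-x^{s+1}_{0}\|^{2}\right],
\end{equation*}
where I have already used $x^{s}_{m}=x^{s+1}_{0}$ inside the squared-distance terms.

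Next I would sum this inequality over $s=1,\ldots,S$. Three telescopings then occur simultaneously: the squared-distance terms collapse to $\|x^{*}-\widetilde{x}^{0}\|^{2}-\|x^{*}-x^{S}_{m}\|^{2}$; the $b$-terms collapse via $\frac{c}{m}\big(\sum_{s=1}^{S}b_{s}-\sum_{s=0}^{S-1}b_{s}\big)=\frac{c}{m}(b_{S}-b_{0})$; and the snapshot terms recombine as $(1-c)\sum_{s=1}^{S}a_{s}-c\sum_{s=0}^{S-1}a_{s}=(1-2c)\sum_{s=1}^{S}a_{s}+c\,a_{S}-c\,a_{0}$. Collecting everything, the only ``bad'' surviving term on the left is $(1-2c)\sum_{s=1}^{S}a_{s}$, accompanied by the manifestly non-negative quantities $\frac{c}{m}b_{S}$, $c\,a_{S}$, and $\frac{L\beta}{2m}\|x^{*}-x^{S}_{m}\|^{2}$ (each is $\geq 0$ because $F(\cdot)\geq F(x^{*})$). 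Dropping those and moving $-c\,a_{0}$ and $-\frac{c}{m}b_{0}$ to the right, and recalling $b_{0}=a_{0}$, isolates
\begin{equation*}
(1-2c)\sum_{s=1}^{S}a_{s}\leq c\,\frac{m+1}{m}\,a_{0}+\frac{L\beta}{2m}\|x^{*}-\widetilde{x}^{0}\|^{2}.
\end{equation*}

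Finally I would divide by $S$, invoke convexity of $F$ to get $\mathbb{E}[F(\overline{x}^{S})]-F(x^{*})\leq\frac{1}{S}\sum_{s=1}^{S}a_{s}$ for $\overline{x}^{S}=\frac{1}{S}\sum_{s=1}^{S}\widetilde{x}^{s}$, divide through by $1-2c=(\beta-1-4\delta(b))/(\beta-1)$, and simplify using $c/(1-2c)=2\delta(b)/(\beta-1-4\delta(b))$; this reproduces exactly the two coefficients in~\eqref{equ55}. The output rule then guarantees $F(\widehat{x}^{S})\leq F(\overline{x}^{S})$ in both branches, so the bound transfers to $\widehat{x}^{S}$. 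The main obstacle is not any single computation but keeping the mini-batch factor $\delta(b)$ threaded correctly through every coefficient, and in particular verifying the sign condition $1-2c>0$, i.e.\ $\beta>1+4\delta(b)$, which is precisely what makes the final division legitimate and which specializes at $\delta(b)=1$ to the familiar requirement $\beta>5$ from Theorem~\ref{the3}.
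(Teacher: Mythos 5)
Your proposal is correct and takes essentially the same route as the paper's own proof: it sums the per-epoch recursion of Lemma~\ref{lemm9} over $s=1,\ldots,S$, telescopes the squared-distance and function-value terms via $x^{s+1}_{0}=x^{s}_{m}$ and $x^{1}_{0}=\widetilde{x}^{0}$, divides by $S$, invokes convexity of $F$ for $\overline{x}^{S}$, divides by the positive factor $1-4\delta(b)/(\beta-1)$, and finishes with the output rule. The only differences are cosmetic bookkeeping: your shorthand $a_{s},b_{s},c$ and the immediate dropping of the non-negative leftovers $c\,a_{S}$, $\tfrac{c}{m}b_{S}$, and $\tfrac{L\beta}{2m}\|x^{*}-x^{S}_{m}\|^{2}$, where the paper instead carries $F(\widetilde{x}^{S})$ and $F(x^{S}_{m})$ along and bounds differences such as $F(\widetilde{x}^{0})-F(\widetilde{x}^{S})\leq F(\widetilde{x}^{0})-F(x^{*})$ at the end.
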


\begin{proof}
Using Lemma~\ref{lemm9}, we have
\begin{equation*}
\begin{split}
&\,\left(1-\frac{2\delta(b)}{\beta\!-\!1}\right)\mathbb{E}\!\left[F(\widetilde{x}^{s})-F(x^{*})\right]\\
\leq&\,\frac{2\delta(b)}{(\beta\!-\!1)}\left[F(\widetilde{x}^{s-1})-F(x^{*})\right]+\frac{2\delta(b)}{(\beta\!-\!1)m}\left[F(x^{s}_{0})-F(x^{*})\right]+\frac{L\beta}{2m}\mathbb{E}\!\left[\|x^{*}\!-x^{s}_{0}\|^2-\|x^{*}\!-x^{s}_{m}\|^2\right].
\end{split}
\end{equation*}
Summing the above inequality over $s=1,2,\ldots,S$, taking expectation over whole history of $I^{s}_{k}$, and using $x^{s+1}_{0}=x^{s}_{m}$, we obtain
\begin{equation*}
\begin{split}
&\,\sum^{S}_{s=1}\left(1-\frac{2\delta(b)}{\beta\!-\!1}\right)\mathbb{E}\left[F(\widetilde{x}^{s})-F(x^{*})\right]\\
\leq&\, \sum^{S}_{s=1}\frac{2\delta(b)}{(\beta\!-\!1)}\!\left[F(\widetilde{x}^{s-1})-F(x^{*})\right]+\sum^{S}_{s=1}\frac{2\delta(b)}{(\beta\!-\!1)m}\mathbb{E}\!\left[F(x^{s}_{0})-F(x^{s}_{m})\right]+\frac{L\beta}{2m}\sum^{S}_{s=1}\mathbb{E}\!\left[\|x^{*}\!-x^{s}_{0}\|^2-\|x^{*}\!-x^{s}_{m}\|^2\right].
\end{split}
\end{equation*}

Subtracting $\frac{2\delta(b)}{\beta-1}\!\sum^{S-1}_{s=1}\mathbb{E}\!\left[F(\widetilde{x}^{s})\!-\!F(x^{*})\right]$ from both sides of the above inequality, we have
\begin{equation*}
\begin{split}
&\:\frac{2\delta(b)}{\beta\!-\!1}\mathbb{E}\!\left[F(\widetilde{x}^{S})-F(x^{*})\right]+\sum^{S}_{s=1}\left(1-\frac{4\delta(b)}{\beta\!-\!1}\right)\mathbb{E}\!\left[F(\widetilde{x}^{s})-F(x^{*})\right]\\
\leq&\:\frac{2\delta(b)}{(\beta\!-\!1)}\left[F(\widetilde{x}^{0})-F(x^{*})\right]+\frac{2\delta(b)}{(\beta\!-\!1)m}\mathbb{E}\!\left[F(x^{1}_{0})-F(x^{S}_{m})\right]+\frac{L\beta}{2m}\mathbb{E}\!\left[\|x^{*}\!-x^{1}_{0}\|^2-\|x^{*}\!-x^{S}_{m}\|^2\right]\\
\end{split}
\end{equation*}
Dividing both sides of the above inequality by $S$ and using $\mathbb{E}[F(\overline{x})]\leq \frac{1}{S}\sum^{S}_{s=1}F(\widetilde{x}^{s})$, we arrive at
\begin{equation*}
\begin{split}
&\,\frac{2\delta(b)}{(\beta\!-\!1)S}\mathbb{E}[F(\widetilde{x}^{S})-F(x^{*})]+\left(1-\frac{4\delta(b)}{\beta\!-\!1}\right)\mathbb{E}\!\left[F(\overline{x})-F(x^{*})\right]\\
\leq&\, \frac{2\delta(b)}{(\beta\!-\!1)S}\!\left[F(\widetilde{x}^{0})-F(x^{*})\right]+\frac{2\delta(b)}{(\beta\!-\!1)mS}\mathbb{E}\!\left[F(x^{1}_{0})-F(x^{S}_{m})\right]+\frac{L\beta}{2mS}\mathbb{E}\!\left[\|x^{*}\!-x^{1}_{0}\|^2-\|x^{*}\!-x^{S}_{m}\|^2\right].
\end{split}
\end{equation*}
Subtracting $\frac{2\delta(b)}{(\beta\!-\!1)S}\mathbb{E}[F(\widetilde{x}^{S})\!-\!F(x^{*})]$ from both sides of the above inequality, we have
\begin{equation*}
\begin{split}
&\,\left(1-\frac{4\delta(b)}{\beta\!-\!1}\right)\mathbb{E}\!\left[F(\overline{x})-F(x^{*})\right]\\
\leq&\, \frac{2\delta(b)}{(\beta\!-\!1)S}\mathbb{E}\!\left[F(\widetilde{x}^{0})-F(\widetilde{x}^{S})\right]+\frac{2\delta(b)}{(\beta\!-\!1)mS}\mathbb{E}\!\left[F(x^{1}_{0})-F(x^{S}_{m})\right]+\frac{L\beta}{2mS}\mathbb{E}\!\left[\|x^{*}-x^{1}_{0}\|^2-\|x^{*}-x^{S}_{m}\|^2\right].
\end{split}
\end{equation*}

Dividing both sides of the above inequality by $(1\!-\!\frac{4\delta(b)}{\beta-1})\!>\!0$, we arrive at
\begin{equation*}
\begin{split}
&\,\mathbb{E}\!\left[F(\overline{x})\right]-F(x^{*})\\
\leq&\, \frac{2\delta(b)}{(\beta\!-\!1\!-\!4\delta(b))S}\mathbb{E}\!\left[F(\widetilde{x}^{0})\!-\!F(\widetilde{x}^{S})\right]+\!\frac{2\delta(b)}{(\beta\!-\!1\!-\!4\delta(b))mS}\mathbb{E}\!\left[F(\widetilde{x}^{0})\!-\!F(x^{S}_{m})\right]+\!\frac{L\beta(\beta\!-\!1)}{2(\beta\!-\!1\!-\!4\delta(b))mS}\mathbb{E}\!\left[\|x^{*}\!-\!\widetilde{x}^{0}\|^2\right]\\
\leq&\, \frac{2\delta(b)}{(\beta\!-\!1\!-\!4\delta(b))S}\mathbb{E}\!\left[F(\widetilde{x}^{0})\!-\!F(x^{*})\right]+\!\frac{2\delta(b)}{(\beta\!-\!1\!-\!4\delta(b))mS}\mathbb{E}\!\left[F(\widetilde{x}^{0})\!-\!F(x^{*})\right]+\!\frac{L\beta(\beta\!-\!1)}{2(\beta\!-\!1\!-\!4\delta(b))mS}\mathbb{E}\!\left[\|x^{*}\!-\!\widetilde{x}^{0}\|^2\right]\\
=&\, \frac{2\delta(b)(m+1)}{(\beta\!-\!1\!-\!4\delta(b))mS}\mathbb{E}\!\left[F(\widetilde{x}^{0})\!-\!F(x^{*})\right]+\!\frac{L\beta(\beta\!-\!1)}{2(\beta\!-\!1\!-\!4\delta(b))mS}\mathbb{E}\!\left[\|x^{*}\!-\!\widetilde{x}^{0}\|^2\right].
\end{split}
\end{equation*}

When $F(\widetilde{x}^{S})\!\leq\!F(\overline{x}^{S})$, then $\widehat{x}^{S}\!=\!\widetilde{x}^{S}$, and
\begin{equation*}
\begin{split}
\mathbb{E}\!\left[F(\widehat{x}^{S})\right]-F(x^{*})\leq\,\frac{2\delta(b)(m+1)}{(\beta\!-\!1\!-\!4\delta(b))mS}\mathbb{E}\!\left[F(\widetilde{x}^{0})\!-\!F(x^{*})\right]+\!\frac{L\beta(\beta\!-\!1)}{2(\beta\!-\!1\!-\!4\delta(b))mS}\mathbb{E}\!\left[\|x^{*}\!-\!\widetilde{x}^{0}\|^2\right].
\end{split}
\end{equation*}
Alternatively, if $F(\widetilde{x}^{S})\!\geq\!F(\overline{x}^{S})$, then $\widehat{x}^{S}\!=\!\overline{x}^{S}$, and the above inequality still holds. This completes the proof.
\end{proof}
\vspace{3mm}

From Theorem~\ref{the5}, one can see that when $b\!=\!n$ (i.e., the batch setting), we have $\delta(n)\!=\!0$, and the first term on the right-hand side of \eqref{equ55} diminishes. In other words, our VR-SGD method degenerates to the deterministic method with the convergence rate of $\mathcal{O}(1/T)$. Furthermore, when $b\!=\!1$, we have $\delta(1)\!=\!1$, and then Theorem~\ref{the5} degenerates to Theorem~\ref{the3}.

\subsection{Convergence Properties for Strongly Convex Problems}
In this part, we analyze the convergence properties of VR-SGD (i.e., Algorithm~\ref{alg2}) for solving the \emph{strongly convex} objective function \eqref{equ01}. According to the above analysis, one can see that if $F(\cdot)$ is convex, and each convex component function $f_{i}(\cdot)$ is $L$-smooth, both Algorithms~\ref{alg2} and~\ref{alg3}  converge to the optimal solution. In the following, we provide stronger convergence rate guarantees for VR-SGD under the strongly convex condition. We first give the following assumption.

\begin{assumption}\label{assum3}
For all $s=1,2,\ldots,S$, the following inequality holds
\begin{equation*}
\mathbb{E}\!\left[F(x^{s}_{0})-F(x^{*})\right]\leq C\:\!\mathbb{E}\!\left[F(\widetilde{x}^{s-\!1})-F(x^{*})\right]
\end{equation*}
where $C>0$ is a constant{\footnote{This assumption shows the relationship of the gaps between the function values at the starting and snapshot points of each epoch and the optimal value of the objective function. In fact, both $\widetilde{x}^{s}$ and $x^{s}_{m}$ (i.e., $x^{s+1}_{0}$) converge to $x^{*}$, and thus $C$ is far less than $m$, i.e., $C\ll m$.}}.
\end{assumption}

Similar to Algorithm~\ref{alg3}, Algorithm~\ref{alg2} also has two different update rules for the two cases of smooth and non-smooth objective functions. We first give the following convergence result for Algorithm~\ref{alg2} with \emph{Option I}.

\begin{theorem}[Option I]
\label{the6}
Suppose Assumptions~\ref{assum1}, \ref{assum2}, and \ref{assum3} hold, and $m$ is sufficiently large so that
\begin{equation*}
\rho_{I}:=\,\frac{2L\eta(m\!+\!C)}{(m\!-\!1)(1\!-\!3L\eta)}+\frac{C(1\!-\!L\eta)}{\mu\eta(m\!-\!1)(1\!-\!3L\eta)}< 1.
\end{equation*}
Then Algorithm~\ref{alg2} with Option I has the following geometric convergence in expectation:
\begin{equation*}
\mathbb{E}\left[F(\widetilde{x}^{s})-F(x^{*})\right]\leq\,\rho^{s}_{I}\left[F(\widetilde{x}^{0})-F(x^{*})\right].
\end{equation*}
\end{theorem}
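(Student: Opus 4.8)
The plan is to turn the one-epoch recursion already proved for Option~I into a \emph{single-step} geometric contraction, exploiting strong convexity and Assumption~\ref{assum3} instead of summing over $s$ as in the non-strongly convex Theorem~\ref{the1}. Since the recursion of Corollary~\ref{lemm7} holds verbatim for the (possibly non-smooth) strongly convex objective $F$, I would take it as the starting point:
\begin{equation*}
\begin{split}
&\left(1-\tfrac{2}{\beta-1}\right)\mathbb{E}[F(\widetilde{x}^{s})-F(x^*)]+\tfrac{1}{m-1}\mathbb{E}[F(x^s_m)-F(x^*)]\\
\le&\, \tfrac{2m}{(\beta-1)(m-1)}\mathbb{E}[F(\widetilde{x}^{s-1})-F(x^*)]+\tfrac{2}{(\beta-1)(m-1)}\mathbb{E}[F(x^s_0)-F(x^*)]\\
&+\tfrac{L\beta}{2(m-1)}\mathbb{E}[\|x^*-x^s_0\|^2-\|x^*-x^s_m\|^2].
\end{split}
\end{equation*}
The guiding idea is that the distance term $\|x^*-x^s_0\|^2$, which was telescoped in the convex case, can now be \emph{absorbed into the function gap} via strong convexity, so that every quantity on the right becomes a function-value suboptimality at either $\widetilde{x}^{s-1}$ or $x^s_0$; Assumption~\ref{assum3} then ties $x^s_0$ back to $\widetilde{x}^{s-1}$, closing the recursion in a single variable.

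Concretely, I would first discard the nonnegative term $\tfrac{1}{m-1}\mathbb{E}[F(x^s_m)-F(x^*)]$ on the left and the nonpositive term $-\tfrac{L\beta}{2(m-1)}\|x^*-x^s_m\|^2$ on the right. Next, since $x^*$ minimizes $F$ we have $0\in\partial F(x^*)$, so Assumption~\ref{assum2} applied at $x^*$ gives $F(x^s_0)-F(x^*)\ge \tfrac{\mu}{2}\|x^*-x^s_0\|^2$, i.e.
\begin{equation*}
\tfrac{L\beta}{2(m-1)}\|x^*-x^s_0\|^2\le \tfrac{L\beta}{\mu(m-1)}\,[F(x^s_0)-F(x^*)]=\tfrac{1}{\mu\eta(m-1)}\,[F(x^s_0)-F(x^*)],
\end{equation*}
using $\beta=1/(L\eta)$. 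Merging this with the existing $\tfrac{2}{(\beta-1)(m-1)}$-term in $F(x^s_0)-F(x^*)$ and then invoking Assumption~\ref{assum3} to replace $\mathbb{E}[F(x^s_0)-F(x^*)]$ by $C\,\mathbb{E}[F(\widetilde{x}^{s-1})-F(x^*)]$, the whole right-hand side collapses to a single multiple of $\mathbb{E}[F(\widetilde{x}^{s-1})-F(x^*)]$.

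Finally I would divide by the left coefficient, which simplifies (with $\beta=1/(L\eta)$) to $1-\tfrac{2}{\beta-1}=\tfrac{1-3L\eta}{1-L\eta}>0$ because $\beta>3$. A short computation substituting $\tfrac{2}{\beta-1}=\tfrac{2L\eta}{1-L\eta}$ throughout reduces the accumulated coefficient to exactly
\begin{equation*}
\rho_I=\frac{2L\eta(m+C)}{(m-1)(1-3L\eta)}+\frac{C(1-L\eta)}{\mu\eta(m-1)(1-3L\eta)},
\end{equation*}
yielding the one-step bound $\mathbb{E}[F(\widetilde{x}^{s})-F(x^*)]\le \rho_I\,\mathbb{E}[F(\widetilde{x}^{s-1})-F(x^*)]$; unrolling this over $s$ and using $\rho_I<1$ gives the claimed $\rho_I^{\,s}$ rate.

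I expect the main obstacle to be purely the bookkeeping in this last step: verifying that the strong-convexity factor $2/\mu$, the Young's-inequality constants already baked into Corollary~\ref{lemm7}, and the $\beta=1/(L\eta)$ substitution combine to produce precisely $\rho_I$ with no residual terms, together with confirming that the hypothesis $\rho_I<1$ indeed forces $1-3L\eta>0$ so that the division preserves the inequality direction. Note also that, unlike the non-strongly convex analysis, the output-averaging construction of $\widehat{x}^S$ plays no role here, since the contraction already controls $\widetilde{x}^S$ directly.
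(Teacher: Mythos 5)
Your proposal is correct and follows essentially the same route as the paper's own proof: start from the recursion in Corollary~\ref{lemm7}, drop the nonnegative terms involving $x^{s}_{m}$, convert $\|x^{*}-x^{s}_{0}\|^{2}$ into a function gap via $\mu$-strong convexity, invoke Assumption~\ref{assum3} to close the recursion in $\mathbb{E}[F(\widetilde{x}^{s-1})-F(x^{*})]$, and divide by $1-\tfrac{2}{\beta-1}=\tfrac{1-3L\eta}{1-L\eta}$ with $\beta=1/(L\eta)$ to obtain exactly $\rho_{I}$. The only cosmetic difference is that the paper keeps the $x^{s+1}_{0}$ terms on the left-hand side before discarding them, and (like you) it presumes $\eta<1/(3L)$, i.e.\ $\beta>3$, so that the division preserves the inequality direction.
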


\begin{proof}
Since each $f_{i}(\cdot)$ is convex and $L$-smooth, then Corollary~\ref{lemm7} holds, which then implies
\begin{equation}\label{equ56}
\begin{split}
&\left(1-\frac{2}{\beta\!-\!1}\right)\mathbb{E}\!\left[F(\widetilde{x}^{s})-F(x^{*})\right]+\frac{1}{m\!-\!1}\mathbb{E}\!\left[F(x^{s+1}_{0})-F(x^{*})\right]\!+\!\frac{L\beta }{2(m\!-\!1)}\mathbb{E}\!\left[\|x^{s+1}_{0}\!-\!x^{*}\|^2\right]\\
\leq& \frac{2m}{(m\!-\!1)(\beta\!-\!1)}\mathbb{E}\!\left[F(\widetilde{x}^{s-\!1})\!-\!F(x^{*})\right]\!+\!\frac{2}{(m\!-\!1)(\beta\!-\!1)}\mathbb{E}\!\left[F(x^{s}_{0})\!-\!F(x^{*})\right]\!+\!\frac{L\beta }{2(m\!-\!1)}\mathbb{E}\!\left[\|x^{s}_{0}\!-\!x^{*}\|^2\right].
\end{split}
\end{equation}
Due to the strong convexity of $F(\cdot)$, we have $\|x^{s}_{0}-x^{*}\|^2\leq({2}/{\mu})[F(x^{s}_{0})-F(x^{*})]$. Then the inequality in (\ref{equ56}) can be rewritten as follows:
\begin{equation*}
\begin{split}
&\:\left(1-\frac{2}{\beta\!-\!1}\right)\mathbb{E}\!\left[F(\widetilde{x}^{s})-F(x^{*})\right]\\
\leq&\: \frac{2m}{(m\!-\!1)(\beta\!-\!1)}\mathbb{E}\!\left[F(\widetilde{x}^{s-\!1})\!-\!F(x^{*})\right]\!+\!\left(\frac{2}{(m\!-\!1)(\beta\!-\!1)}+\frac{L\beta }{\mu(m\!-\!1)}\right)\mathbb{E}\!\left[F(x^{s}_{0})\!-\!F(x^{*})\right]\\
\leq&\: \frac{2m}{(m\!-\!1)(\beta\!-\!1)}\mathbb{E}\!\left[F(\widetilde{x}^{s-\!1})\!-\!F(x^{*})\right]\!+\!\left(\frac{2C}{(m\!-\!1)(\beta\!-\!1)}+\frac{CL\beta }{\mu(m\!-\!1)}\right)\mathbb{E}\!\left[F(\widetilde{x}^{s-\!1})\!-\!F(x^{*})\right]\\
=&\:\left(\frac{2(m\!+\!C)}{(m\!-\!1)(\beta\!-\!1)}+\frac{CL\beta }{\mu(m\!-\!1)}\right)\mathbb{E}\!\left[F(\widetilde{x}^{s-\!1})\!-\!F(x^{*})\right]
\end{split}
\end{equation*}
where the first inequality holds due to the fact that $\|x^{s}_{0}-x^{*}\|^2\leq({2}/{\mu})[F(x^{s}_{0})-F(x^{*})]$, and the second inequality follows from Assumption~\ref{assum3}. Dividing both sides of the above inequality by $[1\!-\!{2}/({\beta\!-\!1})]\!>\!0$ (that is, $\beta$ is required to be larger than 3) and using the definition of $\beta\!=\!1/(L\eta)$, we arrive at
\begin{equation*}
\begin{split}
\mathbb{E}\!\left[F(\widetilde{x}^{s})-F(x^{*})\right]\leq&\:\left(\frac{2(m\!+\!C)}{(m\!-\!1)(\beta\!-\!3)}+\frac{CL\beta(\beta\!-\!1)}{\mu(m\!-\!1)(\beta\!-\!3)}\right)\mathbb{E}\!\left[F(\widetilde{x}^{s-\!1})\!-\!F(x^{*})\right]\\
=&\:\left(\frac{2(m\!+\!C)L\eta}{(m\!-\!1)(1\!-\!3L\eta)}+\frac{C(1\!-\!L\eta)}{\mu\eta(m\!-\!1)(1\!-\!3L\eta)}\right)\mathbb{E}\!\left[F(\widetilde{x}^{s-\!1})\!-\!F(x^{*})\right].
\end{split}
\end{equation*}
This completes the proof.
\end{proof}

Although the learning rate $\eta$ in Theorem~\ref{the6} needs to be less than $1/(3L)$, we can use much larger learning rates in practice, e.g., $\eta=3/(7L)$. Similar to Theorem~\ref{the6}, we give the following convergence result for Algorithm~\ref{alg2} with \emph{Option II}.

\begin{theorem}[Option II]
\label{the7}
Suppose Assumptions~\ref{assum1}, \ref{assum2}, and \ref{assum3} hold, and $m$ is sufficiently large so that
\begin{equation*}
\rho_{I\!I}:=\,\frac{2L\eta(m\!+\!C)}{m(1\!-\!3L\eta)}+\frac{C(1\!-\!L\eta)}{m\mu\eta(1\!-\!3L\eta)}< 1.
\end{equation*}
Then Algorithm~\ref{alg2} with Option II has the following geometric convergence in expectation:
\begin{equation*}
\mathbb{E}\left[F(\widetilde{x}^{s})-F(x^{*})\right]\leq\,\rho^{s}_{I\!I}\left[F(\widetilde{x}^{0})-F(x^{*})\right].
\end{equation*}
\end{theorem}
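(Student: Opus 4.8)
The plan is to mirror the proof of Theorem~\ref{the6}, replacing the Option~I one-epoch estimate (Corollary~\ref{lemm7}) by its Option~II counterpart. Since each $f_i(\cdot)$ is convex and $L$-smooth, Lemma~\ref{lemm6} applies and gives, for every $s$,
\begin{equation*}
\left(1-\frac{2}{\beta-1}\right)\mathbb{E}\!\left[F(\widetilde{x}^{s})-F(x^{*})\right]+\frac{2}{(\beta-1)m}\mathbb{E}\!\left[F(x^{s}_{m})-F(x^{*})\right]\leq\frac{2}{(\beta-1)m}\mathbb{E}\!\left[F(x^{s}_{0})-F(x^{*})\right]+\frac{2}{\beta-1}\mathbb{E}\!\left[F(\widetilde{x}^{s-1})-F(x^{*})\right]+\frac{L\beta}{2m}\mathbb{E}\!\left[\|x^{*}-x^{s}_{0}\|^2-\|x^{*}-x^{s}_{m}\|^2\right].
\end{equation*}
Because $F(x^{s}_{m})\geq F(x^{*})$ and $\|x^{*}-x^{s}_{m}\|^2\geq 0$, I would discard the nonnegative term $\frac{2}{(\beta-1)m}\mathbb{E}[F(x^{s}_{m})-F(x^{*})]$ on the left and the nonpositive term $-\frac{L\beta}{2m}\mathbb{E}\|x^{*}-x^{s}_{m}\|^2$ on the right, both of which only loosen the inequality in the correct direction.

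Next I would eliminate the remaining distance term using strong convexity (Assumption~\ref{assum2}), namely $\|x^{s}_{0}-x^{*}\|^2\leq(2/\mu)[F(x^{s}_{0})-F(x^{*})]$, so that $\frac{L\beta}{2m}\mathbb{E}\|x^{*}-x^{s}_{0}\|^2\leq\frac{L\beta}{\mu m}\mathbb{E}[F(x^{s}_{0})-F(x^{*})]$. This turns the estimate into a pure function-gap inequality whose right-hand side depends only on $F(\widetilde{x}^{s-1})-F(x^{*})$ and $F(x^{s}_{0})-F(x^{*})$. Invoking Assumption~\ref{assum3} to replace every occurrence of $\mathbb{E}[F(x^{s}_{0})-F(x^{*})]$ by $C\,\mathbb{E}[F(\widetilde{x}^{s-1})-F(x^{*})]$ and collecting coefficients (using $\frac{2C}{(\beta-1)m}+\frac{2}{\beta-1}=\frac{2(m+C)}{(\beta-1)m}$) yields
\begin{equation*}
\left(1-\frac{2}{\beta-1}\right)\mathbb{E}\!\left[F(\widetilde{x}^{s})-F(x^{*})\right]\leq\left(\frac{2(m+C)}{(\beta-1)m}+\frac{CL\beta}{\mu m}\right)\mathbb{E}\!\left[F(\widetilde{x}^{s-1})-F(x^{*})\right].
\end{equation*}

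Finally, I would divide both sides by $1-\frac{2}{\beta-1}=\frac{\beta-3}{\beta-1}$, which is positive precisely because $\eta<1/(3L)$ forces $\beta=1/(L\eta)>3$; this is the one structural constraint the argument imposes on the step size. Substituting $\beta=1/(L\eta)$ and simplifying shows the resulting per-epoch contraction factor equals exactly $\rho_{I\!I}=\frac{2L\eta(m+C)}{m(1-3L\eta)}+\frac{C(1-L\eta)}{m\mu\eta(1-3L\eta)}$, so that $\mathbb{E}[F(\widetilde{x}^{s})-F(x^{*})]\leq\rho_{I\!I}\,\mathbb{E}[F(\widetilde{x}^{s-1})-F(x^{*})]$; unrolling this one-step recursion from $s$ down to $0$ gives the claimed geometric rate $\rho_{I\!I}^{s}[F(\widetilde{x}^{0})-F(x^{*})]$. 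I do not expect a genuine obstacle here: the only nontrivial points are bookkeeping ones, namely verifying that the $\beta$-to-$\eta$ algebra reproduces $\rho_{I\!I}$ and confirming that Assumption~\ref{assum3} (with $C\ll m$) is what keeps $\rho_{I\!I}<1$ attainable for the large learning rates claimed. Unlike Theorem~\ref{the6}, no chaining of consecutive epochs through $x^{s+1}_{0}=x^{s}_{m}$ is needed, since Assumption~\ref{assum3} directly controls the starting-point gap within each epoch.
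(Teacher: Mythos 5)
Your proposal is correct and follows essentially the same route as the paper's own proof in Appendix~F: start from Lemma~\ref{lemm6}, discard the nonnegative terms involving $x^{s}_{m}$, bound $\|x^{s}_{0}-x^{*}\|^{2}$ via strong convexity, invoke Assumption~\ref{assum3}, divide by $1-\tfrac{2}{\beta-1}=\tfrac{\beta-3}{\beta-1}>0$, and unroll the resulting one-epoch contraction, with the $\beta$-to-$\eta$ algebra reproducing $\rho_{I\!I}$ exactly. The only cosmetic difference is that the paper first relabels $x^{s}_{m}$ as $x^{s+1}_{0}$ and keeps those terms on the left before dropping them (and your closing remark contrasting this with Theorem~\ref{the6} is slightly off, since the paper's proof of Theorem~\ref{the6} uses the same relabel-then-drop device rather than any genuine chaining), but this does not affect correctness.
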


The proof of Theorem~\ref{the7} can be found in APPENDIX F. In addition, we can also provide the linear convergence guarantees of Algorithm~\ref{alg2} with Option I or Option II for solving smooth strongly convex objective functions. From all the results, one can see that VR-SGD attains a \emph{linear} convergence rate for both \emph{smooth} and \emph{non-smooth} strongly convex minimization problems.

\section{Experimental Results}
In this section, we evaluate the performance of our VR-SGD method for solving various ERM problems, such as logistic regression, Lasso, and ridge regression, and compare its performance with several related stochastic variance reduced methods, including SVRG~\cite{johnson:svrg}, Prox-SVRG~\cite{xiao:prox-svrg}, and Katyusha~\cite{zhu:Katyusha}. All the codes of VR-SGD and related methods can be downloaded from the author's website{\footnote{\url{https://sites.google.com/site/fanhua217/publications}}}.

\subsection{Experimental Setup}
We used four publicly available data sets in the experiments: Adult (also called a9a), Covtype, Protein, and Sido0, as listed in Table~\ref{tab1}. \emph{It should be noted that each example of these date sets was normalized so that they have unit length as in~\cite{xiao:prox-svrg,shang:fsvrg}, which leads to the same upper bound on the Lipschitz constants $L_{i}$, i.e., $L\!=\!L_{i}$ for all $i\!=\!1,\ldots,n$}. As suggested in~\cite{johnson:svrg,xiao:prox-svrg,zhu:Katyusha}, the epoch length is set to $m\!=\!2n$ for the stochastic variance reduced methods, SVRG~\cite{johnson:svrg}, Prox-SVRG~\cite{xiao:prox-svrg}, and Katyusha~\cite{zhu:Katyusha}, as well as our VR-SGD method. Then the only parameter we have to tune by hand is the step size (or learning rate), $\eta$. Since Katyusha has a much higher per-iteration complexity than SVRG, Prox-SVRG, and VR-SGD, we compare their performance in terms of both the number of effective passes and running time (seconds), where computing a single full gradient or evaluating $n$ component gradients is considered as one effective pass over the data. For fair comparison, we implemented SVRG, Prox-SVRG, Katyusha, and VR-SGD in C++ with a Matlab interface, and performed all the experiments on a PC with an Intel i5-2400 CPU and 16GB RAM. In addition, we do not compare with other stochastic algorithms such as SAGA~\cite{defazio:saga} and Catalyst~\cite{lin:vrsg}, as they have been shown to be comparable or inferior to Katyusha~\cite{zhu:Katyusha}.

\begin{table}[!th]
\centering
\caption{Summary of data sets used for our experiments.}
\label{tab1}
\setlength{\tabcolsep}{8.9pt}
\renewcommand\arraystretch{1.19}
\begin{tabular}{lccc}
\hline
\ Data sets   & Sizes $n$    & Dimensions $d$  & Sparsity\\
\hline
\ Adult       & 32,562         & 123            & 11.28\% \\
\ Protein     & 145,751        & 74             & 99.21\% \\
\ Covtype     & 581,012        & 54             & 22.12\% \\
\ Sido0       & 12,678         & 4,932          & 9.84\% \\
\hline
\end{tabular}
\end{table}

\begin{figure}[t]
\centering
\subfigure[Adult: $\lambda\!=\!10^{-5}$ (left) \;and\; $\lambda\!=\!10^{-6}$ (right)]{\includegraphics[width=0.246\columnwidth]{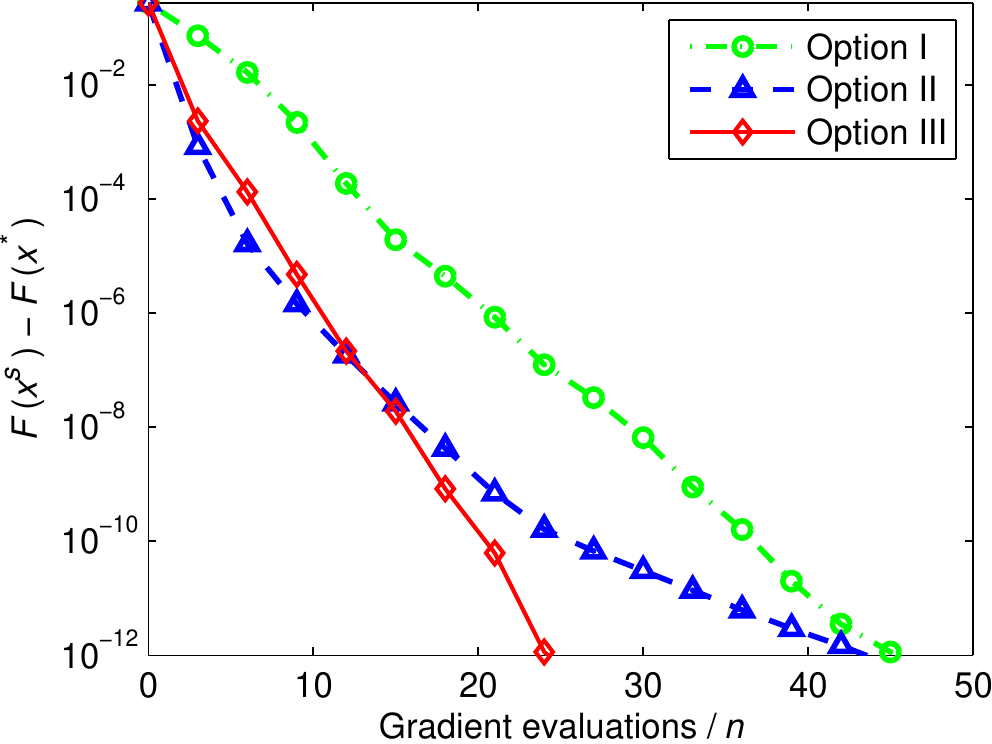}\:\includegraphics[width=0.246\columnwidth]{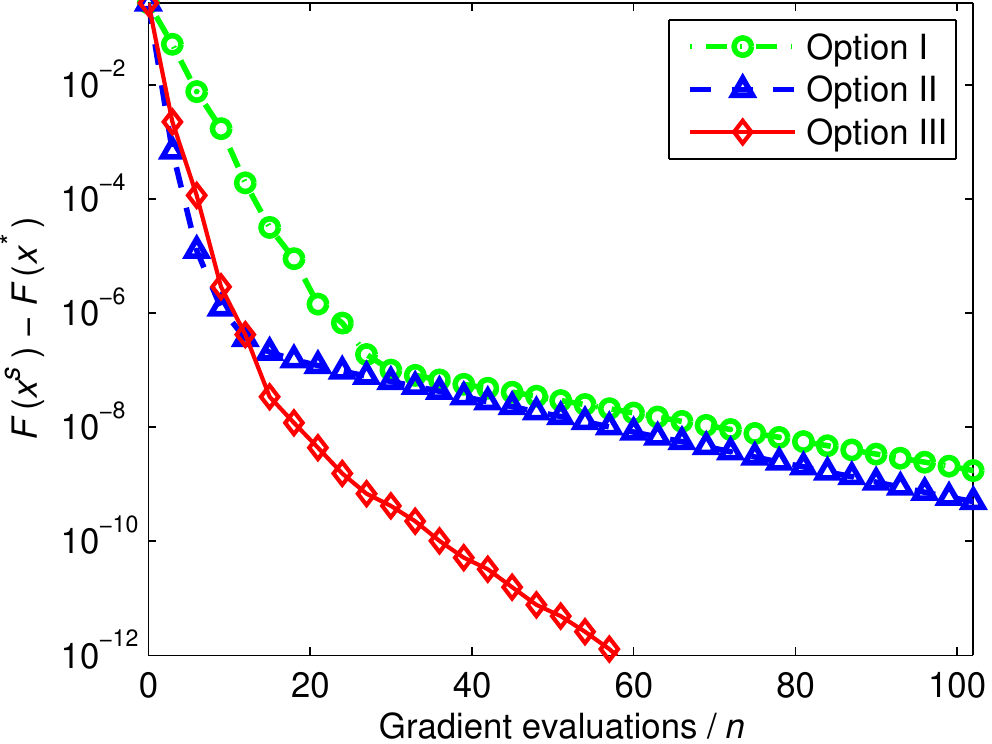}}
\subfigure[Protein: $\lambda\!=\!10^{-5}$ (left) \;and\; $\lambda\!=\!10^{-6}$ (right)]{\includegraphics[width=0.246\columnwidth]{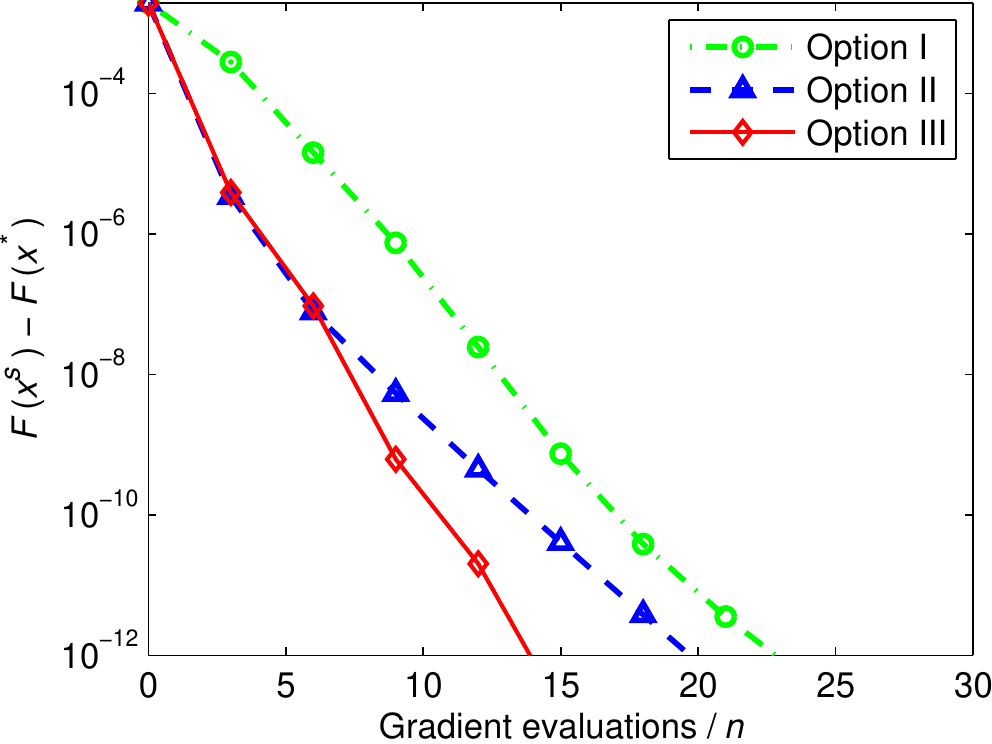}\:\includegraphics[width=0.246\columnwidth]{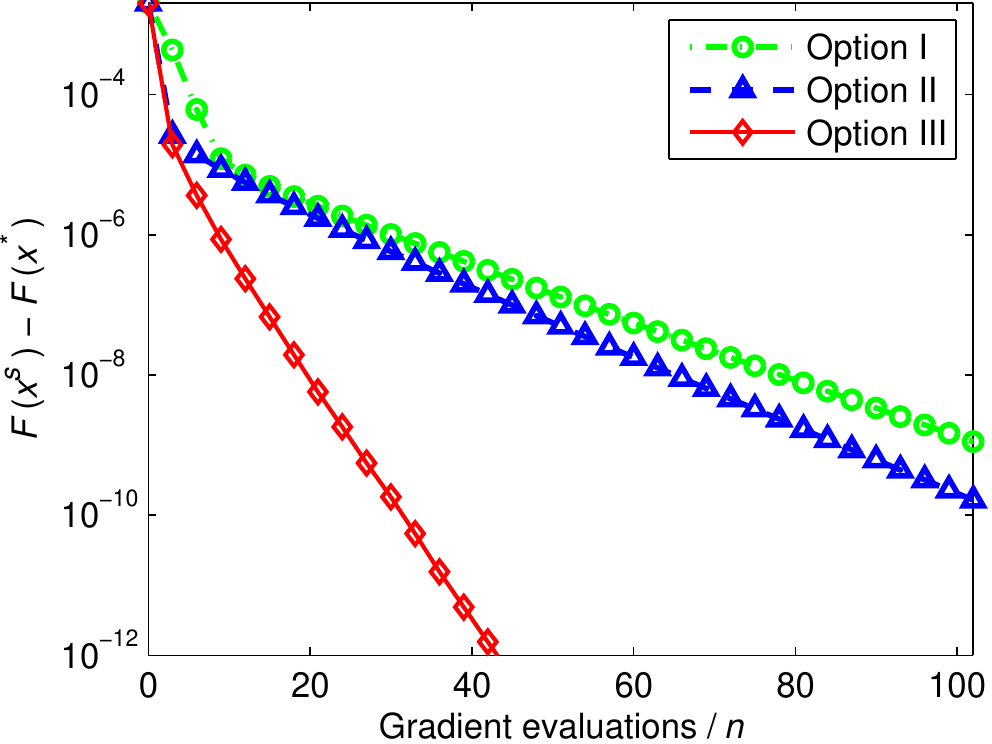}}
\subfigure[Covtype: $\lambda\!=\!10^{-5}$ (left) \;and\; $\lambda\!=\!10^{-6}$ (right)]{\includegraphics[width=0.246\columnwidth]{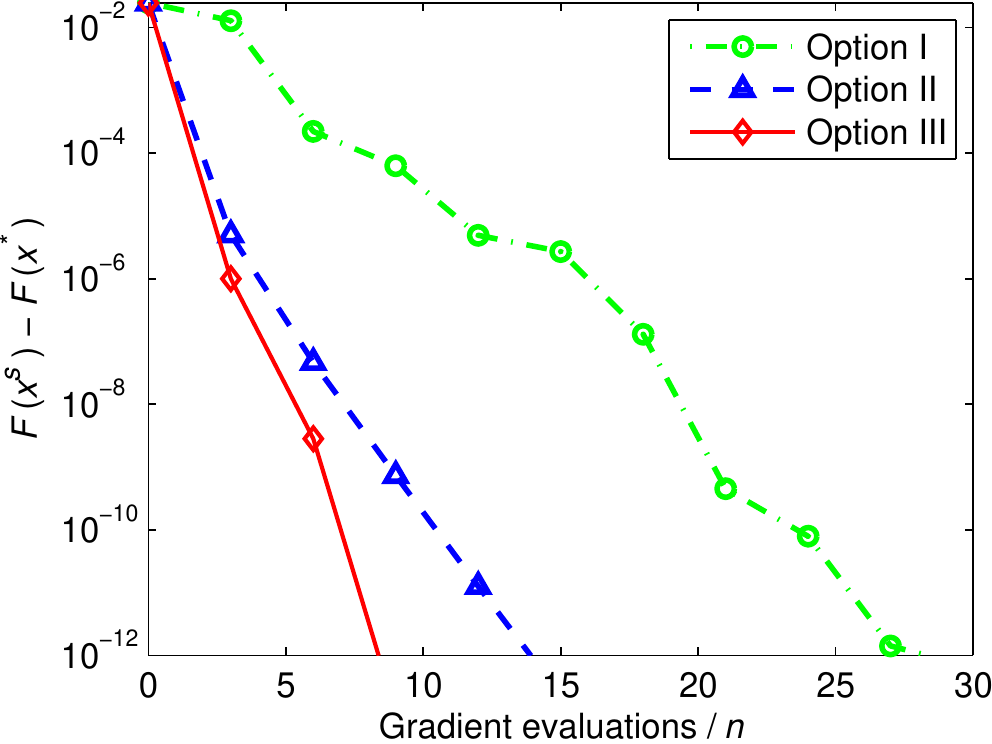}\:\includegraphics[width=0.246\columnwidth]{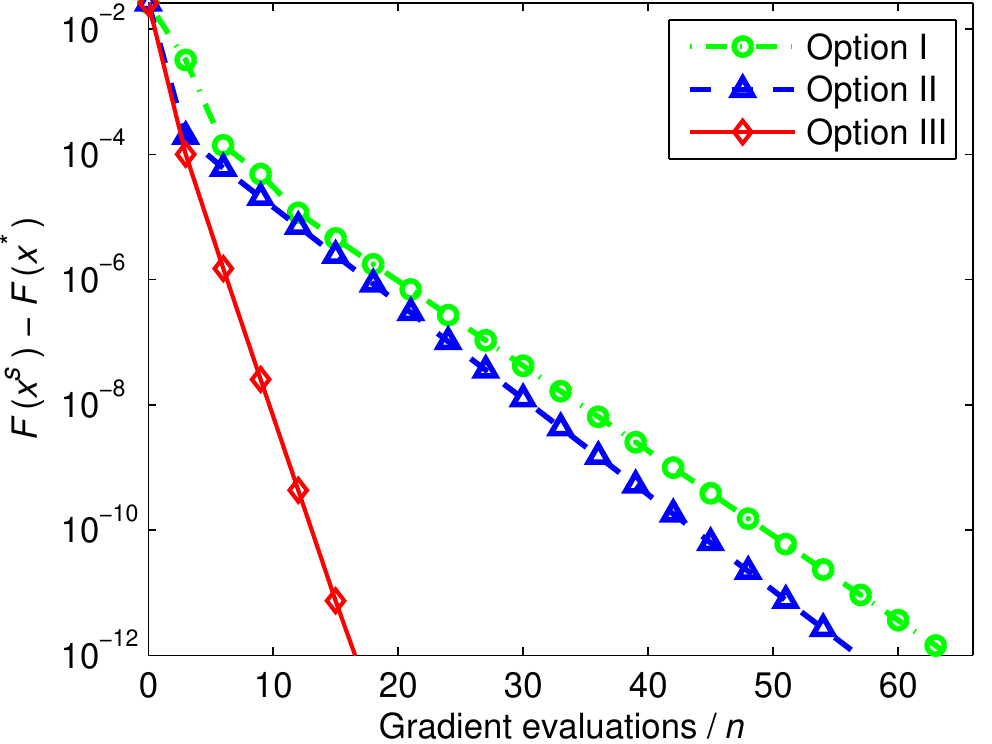}}
\subfigure[Sido0: $\lambda\!=\!10^{-4}$ (left) \;and\; $\lambda\!=\!10^{-5}$ (right)]{\includegraphics[width=0.246\columnwidth]{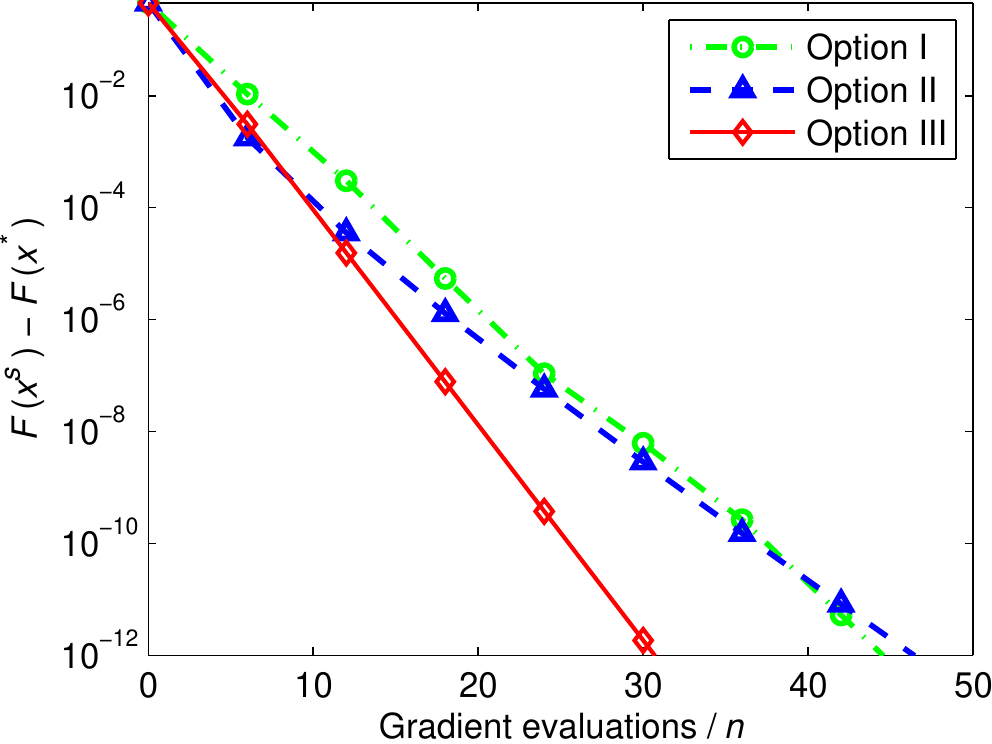}\:\includegraphics[width=0.246\columnwidth]{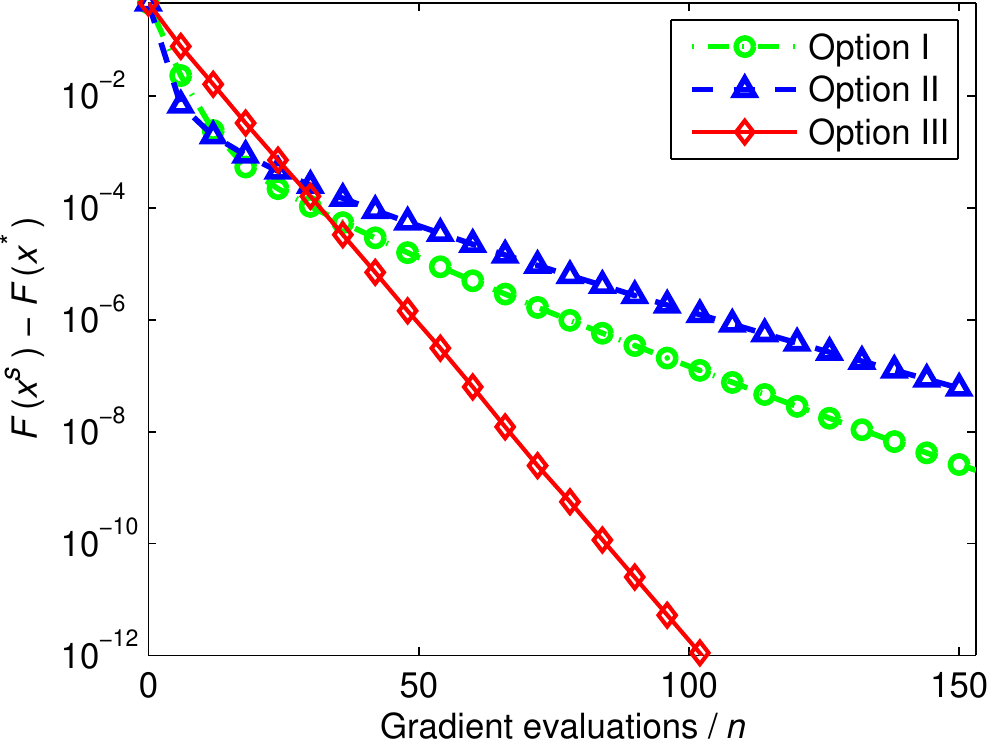}}
\vspace{-2.6mm}
\caption{Comparison of Options I, II, and III for solving ridge regression problems with the regularizer $(\lambda/2)\|\cdot\|^{2}$. In each plot, the vertical axis shows the objective value minus the minimum, and the horizontal axis is the number of effective passes.}
\label{figs02}
\end{figure}

\begin{figure}[t]
\centering
\subfigure[Adult: $\lambda\!=\!10^{-4}$ (left) \;and\; $\lambda\!=\!10^{-5}$ (right)]{\includegraphics[width=0.246\columnwidth]{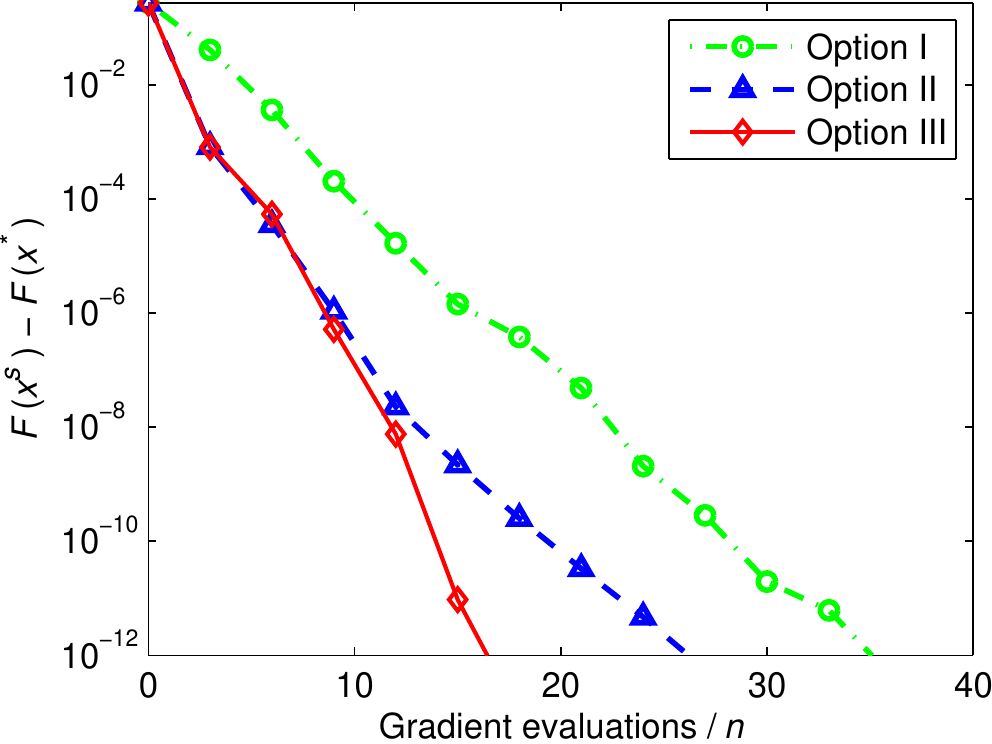}\:\includegraphics[width=0.246\columnwidth]{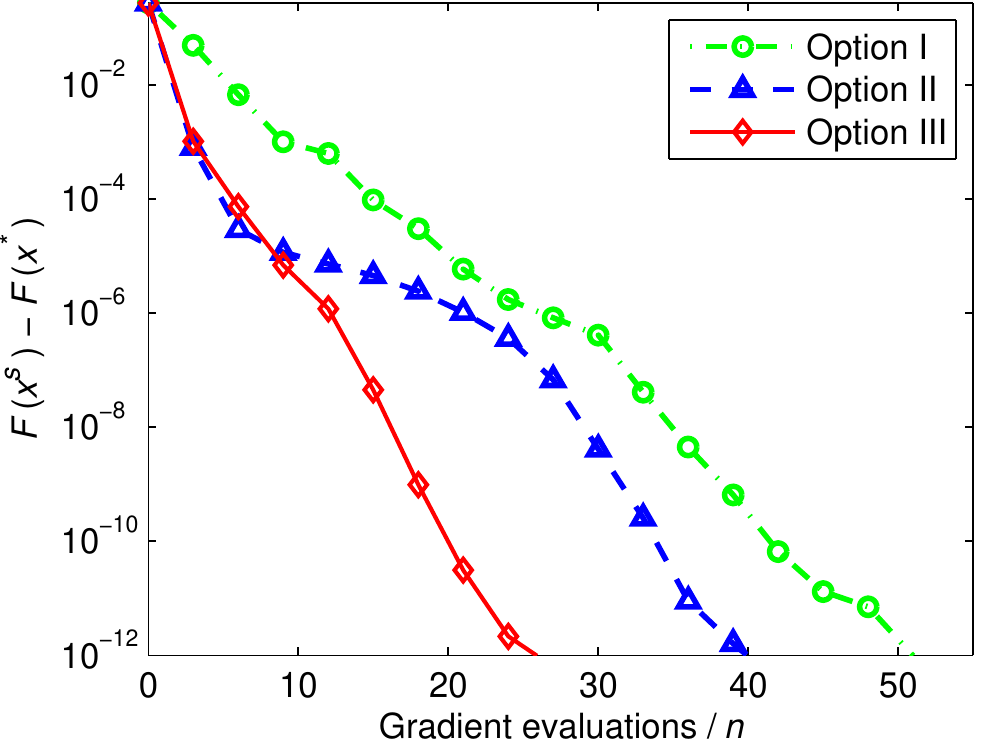}}
\subfigure[Protein: $\lambda\!=\!10^{-4}$ (left) \;and\; $\lambda\!=\!10^{-5}$ (right)]{\includegraphics[width=0.246\columnwidth]{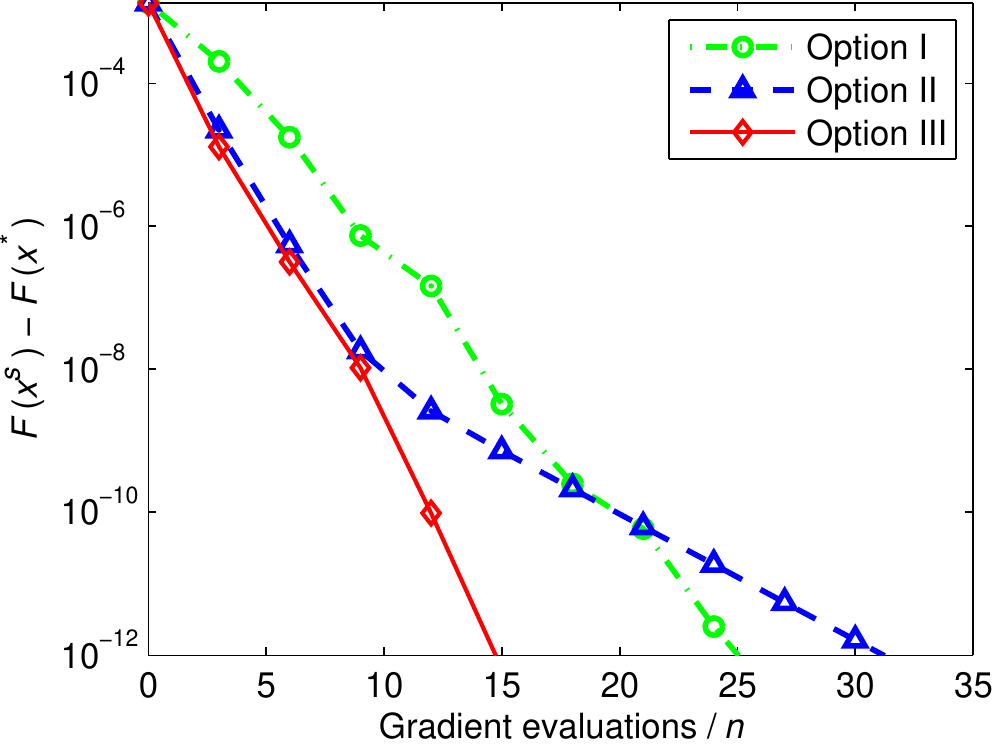}\:\includegraphics[width=0.246\columnwidth]{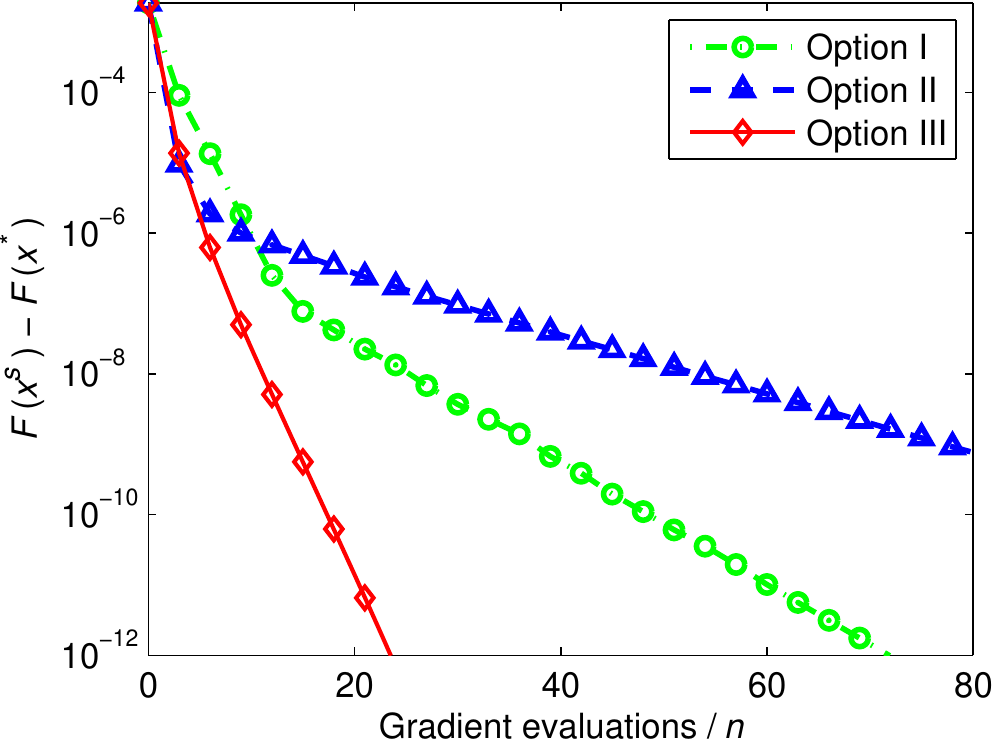}}
\subfigure[Covtype: $\lambda\!=\!10^{-4}$ (left) \;and\; $\lambda\!=\!10^{-5}$ (right)]{\includegraphics[width=0.246\columnwidth]{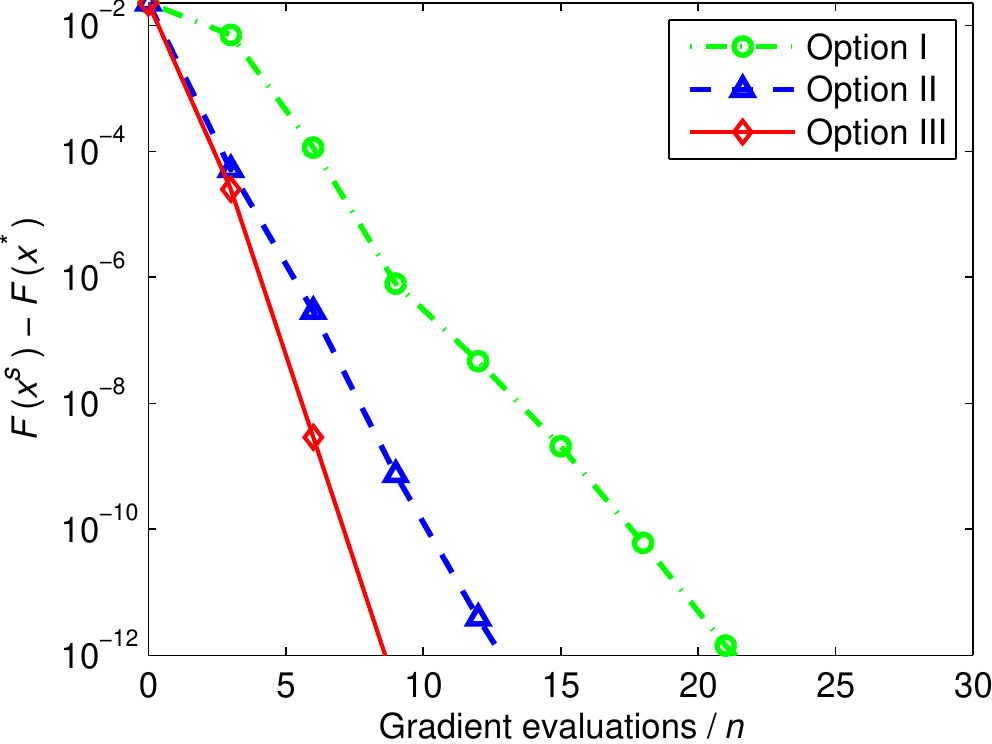}\:\includegraphics[width=0.246\columnwidth]{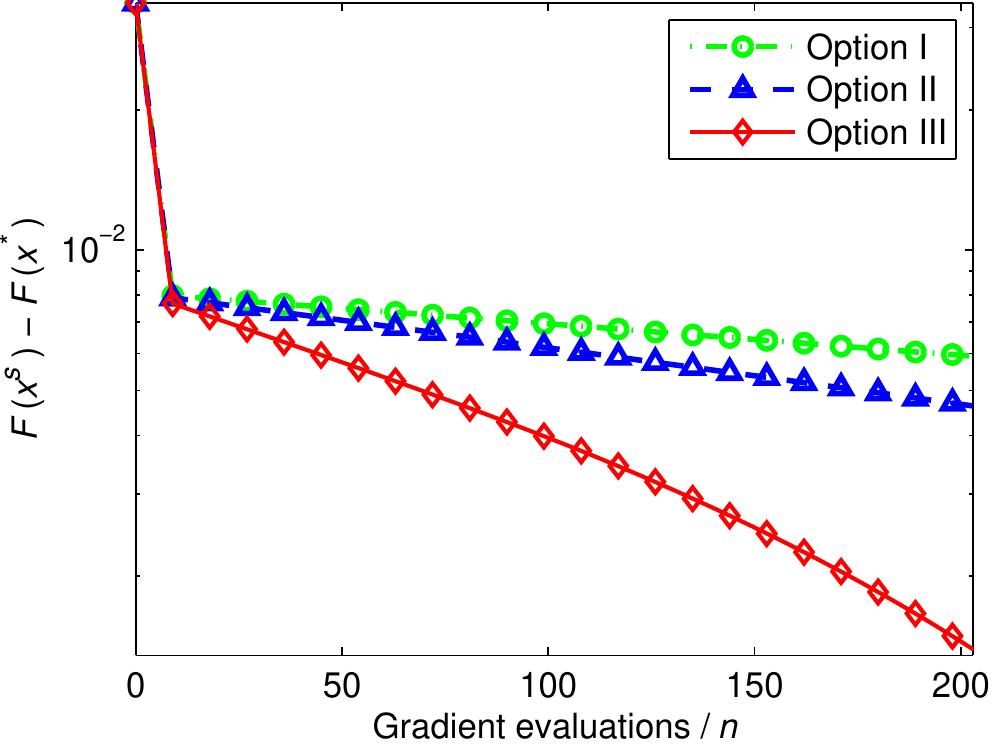}}
\subfigure[Sido0: $\lambda\!=\!10^{-4}$ (left) \;and\; $\lambda\!=\!10^{-5}$ (right)]{\includegraphics[width=0.246\columnwidth]{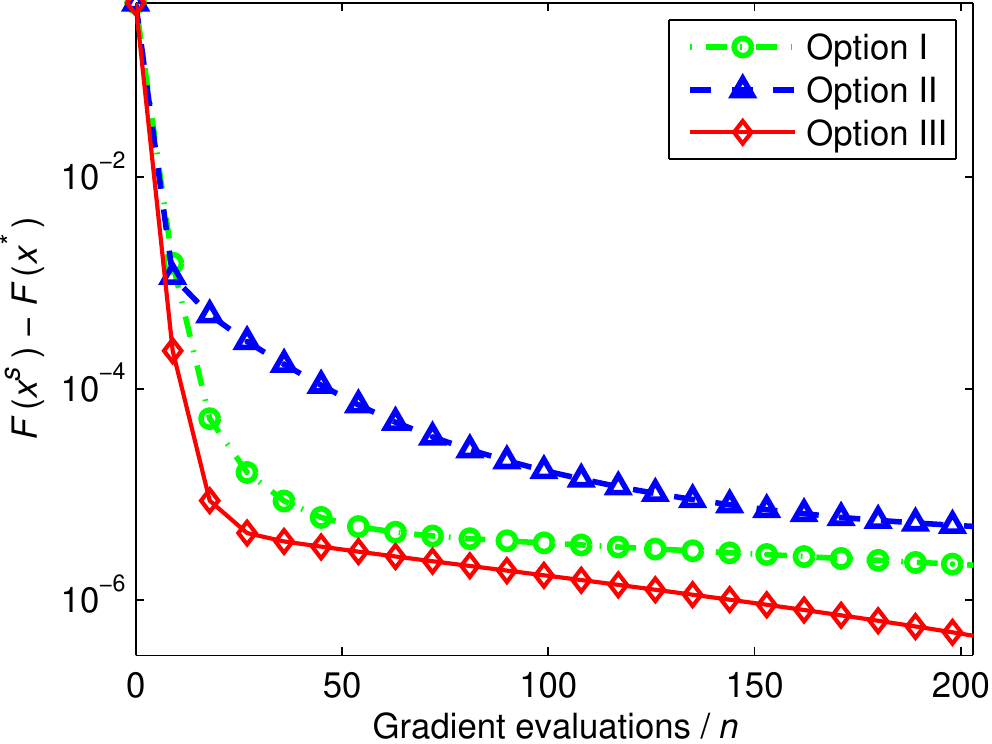}\:\includegraphics[width=0.246\columnwidth]{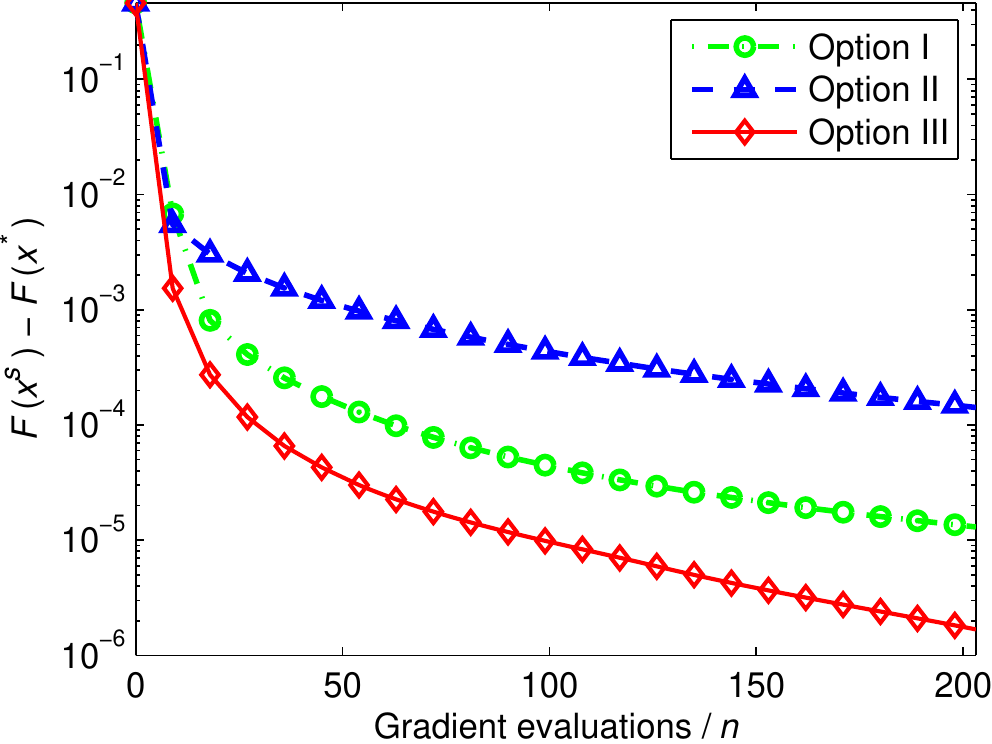}}
\vspace{-2.6mm}
\caption{Comparison of Options I, II, and III for solving Lasso problems with the regularizer $\lambda\|x\|_{1}$. In each plot, the vertical axis shows the objective value minus the minimum, and the horizontal axis is the number of effective passes.}
\label{figs03}
\end{figure}

\begin{table}[!th]
\centering
\caption{The three choices of snapshot and starting points for stochastic optimization.}
\label{tab2}
\setlength{\tabcolsep}{10.6pt}
\renewcommand\arraystretch{1.56}
\begin{tabular}{c|c|c}
\hline
Option I    & Option II  & Option III\\
\hline
\ $\widetilde{x}^{s+\!1}\!=x^{s}_{m}$ \;and\; $x^{s+\!1}_{0}\!=x^{s}_{m}$        & $\widetilde{x}^{s+\!1}\!=\frac{1}{m}\!\sum^{m}_{k=1}\!x^{s}_{k}$ \;and\; $x^{s+\!1}_{0}\!=\frac{1}{m}\!\sum^{m}_{k=1}\!x^{s}_{k}$            & $\widetilde{x}^{s+\!1}\!=\frac{1}{m-1}\!\sum^{m-1}_{k=1}\!x^{s}_{k}$ \;and\; $x^{s+\!1}_{0}\!=x^{s}_{m}$\\
\hline
\end{tabular}
\end{table}

\subsection{Different Choices for Snapshot and Starting Points}
\label{sec52}
In the practical implementation of SVRG~\cite{johnson:svrg}, both the snapshot point $\widetilde{x}^{s}$ and the starting point $x^{s+\!1}_{0}$ of each epoch are set to the last iterate $x^{s}_{m}$ of the previous epoch (i.e., Option I in Algorithm~\ref{alg1}), while the two vectors in~\cite{xiao:prox-svrg} (also suggested in~\cite{johnson:svrg}) are set to the average point of the previous epoch, $\frac{1}{m}\!\sum^{m}_{k=1}\!x^{s}_{k}$ (i.e., Option II in Algorithm~\ref{alg1}). In contrast, $\widetilde{x}^{s}$ and $x^{s+\!1}_{0}$ in this paper are set to $\frac{1}{m-\!1}\!\sum^{m-\!1}_{k=1}\!x^{s}_{k}$ and $x^{s}_{m}$ (denoted by Option III, i.e., Option I{\footnote{As Options I and II in Algorithms~\ref{alg2} and~\ref{alg3} achieve very similar performance, we only report the results of Algorithms~\ref{alg2} and~\ref{alg3} with Option I.}} in Algorithms~\ref{alg2} and~\ref{alg3}), respectively. Note that Johnson and Zhang~\cite{johnson:svrg} first presented the choices of Options I and II, while the setting of Option III is suggested in this paper. In the following, we compare the performance of the three choices (i.e., Options I, II and III in Table~\ref{tab2}) for snapshot and starting points for solving ridge regression and Lasso problems, as shown in Figs.\ \ref{figs02} and~\ref{figs03}. Except for the three different settings for snapshot and starting points, we use the update rules in (\ref{equ21}) and (\ref{equ22}) for ridge regression and Lasso problems, respectively.

From all the results shown in Figs.\ \ref{figs02} and~\ref{figs03}, we can see that our algorithms with Option III (i.e., Algorithms~\ref{alg2} and \ref{alg3} with their Option I) consistently converge much faster than SVRG with the choices of Options I and II for both strongly convex and non-strongly convex cases. This indicates that the setting of Option III suggested in this paper is a better choice than Options I and II for stochastic optimization.

\subsection{Logistic Regression}
\label{sec53}
In this part, we focus on the following generalized logistic regression problems for binary classification,
\begin{equation}\label{equ52}
\min_{x\in\mathbb{R}^{d}}\frac{1}{n}\sum^{n}_{i=1}\log(1+\exp(-b_{i}a^{T}_{i}x))+\frac{\lambda_{1}}{2}\|x\|^{2}+\lambda_{2}\|x\|_{1},
\end{equation}
where $\{(a_{i},b_{i})\}$ is a set of training examples, and $\lambda_{1},\lambda_{2}\!\geq\!0$ are the regularization parameters. Note that when $\lambda_{2}\!>\!0$, $f_{i}(x)\!=\!\log(1\!+\!\exp(-b_{i}a^{T}_{i}x))\!+\!(\lambda_{1}/{2})\|x\|^{2}$. The formulation (\ref{equ52}) includes the $\ell_{2}$-norm (i.e., $\lambda_{2}\!=\!0$),  $\ell_{1}$-norm (i.e., $\lambda_{1}\!=\!0$), and elastic net (i.e., $\lambda_{1}\!\neq\!0$ and $\lambda_{2}\!\neq\!0$) regularized logistic regression problems. Figs.\ \ref{figs1}, \ref{figs2} and~\ref{figs3} show how the objective gap, i.e., $F(x^{s})\!-\!F(x^{*})$, decreases for the $\ell_{2}$-norm, $\ell_{1}$-norm, and elastic net regularized logistic regression problems, respectively. From all the results, we make the following observations.
\begin{itemize}
  \item When the regularization parameters $\lambda_{1}$ and $\lambda_{2}$ are relatively large, e.g., $\lambda_{1}\!=\!10^{-4}$ or $\lambda_{2}\!=\!10^{-4}$, Prox-SVRG usually converges faster than SVRG for both strongly convex (e.g., $\ell_{2}$-norm regularized logistic regression) and non-strongly convex (e.g., $\ell_{1}$-norm regularized logistic regression) cases, as shown in Figs.\ \ref{figs1a}--\ref{figs1c} and Figs.\ \ref{figs2a}--\ref{figs2b}. On the contrary, SVRG often outperforms Prox-SVRG, when the regularization parameters are relatively small, e.g., $\lambda_{1}\!=\!10^{-6}$ or $\lambda_{2}\!=\!10^{-6}$, as observed in~\cite{shang:vrsgd}. The main reason is that they have different initialization settings, i.e., $\widetilde{x}^{s}\!=\!x^{s}_{m}$ and $x^{s+\!1}_{0}\!=\!x^{s}_{m}$ for SVRG vs.\ $\widetilde{x}^{s}\!=\!\frac{1}{m}\!\sum^{m}_{k=1}\!x^{s}_{k}$ and $x^{s+\!1}_{0}\!=\!\frac{1}{m}\!\sum^{m}_{k=1}\!x^{s}_{k}$ for Prox-SVRG.
  \item Katyusha converges much faster than SVRG and Prox-SVRG for the cases when the regularization parameters are relatively small, e.g., $\lambda_{1}\!=\!10^{-6}$, whereas it often achieves similar or inferior performance when the regularization parameters are relatively large, e.g., $\lambda_{1}\!=\!10^{-4}$, as shown in Figs.\ \ref{figs1a}--\ref{figs1d} and Figs.\ \ref{figs2a}--\ref{figs2b}. Note that we implemented the original algorithms with Option I in~\cite{zhu:Katyusha} for Katyusha. In other words, Katyusha is an accelerated proximal stochastic gradient method. Obviously, the above observation matches the convergence properties of Katyusha provided in~\cite{zhu:Katyusha}, that is, only if $m\mu/L\!\leq\!3/4$, Katyusha attains the best known overall complexities of $\mathcal{O}((n\!+\!\!\sqrt{n{L}/{\mu}})\log({1}/{\epsilon}))$ for strongly convex problems.
  \item Our VR-SGD method consistently converges much faster than SVRG and Prox-SVRG, especially when the regularization parameters are relatively small, e.g., $\lambda_{1}\!=\!10^{-6}$ or $\lambda_{2}\!=\!10^{-6}$, as shown in Figs.\ \ref{figs1i}--\ref{figs1l} and Figs.\ \ref{figs2e}--\ref{figs2f}. The main reason is that VR-SGD can use much larger learning rates than SVRG (e.g., $6/(5L)$ for VR-SGD vs.\ $1/(10L)$ for SVRG), which leads to faster convergence. This further verifies that the setting of both snapshot and starting points in our algorithms (i.e., Algorithms~\ref{alg2} and \ref{alg3}) is a better choice than Options I and II in Algorithm~\ref{alg1}.
  \item In particular, VR-SGD generally outperforms the best-known stochastic method, Katyusha, in terms of the number of passes through the data, especially when the regularization parameters are relatively large, e.g., $10^{-4}$ and $10^{-5}$, as shown in Figs.\ \ref{figs1a}--\ref{figs1h} and Figs.\ \ref{figs2a}--\ref{figs2d}. Since VR-SGD has a much lower per-iteration complexity than Katyusha, VR-SGD has more obvious advantage over Katyusha in terms of running time. From the algorithms of Katyusha proposed in~\cite{zhu:Katyusha}, one can see that the learning rate of Katyusha is at least set to $1/(3L)$. Similarly, the learning rate used in VR-SGD is comparable to Katyusha, which may be the main reason why the performance of VR-SGD is much better than that of Katyusha. This also implies that the algorithm that enjoys larger learning rates can yield better performance.
\end{itemize}

\subsection{Common Stochastic Gradient and Prox-SG Updates}
\label{sec54}
In this part, we compare the original Katyusha algorithm, i.e., Algorithm 1 in~\cite{zhu:Katyusha}, with the slightly modified Katyusha algorithm (denoted by Katyusha-I). In Katyusha-I, only the following two update rules for smooth objective functions are used to replace the original proximal stochastic gradient update rules in~\eqref{equ062} and \eqref{equ063}.
\begin{equation}\label{equ53}
\begin{split}
&y^{s}_{k+1}=y^{s}_{k}-\eta [\widetilde{\nabla}\! f_{i_{k}}\!(x^{s}_{k+1})+\nabla\!g(x^{s}_{k+1})],\\
&z^{s}_{k+1}=x^{s}_{k+1}-[\widetilde{\nabla}\! f_{i_{k}}\!(x^{s}_{k+1})+\nabla\!g(x^{s}_{k+1})]/(3L).
\end{split}
\end{equation}
Similarly, we also implement the proximal versions{\footnote{Here, the proximal variant of SVRG is different from Prox-SVRG~\cite{xiao:prox-svrg}, and their main difference is the choices of both the snapshot point and starting point. That is, the two vectors of the former are set to the last iterate $x^{s}_{m}$, while those of Prox-SVRG are set to the average point of the previous epoch, i.e., $\frac{1}{m}\!\sum^{m}_{k=1}\!x^{s}_{k}$.} for the original SVRG (also called SVRG-I) and the proposed VR-SGD (denoted by VR-SGD-I) methods, and denote their proximal variants by SVRG-II and VR-SGD-II, respectively. Here, the original Katyusha method is denoted by Katyusha-II.

Figs.\ \ref{figs4} and \ref{figs5} show the performance of Katyusha-I and Katyusha-II for solving ridge regression problems on the two popular data sets: Adult and Covtype. We also report the results of SVRG, VR-SGD, and their proximal variants. It is clear that Katyusha-I usually performs better than Katyusha-II (i.e., the original proximal stochastic method, Katyusha~\cite{zhu:Katyusha}), and converges significantly faster for the case when the regularization parameter is $10^{-4}$. This seems to be the main reason why Katyusha has inferior performance when the regularization parameter is relatively large, as shown in Section~\ref{sec53}. In contrast, VR-SGD and its proximal variant have similar performance, and the former slightly outperforms the latter in most cases, as well as SVRG vs.\ its proximal variant. All this suggests that stochastic gradient update rules as in (\ref{equ21}) and (\ref{equ53}) are better choices than proximal stochastic gradient update rules as in (\ref{equ14}), \eqref{equ062} and \eqref{equ063} for smooth objective functions. We also believe that our new insight can help us to design accelerated stochastic optimization methods. Both Katyusha-I and Katyusha-II usually outperform SVRG and its proximal variant, especially when the regularization parameter is relatively small, e.g., $\lambda\!=\!10^{-6}$, as shown in Figs.\ \ref{figs4d} and \ref{figs5d}. Unfortunately, both Katyusha-I and Katyusha-II cannot solve the convex objectives without any regularization term (i.e., $\lambda\!=\!0$, as shown in Figs.\ \ref{figs4f} and \ref{figs5f}) or with too large and small regularization parameters, e.g., $10^{-3}$ and $10^{-7}$, as shown in Figs.\ \ref{figs5a} and \ref{figs4e}.

Moreover, it can be seen that both VR-SGD and its proximal variant achieve much better performance than the other methods in most cases, and are also comparable to Katyusha-I and Katyusha-II in the remaining cases. This further verifies that VR-SGD is suitable for various large-scale machine learning.

\subsection{Fixed and Varied Learning Rates}
\label{sec55}
Finally, we compare the performance of Algorithm~\ref{alg3} with fixed and varied learning rates for solving $\ell_{1}$-norm regularized logistic regression and Lasso problems, as shown in Fig.\ \ref{figs6}. Note that the learning rates in Algorithm~\ref{alg3} are varied according to the update formula in (\ref{equ23}). We can observe that Algorithm~\ref{alg3} with varied step-sizes performs very similar to Algorithm~\ref{alg3} with fixed step-sizes in most cases. In the remaining cases, the former slightly outperforms the latter.

\begin{figure}[!th]
\centering
\includegraphics[width=0.246\columnwidth]{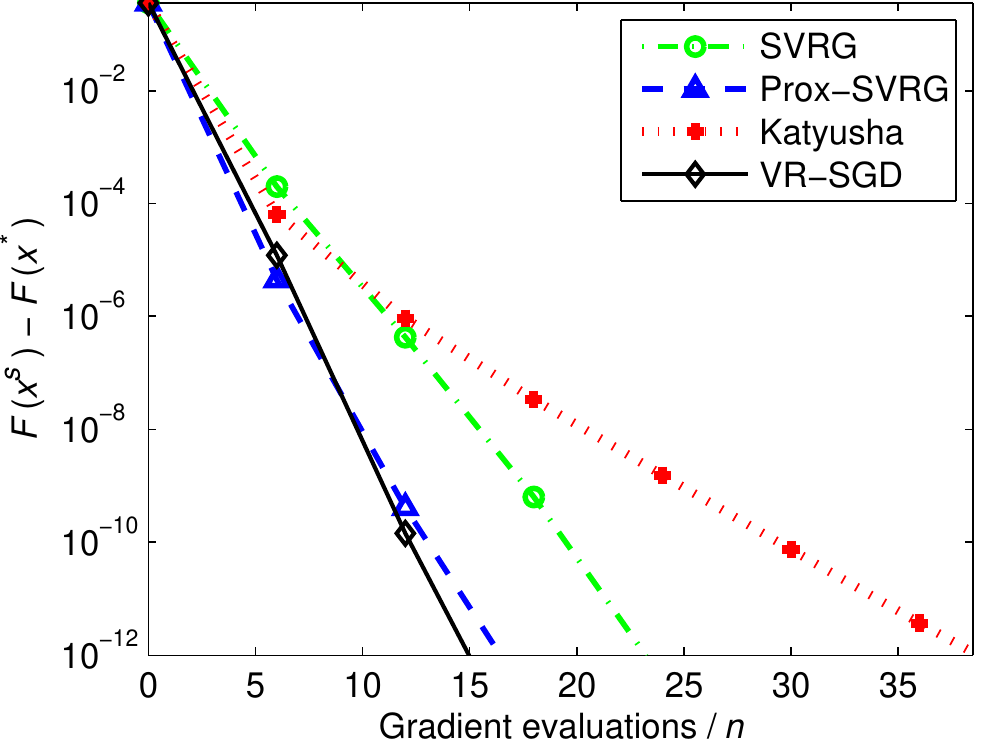}
\includegraphics[width=0.246\columnwidth]{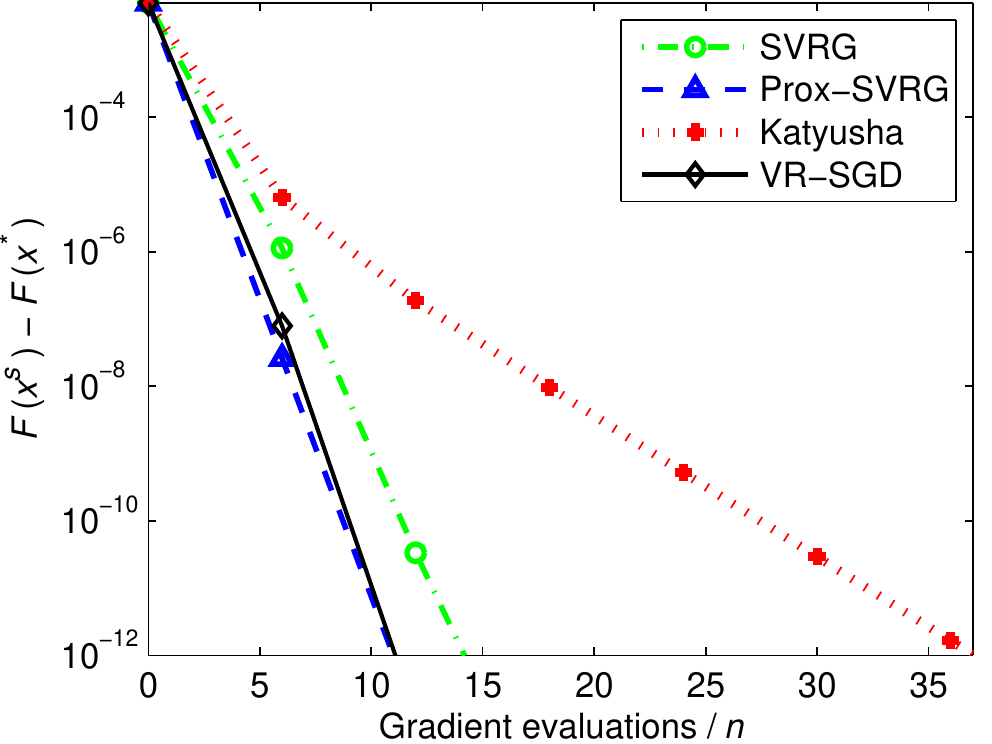}
\includegraphics[width=0.246\columnwidth]{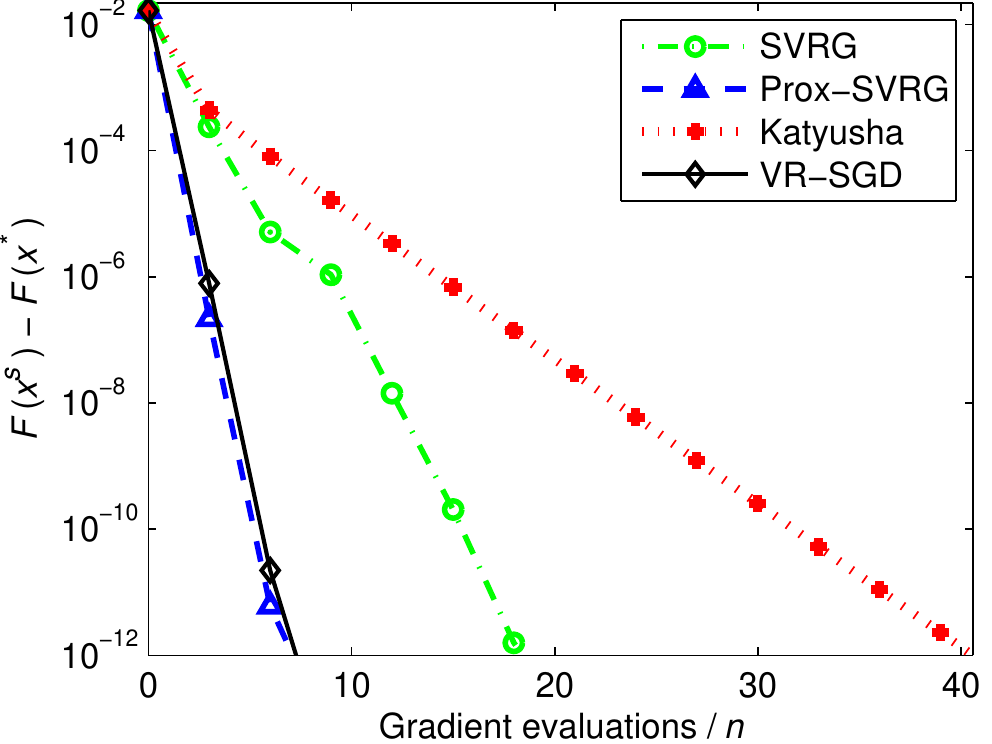}
\includegraphics[width=0.246\columnwidth]{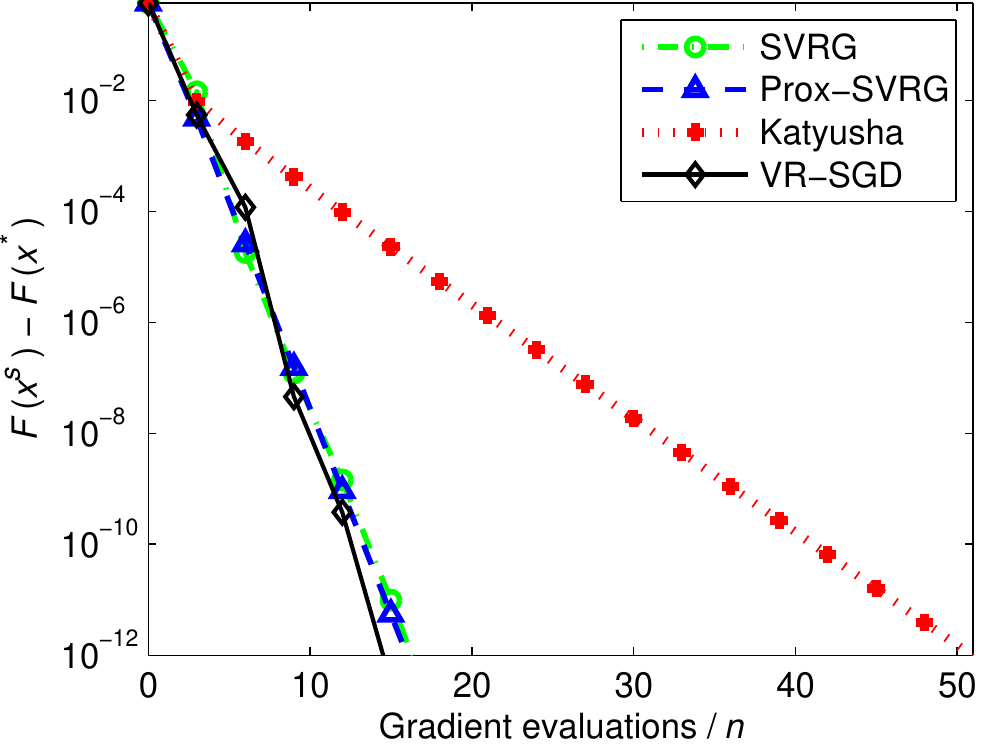}

\subfigure[Adult: $\lambda_{1}\!=\!10^{-4}$]{\includegraphics[width=0.246\columnwidth]{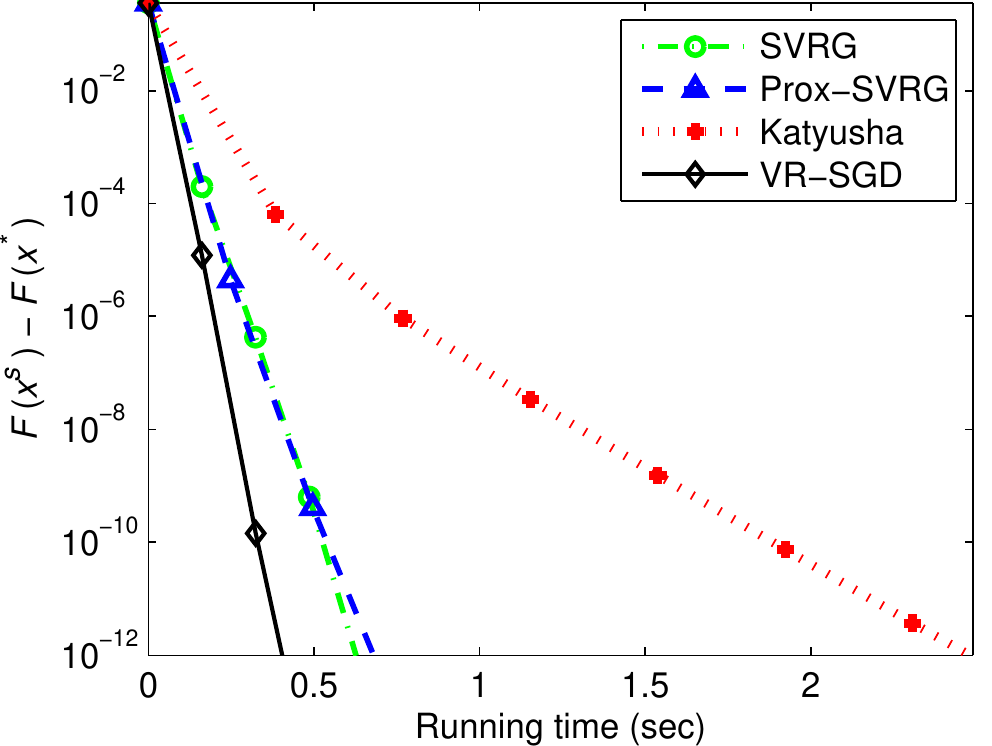}\label{figs1a}}
\subfigure[Protein: $\lambda_{1}\!=\!10^{-4}$]{\includegraphics[width=0.246\columnwidth]{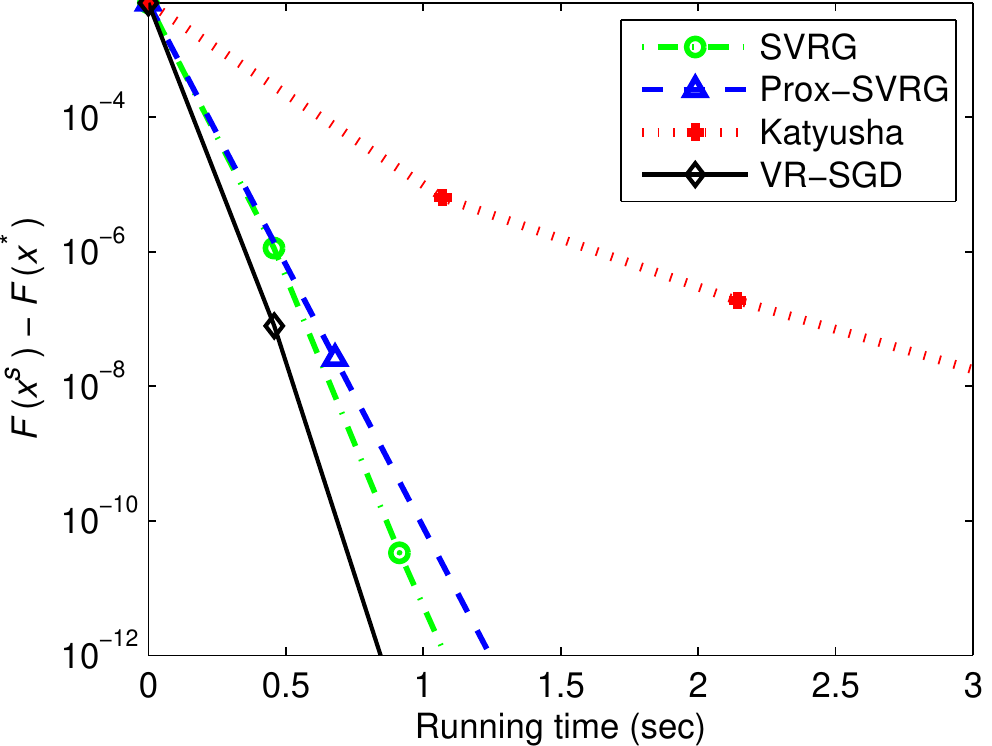}\label{figs1b}}
\subfigure[Covtype: $\lambda_{1}\!=\!10^{-4}$]{\includegraphics[width=0.246\columnwidth]{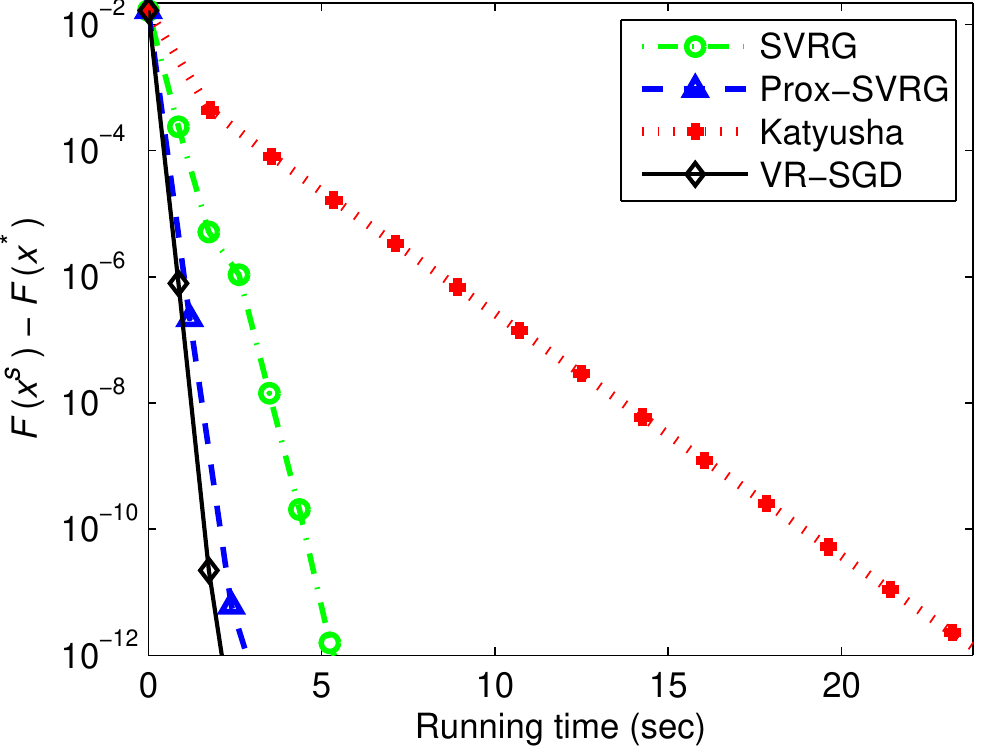}\label{figs1c}}
\subfigure[Sido0: $\lambda_{1}\!=\!5\!*\!10^{-3}$]{\includegraphics[width=0.246\columnwidth]{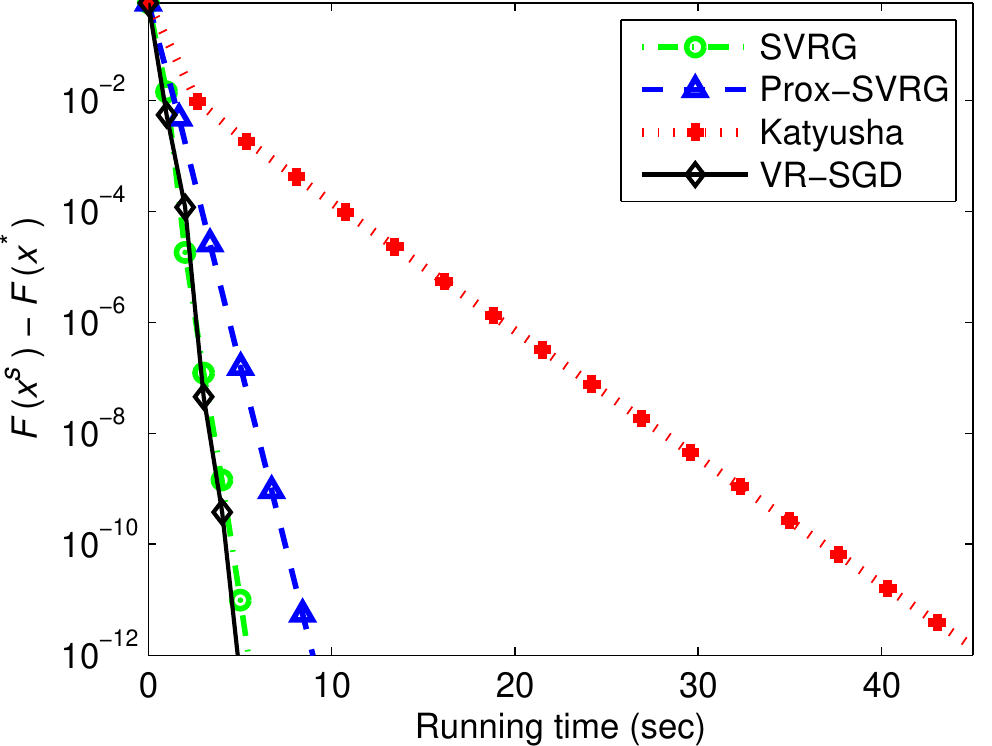}\label{figs1d}}
\vspace{1.6mm}

\includegraphics[width=0.246\columnwidth]{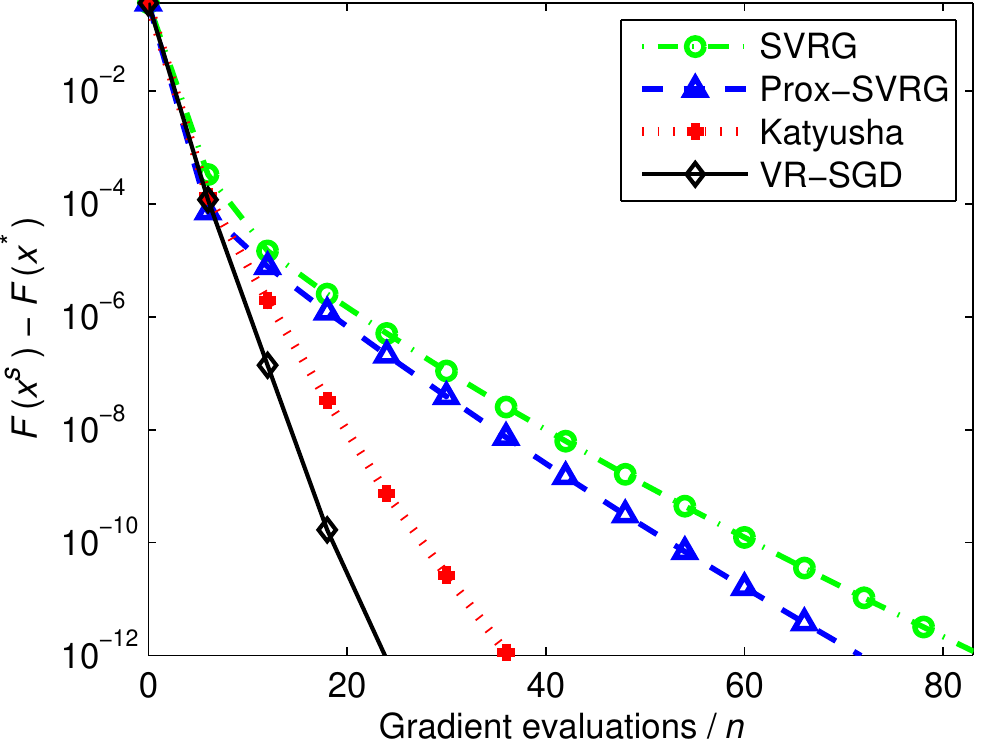}
\includegraphics[width=0.246\columnwidth]{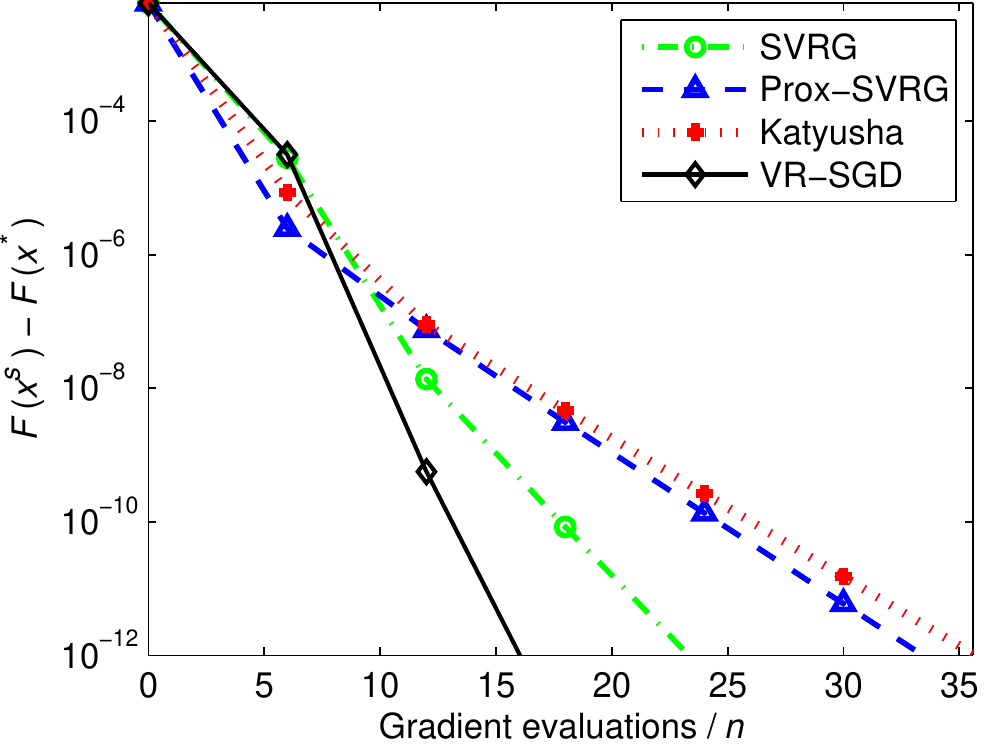}
\includegraphics[width=0.246\columnwidth]{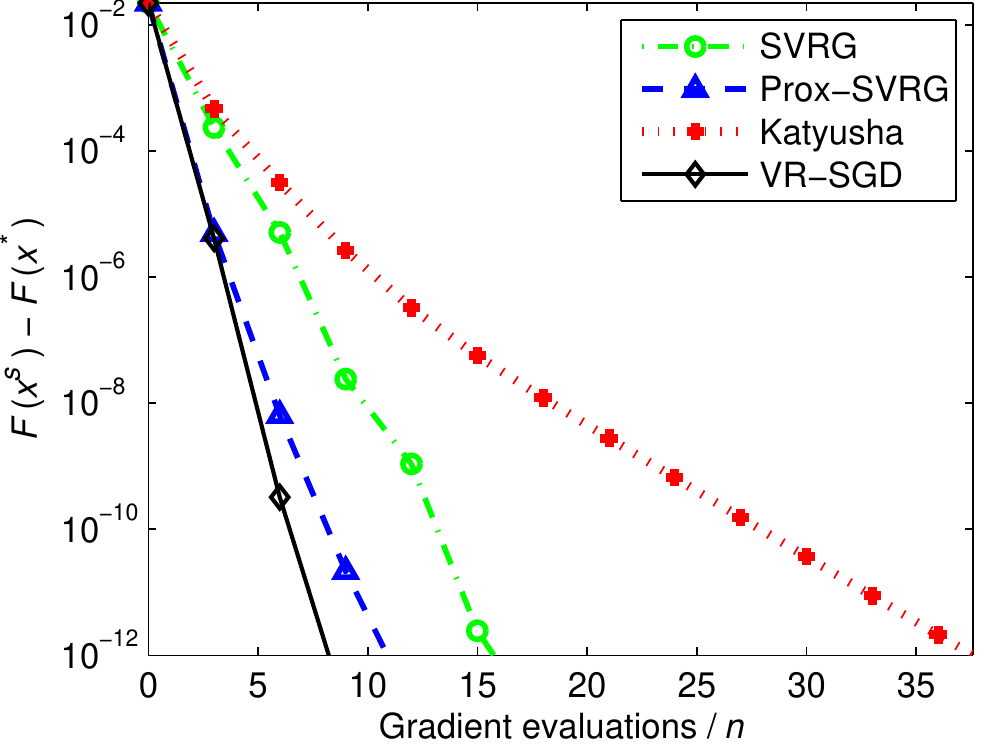}
\includegraphics[width=0.246\columnwidth]{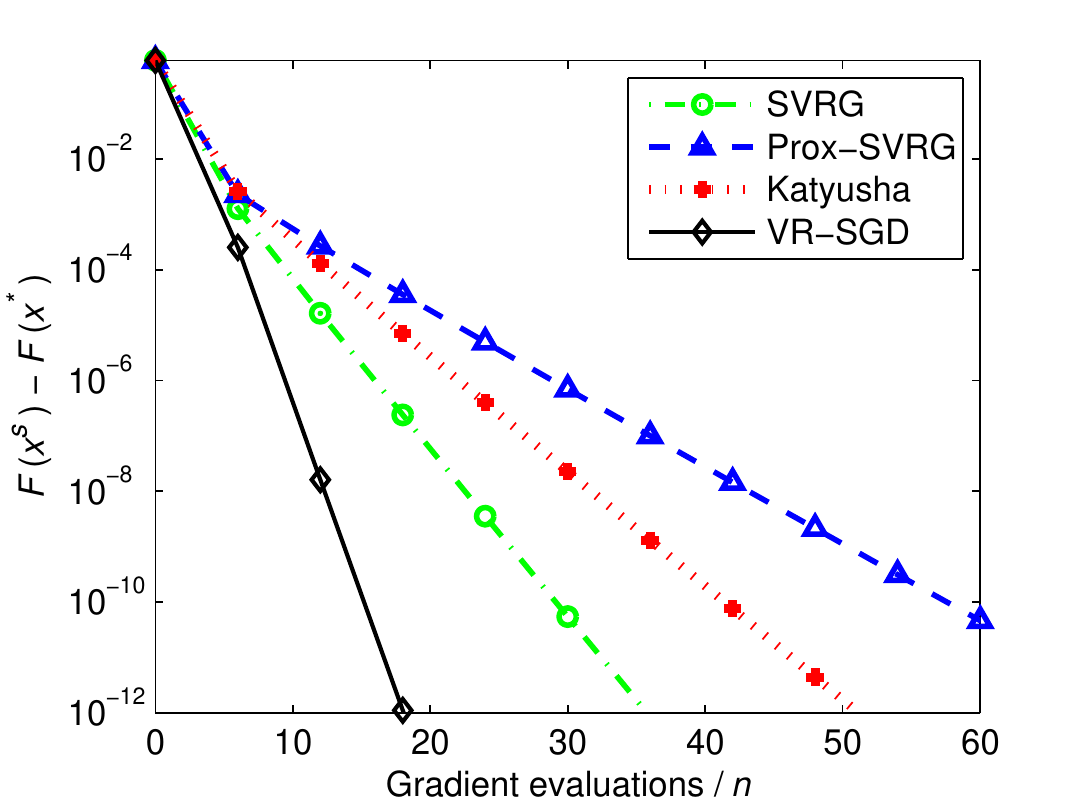}

\subfigure[Adult: $\lambda_{1}\!=\!10^{-5}$]{\includegraphics[width=0.246\columnwidth]{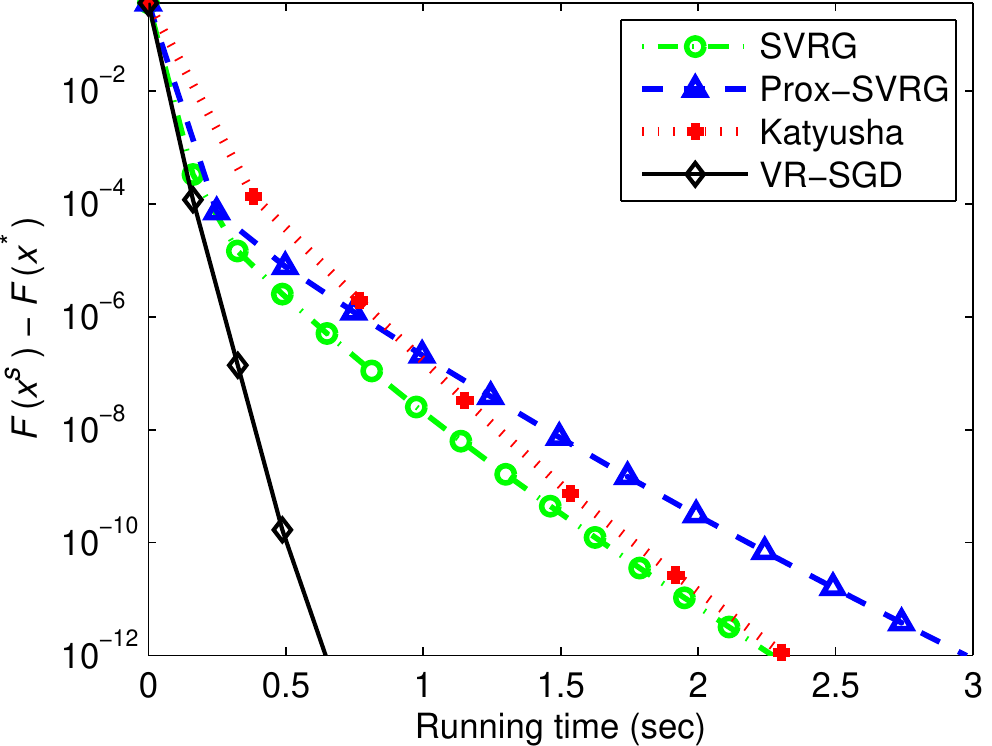}}
\subfigure[Protein: $\lambda_{1}\!=\!10^{-5}$]{\includegraphics[width=0.246\columnwidth]{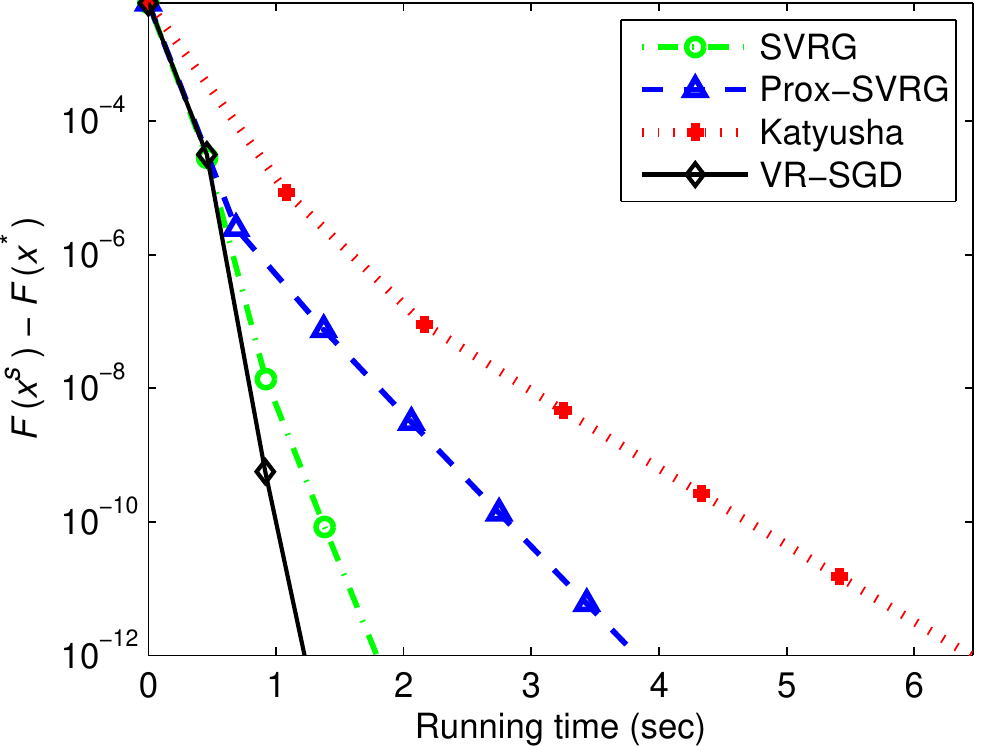}}
\subfigure[Covtype: $\lambda_{1}\!=\!10^{-5}$]{\includegraphics[width=0.246\columnwidth]{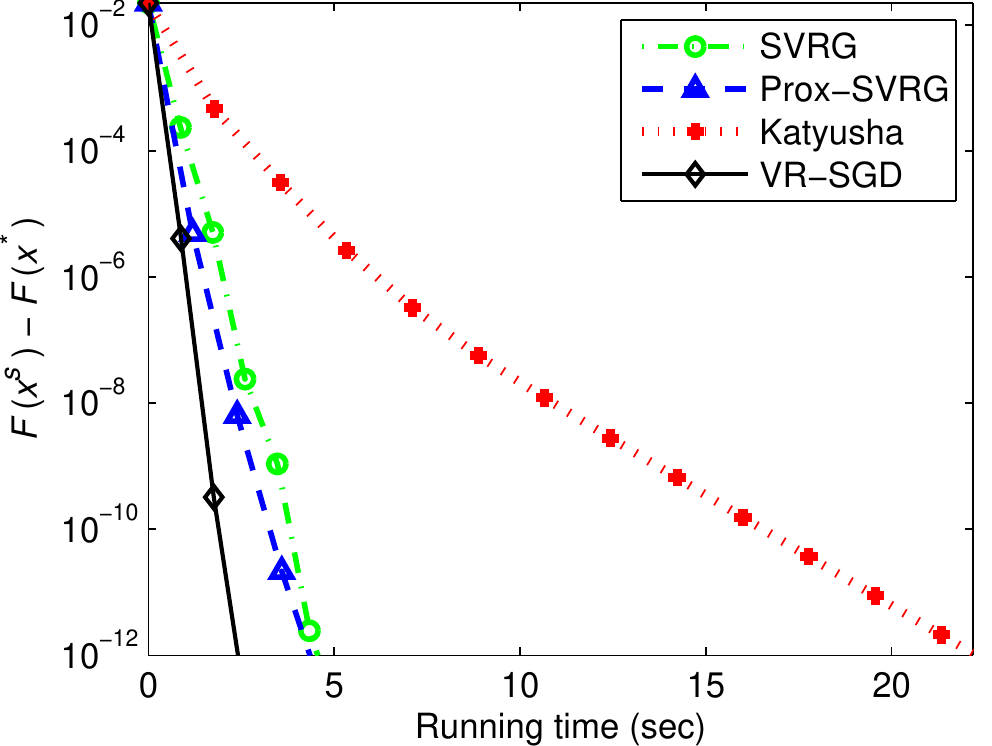}}
\subfigure[Sido0: $\lambda_{1}\!=\!10^{-4}$]{\includegraphics[width=0.246\columnwidth]{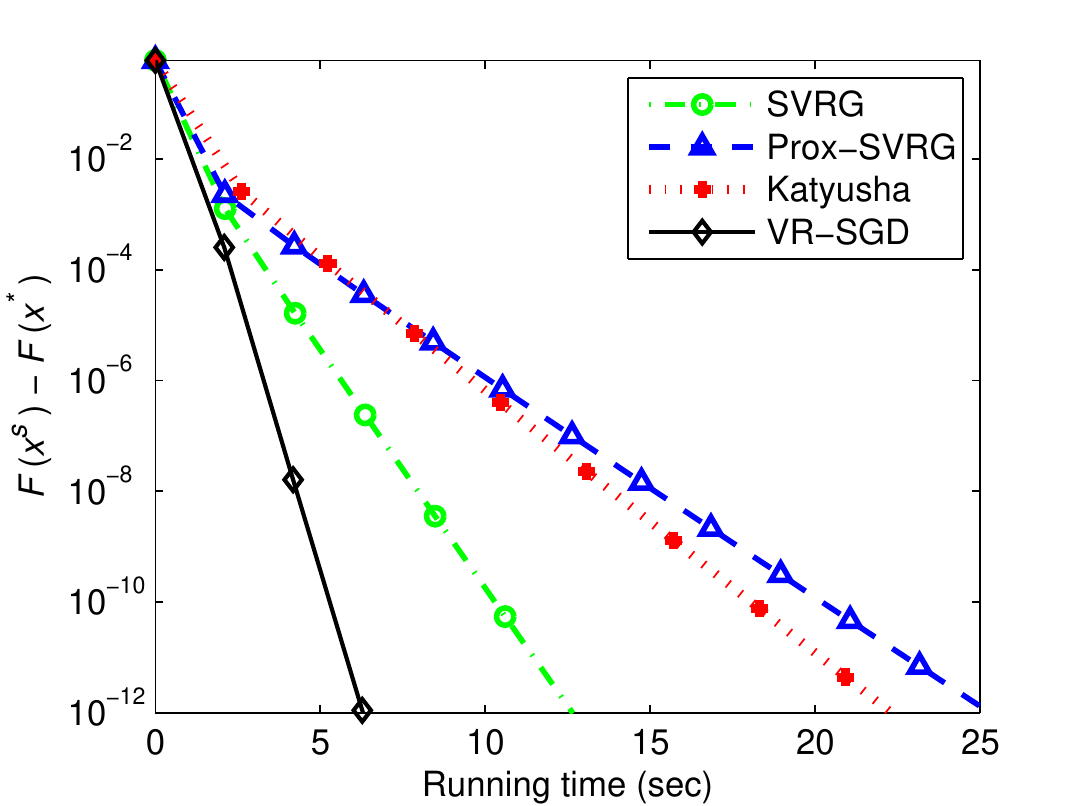}\label{figs1h}}
\vspace{1.6mm}

\includegraphics[width=0.246\columnwidth]{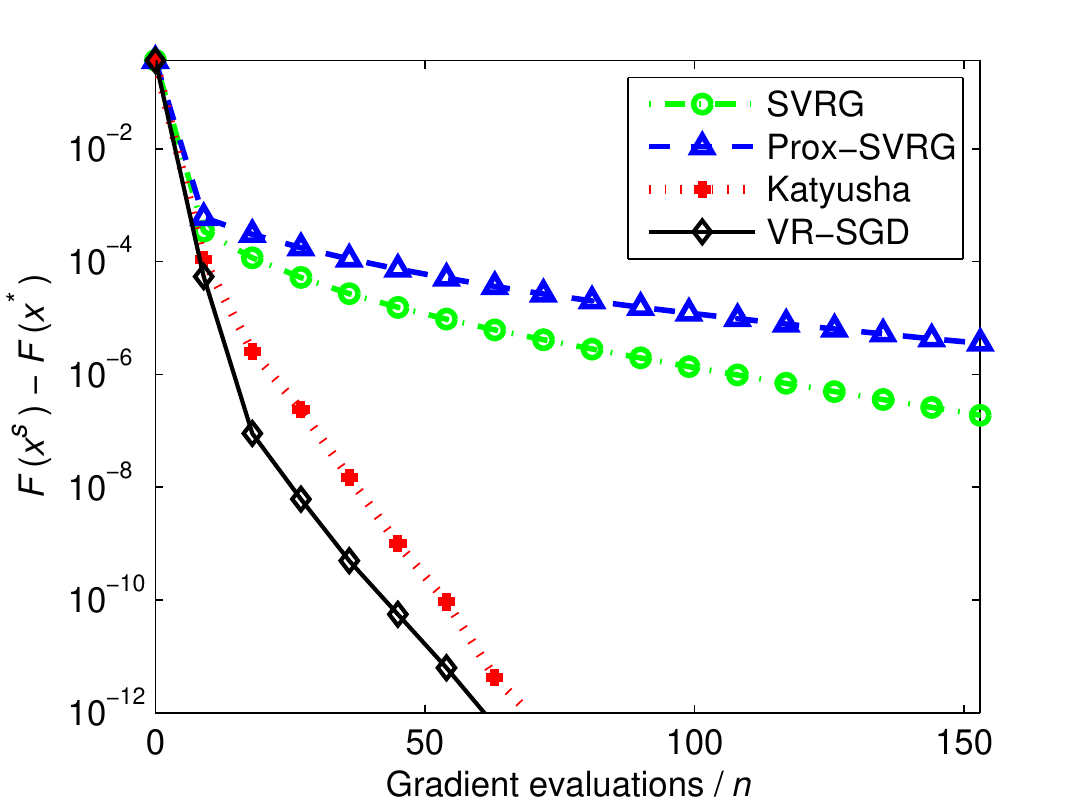}
\includegraphics[width=0.246\columnwidth]{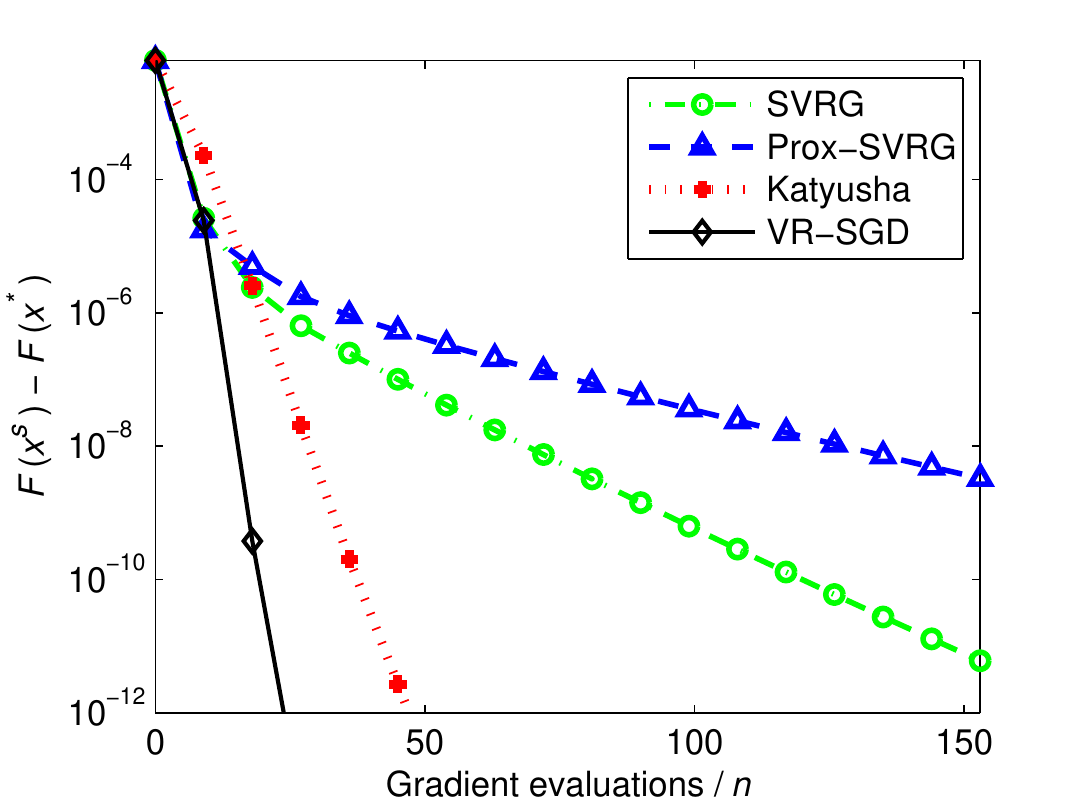}
\includegraphics[width=0.246\columnwidth]{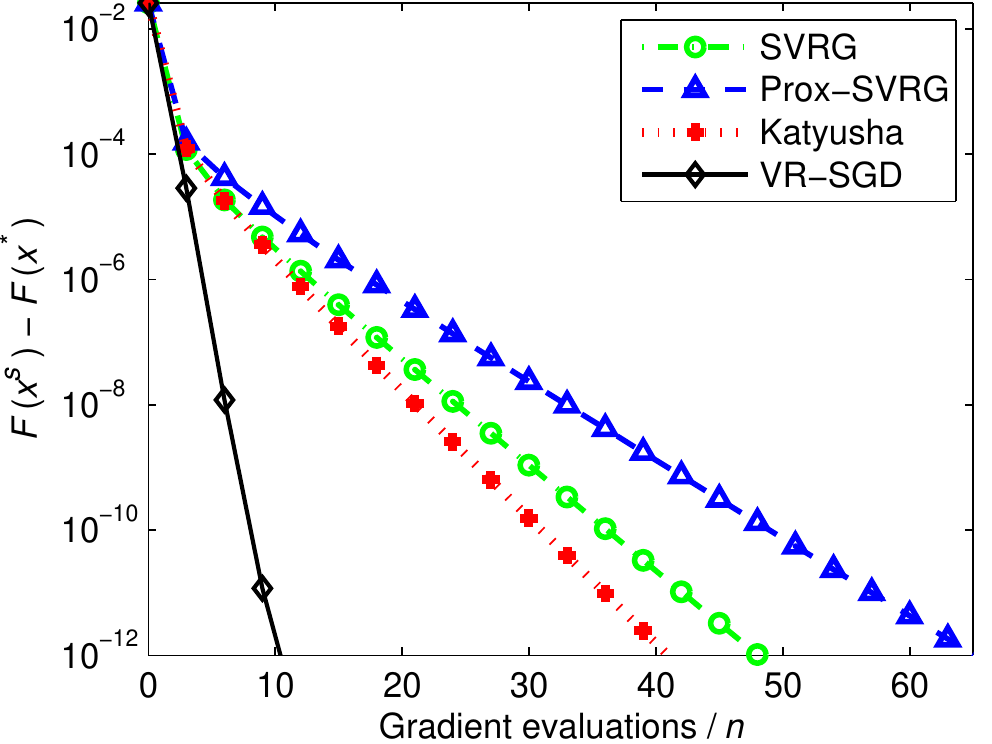}
\includegraphics[width=0.246\columnwidth]{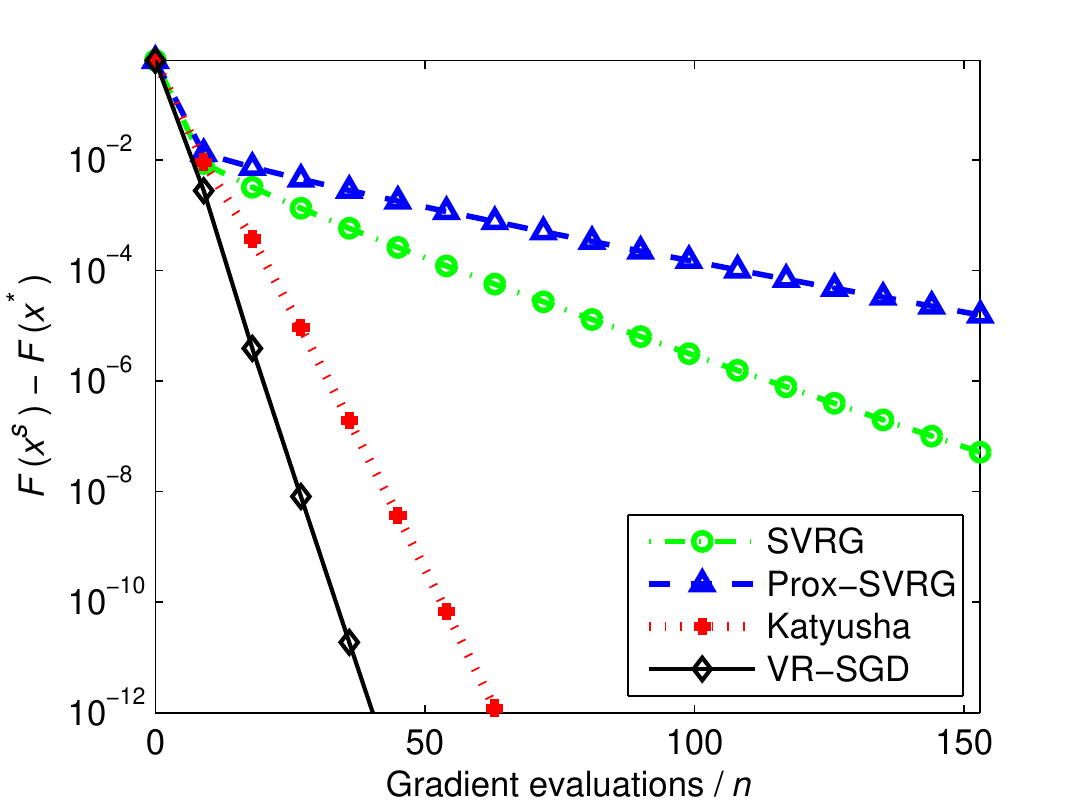}

\subfigure[Adult: $\lambda_{1}\!=\!10^{-6}$]{\includegraphics[width=0.246\columnwidth]{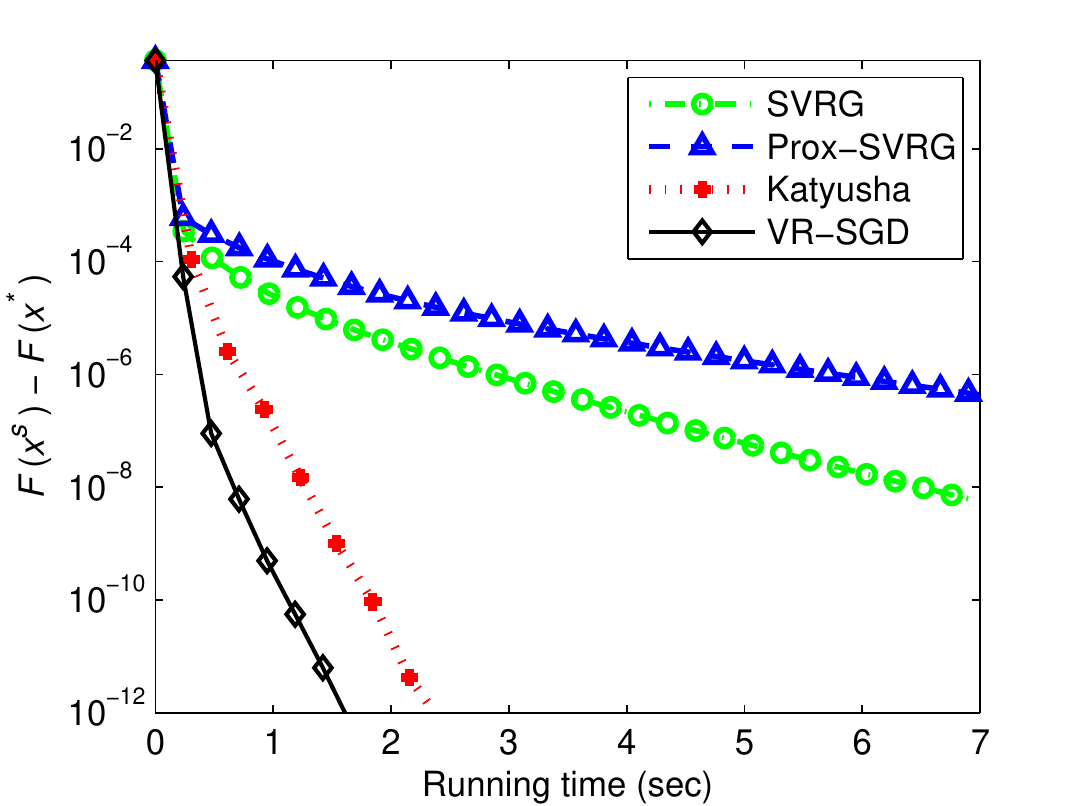}\label{figs1i}}
\subfigure[Protein: $\lambda_{1}\!=\!10^{-6}$]{\includegraphics[width=0.246\columnwidth]{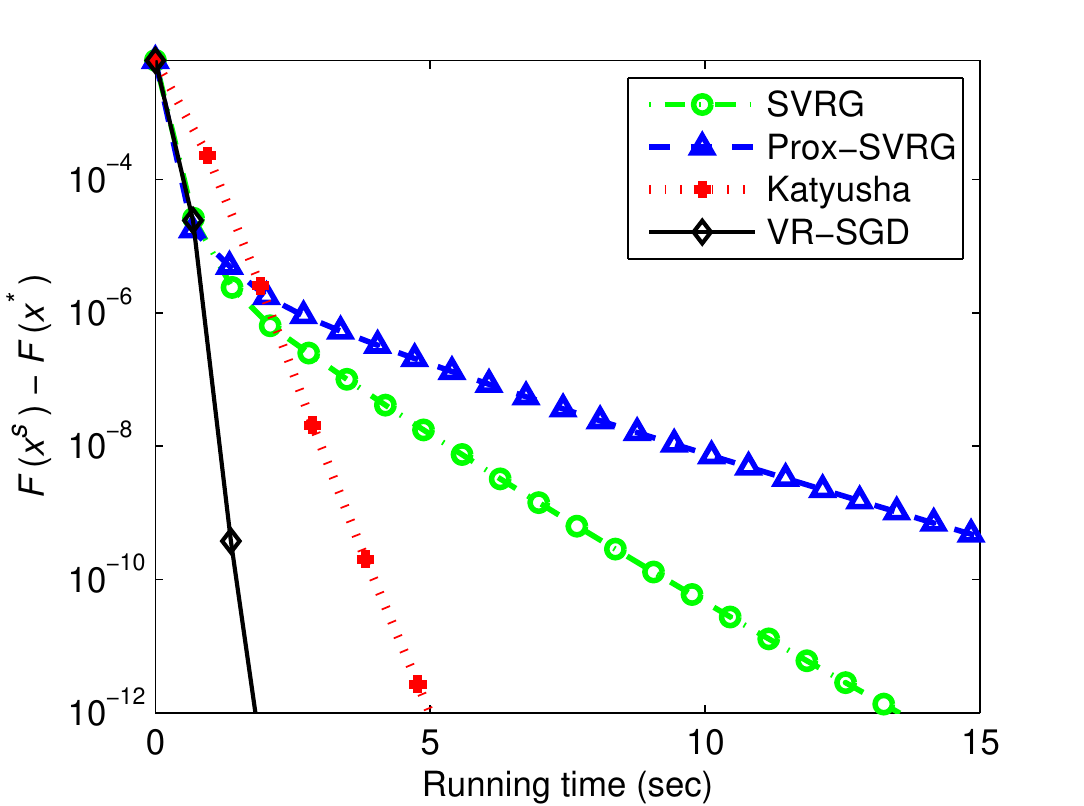}}
\subfigure[Covtype: $\lambda_{1}\!=\!10^{-6}$]{\includegraphics[width=0.246\columnwidth]{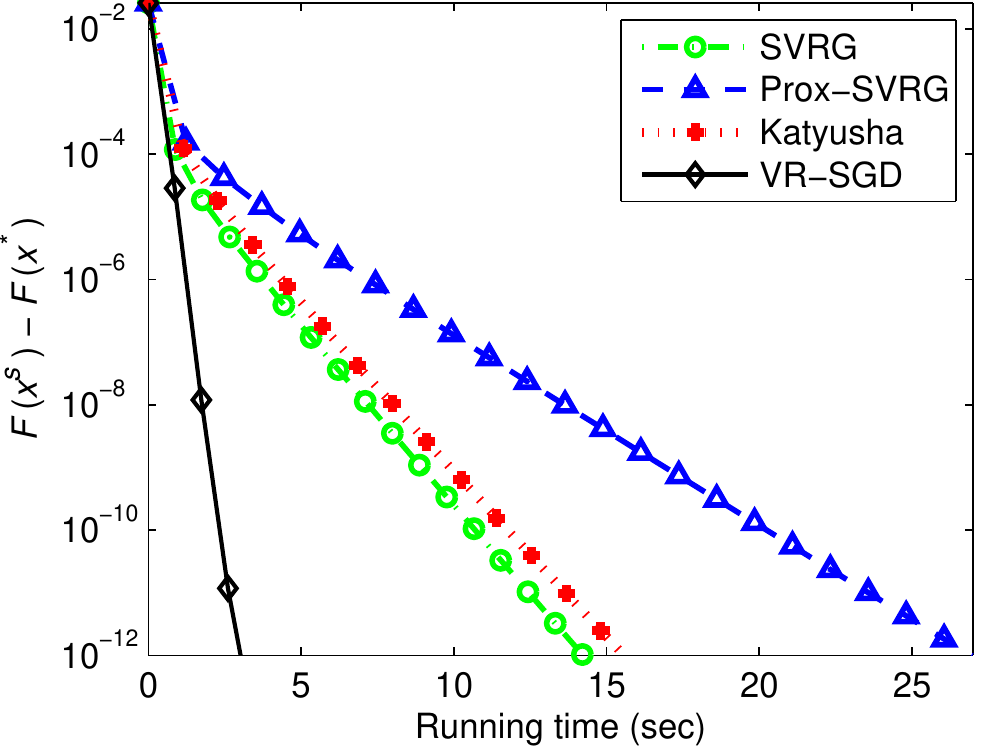}}
\subfigure[Sido0: $\lambda_{1}\!=\!10^{-5}$]{\includegraphics[width=0.246\columnwidth]{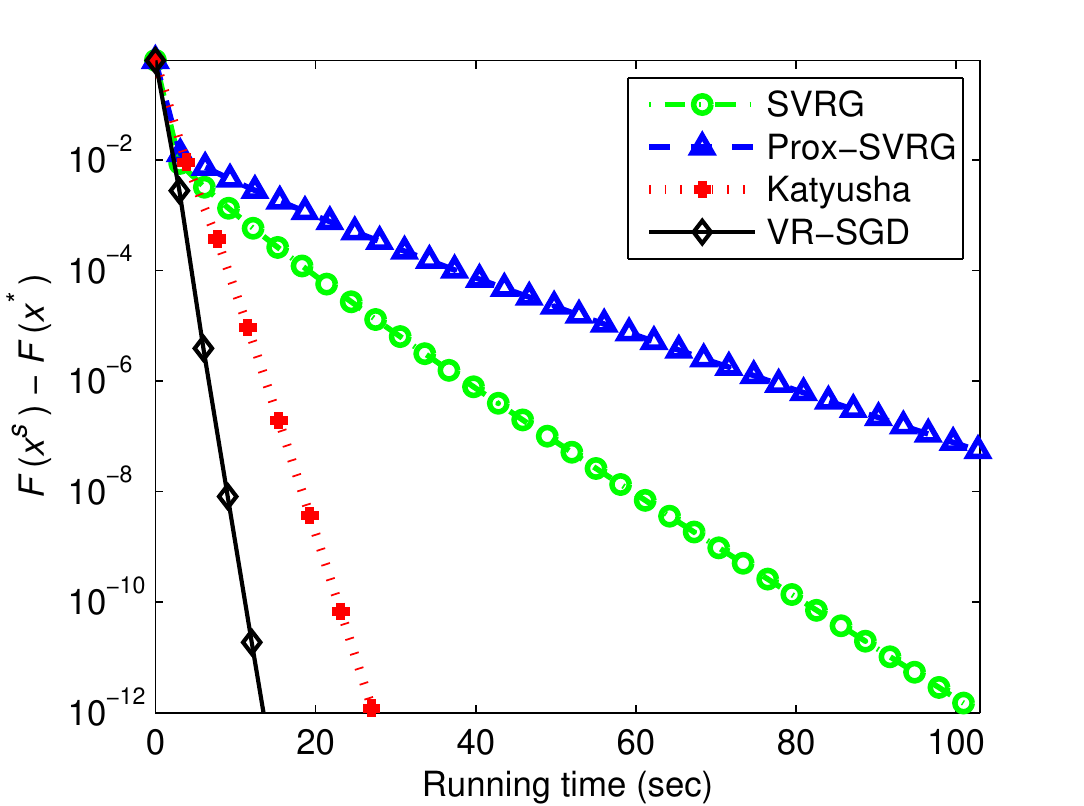}\label{figs1l}}
\vspace{-2.6mm}
\caption{Comparison of SVRG~\cite{johnson:svrg}, Prox-SVRG~\cite{xiao:prox-svrg}, Katyusha~\cite{zhu:Katyusha}, and VR-SGD for solving $\ell_{2}$-norm regularized logistic regression problems (i.e., $\lambda_{2}=0$). In each plot, the vertical axis shows the objective value minus the minimum, and the horizontal axis is the number of effective passes (top) or running time (bottom).}
\label{figs1}
\end{figure}

\begin{figure}[!th]
\centering
\includegraphics[width=0.246\columnwidth]{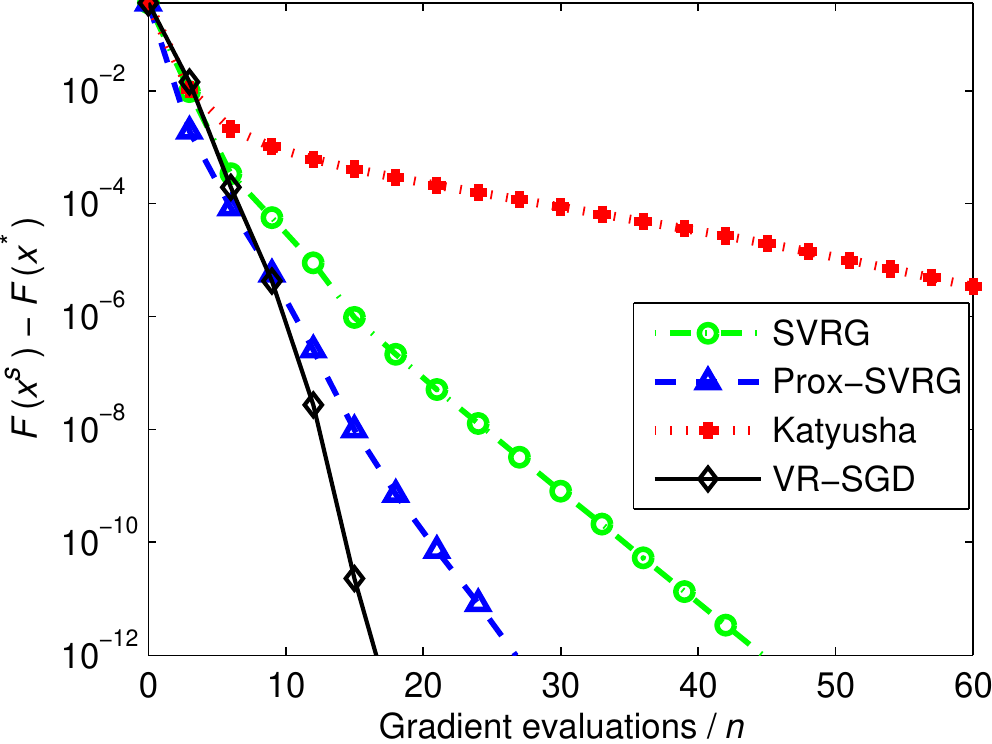}
\includegraphics[width=0.246\columnwidth]{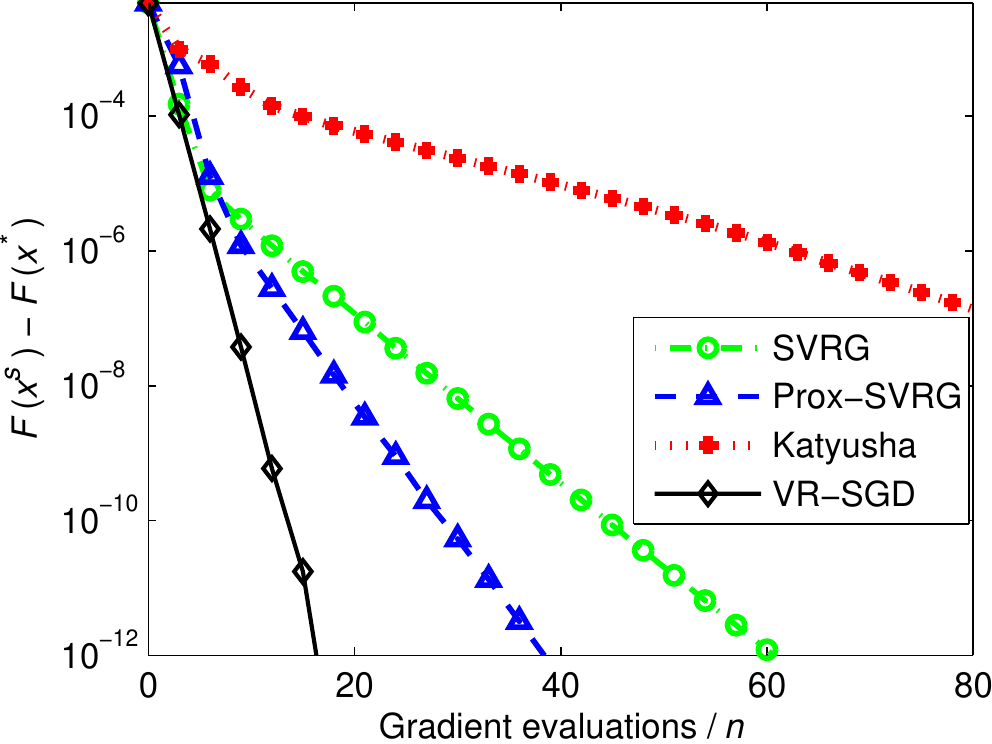}
\includegraphics[width=0.246\columnwidth]{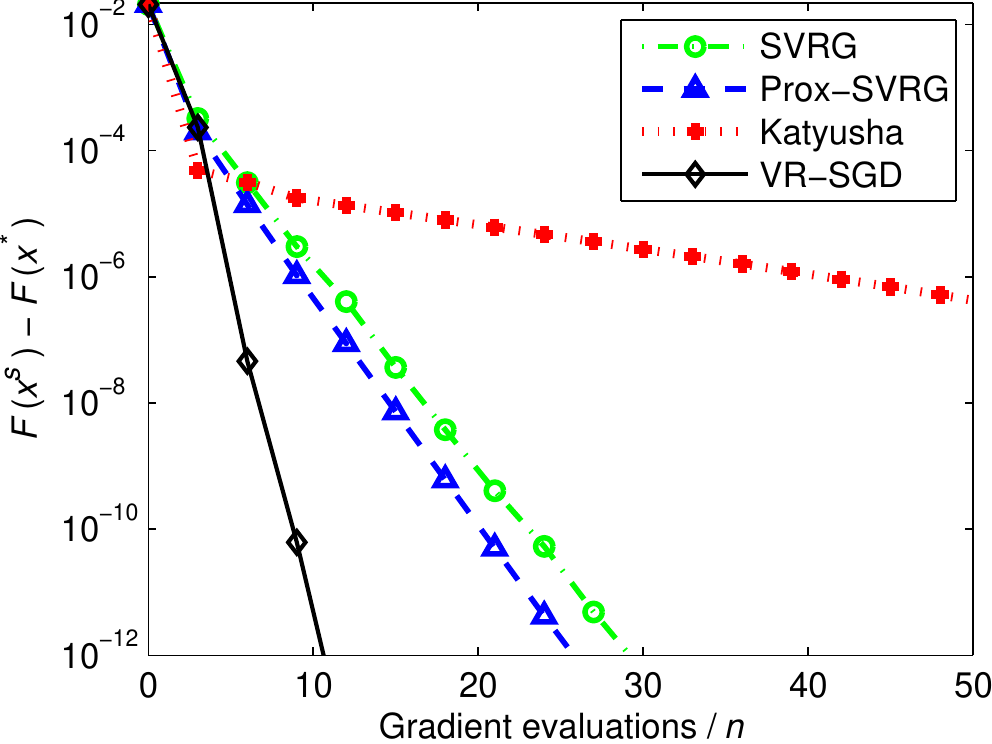}
\includegraphics[width=0.246\columnwidth]{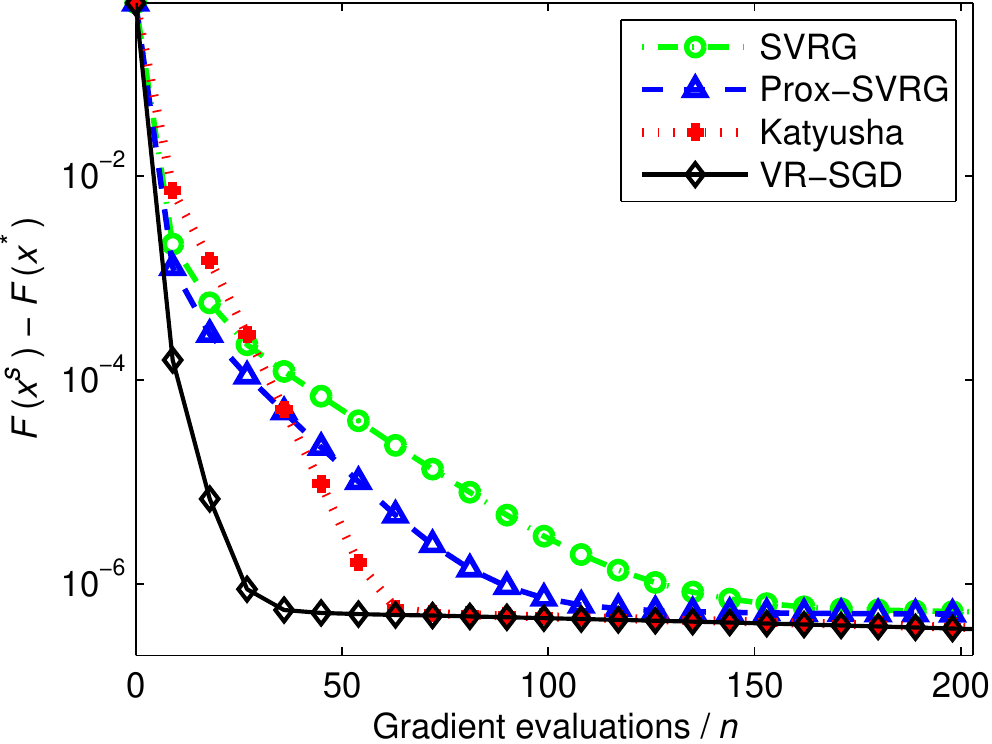}

\subfigure[$\lambda_{2}=10^{-4}$]{\includegraphics[width=0.246\columnwidth]{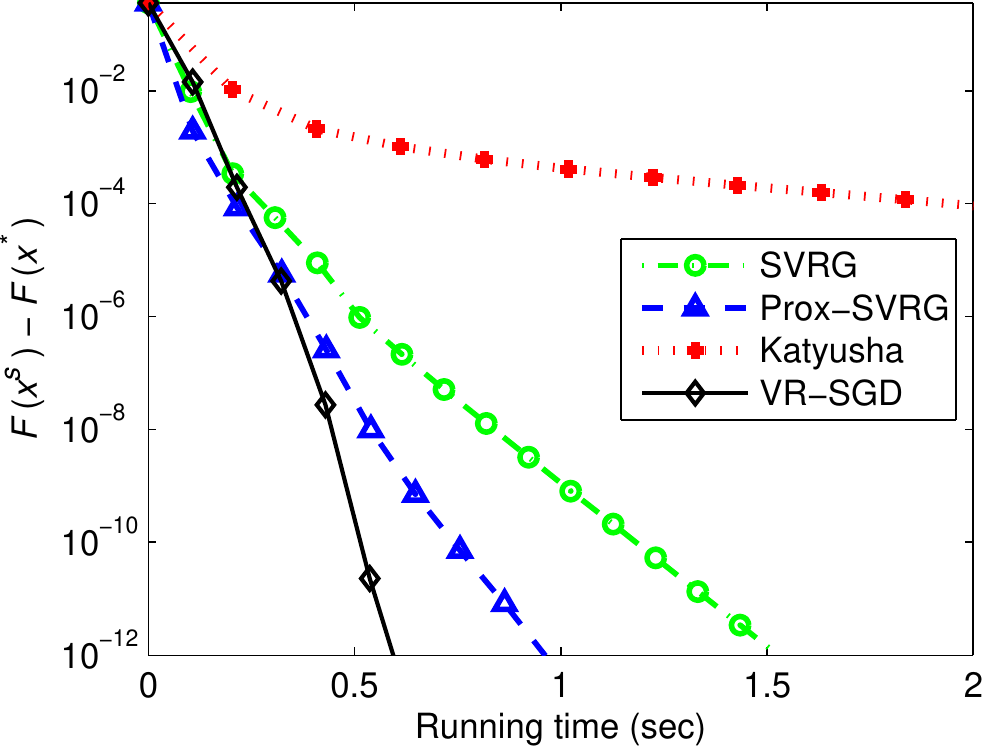}\:\includegraphics[width=0.246\columnwidth]{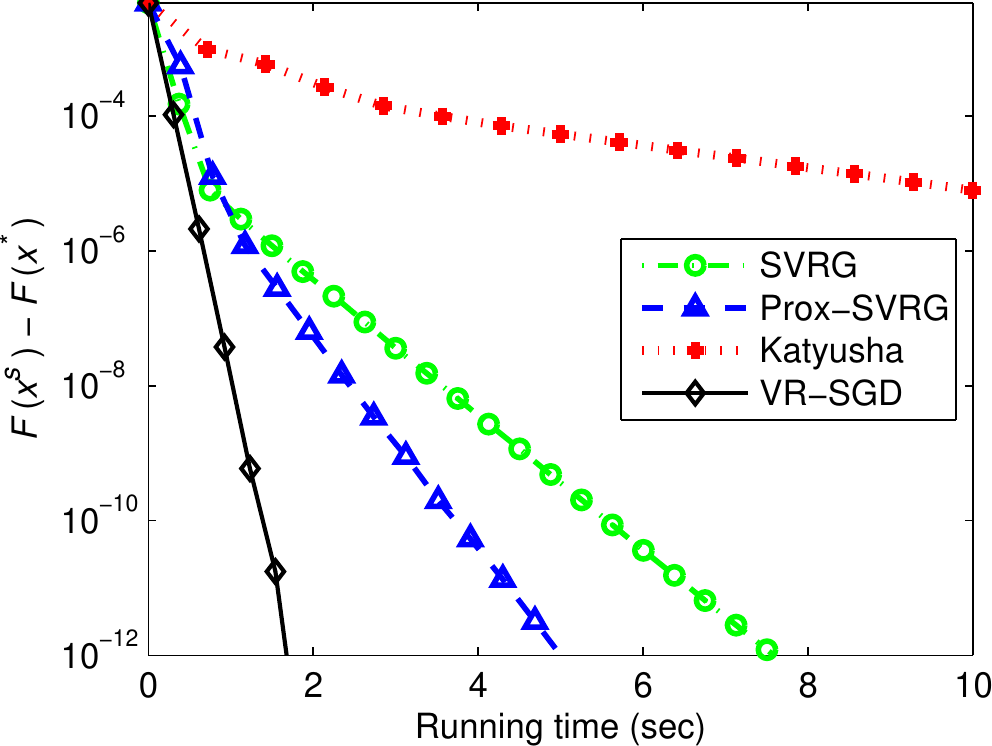}\:\includegraphics[width=0.246\columnwidth]{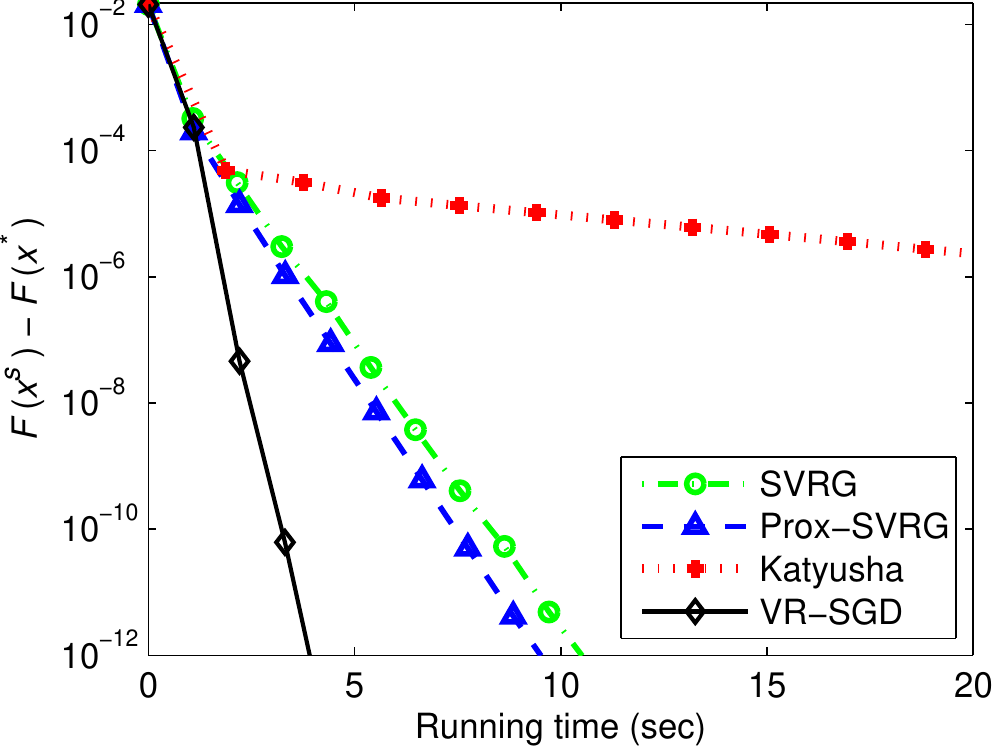}\label{figs2a}}
\subfigure[$\lambda_{2}=10^{-3}$]{\includegraphics[width=0.246\columnwidth]{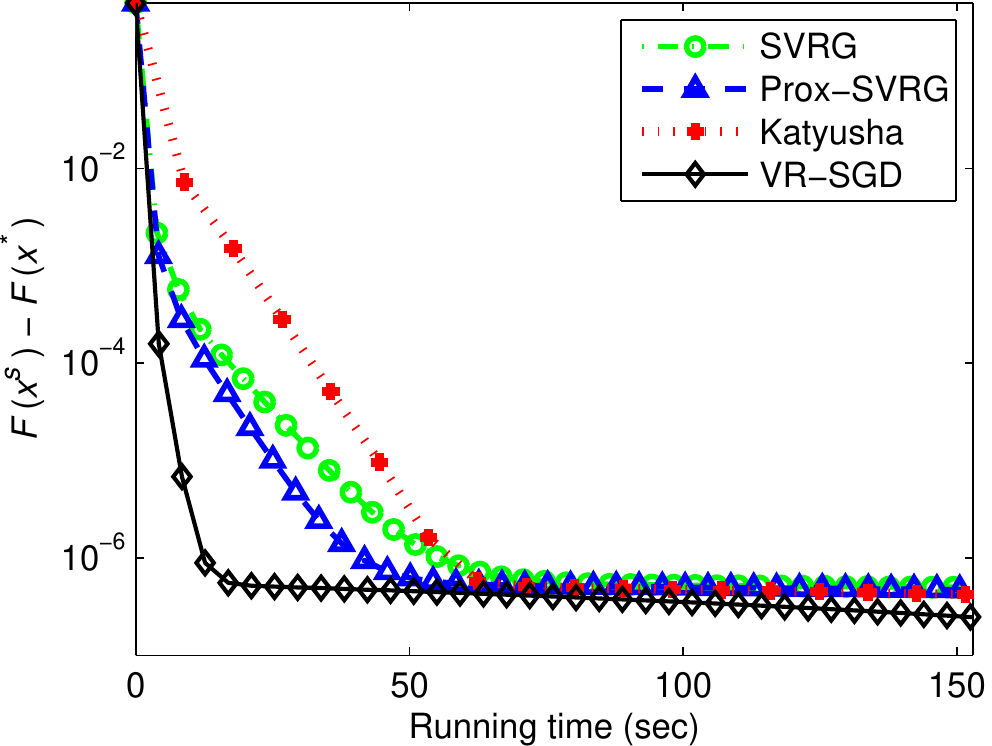}\label{figs2b}}
\vspace{1.6mm}

\includegraphics[width=0.246\columnwidth]{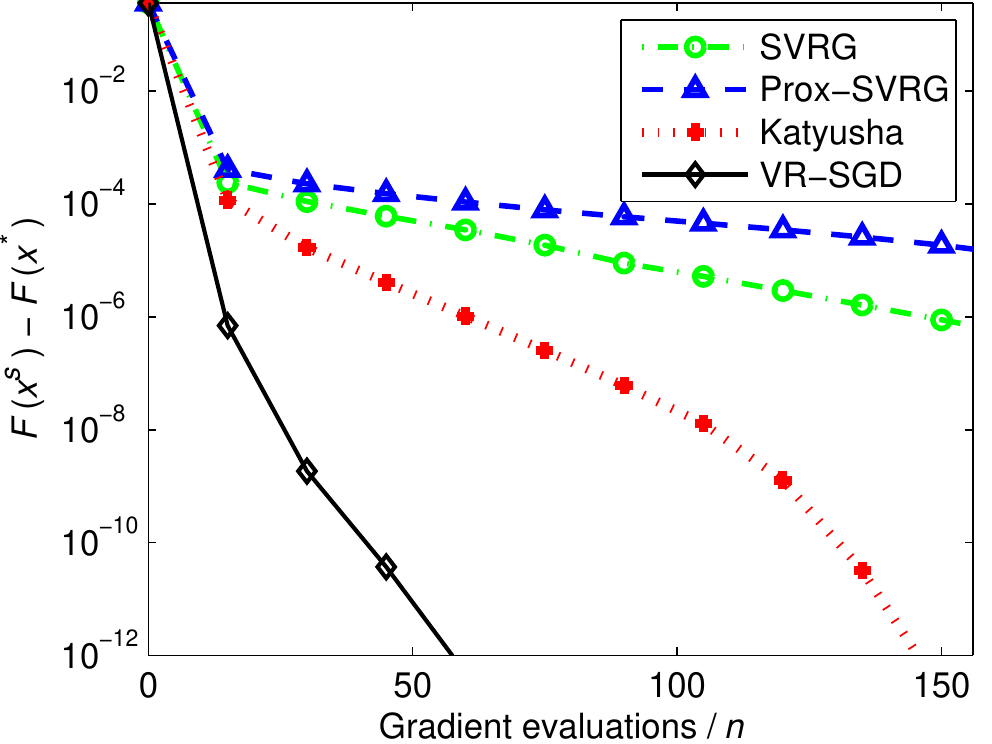}
\includegraphics[width=0.246\columnwidth]{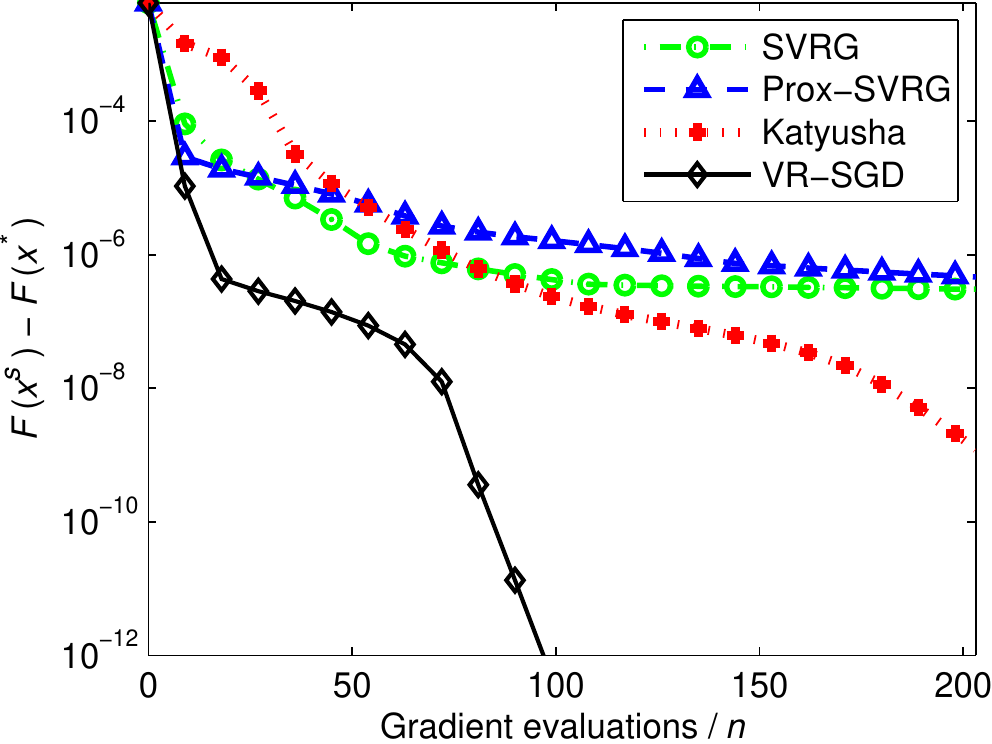}
\includegraphics[width=0.246\columnwidth]{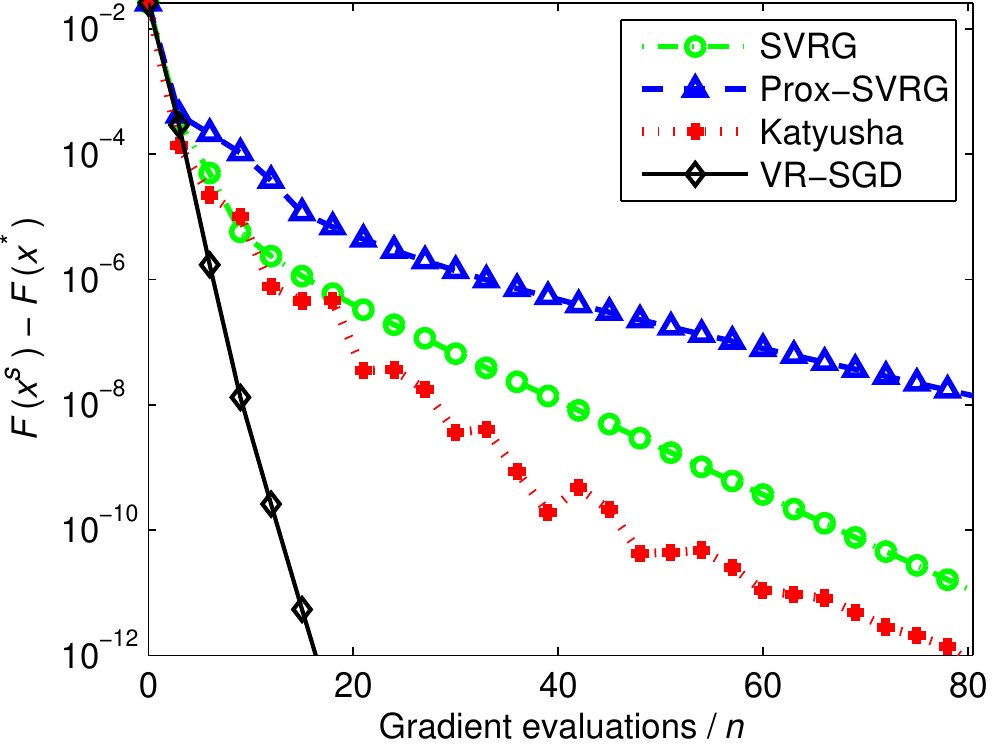}
\includegraphics[width=0.246\columnwidth]{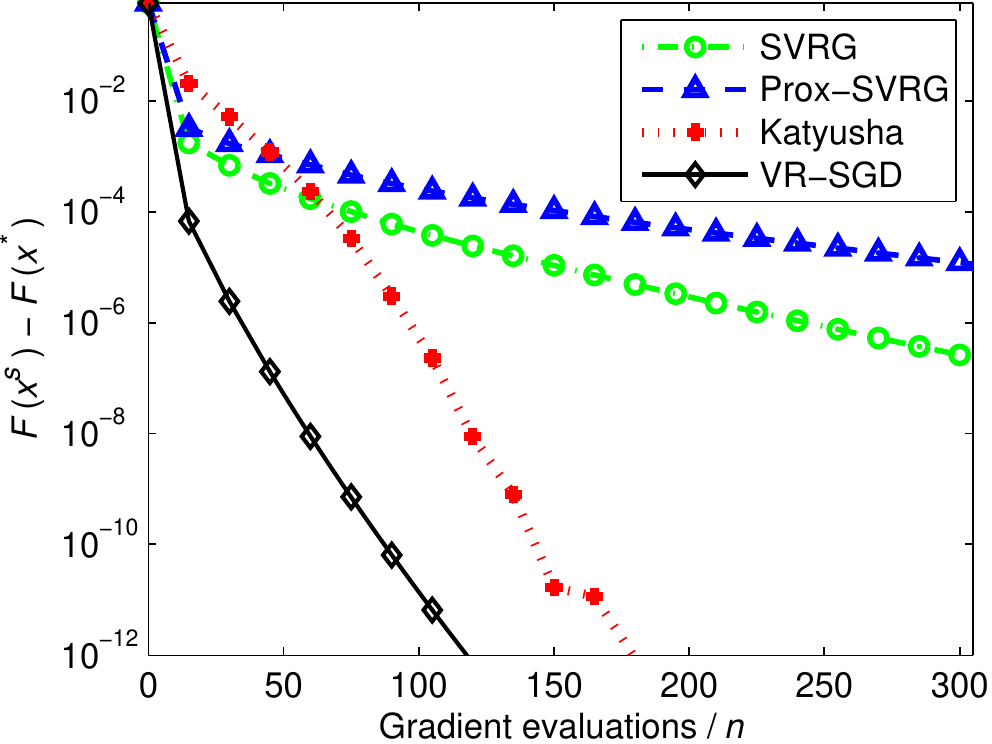}

\subfigure[$\lambda_{2}=10^{-5}$]{\includegraphics[width=0.246\columnwidth]{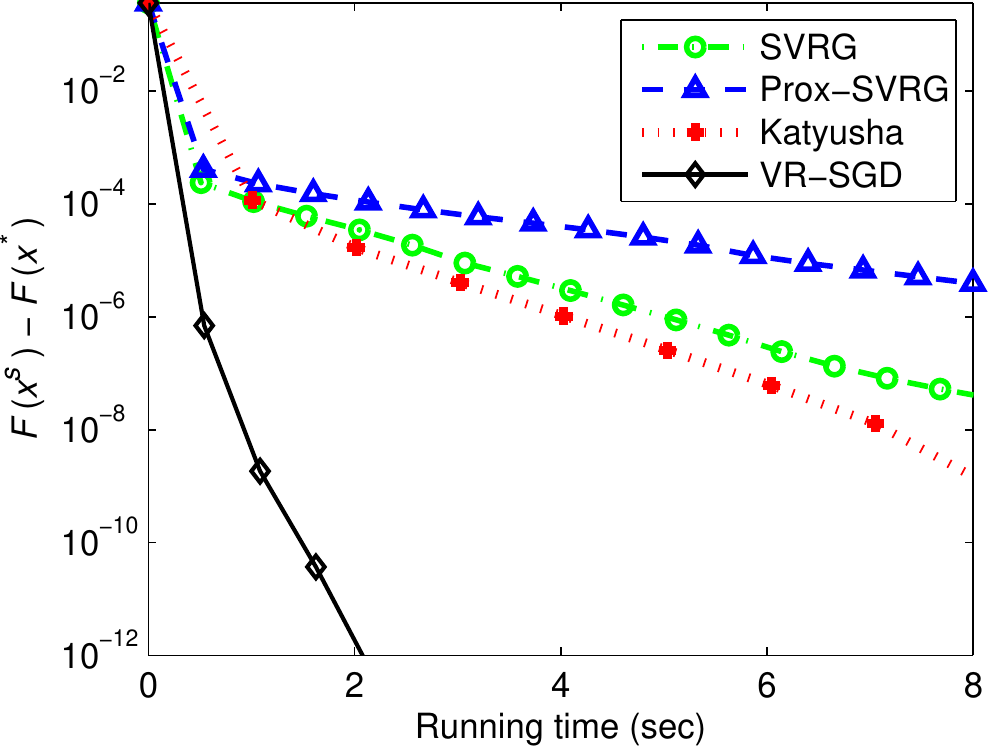}\:\includegraphics[width=0.246\columnwidth]{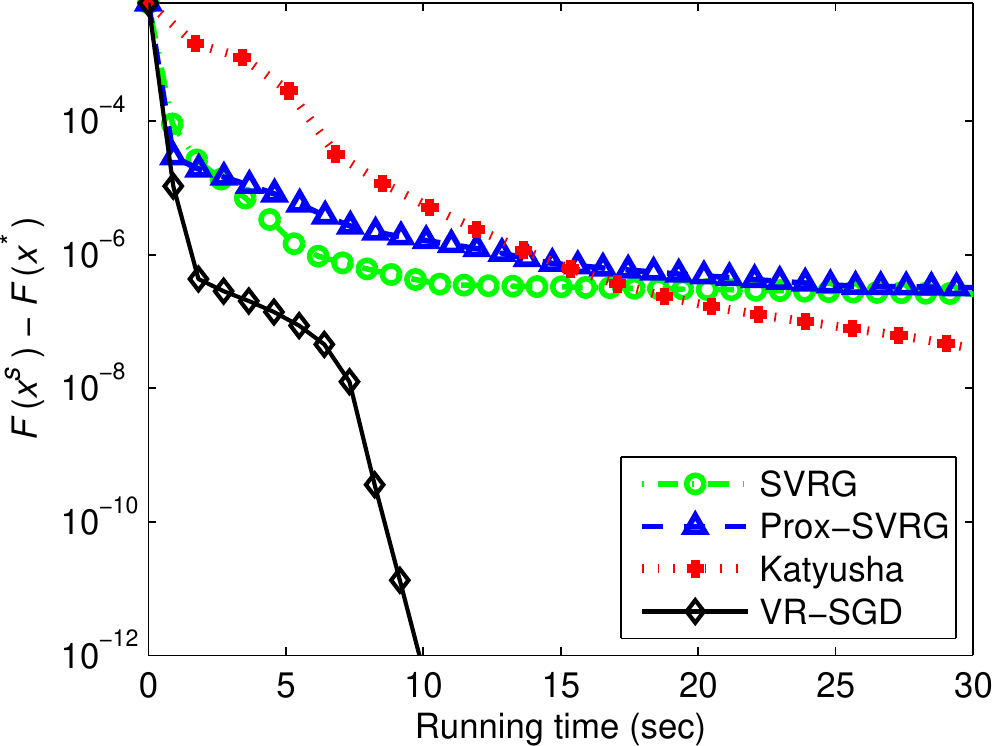}\:\includegraphics[width=0.246\columnwidth]{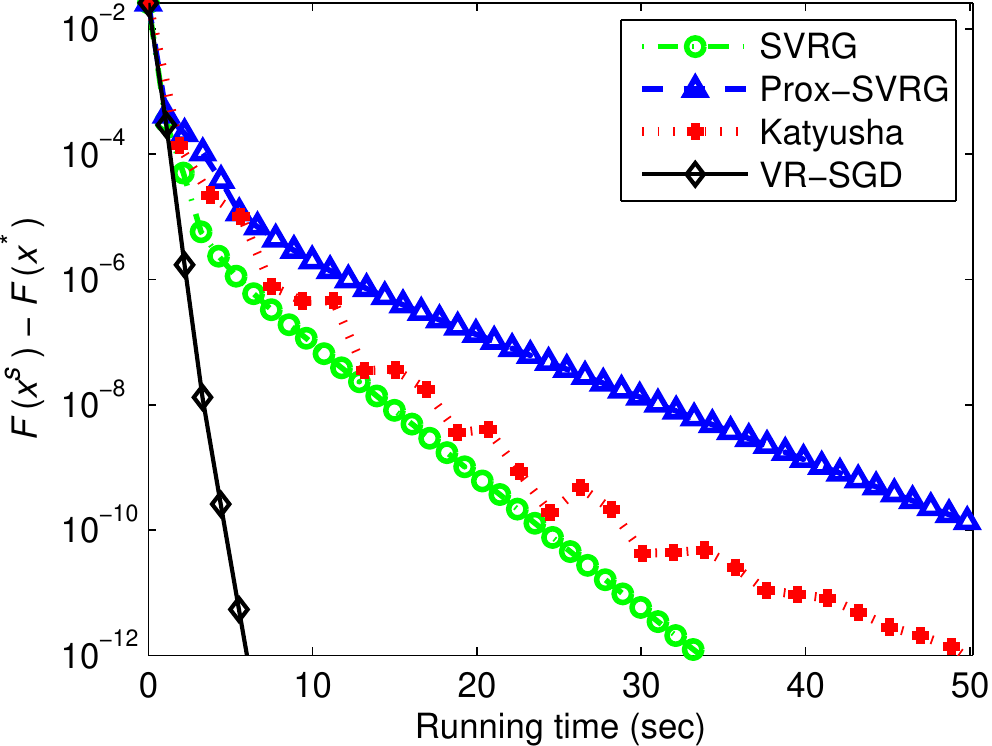}}
\subfigure[$\lambda_{2}=10^{-4}$]{\includegraphics[width=0.246\columnwidth]{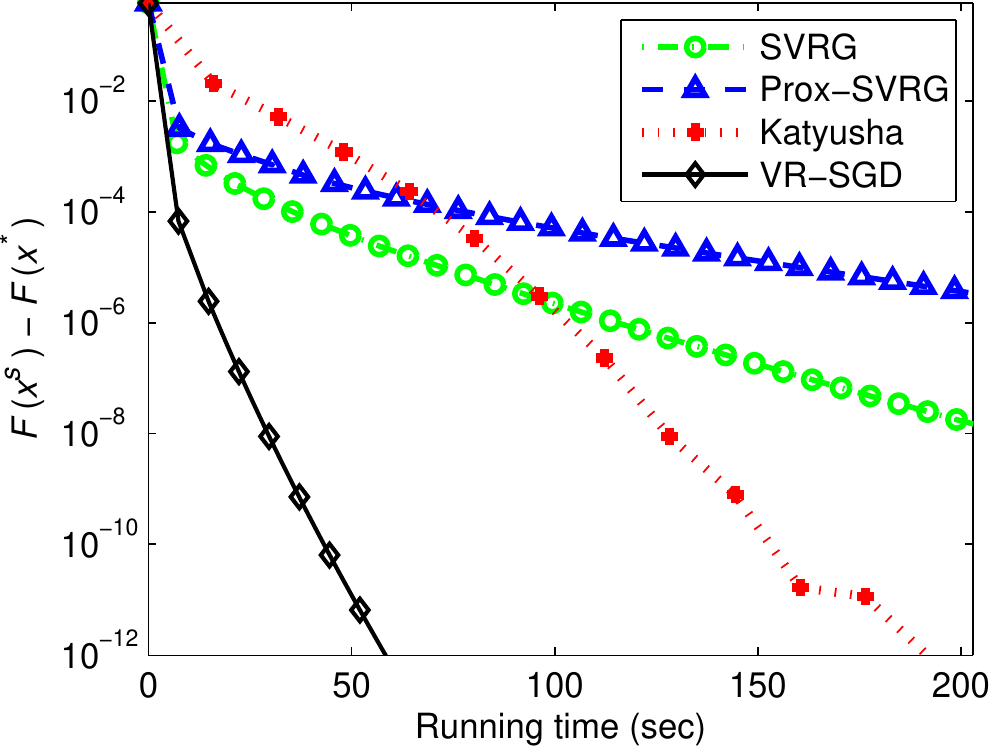}\label{figs2d}}
\vspace{1.6mm}

\includegraphics[width=0.246\columnwidth]{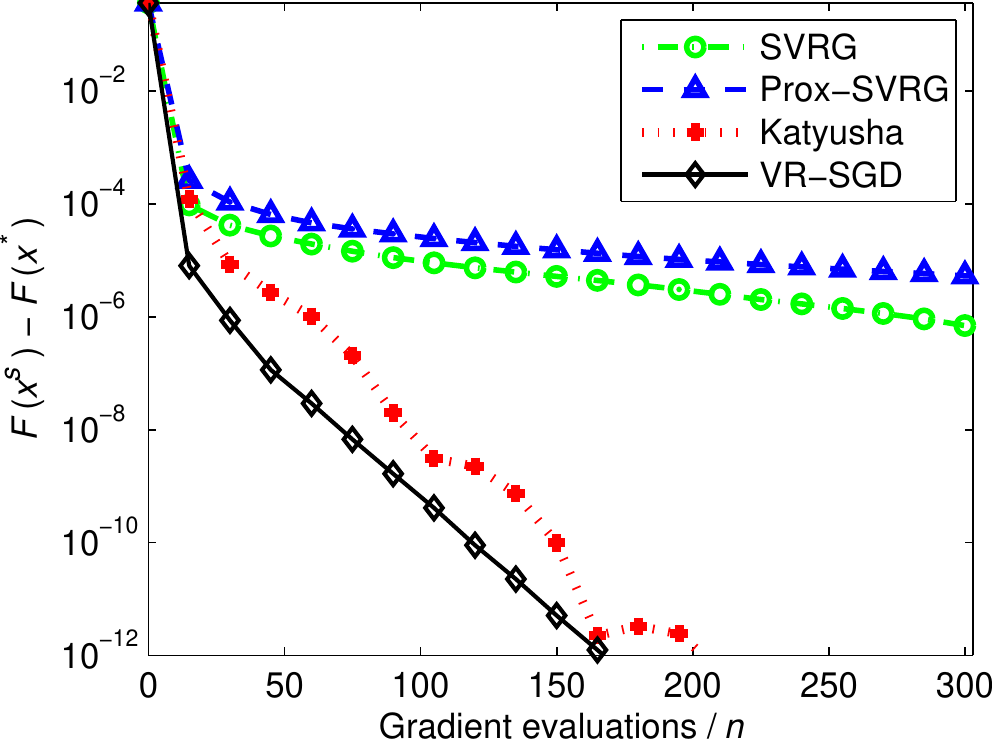}
\includegraphics[width=0.246\columnwidth]{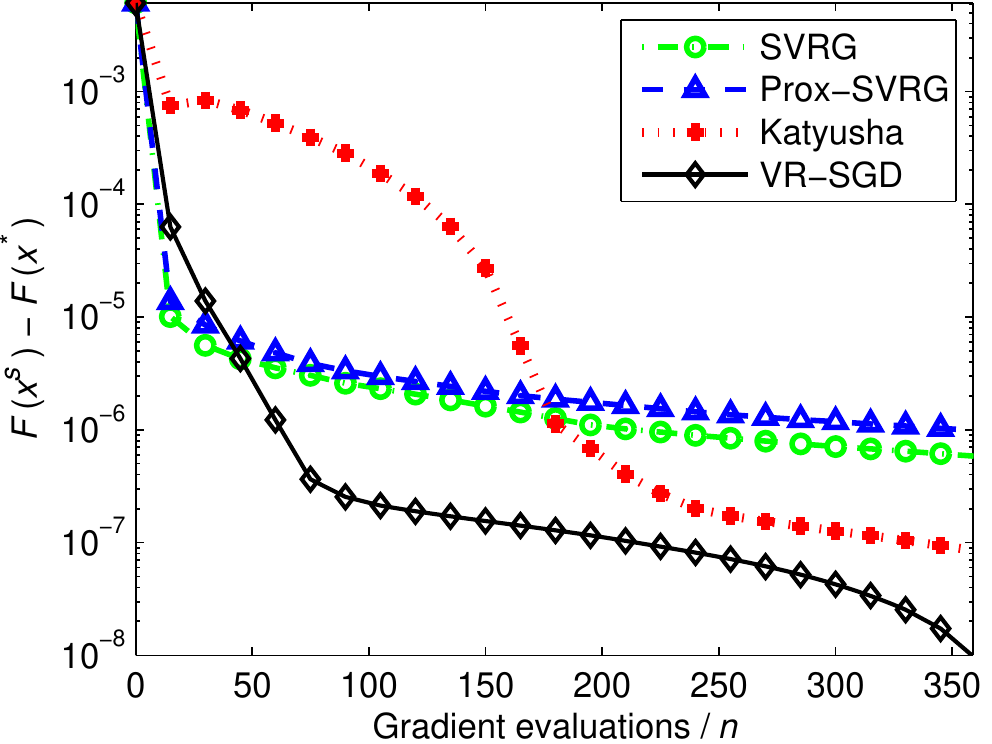}
\includegraphics[width=0.246\columnwidth]{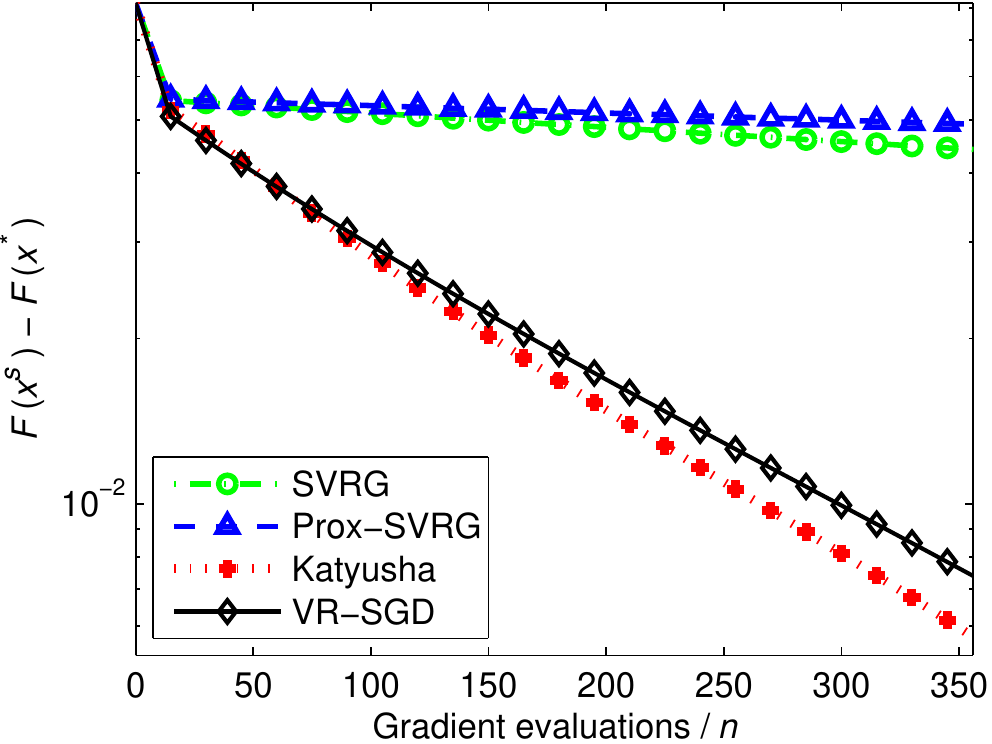}
\includegraphics[width=0.246\columnwidth]{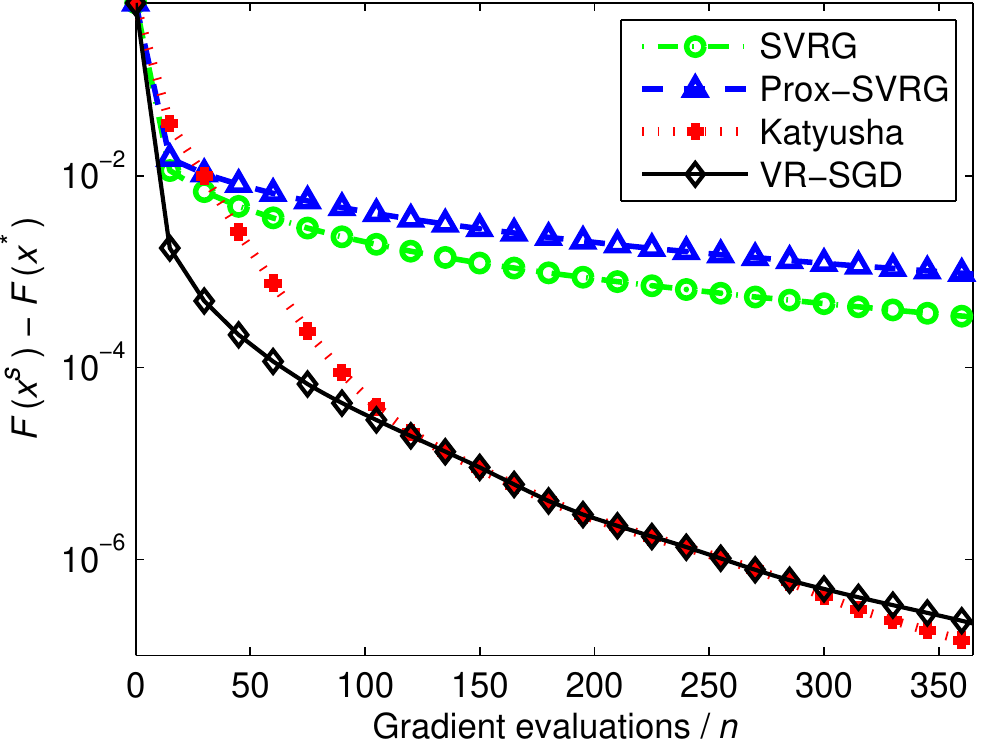}

\subfigure[$\lambda_{2}=10^{-6}$]{\includegraphics[width=0.246\columnwidth]{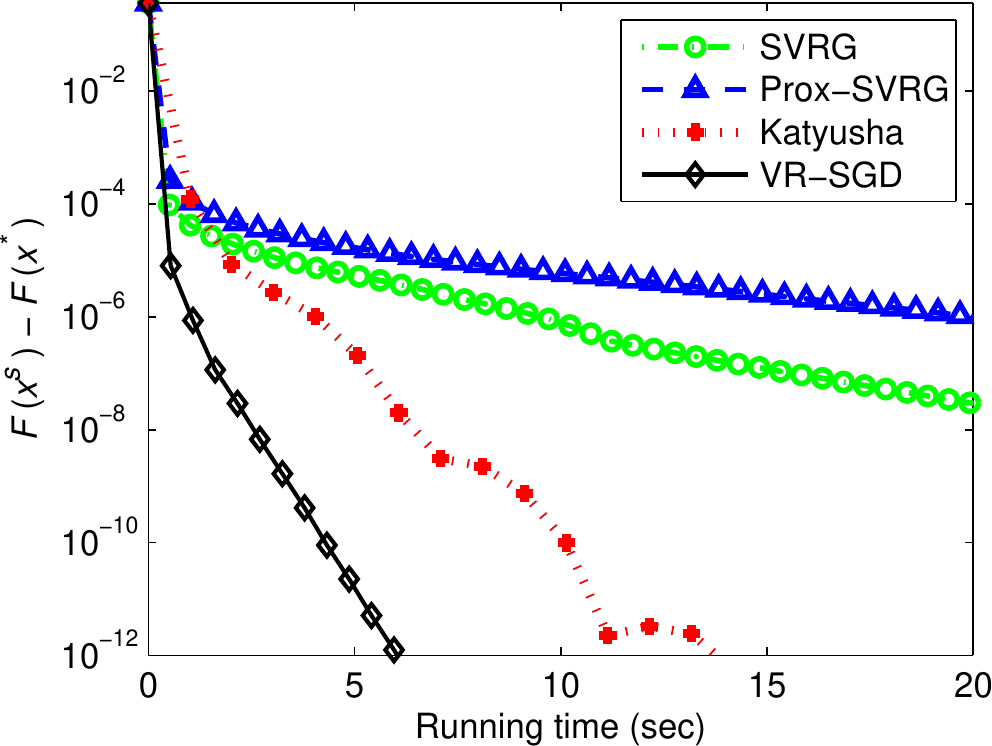}\,\includegraphics[width=0.246\columnwidth]{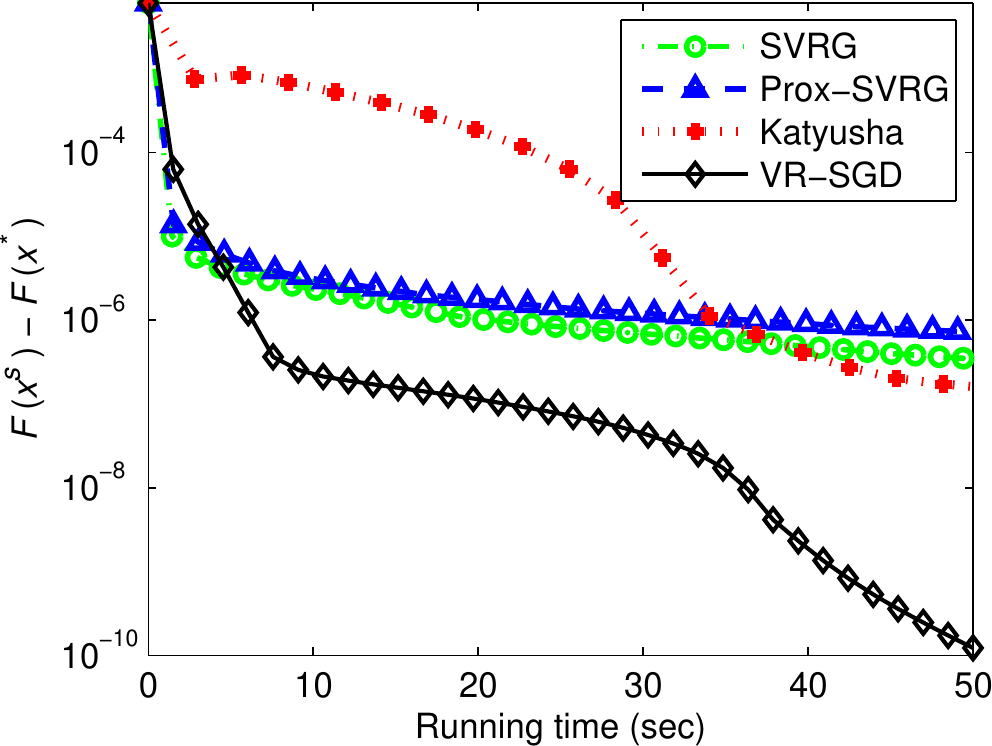}\,\includegraphics[width=0.246\columnwidth]{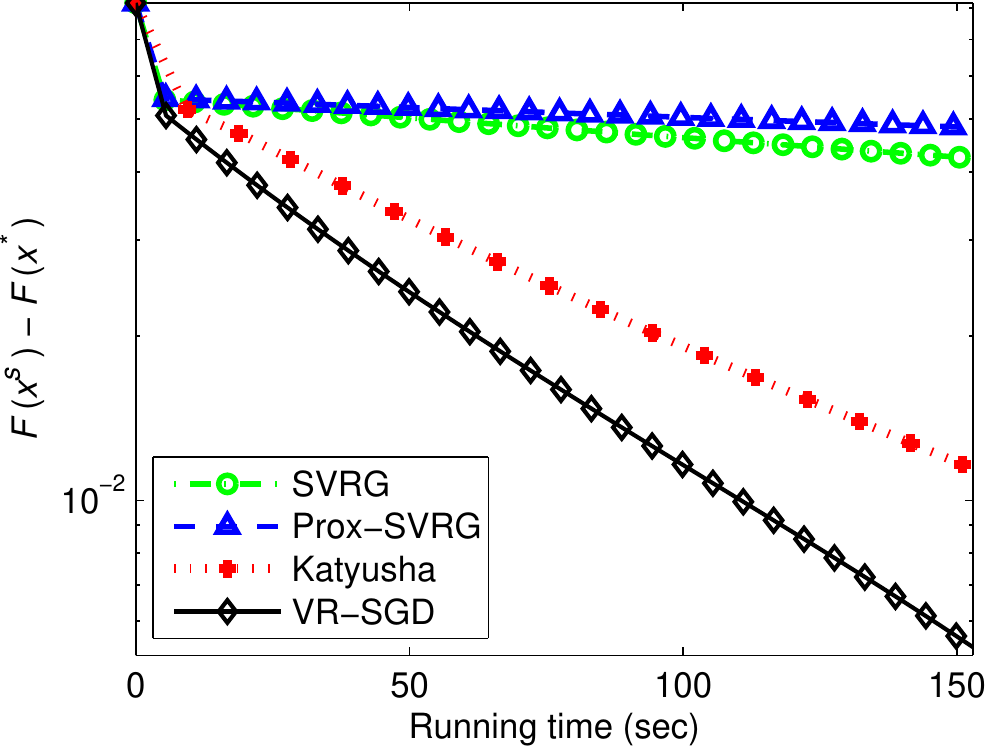}\label{figs2e}}
\subfigure[$\lambda_{2}=10^{-5}$]{\includegraphics[width=0.246\columnwidth]{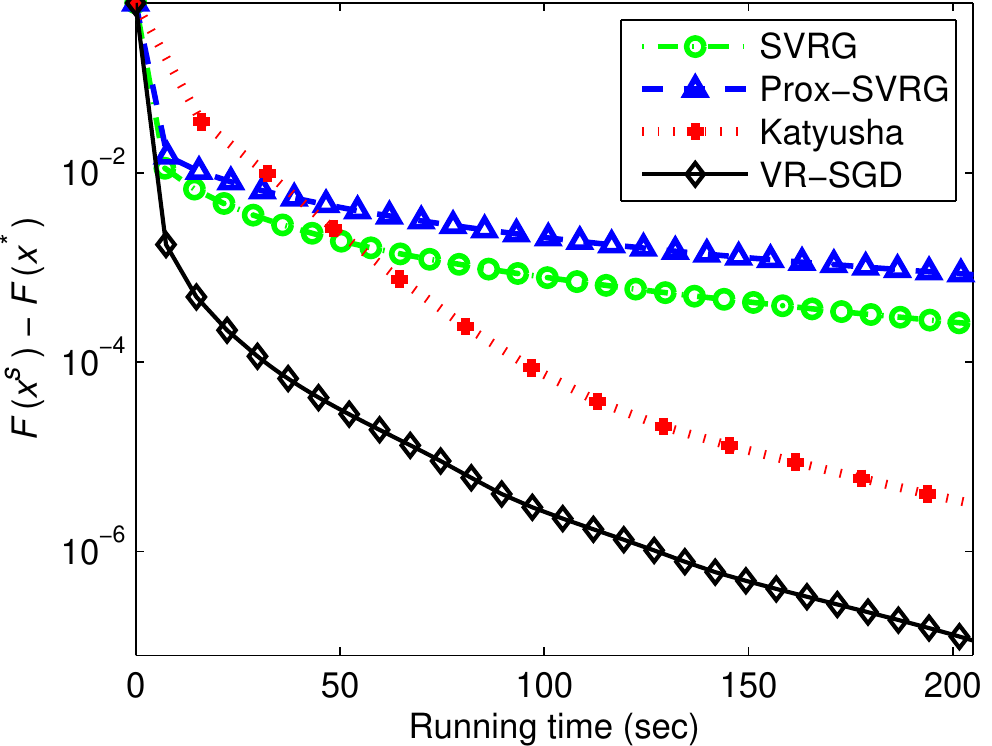}\label{figs2f}}
\caption{Comparison of SVRG, Prox-SVRG~\cite{xiao:prox-svrg}, Katyusha~\cite{zhu:Katyusha}, and VR-SGD for $\ell_{1}$-norm regularized logistic regression problems (i.e., $\lambda_{1}=0$) on the four data sets: Adult (the first column), Protein (the sconced column), Covtype (the third column), and Sido0 (the last column). In each plot, the vertical axis shows the objective value minus the minimum, and the horizontal axis is the number of effective passes (top) or running time (bottom).}
\label{figs2}
\end{figure}

\begin{figure}[!th]
\centering
\includegraphics[width=0.246\columnwidth]{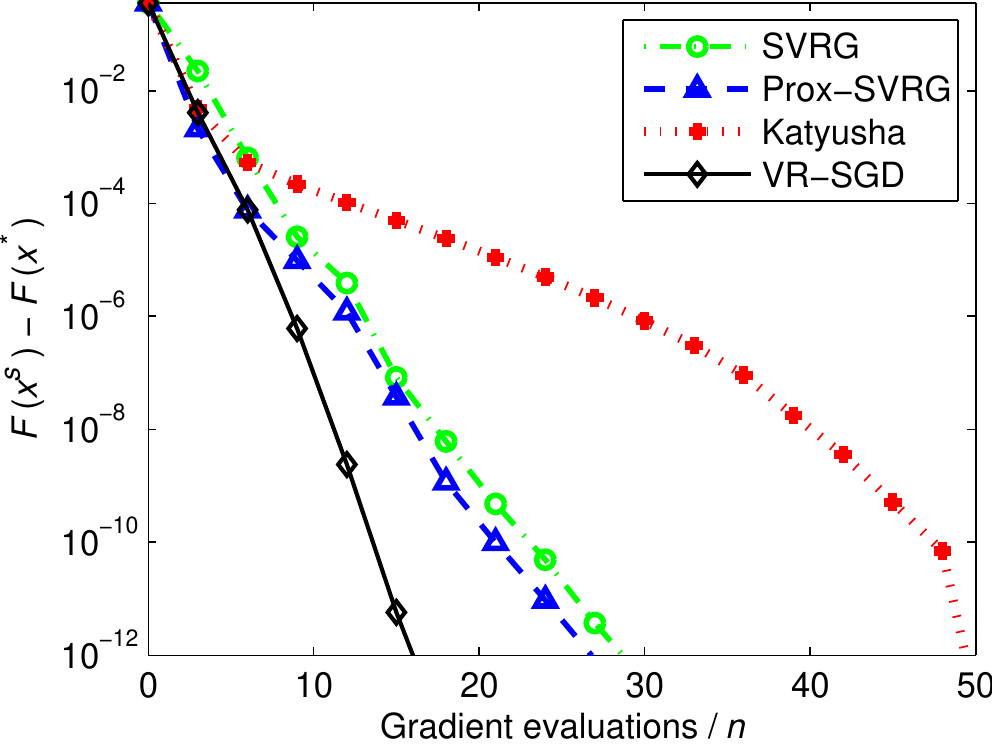}
\includegraphics[width=0.246\columnwidth]{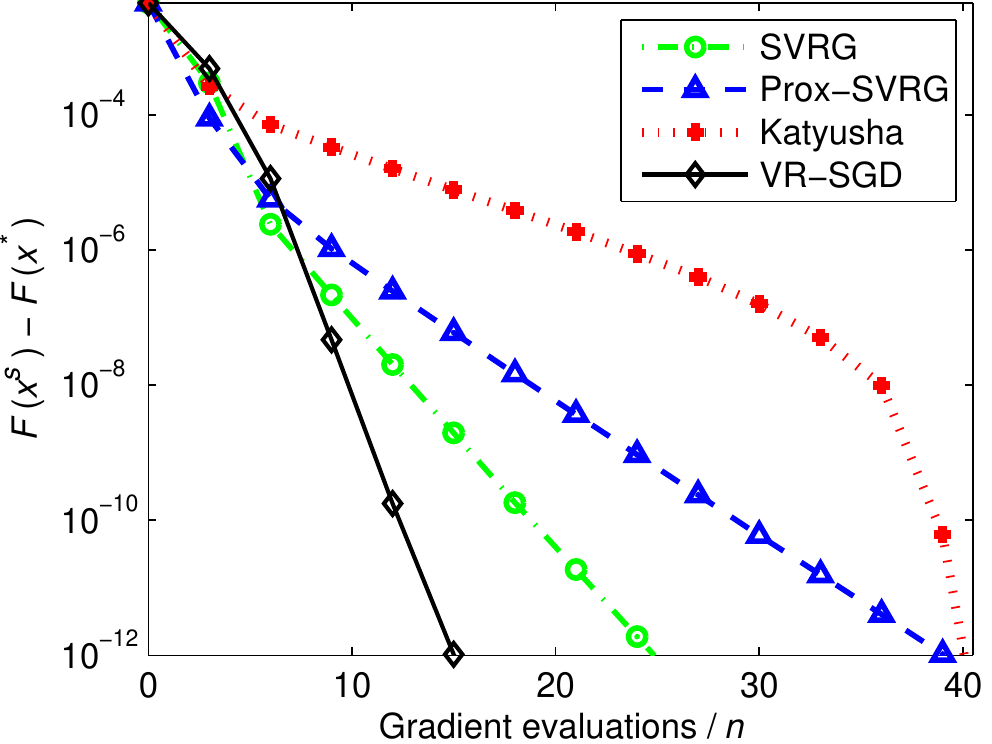}
\includegraphics[width=0.246\columnwidth]{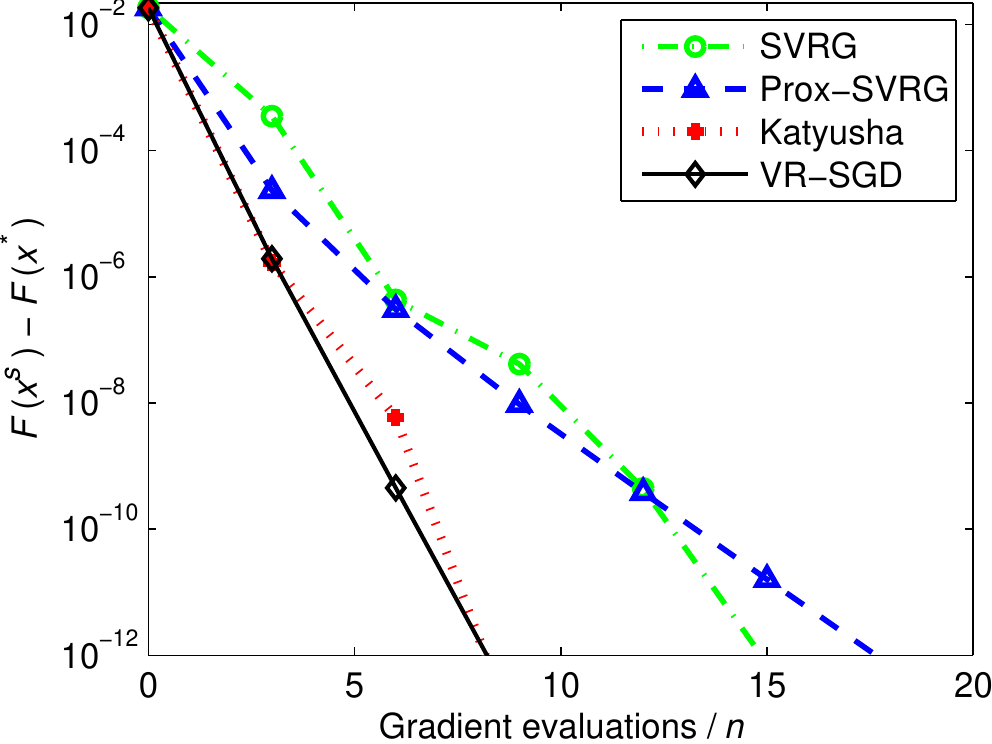}
\includegraphics[width=0.246\columnwidth]{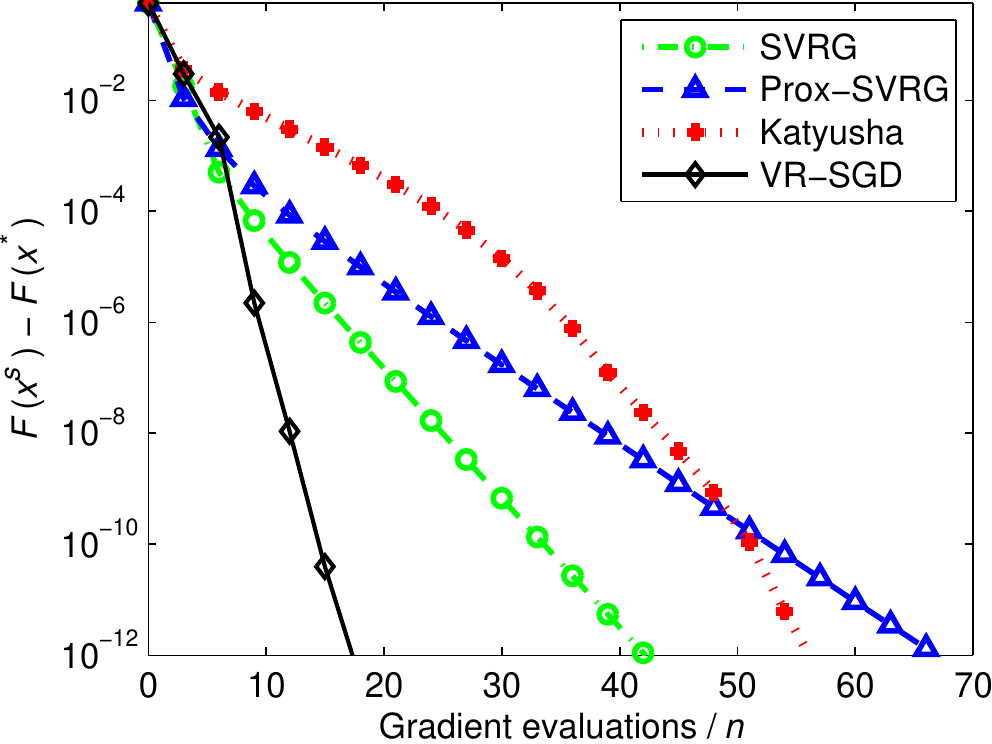}

\subfigure[$\lambda_{1}=10^{-5}$ \;and\; $\lambda_{2}=10^{-4}$]{\includegraphics[width=0.246\columnwidth]{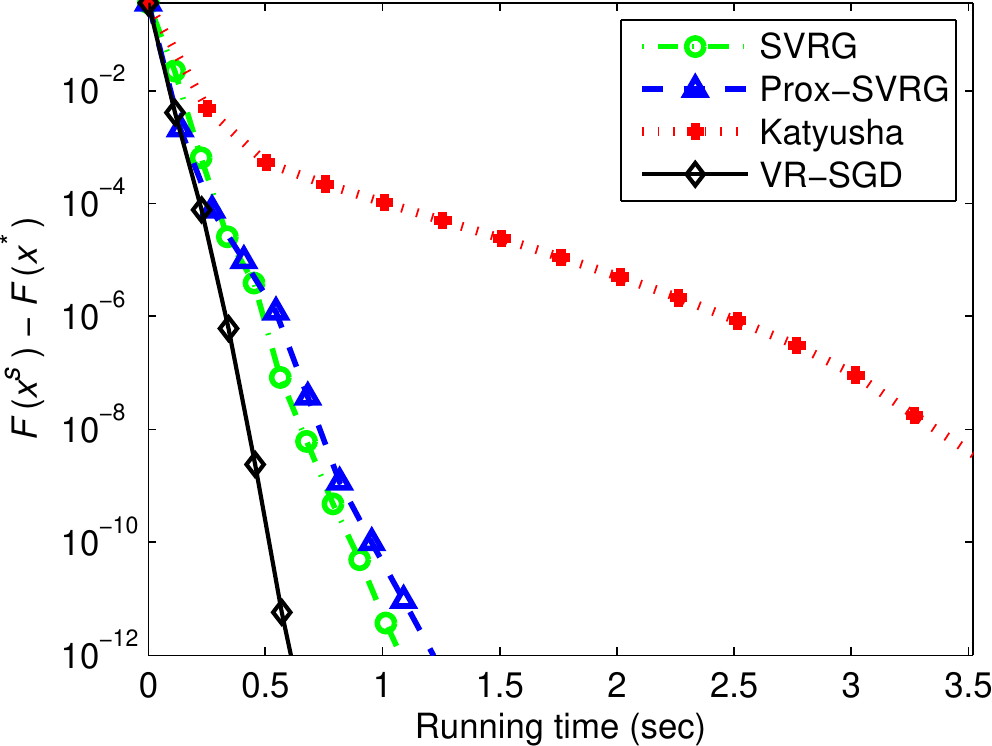}\:\includegraphics[width=0.246\columnwidth]{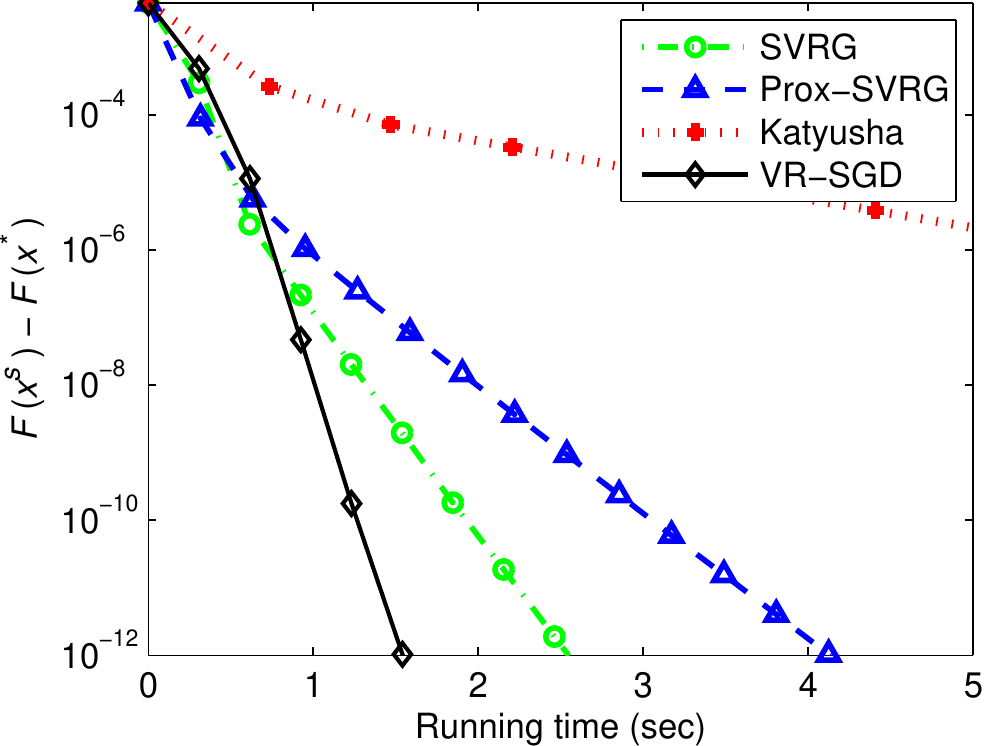}\:\includegraphics[width=0.246\columnwidth]{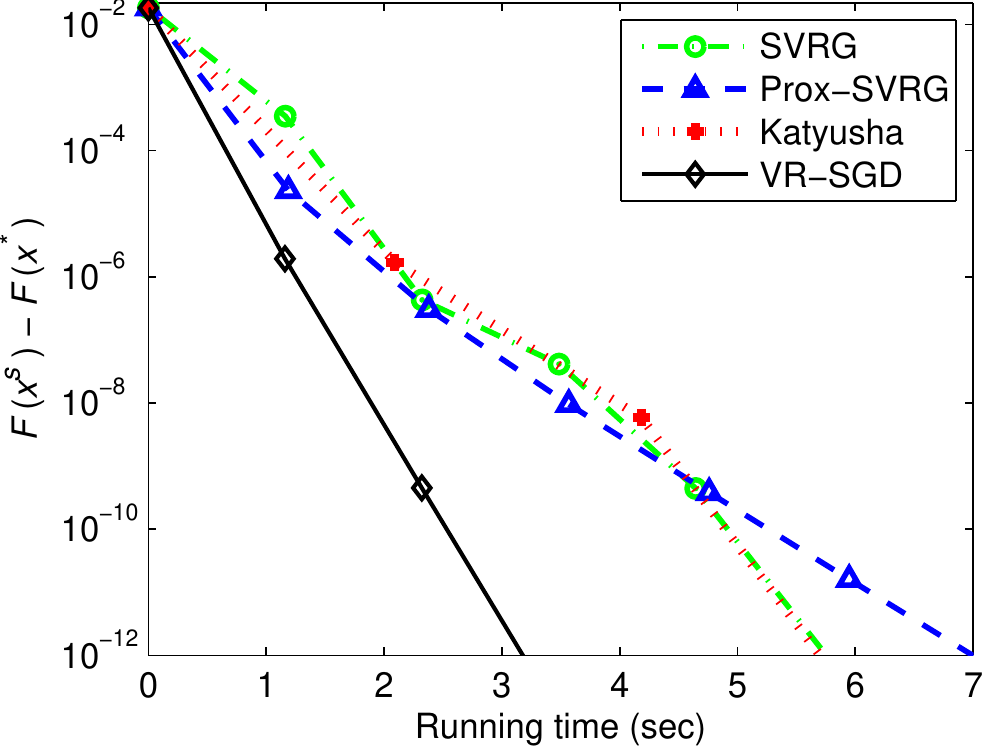}}\:\subfigure[$\lambda_{1}\!=\!10^{-4}$ and $\lambda_{2}\!=\!10^{-5}$]{\includegraphics[width=0.246\columnwidth]{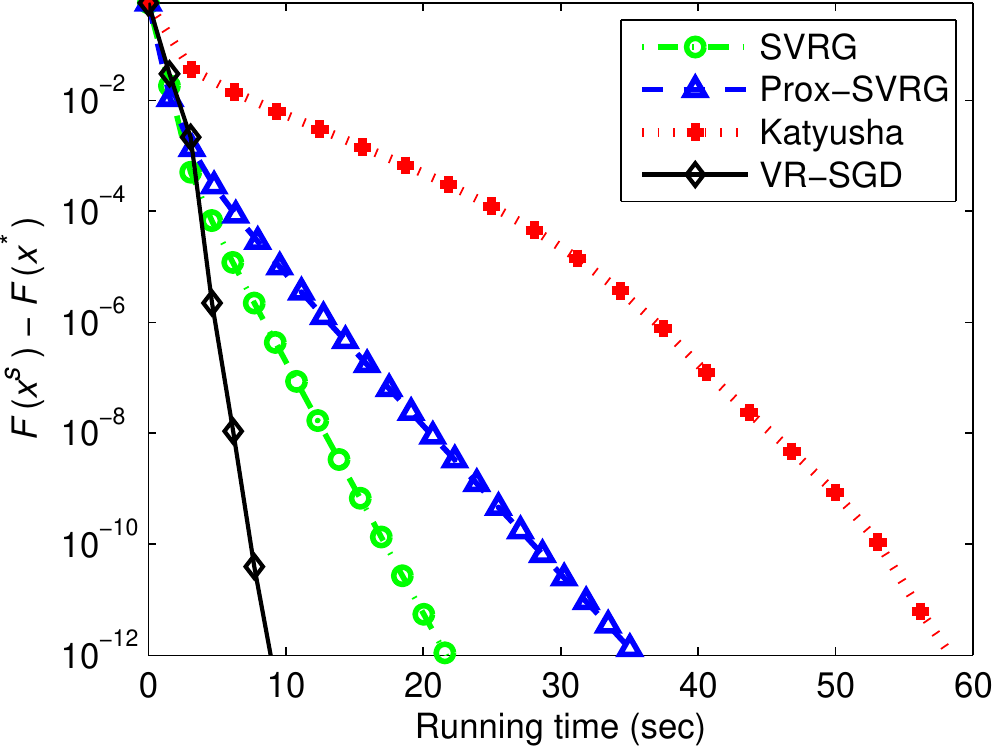}}
\vspace{1.6mm}

\includegraphics[width=0.246\columnwidth]{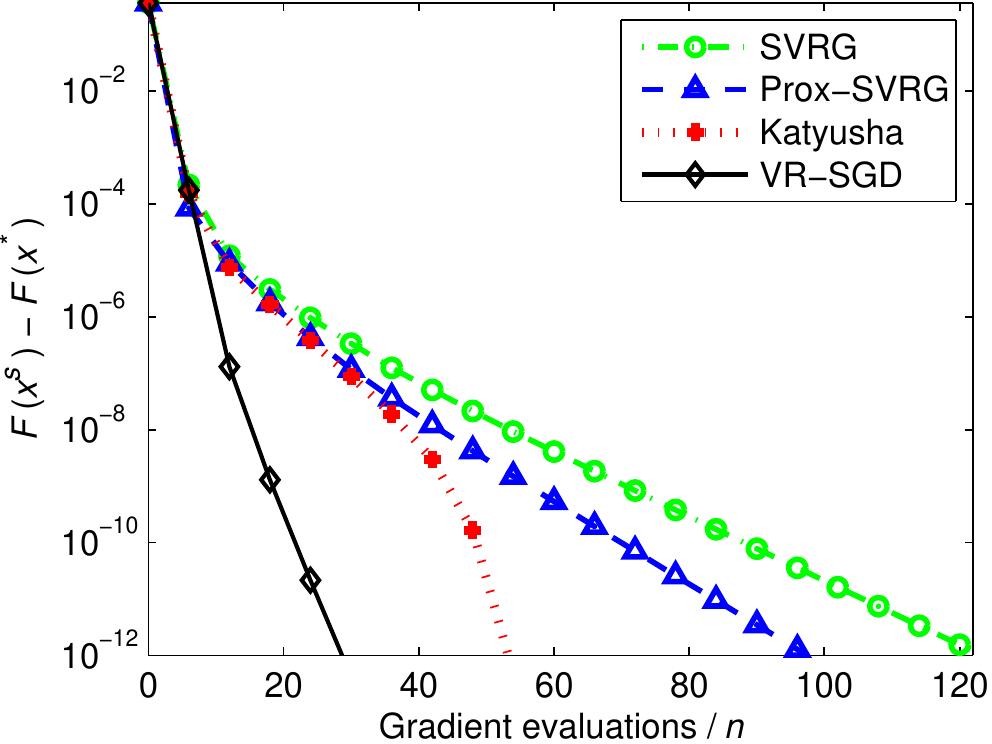}
\includegraphics[width=0.246\columnwidth]{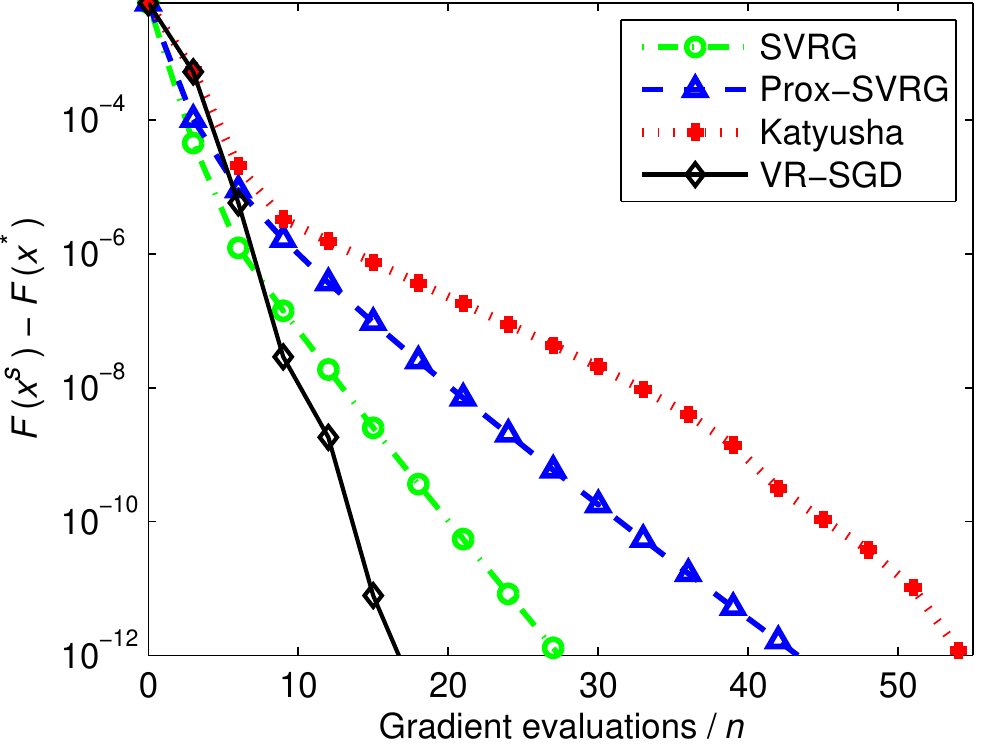}
\includegraphics[width=0.246\columnwidth]{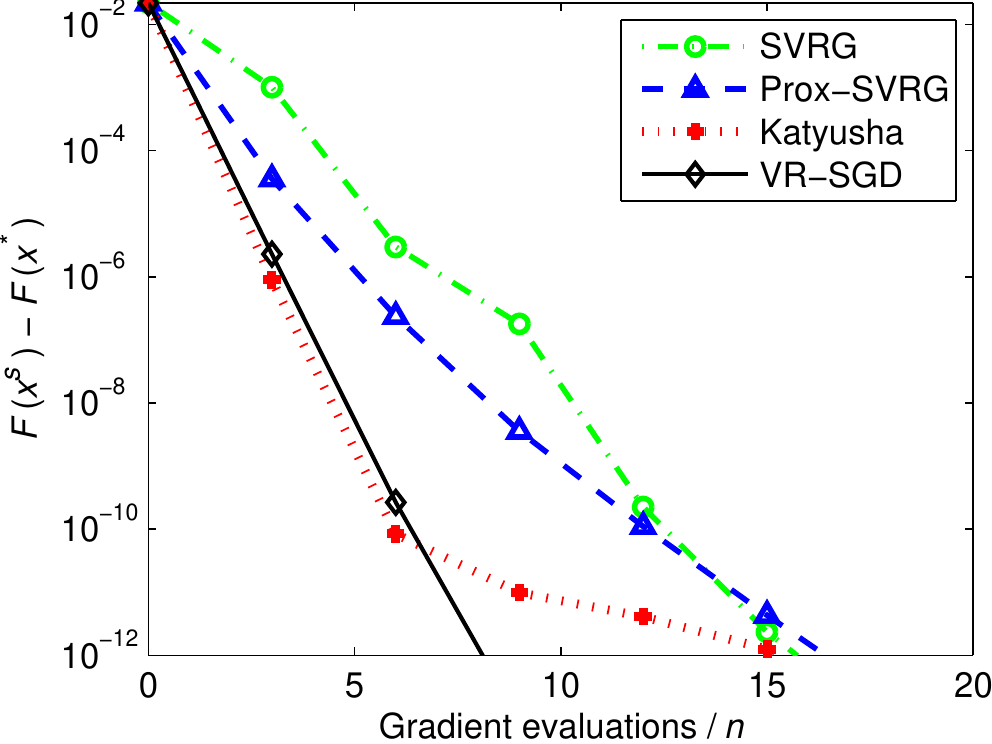}
\includegraphics[width=0.246\columnwidth]{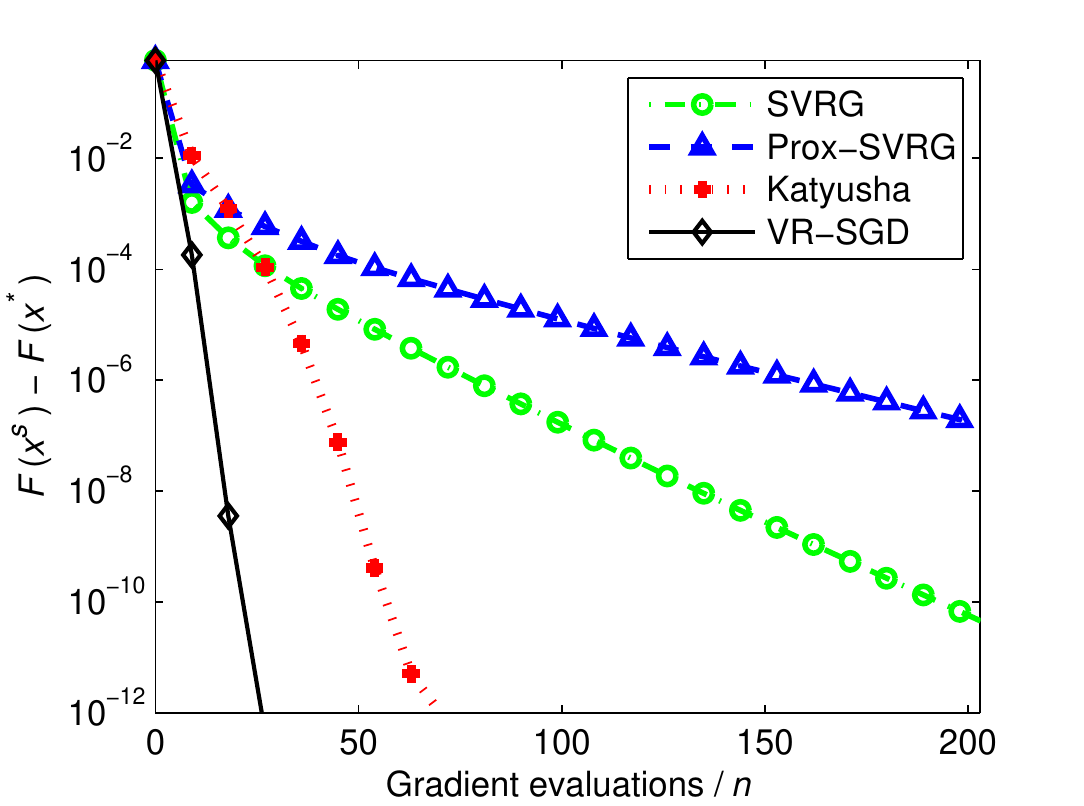}

\subfigure[$\lambda_{1}=10^{-5}$ \;and\; $\lambda_{2}=10^{-5}$]{\includegraphics[width=0.246\columnwidth]{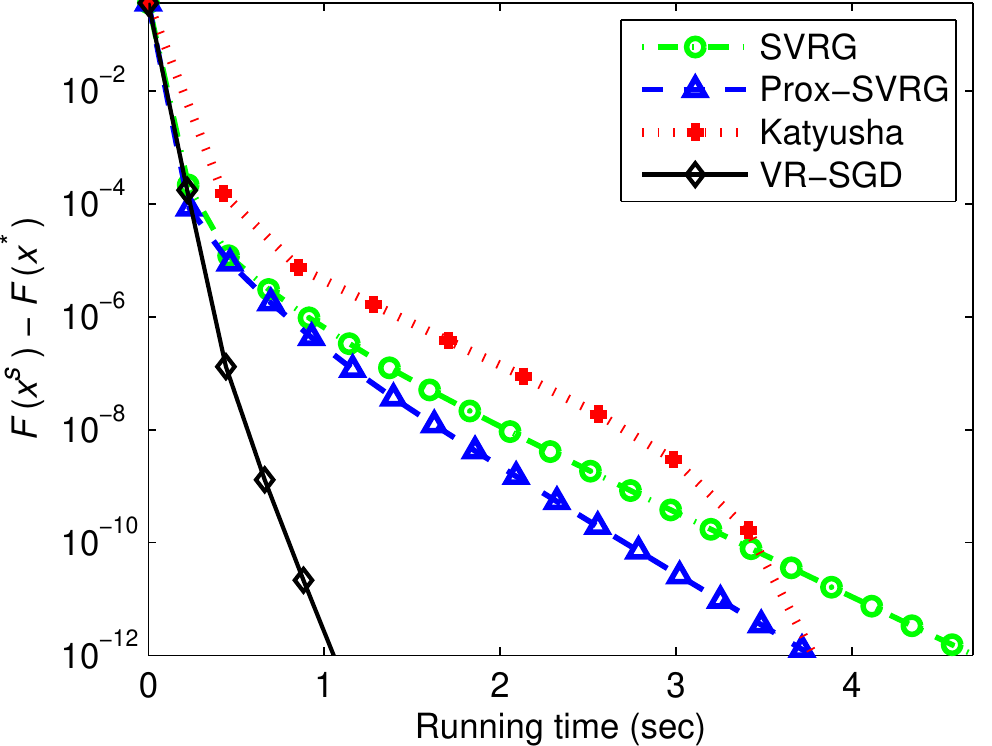}\:\includegraphics[width=0.246\columnwidth]{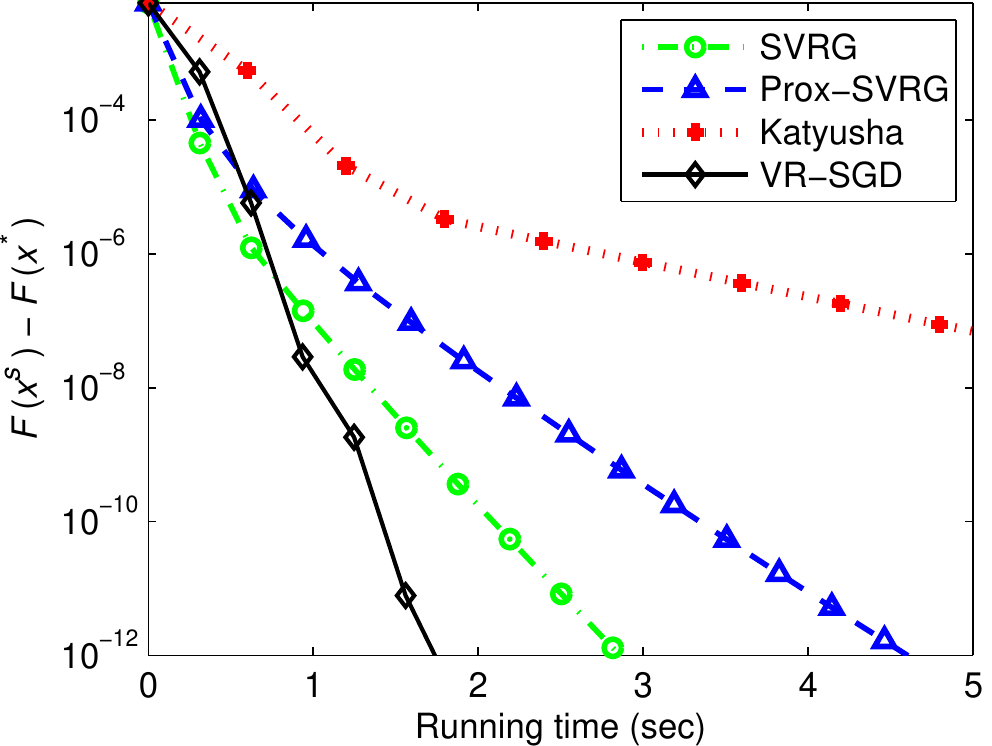}\:\includegraphics[width=0.246\columnwidth]{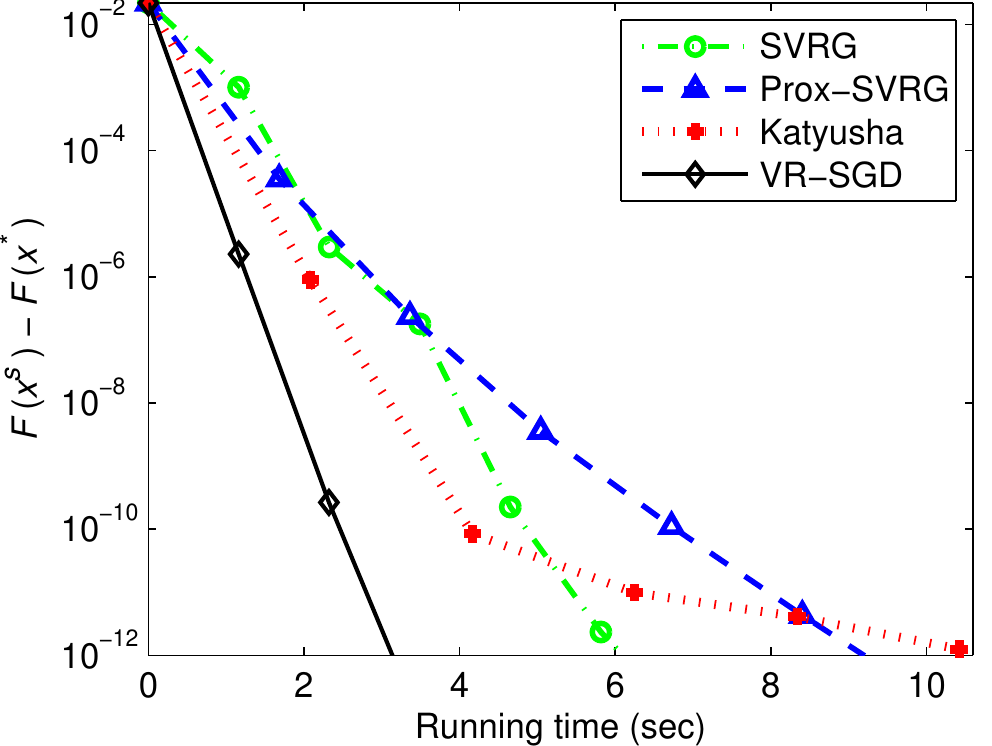}}\:\subfigure[$\lambda_{1}\!=\!10^{-5}$ and $\lambda_{2}\!=\!10^{-4}$]{\includegraphics[width=0.246\columnwidth]{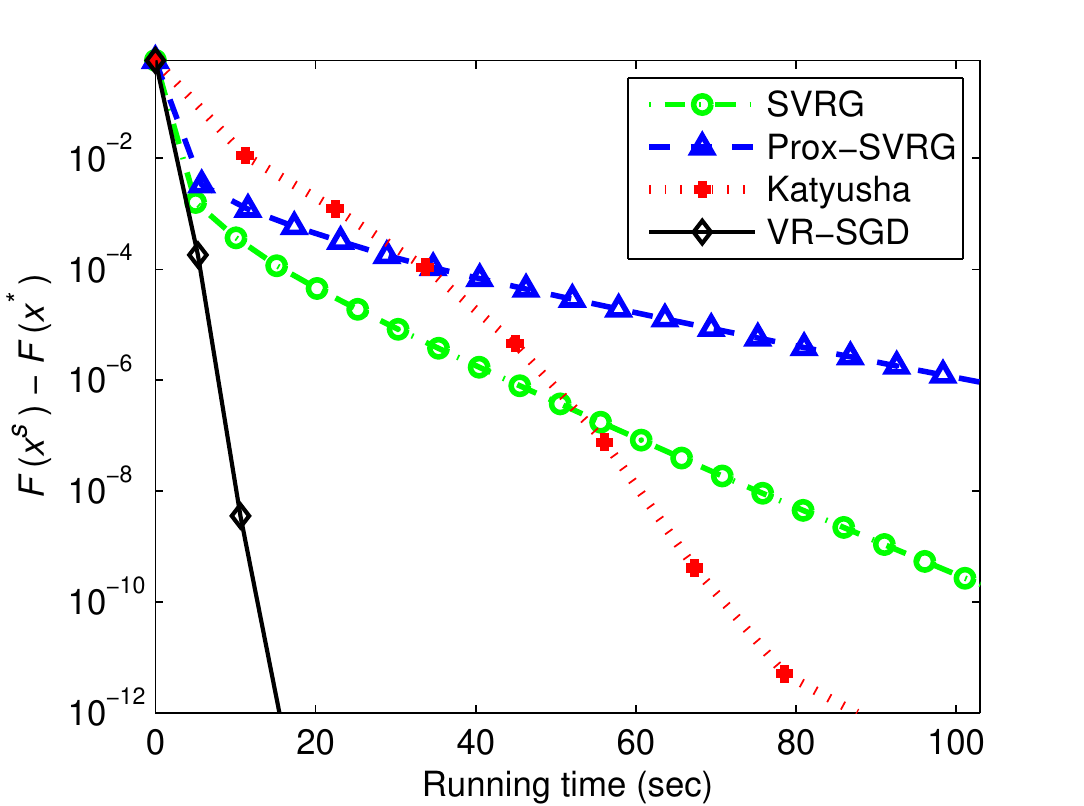}}
\vspace{1.6mm}

\includegraphics[width=0.246\columnwidth]{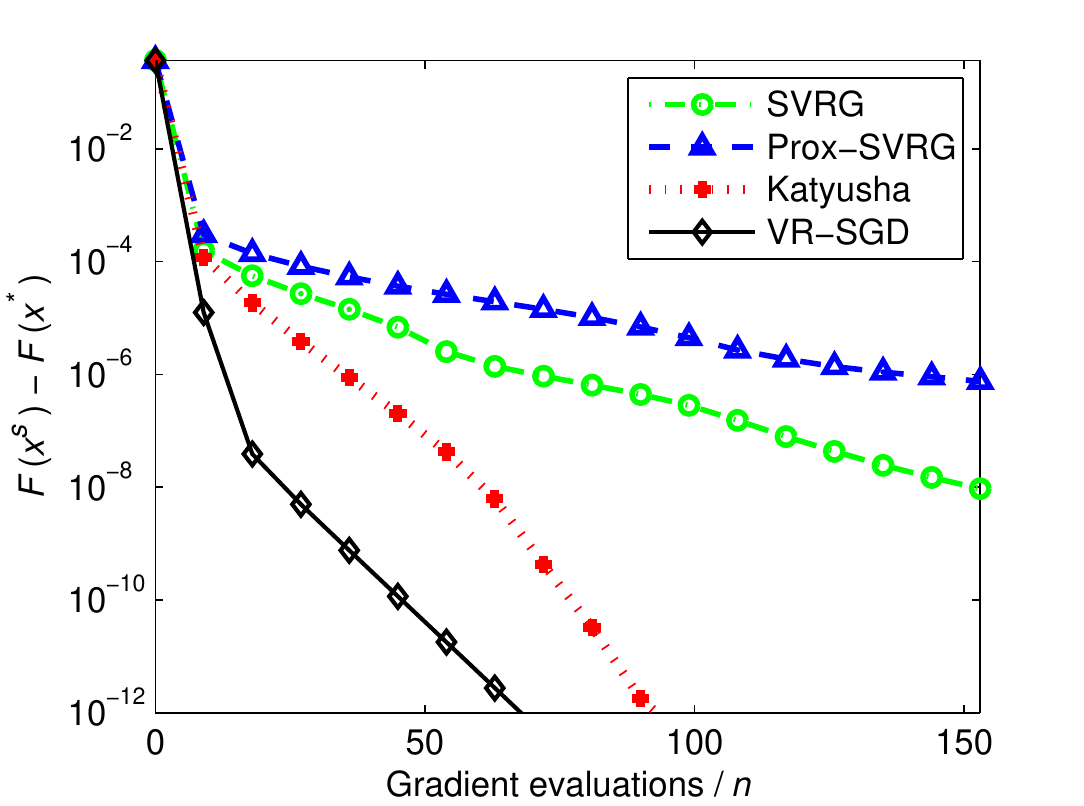}
\includegraphics[width=0.246\columnwidth]{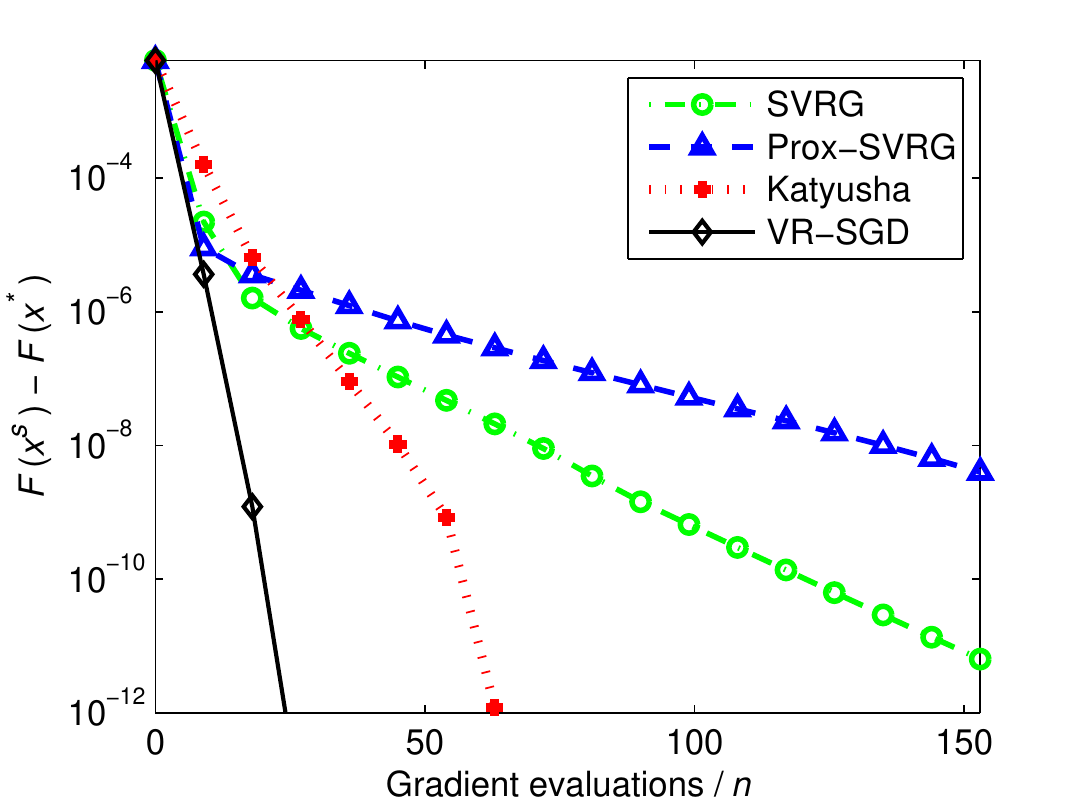}
\includegraphics[width=0.246\columnwidth]{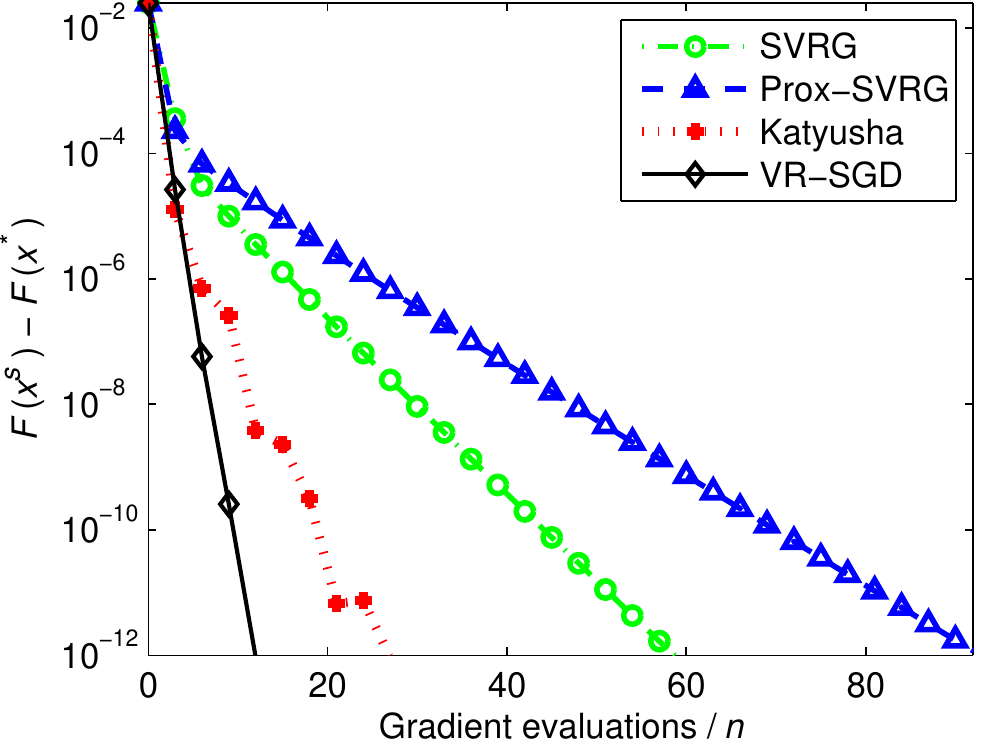}
\includegraphics[width=0.246\columnwidth]{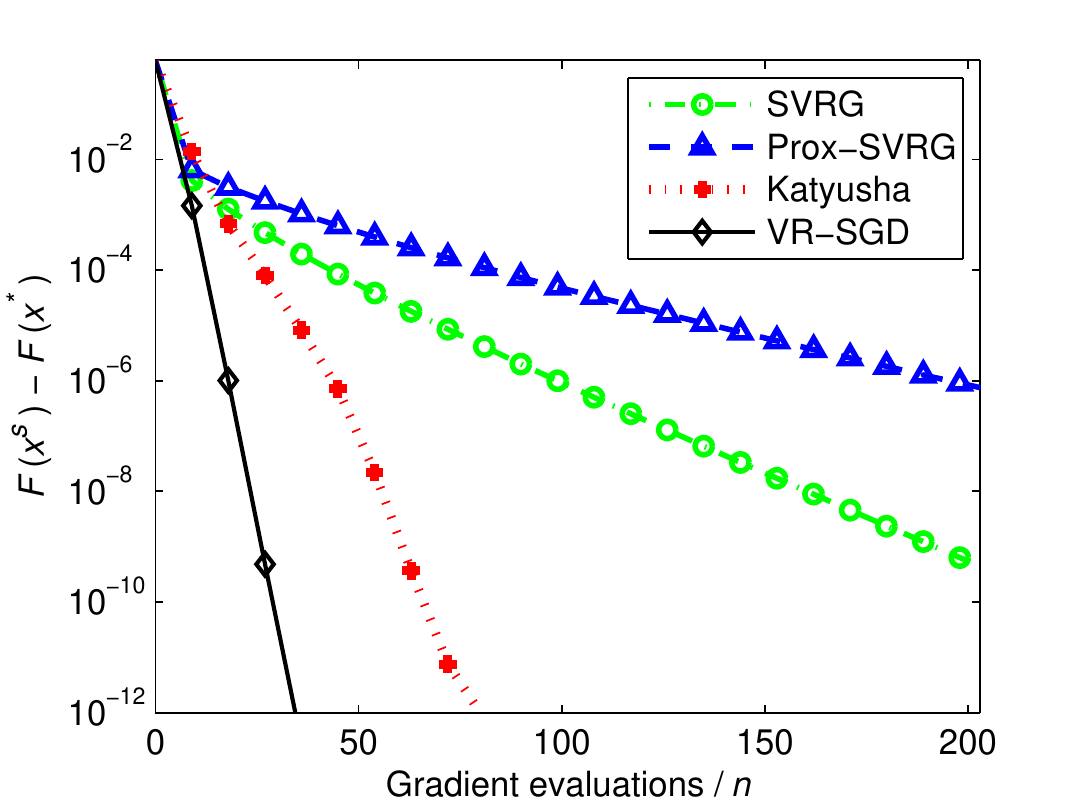}

\subfigure[$\lambda_{1}=10^{-6}$ \;and\; $\lambda_{2}=10^{-5}$]{\includegraphics[width=0.246\columnwidth]{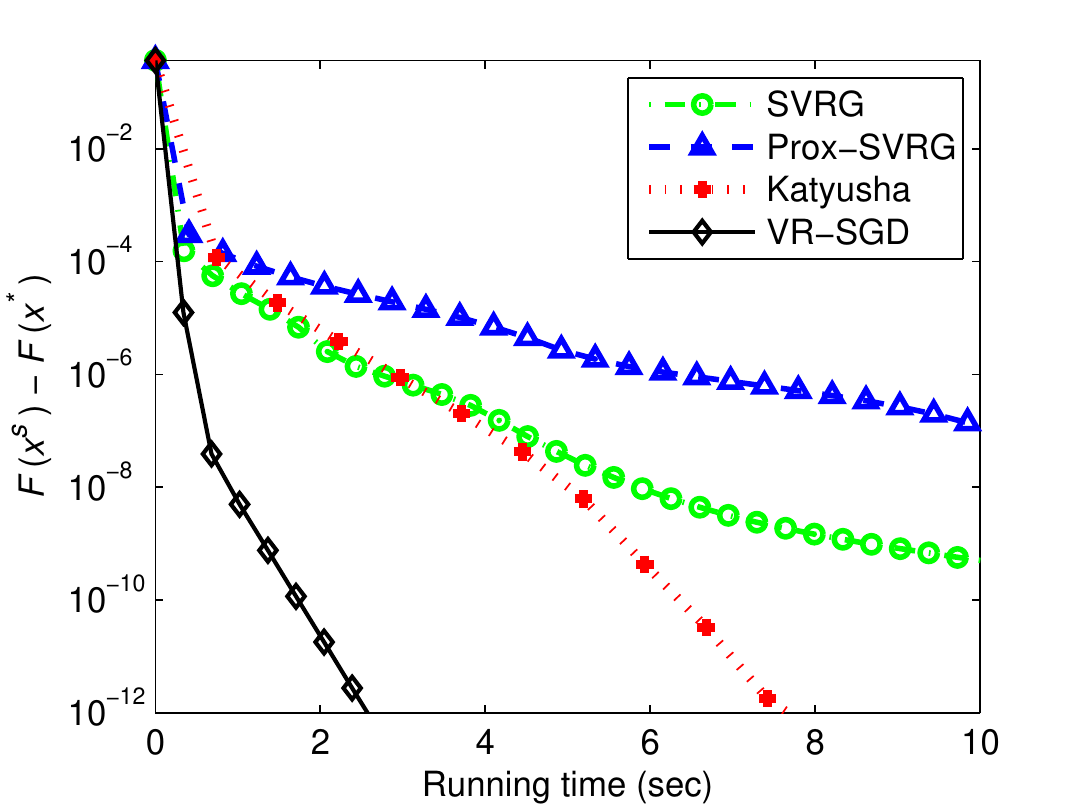}\:\includegraphics[width=0.246\columnwidth]{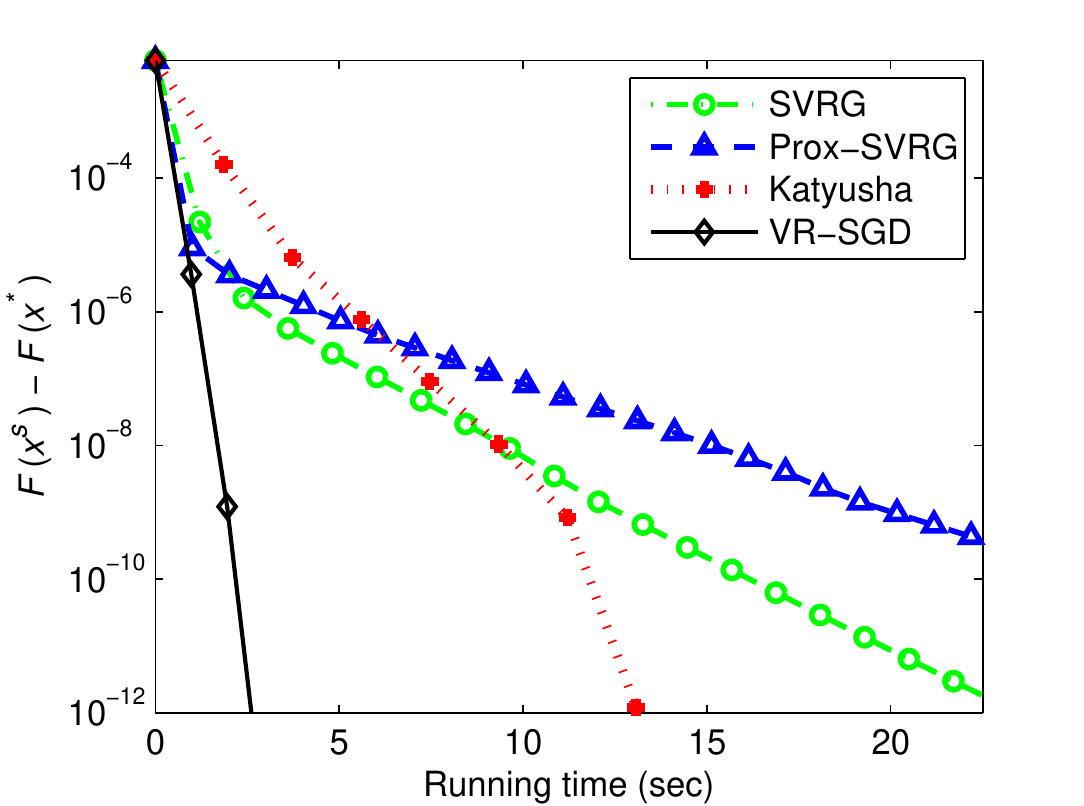}\:\includegraphics[width=0.246\columnwidth]{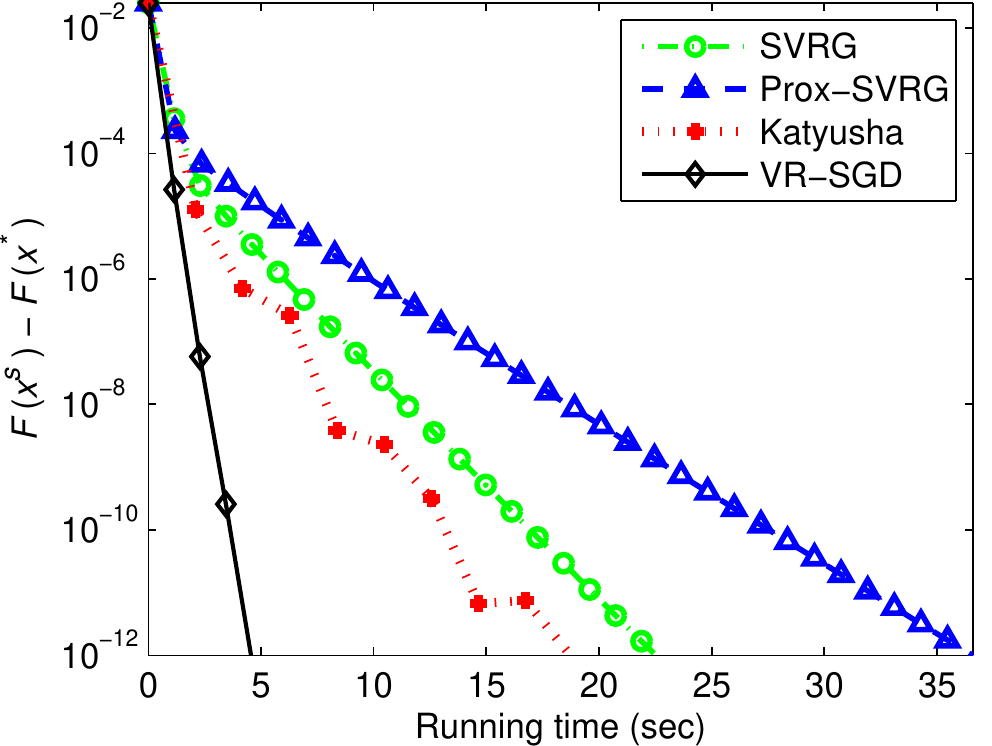}}\:\subfigure[$\lambda_{1}\!=\!10^{-5}$ and $\lambda_{2}\!=\!10^{-5}$]{\includegraphics[width=0.246\columnwidth]{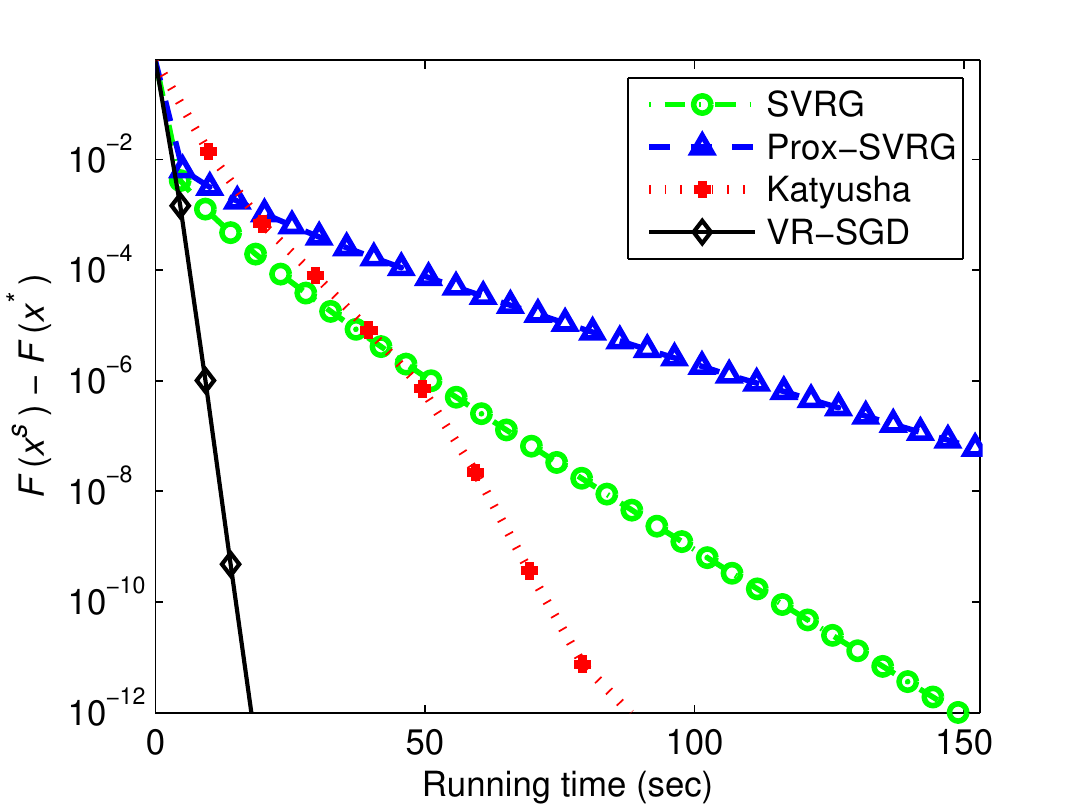}}
\vspace{-2.6mm}
\caption{Comparison of SVRG, Prox-SVRG~\cite{xiao:prox-svrg}, Katyusha~\cite{zhu:Katyusha}, and VR-SGD for solving elastic net regularized logistic regression problems on the four data sets: Adult (the first column), Protein (the sconced column), Covtype (the third column), and Sido0 (the last column). In each plot, the vertical axis shows the objective value minus the minimum, and the horizontal axis is the number of effective passes (top) or running time (bottom).}
\label{figs3}
\end{figure}

\begin{figure}[!th]
\centering
\includegraphics[width=0.326\columnwidth]{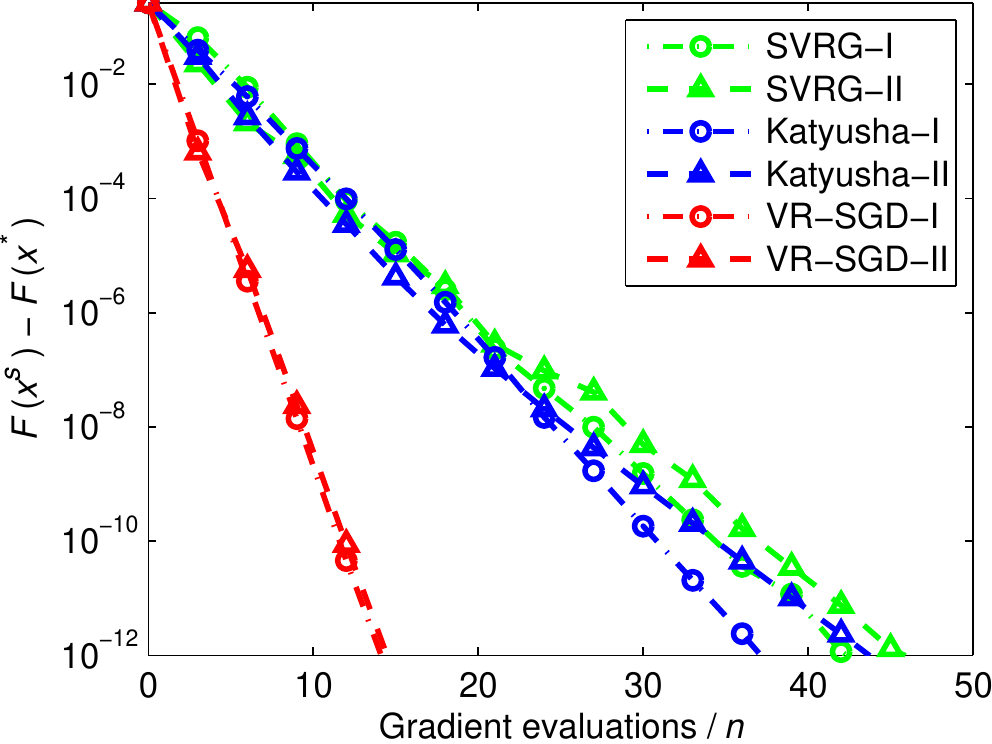}\,
\includegraphics[width=0.326\columnwidth]{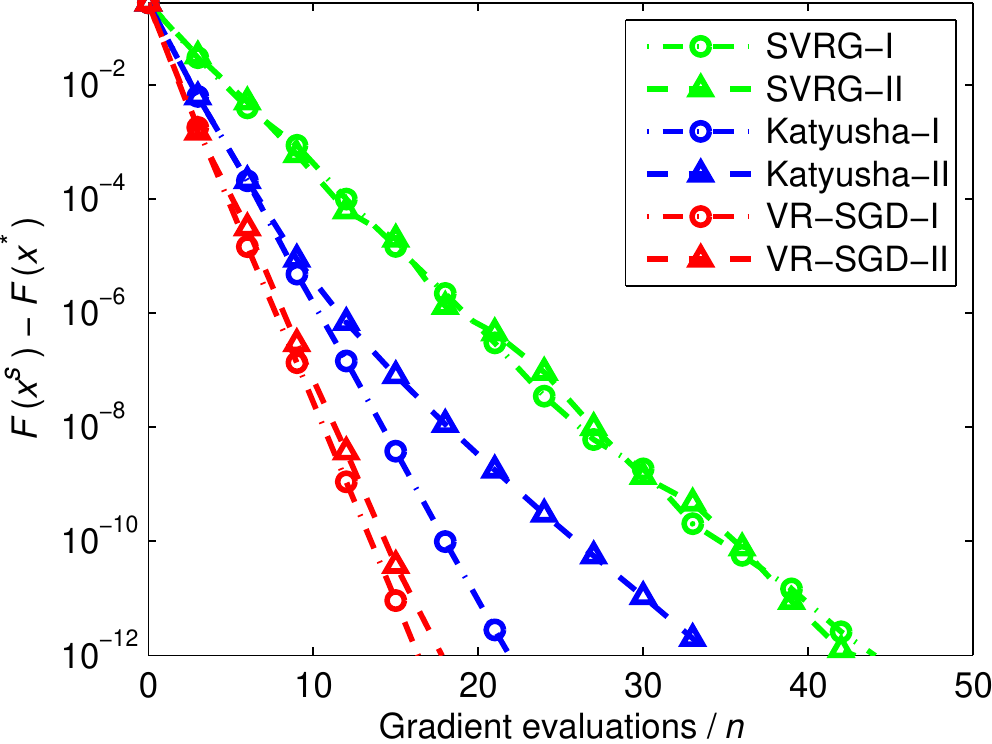}\,
\includegraphics[width=0.326\columnwidth]{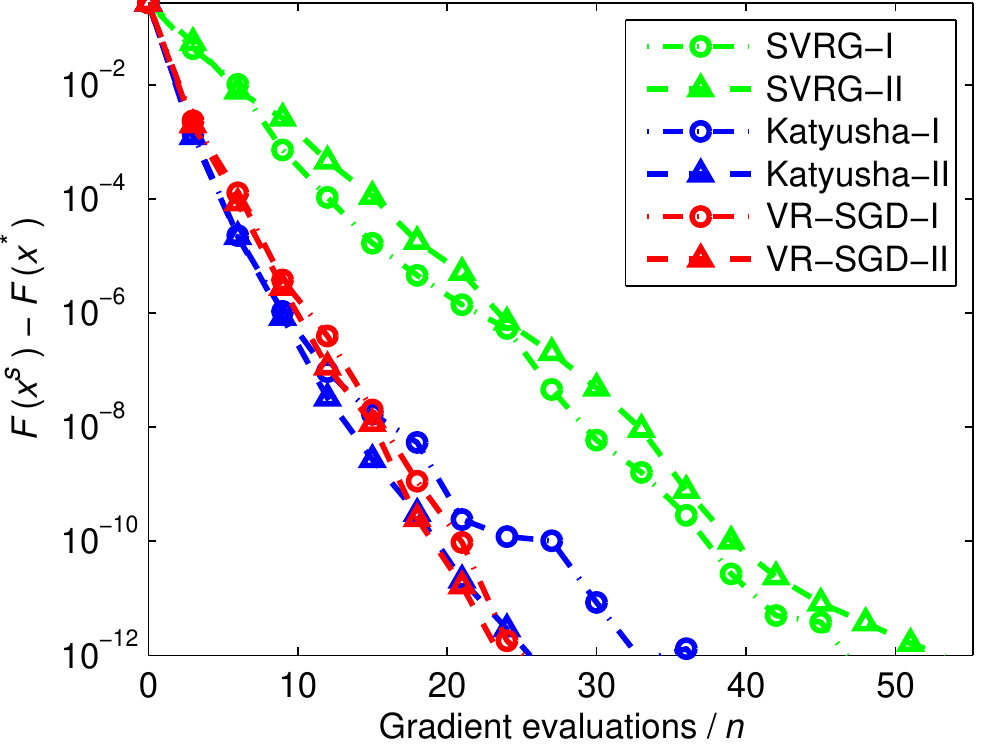}

\subfigure[$\lambda=10^{-3}$]{\includegraphics[width=0.326\columnwidth]{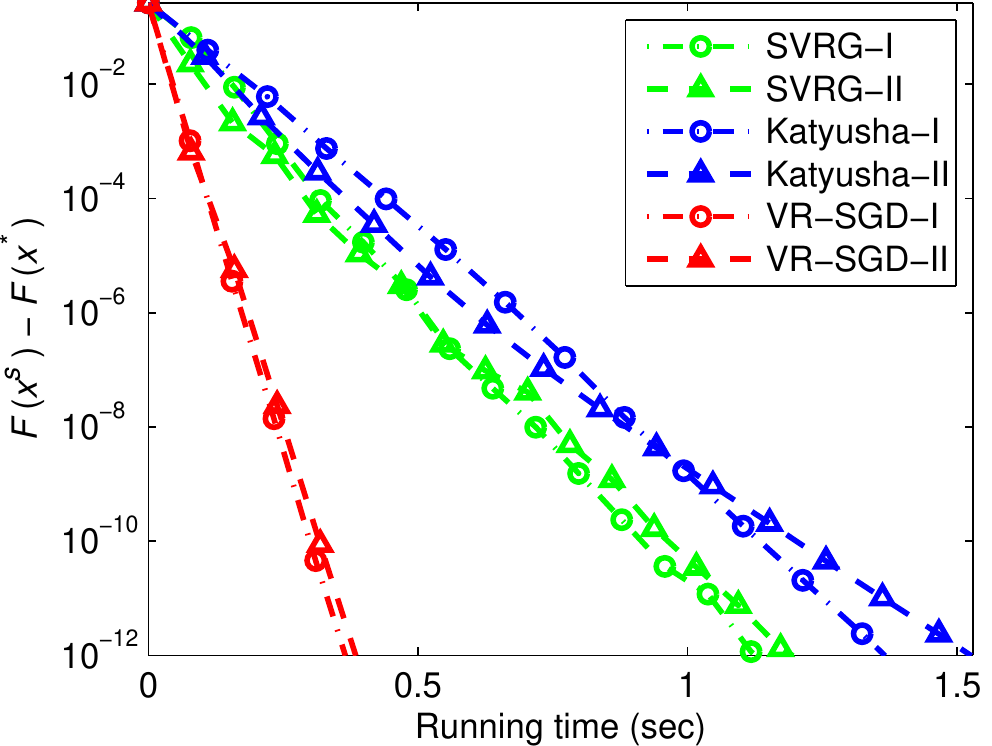}}\,
\subfigure[$\lambda=10^{-4}$]{\includegraphics[width=0.326\columnwidth]{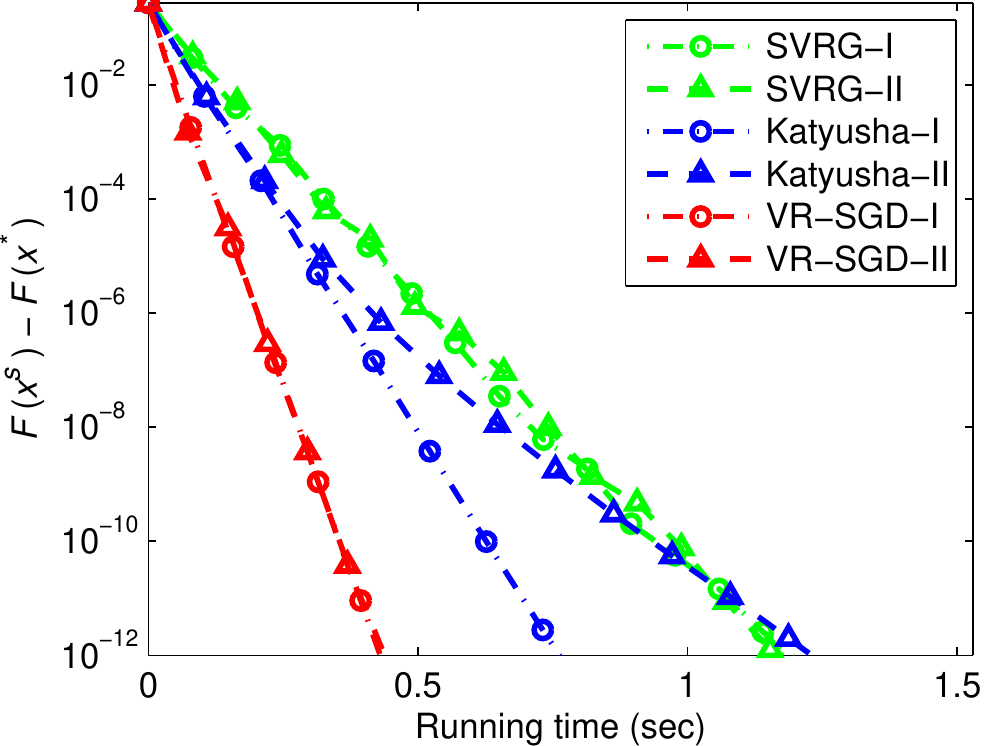}}\,
\subfigure[$\lambda=10^{-5}$]{\includegraphics[width=0.326\columnwidth]{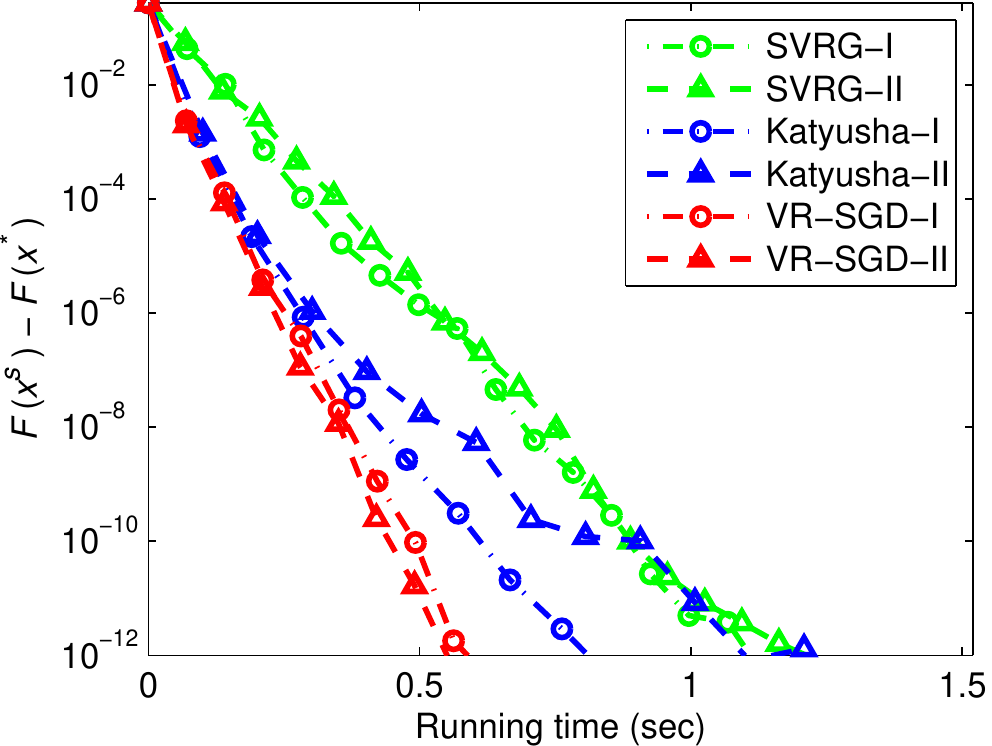}}
\vspace{1.6mm}

\includegraphics[width=0.326\columnwidth]{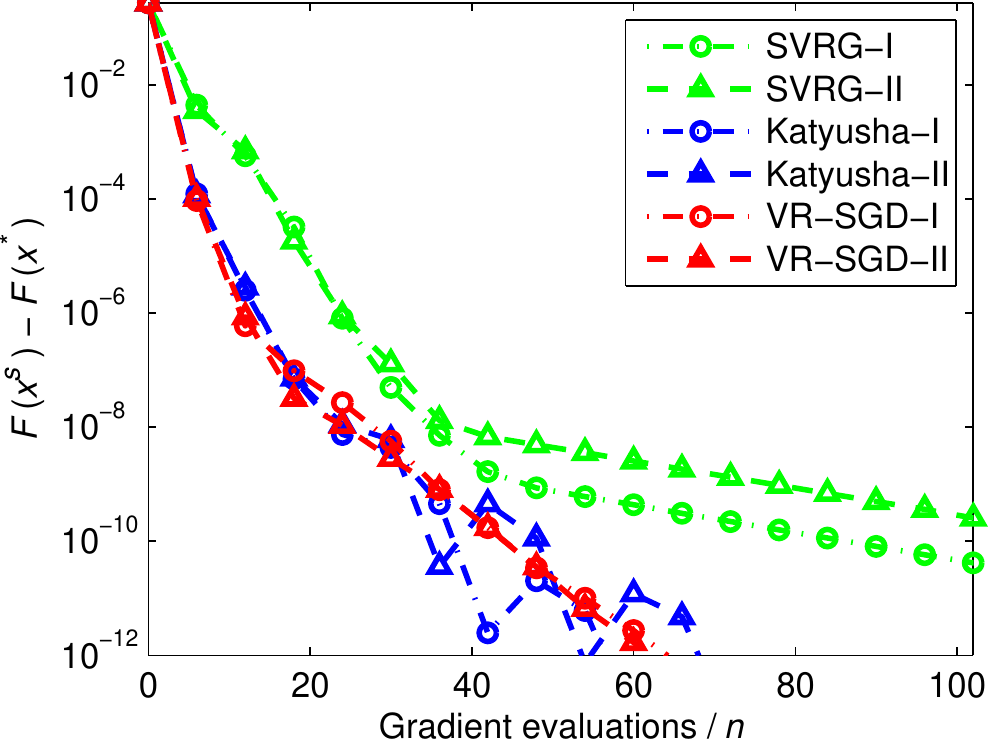}\,
\includegraphics[width=0.326\columnwidth]{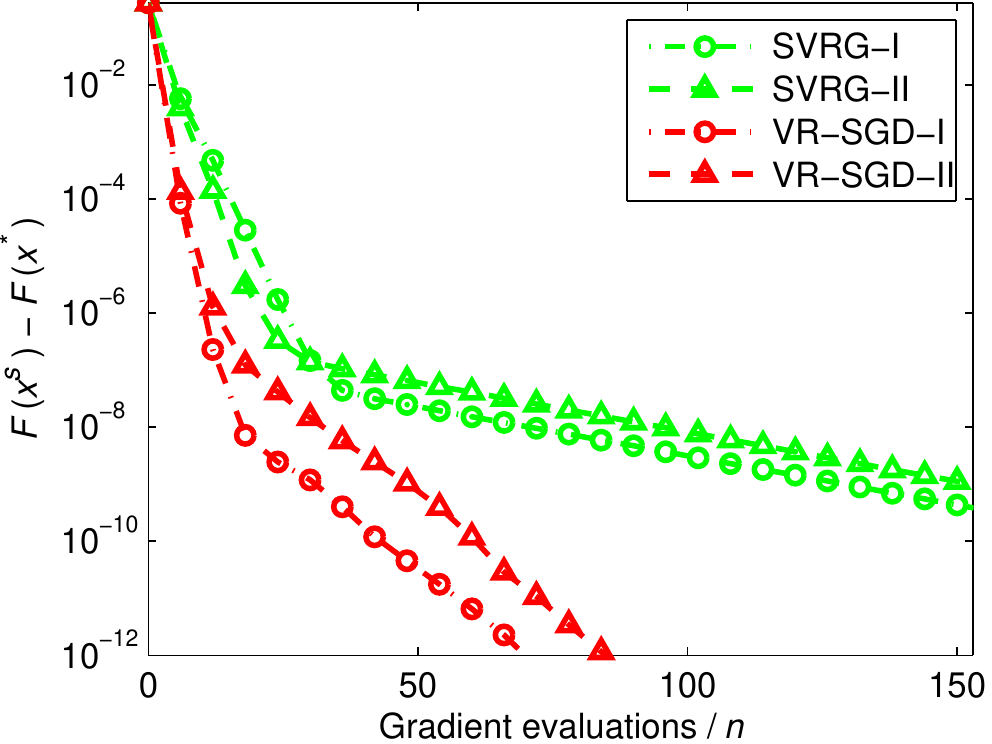}\,
\includegraphics[width=0.326\columnwidth]{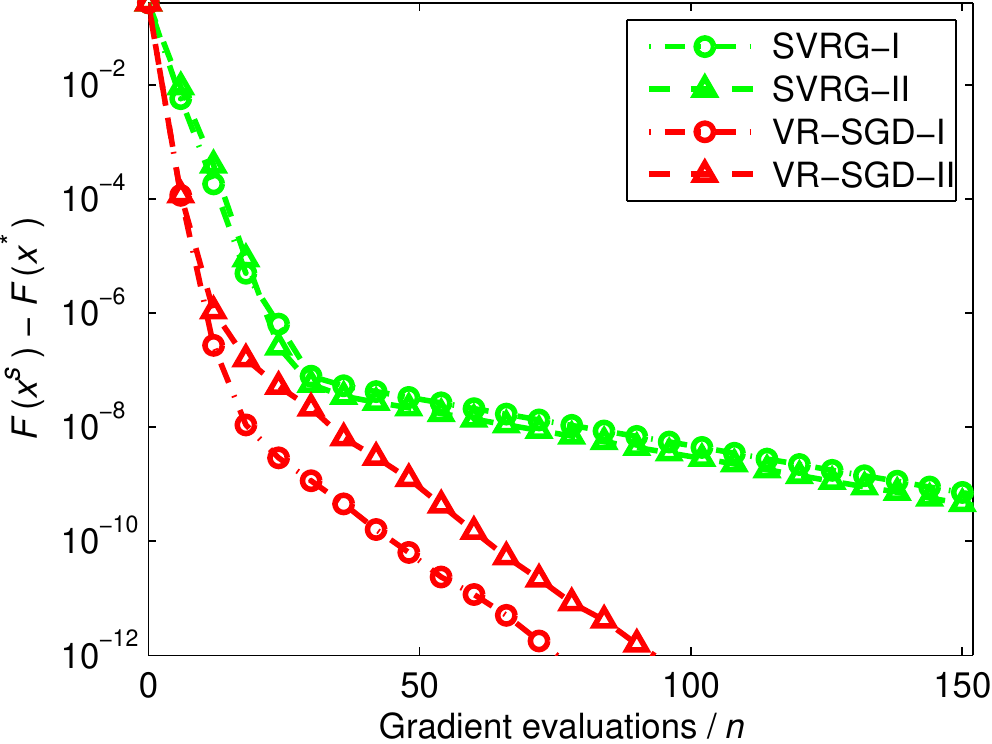}

\subfigure[$\lambda=10^{-6}$]{\includegraphics[width=0.326\columnwidth]{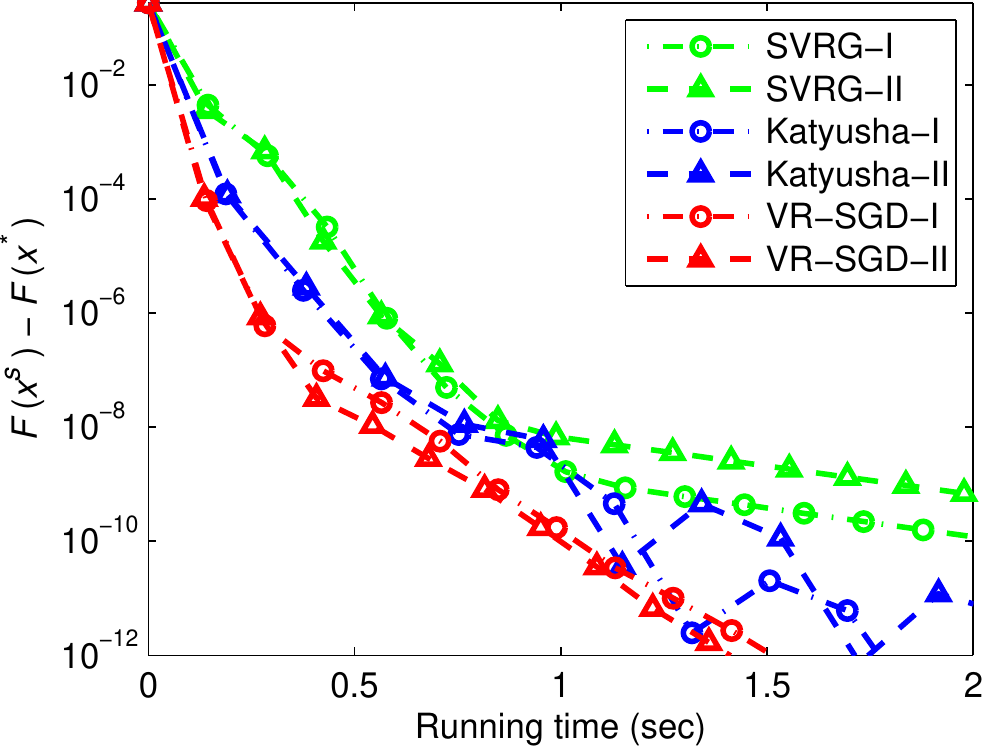}\label{figs4d}}\,
\subfigure[$\lambda=10^{-7}$]{\includegraphics[width=0.326\columnwidth]{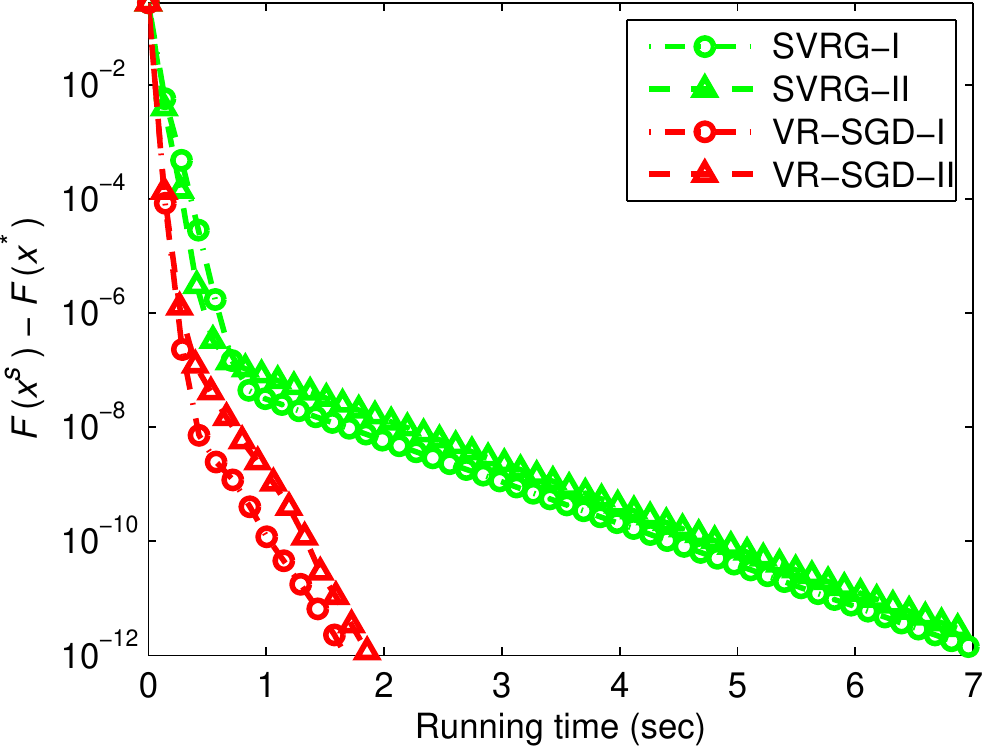}\label{figs4e}}\,
\subfigure[$\lambda=0$]{\includegraphics[width=0.326\columnwidth]{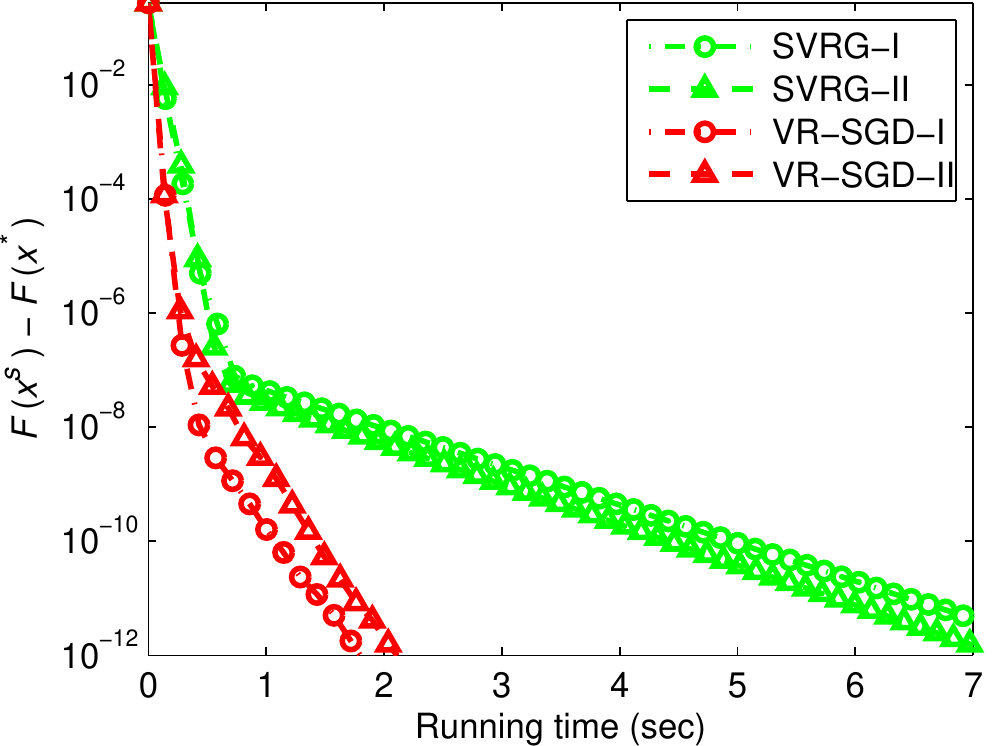}\label{figs4f}}
\caption{Comparison of SVRG~\cite{johnson:svrg}, Katyusha~\cite{zhu:Katyusha}, VR-SGD and their proximal versions for solving ridge regression problems with different regularization parameters on the Adult data set. In each plot, the vertical axis shows the objective value minus the minimum, and the horizontal axis is the number of effective passes (top) or running time (bottom).}
\label{figs4}
\end{figure}

\begin{figure}[!th]
\centering
\includegraphics[width=0.326\columnwidth]{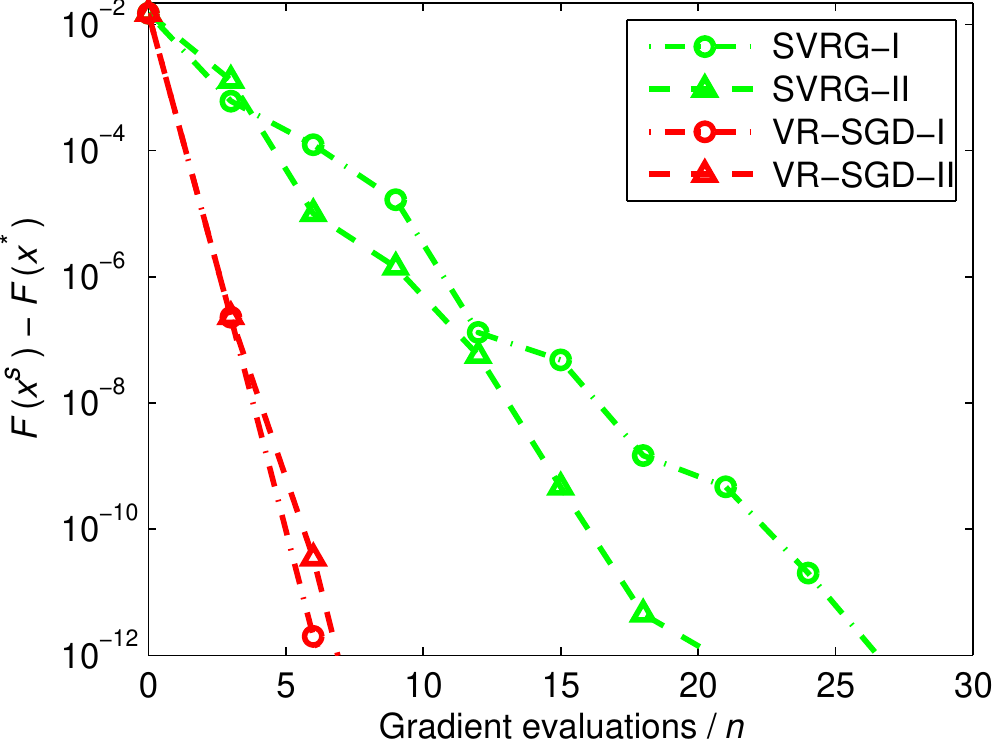}\,
\includegraphics[width=0.326\columnwidth]{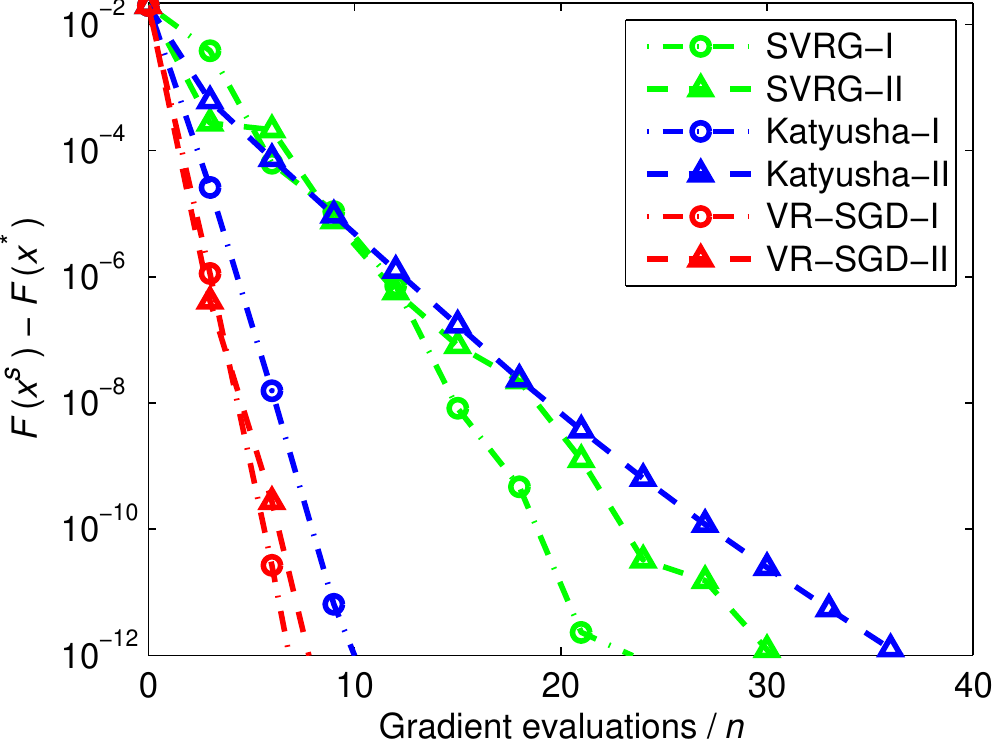}\,
\includegraphics[width=0.326\columnwidth]{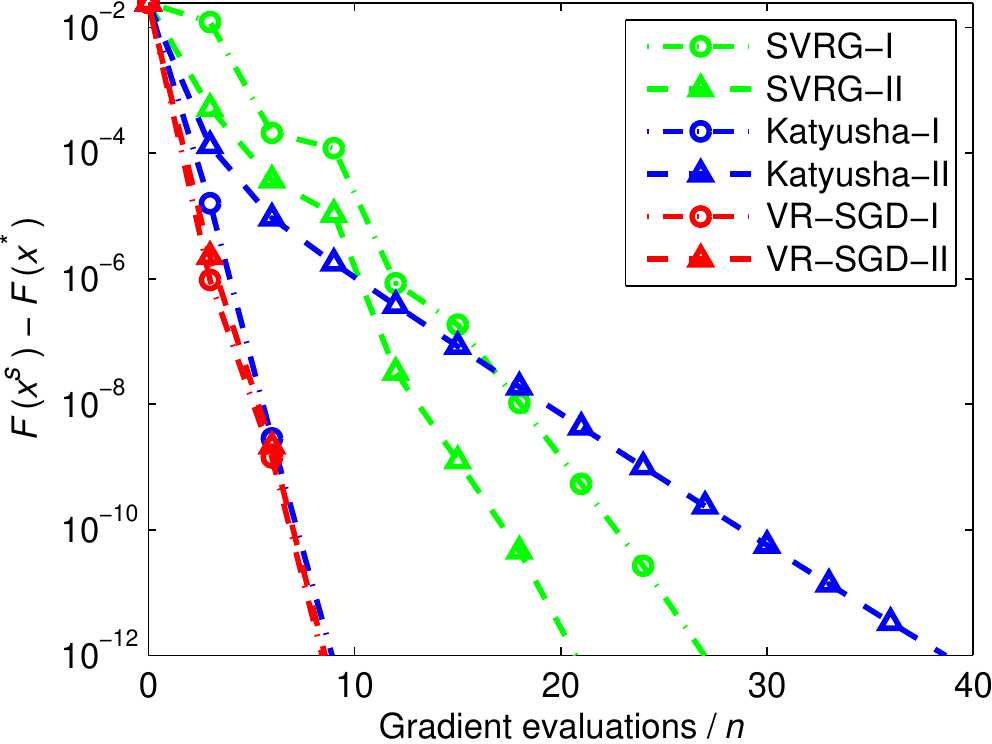}

\subfigure[$\lambda=10^{-3}$]{\includegraphics[width=0.326\columnwidth]{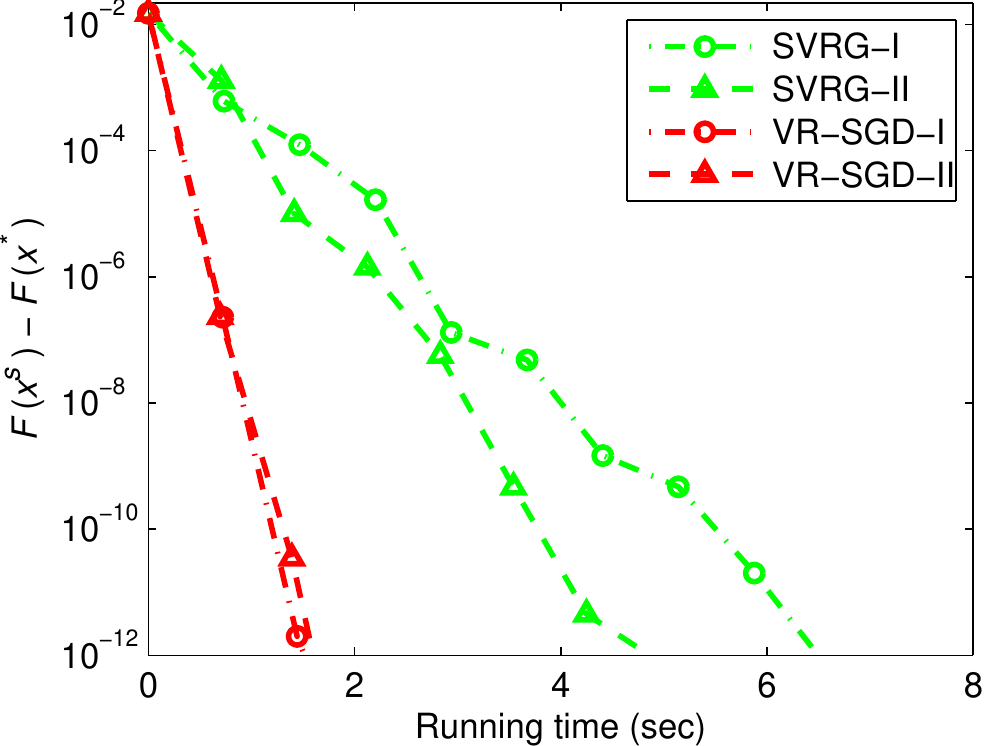}\label{figs5a}}\,
\subfigure[$\lambda=10^{-4}$]{\includegraphics[width=0.326\columnwidth]{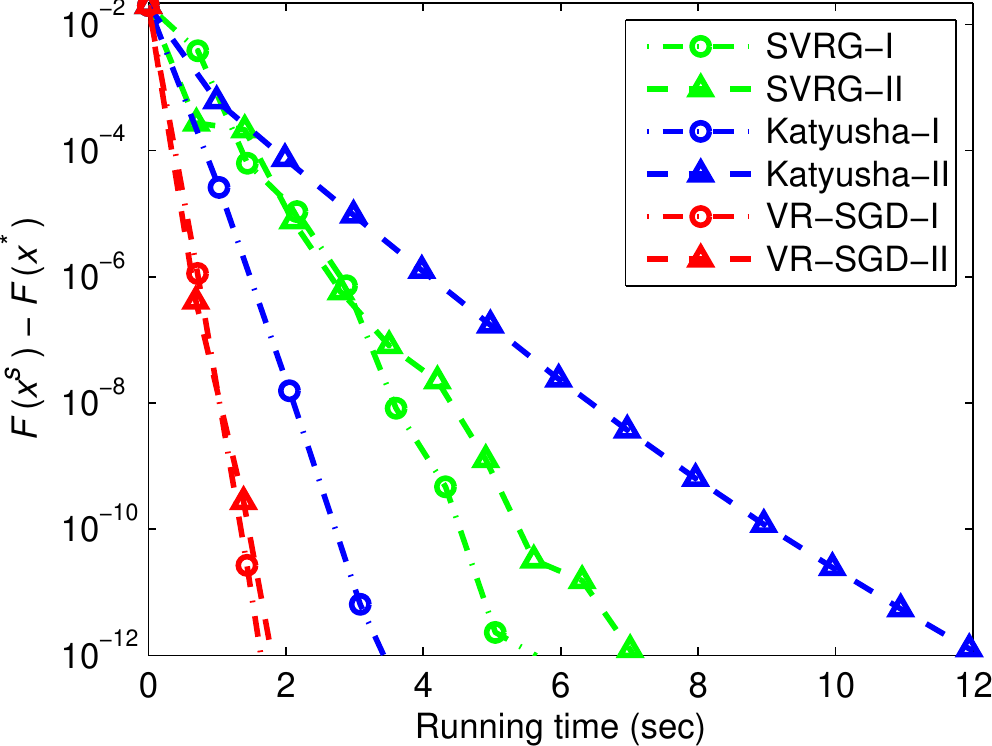}}\,
\subfigure[$\lambda=10^{-5}$]{\includegraphics[width=0.326\columnwidth]{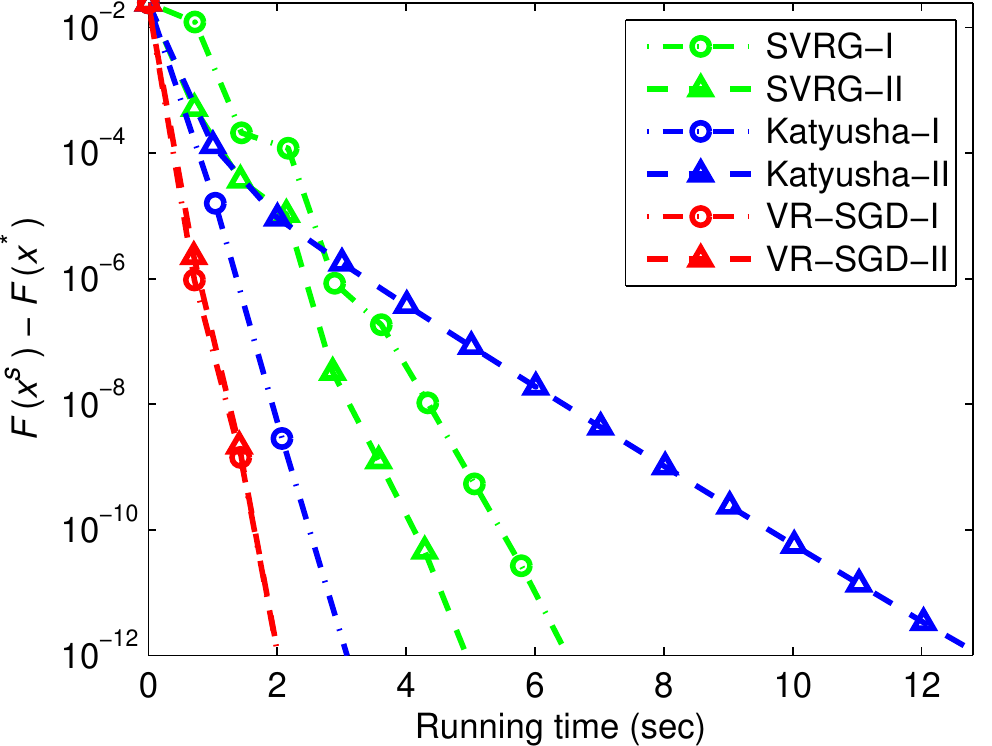}}
\vspace{1.6mm}

\includegraphics[width=0.326\columnwidth]{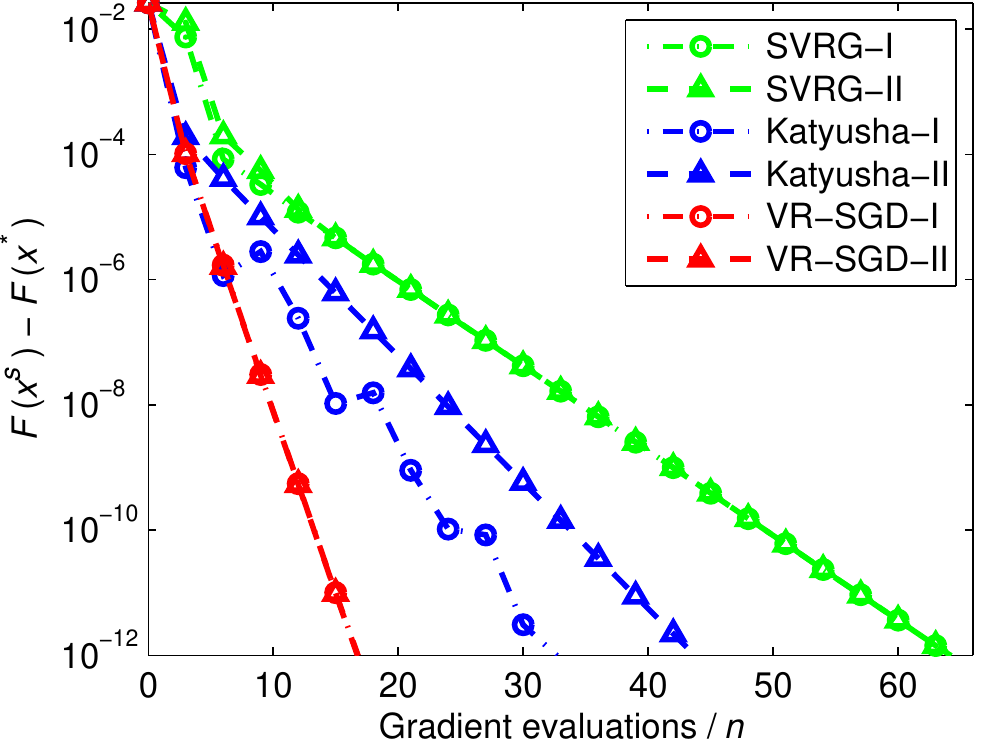}\,
\includegraphics[width=0.326\columnwidth]{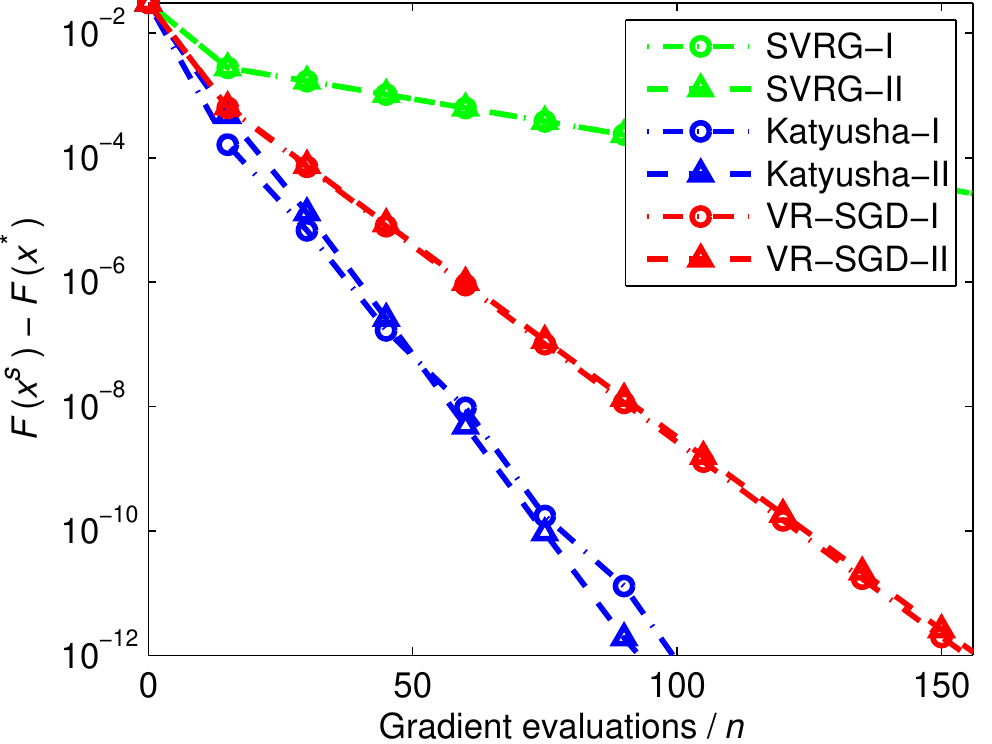}\,
\includegraphics[width=0.326\columnwidth]{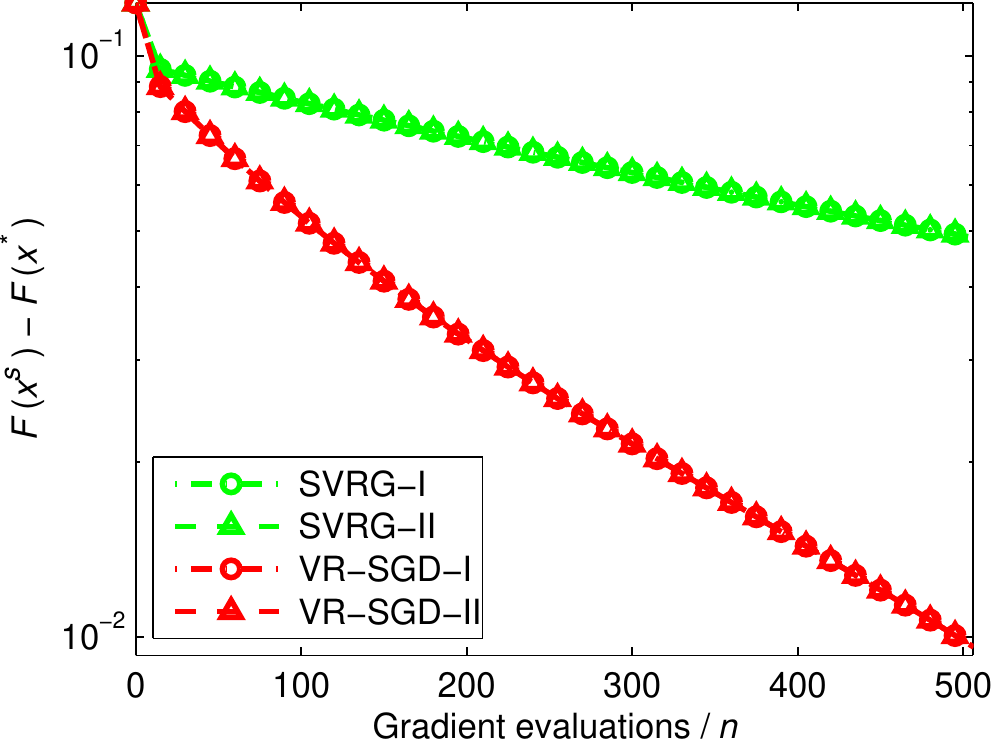}

\subfigure[$\lambda=10^{-6}$]{\includegraphics[width=0.326\columnwidth]{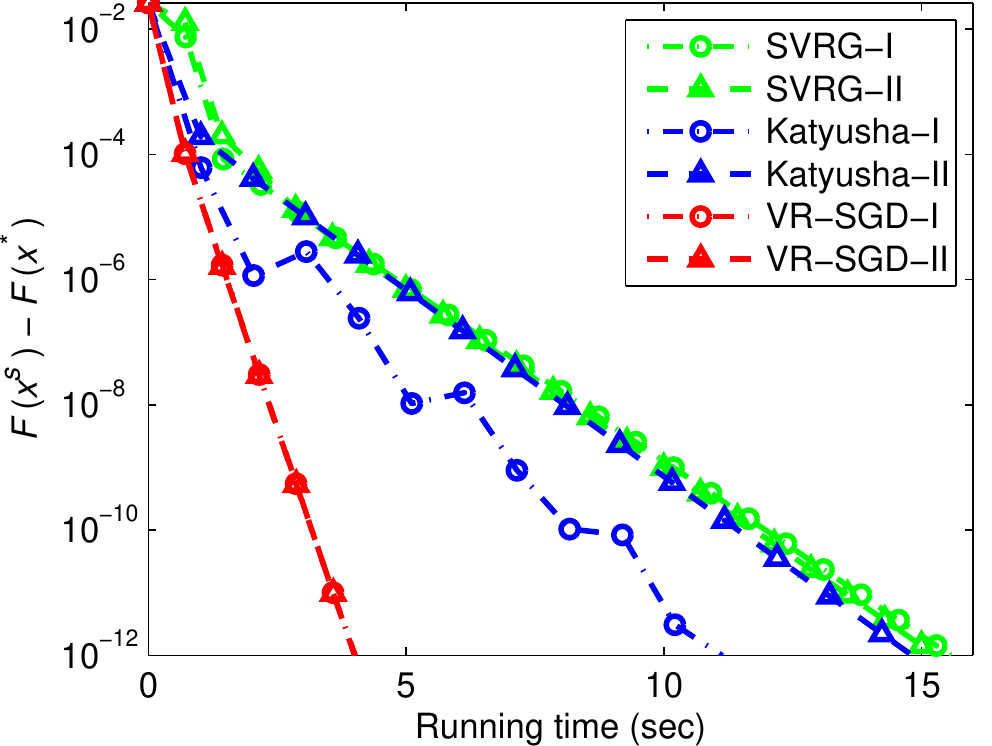}\label{figs5d}}\,
\subfigure[$\lambda=10^{-7}$]{\includegraphics[width=0.326\columnwidth]{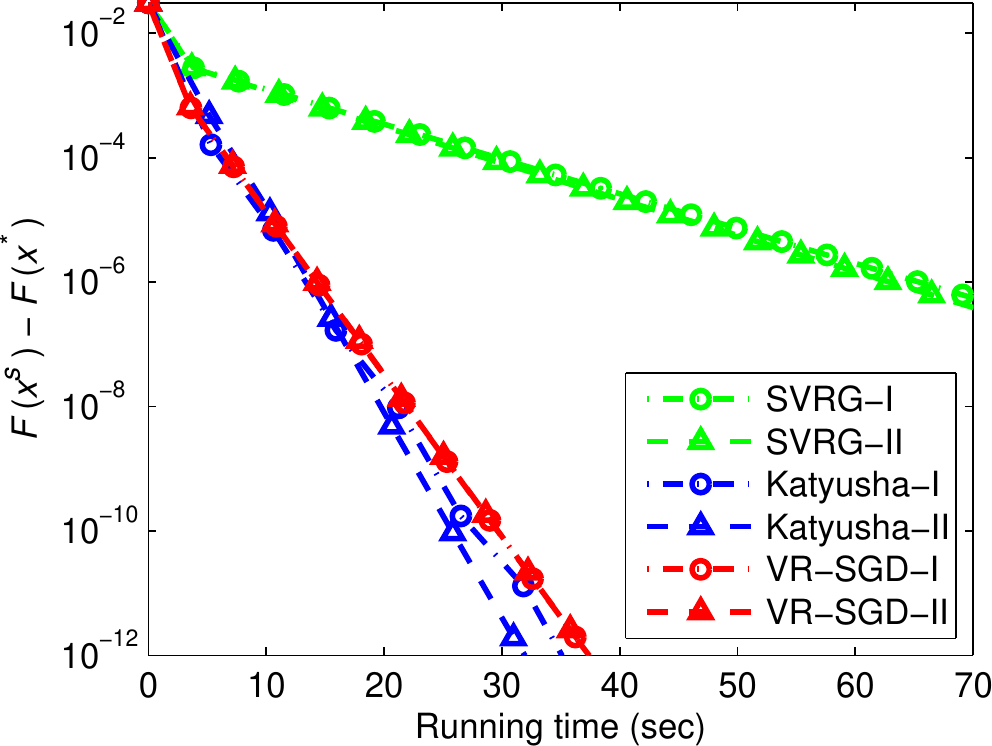}}\,
\subfigure[$\lambda=0$]{\includegraphics[width=0.326\columnwidth]{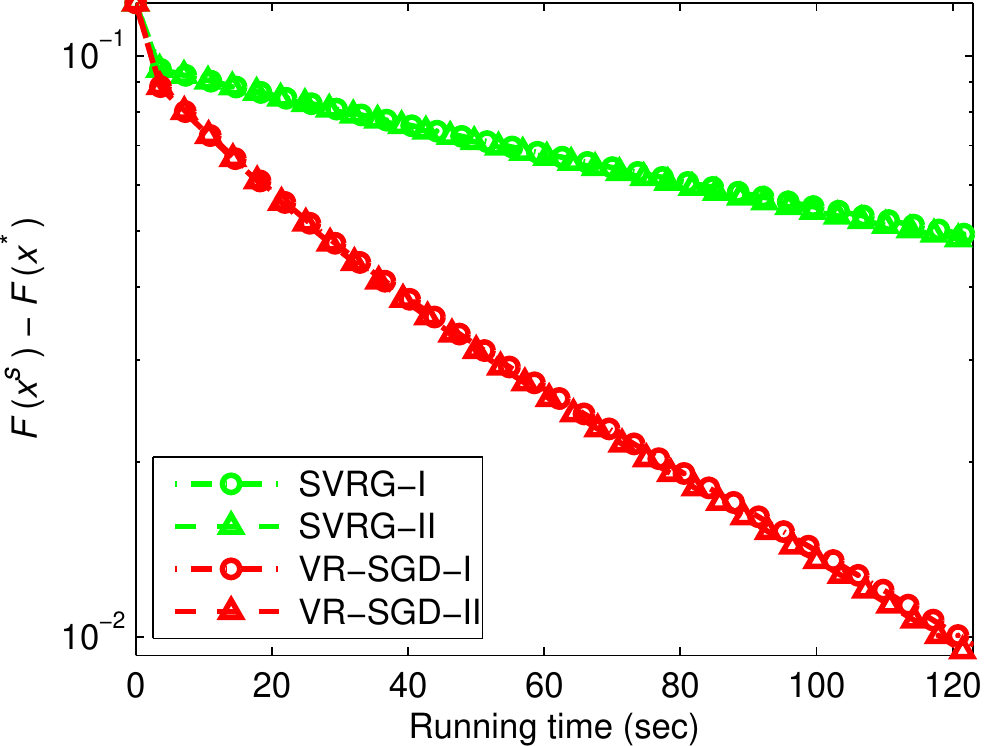}\label{figs5f}}
\caption{Comparison of SVRG~\cite{johnson:svrg}, Katyusha~\cite{zhu:Katyusha}, VR-SGD and their proximal versions for solving ridge regression problems with different regularization parameters on the Covtype data set. In each plot, the vertical axis shows the objective value minus the minimum, and the horizontal axis is the number of effective passes (top) or running time (bottom).}
\label{figs5}
\end{figure}

\begin{figure}[!th]
\centering
\subfigure[$\ell_{1}$-norm regularized logistic regression: Adult (left) \;and\; Covtype (right)]{
\includegraphics[width=0.486\columnwidth]{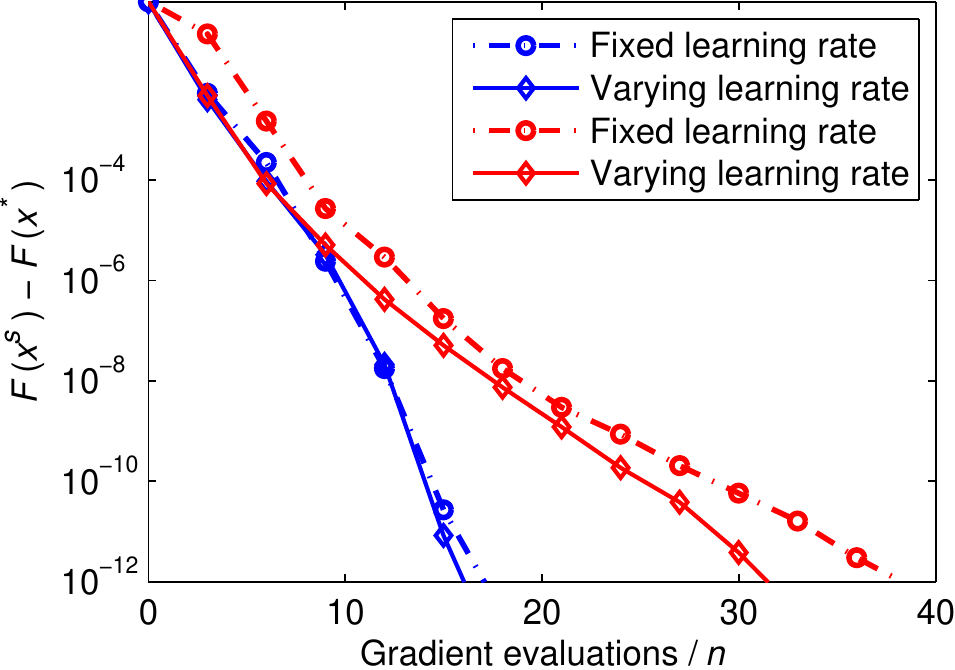}\;
\includegraphics[width=0.486\columnwidth]{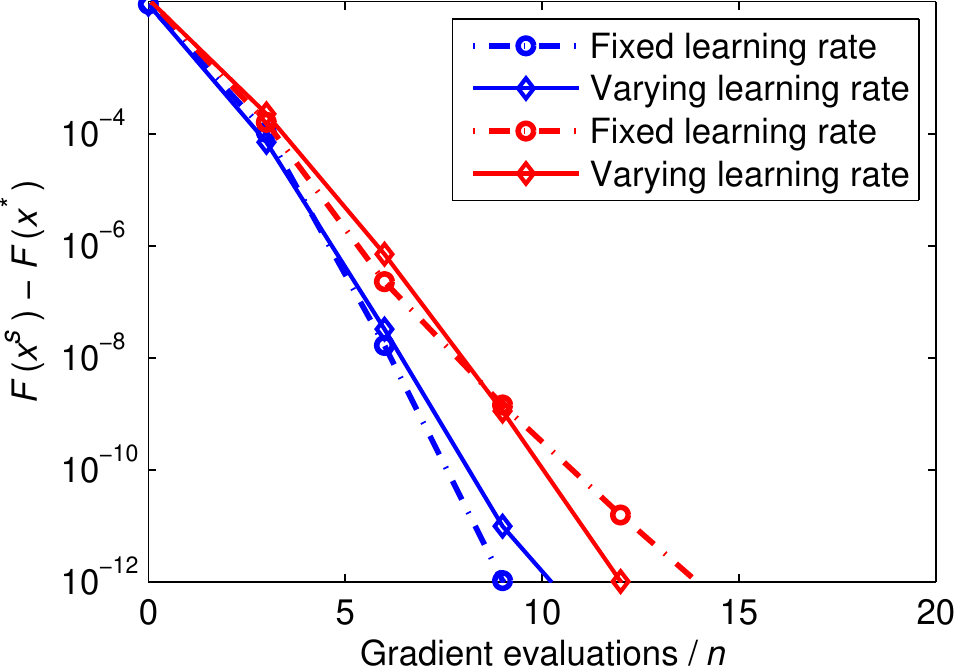}}
\vspace{0.6mm}

\subfigure[Lasso: Adult (left) \;and\; Covtype (right)]{
\includegraphics[width=0.486\columnwidth]{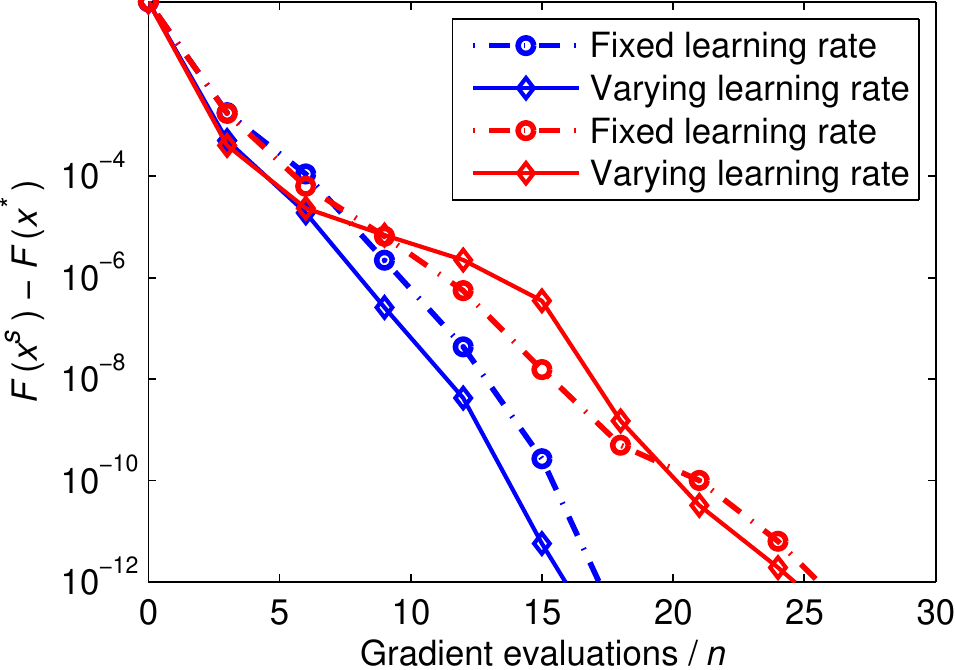}\;
\includegraphics[width=0.486\columnwidth]{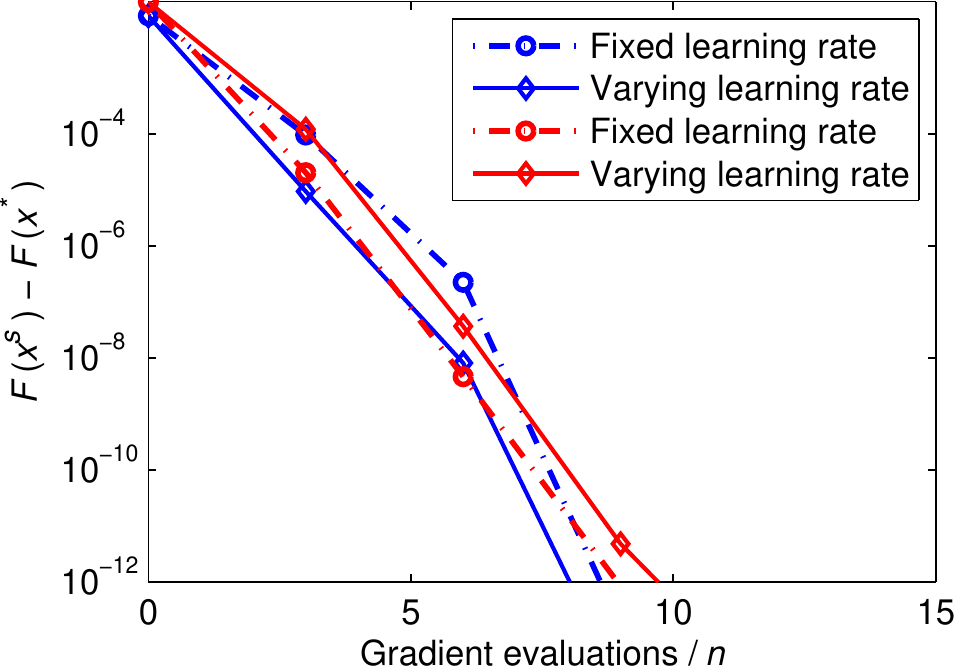}}
\caption{Comparison of Algorithm~\ref{alg3} with fixed and varying learning rates for solving $\ell_{1}$-norm (i.e., $\lambda\|x\|_{1}$) regularized logistic regression and Lasso problems with $\lambda=10^{-4}$ (blue lines) and $\lambda=10^{-5}$ (red lines, best viewed in colors). Note that the regularization parameter is set to $10^{-3}$ and $10^{-4}$ for solving Lasso problems on the Covtype data set.}
\label{figs6}
\end{figure}

\section{Conclusions}
In this paper, we proposed a simple variant of the original SVRG~\cite{johnson:svrg}, called variance reduced stochastic gradient descent (VR-SGD). Unlike the choices of the snapshot point and starting point in SVRG and its proximal variant, Prox-SVRG~\cite{xiao:prox-svrg}, the two points of each epoch in VR-SGD are set to the average and last iterate of the previous epoch, respectively. This setting allows us to use much larger learning rates than SVRG, e.g., $3/(7L)$ for VR-SGD vs.\ $1/(10L)$ for SVRG, and also makes VR-SGD more robust in terms of learning rate selection. Different from existing proximal stochastic methods such as Prox-SVRG~\cite{xiao:prox-svrg} and Katyusha~\cite{zhu:Katyusha}, we designed two different update rules for smooth and non-smooth objective functions, respectively, which makes our VR-SGD method suitable for non-smooth and/or non-strongly convex problems without using any reduction techniques as in~\cite{zhu:Katyusha,zhu:box}. In contrast, SVRG and Prox-SVRG cannot directly solve non-strongly convex objectives~\cite{zhu:univr}. Furthermore, our empirical results showed that for smooth problems stochastic gradient update rules as in (\ref{equ21}) are better choices than proximal stochastic gradient update formulas as in \eqref{equ14}.

On the practical side, the choices of the snapshot and starting points make VR-SGD significantly faster than its counterparts, SVRG and Prox-SVRG. On the theoretical side, the setting also makes our convergence analysis more challenging. We analyzed the convergence properties of VR-SGD for strongly convex objective functions, which show that VR-SGD attains a linear convergence rate. Moreover, we also provided the convergence guarantees of VR-SGD for non-strongly convex problems, which show that VR-SGD achieves a sub-linear convergence rate. All these results imply that VR-SGD is guaranteed to have a similar convergence rate with other variance reduced stochastic methods such as SVRG and Prox-SVRG, and a slower theoretical rate than accelerated methods such as Katyusha~\cite{zhu:Katyusha}. Nevertheless, various experimental results show that VR-SGD significantly outperforms than SVRG and Prox-SVRG, and is still much better than the best known stochastic method, Katyusha~\cite{zhu:Katyusha}.

\section*{Appendix A: Proof of Lemma~\ref{lemm2}}
Before proving Lemma~\ref{lemm2}, we first give and prove the following lemma.

\begin{lemma}\label{lemm11}
Suppose each $f_{i}(\cdot)$ is $L$-smooth, and let $x^{*}$ be the optimal solution of Problem \eqref{equ01} when $g(x)\!\equiv\!0$, then we have
\begin{displaymath}
\mathbb{E}\!\left[\left\|\nabla\! f_{i}(x)-\nabla\! f_{i}(x^{*})\right\|^{2}\right]\leq 2L\left[f(x)-f(x^{*})\right].
\end{displaymath}
\end{lemma}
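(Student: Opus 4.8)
The plan is to reduce the claim to the standard ``gradient-domination'' consequence of smoothness, applied to a suitably shifted version of each $f_i$. First I would define, for each $i$, the auxiliary function
\begin{displaymath}
h_i(x):=f_i(x)-f_i(x^{*})-\langle\nabla f_i(x^{*}),\,x-x^{*}\rangle.
\end{displaymath}
Since $f_i$ is convex and $L$-smooth, each $h_i$ inherits both properties, and moreover $\nabla h_i(x)=\nabla f_i(x)-\nabla f_i(x^{*})$, so that $\nabla h_i(x^{*})=0$ and $x^{*}$ is the global minimizer of $h_i$. This shift is the crucial device: it turns the difference of gradients appearing on the left-hand side into the gradient of a convex smooth function whose minimum value is $h_i(x^{*})=0$.

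Next I would establish the key smoothness inequality: for any $L$-smooth convex function $h$ with minimizer $x^{*}$,
\begin{displaymath}
\frac{1}{2L}\|\nabla h(x)\|^{2}\leq h(x)-h(x^{*}).
\end{displaymath}
This follows from the descent lemma $h(y)\leq h(x)+\langle\nabla h(x),\,y-x\rangle+\tfrac{L}{2}\|y-x\|^{2}$ by choosing $y=x-\tfrac{1}{L}\nabla h(x)$, which yields $h(x^{*})\leq h(y)\leq h(x)-\tfrac{1}{2L}\|\nabla h(x)\|^{2}$. Applying this to $h=h_i$ and recalling $h_i(x^{*})=0$ and $\nabla h_i(x)=\nabla f_i(x)-\nabla f_i(x^{*})$ gives, for every $i$,
\begin{displaymath}
\frac{1}{2L}\|\nabla f_i(x)-\nabla f_i(x^{*})\|^{2}\leq f_i(x)-f_i(x^{*})-\langle\nabla f_i(x^{*}),\,x-x^{*}\rangle.
\end{displaymath}

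Finally I would average over $i$ and pass to the expectation under uniform sampling, so that $\mathbb{E}[\,\cdot\,]=\tfrac{1}{n}\sum_{i=1}^{n}(\cdot)$. The right-hand side telescopes into $f(x)-f(x^{*})-\langle\nabla f(x^{*}),\,x-x^{*}\rangle$, and since $x^{*}$ minimizes $f$ in the case $g(\cdot)\equiv0$, we have $\nabla f(x^{*})=0$, so the inner-product term vanishes and we are left with $f(x)-f(x^{*})$. Multiplying through by $2L$ gives exactly the stated bound. I do not expect a genuine obstacle here; the only step requiring care is the smoothness inequality above, where one must remember to invoke global optimality of $x^{*}$ (i.e.\ $h_i(x^{*})\leq h_i(y)$ for the specific probe point $y$) rather than merely stationarity, and to use the optimality condition $\nabla f(x^{*})=0$ for the averaged function at the end to discard the cross term.
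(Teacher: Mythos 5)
Your proposal is correct and follows essentially the same route as the paper's proof: the identical per-component inequality $\|\nabla f_{i}(x)-\nabla f_{i}(x^{*})\|^{2}\leq 2L\left[f_{i}(x)-f_{i}(x^{*})-\langle\nabla f_{i}(x^{*}),\,x-x^{*}\rangle\right]$, averaged over $i$ under uniform sampling, with the cross term eliminated by the optimality condition $\nabla f(x^{*})=0$. The only difference is that the paper invokes this inequality by citation (Theorem 2.1.5 of \cite{nesterov:co} and Lemma 3.4 of \cite{xiao:prox-svrg}) whereas you derive it from the descent lemma applied to the shifted function $h_{i}$, which is precisely the standard proof of the cited result.
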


\begin{proof}
Following Theorem 2.1.5 in~\cite{nesterov:co} and Lemma 3.4~\cite{xiao:prox-svrg}, we have
\begin{displaymath}
\begin{split}
\left\|\nabla\! f_{i}(x)-\nabla\! f_{i}(x^{*})\right\|^{2}
\leq2L\left[f_{i}(x)-f_{i}(x^{*})-\left\langle \nabla\! f_{i}(x^{*}),\;x-x^{*}\right\rangle\right].
\end{split}
\end{displaymath}
Summing the above inequality over $i=1,\ldots,n$, we obtain
\begin{displaymath}
\begin{split}
\mathbb{E}\!\left[\left\|\nabla\! f_{i}(x)\!-\!\nabla\! f_{i}(x^{*})\right\|^{2}\right]\!=\!\frac{1}{n}\!\sum^{n}_{i=1}\!\left\|\nabla\! f_{i}(x)\!-\!\nabla\! f_{i}(x^{*})\right\|^{2}
\leq\, 2L\left[f(x)-f(x^{*})-\left\langle \nabla\! f(x^{*}),\;x-x^{*}\right\rangle\right].
\end{split}
\end{displaymath}
By the optimality of $x^{*}$, i.e., $x^{*}=\mathop{\arg\min}_{x} f(x)$, we have $\nabla\! f(x^{*})=0$. Then
\begin{displaymath}
\begin{split}
\mathbb{E}\!\left[\left\|\nabla\! f_{i}(x)-\nabla\! f_{i}(x^{*})\right\|^{2}\right]
\leq&\, 2L\left[f(x)-f(x^{*})-\left\langle \nabla\! f(x^{*}),x-x^{*}\right\rangle\right]\\
=&\,2L\left[f(x)-f(x^{*})\right].
\end{split}
\end{displaymath}
\end{proof}

\textbf{Proof of Lemma~\ref{lemm2}:}
\begin{proof}
\begin{equation*}
\begin{split}
\mathbb{E}\!\left[\|\widetilde{\nabla}\! f_{i^{s}_{k}}(x^{s}_{k})-\nabla\! f(x^{s}_{k})\|^{2}\right]=\,&\mathbb{E}\!\left[\left\|\nabla\! f_{i^{s}_{k}}(x^{s}_{k})-\nabla\! f_{i^{s}_{k}}(\widetilde{x}^{s-\!1})+\nabla\! f(\widetilde{x}^{s-1})-\nabla\! f(x^{s}_{k})\right\|^{2}\right]\\
=\,&\mathbb{E}\!\left[\left\|\nabla\! f_{i^{s}_{k}}(x^{s}_{k})-\nabla\! f_{i^{s}_{k}}(\widetilde{x}^{s-1})\right\|^{2}\right]-\left\|\nabla\! f(x^{s}_{k})-\nabla\! f(\widetilde{x}^{s-1})\right\|^{2}\\
\leq\,&\mathbb{E}\!\left[\left\|\nabla\! f_{i^{s}_{k}}(x^{s}_{k})-\nabla\! f_{i^{s}_{k}}(\widetilde{x}^{s-1})\right\|^{2}\right]\\
\leq\,&2\mathbb{E}\!\left[\left\|\nabla\! f_{i^{s}_{k}}(x^{s}_{k})-\nabla\! f_{i^{s}_{k}}(x^{*})\right\|^{2}\right]+2\mathbb{E}\!\left[\left\|\nabla\! f_{i^{s}_{k}}(\widetilde{x}^{s-\!1})-\nabla\! f_{i^{s}_{k}}(x^{*})\right\|^{2}\right]\\
\leq\, & 4L\!\left[f(x^{s}_{k})-f(x^{*})+f(\widetilde{x}^{s-1})-f(x^{*})\right]\!,
\end{split}
\end{equation*}
where the second equality holds due to the fact that $\mathbb{E}[\|x\!-\!\mathbb{E}[x]\|^{2}]\!=\!\mathbb{E}[\|x\|^{2}]\!-\!\|\mathbb{E}[x]\|^{2}$; the second inequality holds due to the fact that $\|a-b\|^{2}\leq2(\|a\|^{2}+\|b\|^{2})$; and the last inequality follows from Lemma~\ref{lemm11}.
\end{proof}

\section*{Appendix B: Proof of Lemma~\ref{lemm5}}
\begin{proof}
For convenience, the stochastic gradient estimator is defined as: $v^{s}_{k}\!:=\!\nabla f_{i^{s}_{k}}(x^{s}_{k})\!-\!\nabla f_{i^{s}_{k}}(\widetilde{x}^{s-1})\!+\!\nabla f(\widetilde{x}^{s-1})$. Since each component function $f_{i}(x)$ is $L$-smooth, which implies that the gradient of the average function $f(x)$ is also $L$-smooth, i.e., for all $x,y\!\in\! \mathbb{R}^{d}$,
\begin{displaymath}
\|\nabla f(x)-\nabla f(y)\|\leq L\|x-y\|,
\end{displaymath}
whose equivalent form is
\begin{displaymath}
f(y)\leq f(x)+\langle\nabla f(x),\;y-x\rangle+\frac{L}{2}\|y-x\|^{2}.
\end{displaymath}
Using the above smoothness inequality, we have
\begin{equation}\label{equ35}
\begin{split}
f(x^{s}_{k+1})\leq\,& f(x^{s}_{k})+\left\langle\nabla f(x^{s}_{k}),\,x^{s}_{k+1}-x^{s}_{k}\right\rangle+\frac{L}{2}\!\left\|x^{s}_{k+1}-x^{s}_{k}\right\|^{2}\\
=\,& f(x^{s}_{k})+\left\langle\nabla f(x^{s}_{k}),\,x^{s}_{k+1}-x^{s}_{k}\right\rangle+\frac{L\beta }{2}\!\left\|x^{s}_{k+1}-x^{s}_{k}\right\|^{2}-\frac{L(\beta \!-\!1)}{2}\!\left\|x^{s}_{k+1}-x^{s}_{k}\right\|^{2}\\
=\,& f(x^{s}_{k})+\left\langle v^{s}_{k},\,x^{s}_{k+1}-x^{s}_{k}\right\rangle+\frac{L\beta }{2}\|x^{s}_{k+1}-x^{s}_{k}\|^2\\
&+\left\langle\nabla f(x^{s}_{k})-v^{s}_{k},\,x^{s}_{k+1}-x^{s}_{k}\right\rangle-\frac{L(\beta \!-\!1)}{2}\|x^{s}_{k+1}-x^{s}_{k}\|^{2}.
\end{split}
\end{equation}
Using Lemma~\ref{lemm2}, then we get
\begin{equation}\label{equ36}
\begin{split}
&\mathbb{E}\!\left[\left\langle\nabla\! f(x^{s}_{k})-v^{s}_{k},\,x^{s}_{k+1}-x^{s}_{k}\right\rangle-\frac{L(\beta \!-\!1)}{2}\|x^{s}_{k+1}-x^{s}_{k}\|^{2}\right]\\
\leq\,& \mathbb{E}\!\left[\frac{1}{2L(\beta \!-\!1)}\|\nabla\!f(x^{s}_{k})-v^{s}_{k}\|^{2}+\frac{L(\beta \!-\!1)}{2}\|x^{s}_{k+1}\!-\!x^{s}_{k}\|^{2}-\frac{L(\beta \!-\!1)}{2}\|x^{s}_{k+1}\!-\!x^{s}_{k}\|^{2}\right]\\
\leq\,& \frac{2}{\beta \!-\!1}\!\left[f(x^{s}_{k})-f(x^{*})+f(\widetilde{x}^{s-1})-f(x^{*})\right],
\end{split}
\end{equation}
where the first inequality holds due to the Young's inequality (i.e., $y^{T}z\!\leq\!{\|y\|^2}/{(2\theta)}\!+\!{\theta\|z\|^2}/{2}$ for all $\theta\!>\!0$ and $y,z\!\in\! \mathbb{R}^{d}$), and the second inequality follows from Lemma~\ref{lemm2}.

Taking the expectation over the random choice of $i^{s}_{k}$ and substituting the inequality in \eqref{equ36} into the inequality in \eqref{equ35}, we have
\begin{equation*}
\begin{split}
&\quad\;\, \mathbb{E}[f(x^{s}_{k+1})]\\
&\leq \mathbb{E}[f(x^{s}_{k})]+\mathbb{E}\!\left[\left\langle v^{s}_{k}, \,x^{s}_{k+1}\!-x^{s}_{k}\right\rangle+\frac{L\beta }{2}\|x^{s}_{k+1}\!-x^{s}_{k}\|^2\right]+\frac{2}{\beta \!-\!1}\!\left[f(x^{s}_{k})-f(x^{*})+f(\widetilde{x}^{s-1})-f(x^{*})\right]\\
&\leq \mathbb{E}[f(x^{s}_{k})]+\mathbb{E}\!\left[\left\langle v^{s}_{k}, \,x^{*}\!-x^{s}_{k}\right\rangle+\frac{L\beta }{2}(\|x^{*}-x^{s}_{k}\|^2-\|x^{*}-x^{s}_{k+1}\|^2)\right]+\frac{2}{\beta \!-\!1}\!\left[f(x^{s}_{k})-f(x^{*})+f(\widetilde{x}^{s-1})-f(x^{*})\right]\\
&\leq\mathbb{E}[f(x^{s}_{k})]+\left\langle \nabla\! f(x^{s}_{k}), \,x^{*}\!-x^{s}_{k}\right\rangle+\mathbb{E}\!\left[\frac{L\beta }{2}(\|x^{*}\!-x^{s}_{k}\|^2-\|x^{*}\!-x^{s}_{k+1}\|^2)\right]+\frac{2}{\beta \!-\!1}\!\left[f(x^{s}_{k})-f(x^{*})+f(\widetilde{x}^{s-1})-f(x^{*})\right]\\
&\leq f(x^{*})+\mathbb{E}\!\left[\frac{L\beta }{2}(\|x^{*}-x^{s}_{k}\|^2-\|x^{*}-x^{s}_{k+1}\|^2)\right]+\frac{2}{\beta \!-\!1}\!\left[f(x^{s}_{k})-f(x^{*})+f(\widetilde{x}^{s-1})-f(x^{*})\right]\\
&= f(x^{*})+\frac{L\beta }{2}\mathbb{E}\!\left[\left(\|x^{*}-x^{s}_{k}\|^2-\|x^{*}-x^{s}_{k+1}\|^2\right)\right]+\frac{2}{\beta \!-\!1}\left[f(x^{s}_{k})-f(x^{*})+f(\widetilde{x}^{s-1})-f(x^{*})\right],
\end{split}
\end{equation*}
where the first inequality holds due to the inequality in \eqref{equ35} and the inequality in \eqref{equ36}; the second inequality follows from Lemma~\ref{lemm1} with $\hat{z}\!=\!x^{s}_{k+1}$, $z\!=\!x^{*}$, $z_{0}\!=\!x^{s}_{k}$, $\tau\!=\!L\beta \!=\!1/\eta$, and $r(z)\!:=\!\langle v^{s}_{k}, \,z\!-\!x^{s}_{k}\rangle$; the third inequality holds due to the fact that $\mathbb{E}[v^{s}_{k}]\!=\!\nabla\! f(x^{s}_{k})$; and the last inequality follows from the convexity of $f(\cdot)$, i.e., $f(x^{s}_{k})\!+\!\langle\nabla\! f(x^{s}_{k}), \,x^{*}\!-\!x^{s}_{k}\rangle\!\leq\!f(x^{*})$. The above inequality can be rewritten as follows:
\begin{equation*}
\begin{split}
\mathbb{E}[f(x^{s}_{k+1})]-f(x^{*})
\leq\frac{2}{\beta \!-\!1}\!\left[f(x^{s}_{k})-f(x^{*})+f(\widetilde{x}^{s-1})-f(x^{*})\right]+ \frac{L\beta}{2}\mathbb{E}\!\left[\|x^{*}\!-x^{s}_{k}\|^2-\|x^{*}\!-x^{s}_{k+1}\|^2\right].
\end{split}
\end{equation*}
Summing the above inequality over $k=0,1,\ldots,m\!-\!1$, we obtain
\begin{equation*}
\begin{split}
&\sum^{m-1}_{k=0}\left\{\mathbb{E}[f(x^{s}_{k+1})]-f(x^{*})\right\}\\
\leq& \sum^{m-1}_{k=0}\left\{\frac{2}{\beta \!-\!1}\!\left[f(x^{s}_{k})-f(x^{*})+f(\widetilde{x}^{s-1})-f(x^{*})\right]+ \frac{L\beta }{2}\mathbb{E}\!\left[\|x^{*}-x^{s}_{k}\|^2-\|x^{*}-x^{s}_{k+1}\|^2\right]\right\}.
\end{split}
\end{equation*}
Then
\begin{equation}\label{equ37}
\begin{split}
&\left(1-\frac{2}{\beta\!-\!1}\right)\sum^{m}_{k=1}\left\{\mathbb{E}[f(x^{s}_{k})]-f(x^{*})\right\}+\frac{2}{\beta\!-\!1}\sum^{m}_{k=1}\left\{\mathbb{E}[f(x^{s}_{k})]-f(x^{*})\right\}\\
\leq& \sum^{m}_{k=1}\left\{\frac{2}{\beta \!-\!1}\!\left[f(x^{s}_{k-1})-f(x^{*})+f(\widetilde{x}^{s-1})-f(x^{*})\right]\right\}+ \frac{L\beta }{2}\mathbb{E}\!\left[\|x^{*}-x^{s}_{0}\|^2-\|x^{*}-x^{s}_{m}\|^2\right].
\end{split}
\end{equation}

Due to the setting of $\widetilde{x}^{s}\!=\!\frac{1}{m}\sum^{m}_{k=1}x^{s}_{k}$ in Option II, and the convexity of $f(\cdot)$, then
\begin{equation*}
f(\widetilde{x}^{s})\leq \frac{1}{m}\sum^{m}_{k=1}f(x^{s}_{k}).
\end{equation*}
Using the above inequality, the inequality in \eqref{equ37} becomes
\begin{equation*}
\begin{split}
&m\left(1-\frac{2}{\beta\!-\!1}\right)\mathbb{E}\!\left[f(\widetilde{x}^{s})-f(x^{*})\right]+\frac{2}{\beta\!-\!1}\sum^{m}_{k=1}\left\{\mathbb{E}[f(x^{s}_{k})]-f(x^{*})\right\}\\
\leq& \sum^{m}_{k=1}\left\{\frac{2}{\beta \!-\!1}\!\left[f(x^{s}_{k-1})-f(x^{*})+f(\widetilde{x}^{s-1})-f(x^{*})\right]\right\}+ \frac{L\beta }{2}\mathbb{E}\!\left[\|x^{*}-x^{s}_{0}\|^2-\|x^{*}-x^{s}_{m}\|^2\right].
\end{split}
\end{equation*}
Dividing both sides of the above inequality by $m$ and subtracting $\frac{2}{(\beta-1)m}\sum^{m-1}_{k=1}\!\left[f(x^{s}_{k})\!-\!f(x^{*})\right]$ from both sides, we arrive at
\begin{equation*}
\begin{split}
&\left(1-\frac{2}{\beta\!-\!1}\right)\mathbb{E}\!\left[f(\widetilde{x}^{s})-f(x^{*})\right]+\frac{2}{(\beta\!-\!1)m}\mathbb{E}[f(x^{s}_{m})-f(x^{*})]\\
\leq &\, \frac{2}{(\beta\!-\!1)}\mathbb{E}\!\left[f(\widetilde{x}^{s-1})-f(x^{*})\right]+\frac{2}{(\beta\!-\!1)m}\mathbb{E}\!\left[f(x^{s}_{0})-f(x^{*})\right]+\frac{L\beta}{2m}\mathbb{E}\!\left[\|x^{*}-x^{s}_{0}\|^2-\|x^{*}-x^{s}_{m}\|^2\right].
\end{split}
\end{equation*}

This completes the proof.
\end{proof}

\section*{Appendix C: Proof of Theorem~\ref{the2}}
\begin{proof}
Using Lemma~\ref{lemm5}, we have
\begin{equation*}
\begin{split}
&(1-\frac{2}{\beta\!-\!1})\mathbb{E}\!\left[f(\widetilde{x}^{s})-f(x^{*})\right]+\frac{2}{(\beta\!-\!1)m}\mathbb{E}[f(x^{s}_{m})-f(x^{*})]\\
\leq&\,\frac{2}{(\beta\!-\!1)m}\mathbb{E}\!\left[f(x^{s}_{0})-f(x^{*})\right]+\frac{2}{\beta\!-\!1}\mathbb{E}\!\left[f(\widetilde{x}^{s-1})-f(x^{*})\right]+\frac{L\beta}{2m}\mathbb{E}[\|x^{*}-x^{s}_{0}\|^{2}-\|x^{*}-x^{s}_{m}\|^{2}].
\end{split}
\end{equation*}
Summing the above inequality over $s\!=\!1,2,\ldots,S$, taking expectation with respect to the history of random variables $i^{s}_{k}$, and using the setting of $x^{s+1}_{0}=x^{s}_{m}$ in Algorithm~\ref{alg3}, we arrive at
\begin{equation*}
\begin{split}
&\sum^{S}_{s=1}(1\!-\!\frac{2}{\beta\!-\!1})\mathbb{E}\!\left[f(\widetilde{x}^{s})-f(x^{*})\right]\\
\leq&\,\sum^{S}_{s=1}\left\{\frac{2}{(\beta\!-\!1)m}\mathbb{E}\!\left\{f(x^{s}_{0})-f(x^{*})-\left[f(x^{s}_{m})-f(x^{*})\right]\right\}+\frac{2}{\beta\!-\!1}\mathbb{E}\!\left[f(\widetilde{x}^{s-1})-f(x^{*})\right]\right\}\\
&\,+\frac{L\beta}{2m}\sum^{S}_{s=1}\mathbb{E}\!\left[\|x^{*}-x^{s}_{0}\|^{2}-\|x^{*}-x^{s}_{m}\|^{2}\right].
\end{split}
\end{equation*}
Subtracting $\frac{2}{\beta-1}\sum^{S}_{s=1}\!\left[f(\widetilde{x}^{s})-f(x^{*})\right]$ from both sides of the above inequality, we obtain
\begin{equation*}
\begin{split}
&\sum^{S}_{s=1}\left(1-\frac{4}{\beta\!-\!1}\right)\mathbb{E}\!\left[f(\widetilde{x}^{s})-f(x^{*})\right]\\
\leq&\,\frac{2}{(\beta\!-\!1)m}\mathbb{E}\!\left\{f(x^{1}_{0})-f(x^{*})-[f(x^{S}_{m})-f(x^{*})]\right\}+\frac{2}{\beta\!-\!1}\mathbb{E}[f(\widetilde{x}^{0})-f(\widetilde{x}^{S})]\\
&\,+\frac{L\beta}{2m}\mathbb{E}\!\left[\|x^{*}-x^{1}_{0}\|^{2}-\|x^{*}-x^{S}_{m}\|^{2}\right].
\end{split}
\end{equation*}

It is not hard to verify that $\mathbb{E}[f(\widetilde{x}^{0})\!-\!f(\widetilde{x}^{S})]\!\leq\!f(\widetilde{x}^{0})\!-\!f(x^{*})$. Dividing both sides of the above inequality by $S$, and using the choice $\widetilde{x}^{0}\!=\!x^{1}_{0}$, we have
\begin{equation}\label{equ38}
\begin{split}
&\frac{1}{S}\left(1-\frac{4}{\beta\!-\!1}\right)\sum^{S}_{s=1}\mathbb{E}\!\left[f(\widetilde{x}^{s})-f(x^{*})\right]\\
\leq&\,\frac{2}{(\beta\!-\!1)mS}\mathbb{E}\!\left\{f(x^{1}_{0})-f(x^{*})-[f(x^{S}_{m})-f(x^{*})]\right\}+\frac{2}{(\beta\!-\!1)S}[f(\widetilde{x}^{0})-f(x^{*})]+\frac{L\beta}{2mS}\|x^{*}-x^{1}_{0}\|^{2}\\
\leq&\,\frac{2}{(\beta\!-\!1)mS}\!\left[f(\widetilde{x}^{0})-f(x^{*})\right]+\frac{2}{(\beta\!-\!1)S}[f(\widetilde{x}^{0})-f(x^{*})]+\frac{L\beta}{2mS}\|x^{*}-\widetilde{x}^{0}\|^{2}\\
=&\,\frac{2(m\!+\!1)}{(\beta\!-\!1)mS}[f(\widetilde{x}^{0})-f(x^{*})]+\frac{L\beta}{2mS}\|\widetilde{x}^{0}-x^{*}\|^{2}
\end{split}
\end{equation}
where the first inequality holds due to the facts that $\mathbb{E}[f(\widetilde{x}^{0})\!-\!f(\widetilde{x}^{S})]\!\leq\!f(\widetilde{x}^{0})\!-\!f(x^{*})$ and $\mathbb{E}\!\left[\|x^{*}\!-\!x^{S}_{m}\|^{2}\right]\!\geq\!0$, and the last inequality uses the facts that $\mathbb{E}\!\left[f(x^{S}_{m})\!-\!f(x^{*})\right]\!\geq\!0$ and $\widetilde{x}^{0}\!=\!x^{1}_{0}$.

Since $\overline{x}^{S}\!=\!\frac{1}{S}\!\sum^{S}_{s=1}\widetilde{x}^{s}$, and using the convexity of $f(\cdot)$, we have $f(\overline{x}^{S})\!\leq\! \frac{1}{S}\!\sum^{S}_{s=1}\!f(\widetilde{x}^{s})$, and therefore the inequality in \eqref{equ38} becomes
\begin{equation*}
\begin{split}
\left(1-\frac{4}{\beta\!-\!1}\right)\mathbb{E}\!\left[f(\overline{x}^{S})-f(x^{*})\right]\leq&\: \frac{1}{S}\left(1-\frac{4}{\beta\!-\!1}\right)\sum^{S}_{s=1}\mathbb{E}\!\left[f(\widetilde{x}^{s})-f(x^{*})\right]\\
\leq&\,\frac{2(m\!+\!1)}{(\beta\!-\!1)mS}[f(\widetilde{x}^{0})-f(x^{*})]+\frac{L\beta}{2mS}\|\widetilde{x}^{0}-x^{*}\|^{2}.
\end{split}
\end{equation*}
Dividing both sides of the above inequality by $(1\!-\!\frac{4}{\beta-1})\!>\!0$ (i.e., $\eta\!<\!1/(5L)$), we arrive at
\begin{equation*}
\mathbb{E}\!\left[f(\overline{x}^{S})-f(x^{*})\right]\leq\frac{2(m\!+\!1)}{mS(\beta\!-\!5)}[f(\widetilde{x}^{0})-f(x^{*})]+\frac{L\beta(\beta\!-\!1)}{2mS(\beta\!-\!5)}\|\widetilde{x}^{0}-x^{*}\|^{2}.
\end{equation*}

If $f(\widetilde{x}^{S})\!\leq\!f(\overline{x}^{S})$, then $\widehat{x}^{S}\!=\!\widetilde{x}^{S}$, and
\begin{equation*}
\begin{split}
\mathbb{E}\!\left[f(\widehat{x}^{S})-f(x^{*})\right]\leq&\,\mathbb{E}\!\left[f(\overline{x}^{S})-f(x^{*})\right]\\
\leq&\,\frac{2(m\!+\!1)}{mS(\beta\!-\!5)}[f(\widetilde{x}^{0})-f(x^{*})]+\frac{L\beta(\beta\!-\!1)}{2mS(\beta\!-\!5)}\|\widetilde{x}^{0}-x^{*}\|^{2}.
\end{split}
\end{equation*}
Alternatively, if $f(\widetilde{x}^{S})\!\geq\!f(\overline{x}^{S})$, then $\widehat{x}^{S}\!=\!\overline{x}^{S}$, and the above inequality still holds.

This completes the proof.
\end{proof}

\section*{Appendix D: Proof of Lemma~\ref{lemm6}}
\begin{proof}
Since the average function $f(x)$ is $L$-smooth, then for all $x,y\!\in\! \mathbb{R}^{d}$,
\begin{displaymath}
f(y)\leq f(x)+\langle\nabla f(x),\;y-x\rangle+\frac{L}{2}\|y-x\|^{2},
\end{displaymath}
which then implies
\begin{displaymath}
f(x^{s}_{k+1})\leq f(x^{s}_{k})+\left\langle\nabla f(x^{s}_{k}),\,x^{s}_{k+1}-x^{s}_{k}\right\rangle+\frac{L}{2}\!\left\|x^{s}_{k+1}\!-x^{s}_{k}\right\|^{2}.
\end{displaymath}
Using the above inequality, we have
\begin{equation}\label{equ41}
\begin{split}
F(x^{s}_{k+1})=\,&f(x^{s}_{k+1})+g(x^{s}_{k+1})\\
\leq\,& f(x^{s}_{k})+g(x^{s}_{k+1})+\left\langle\nabla f(x^{s}_{k}),\,x^{s}_{k+1}-x^{s}_{k}\right\rangle+\frac{L\beta}{2}\!\left\|x^{s}_{k+1}-x^{s}_{k}\right\|^{2}-\frac{L(\beta\!-\!1)}{2}\!\left\|x^{s}_{k+1}-x^{s}_{k}\right\|^{2}\\
=\,& f(x^{s}_{k})+g(x^{s}_{k+1})+\left\langle v^{s}_{k},\,x^{s}_{k+1}-x^{s}_{k}\right\rangle+\frac{L\beta}{2}\|x^{s}_{k+1}-x^{s}_{k}\|^2\\
&+\left\langle\nabla f(x^{s}_{k})-v^{s}_{k},\,x^{s}_{k+1}-x^{s}_{k}\right\rangle-\frac{L(\beta\!-\!1)}{2}\|x^{s}_{k+1}-x^{s}_{k}\|^{2}.
\end{split}
\end{equation}
According to Lemma~\ref{lemm5}, then we obtain
\begin{equation}\label{equ42}
\begin{split}
&\mathbb{E}\!\left[\left\langle\nabla\! f(x^{s}_{k})-v^{s}_{k},\,x^{s}_{k+1}-x^{s}_{k}\right\rangle-\frac{L(\beta\!-\!1)}{2}\|x^{s}_{k+1}-x^{s}_{k}\|^{2}\right]\\
\leq\,& \mathbb{E}\!\left[\frac{1}{2L(\beta\!-\!1)}\|\nabla\!f(x^{s}_{k})-v^{s}_{k}\|^{2}+\frac{L(\beta\!-\!1)}{2}\|x^{s}_{k+1}\!-\!x^{s}_{k}\|^{2}-\frac{L(\beta\!-\!1)}{2}\|x^{s}_{k+1}\!-\!x^{s}_{k}\|^{2}\right]\\
\leq\,& \frac{2}{\beta\!-\!1}\!\left[F(x^{s}_{k})-F(x^{*})+F(\widetilde{x}^{s-1})-F(x^{*})\right],
\end{split}
\end{equation}
where the first inequality holds due to the Young's inequality, and the second inequality follows from Lemma~\ref{lemm5}. Substituting the inequality \eqref{equ42} into the inequality \eqref{equ41}, and taking the expectation over the random choice $i^{s}_{k}$, we arrive at
\begin{equation*}
\begin{split}
\mathbb{E}\!\left[F(x^{s}_{k+1})\right]
\leq&\, \mathbb{E}\!\left[f(x^{s}_{k})\right]+\mathbb{E}\!\left[g(x^{s}_{k+1})\right]+\mathbb{E}\!\left[\left\langle v^{s}_{k}, \,x^{s}_{k+1}-x^{s}_{k}\right\rangle+\frac{L\beta}{2}\|x^{s}_{k+1}-x^{s}_{k}\|^2\right]\\
&\,+\frac{2}{\beta\!-\!1}\!\left[F(x^{s}_{k})-F(x^{*})+F(\widetilde{x}^{s-1})-F(x^{*})\right]\\
\leq&\, \mathbb{E}\!\left[f(x^{s}_{k})\right]+g(x^{*})+\mathbb{E}\!\left[\left\langle v^{s}_{k}, \,x^{*}-x^{s}_{k}\right\rangle+\frac{L\beta}{2}(\|x^{*}-x^{s}_{k}\|^2-\|x^{*}-x^{s}_{k+1}\|^2)\right]\\
&\,+\frac{2}{\beta\!-\!1}\!\left[F(x^{s}_{k})-F(x^{*})+F(\widetilde{x}^{s-1})-F(x^{*})\right]\\
\leq&\, f(x^{*})+g(x^{*})+\mathbb{E}\!\left[\frac{L\beta}{2}(\|x^{*}-x^{s}_{k}\|^2-\|x^{*}-x^{s}_{k+1}\|^2)\right]\\
&\,+\frac{2}{\beta\!-\!1}\!\left[F(x^{s}_{k})-F(x^{*})+F(\widetilde{x}^{s-1})-F(x^{*})\right]\\
=&\,F(x^{*})+\frac{L\beta}{2}\mathbb{E}\!\left[\|x^{*}\!-x^{s}_{k}\|^2-\|x^{*}\!-x^{s}_{k+1}\|^2\right]+\frac{2}{\beta\!-\!1}\left[F(x^{s}_{k})-F(x^{*})+F(\widetilde{x}^{s-1})-F(x^{*})\right],\\
\end{split}
\end{equation*}
where the first inequality holds due to the inequality \eqref{equ41} and the inequality \eqref{equ42}; the second inequality follows from Lemma~\ref{lemm1} with $\hat{z}\!=\!x^{s}_{k+1}$, $z\!=\!x^{*}$, $z_{0}\!=\!x^{s}_{k}$, $\tau\!=\!L\beta \!=\!1/\eta$, and $r(z)\!:=\!\langle v^{s}_{k}, \,z\!-\!x^{s}_{k}\rangle\!+\!g(z)$; and the third inequality holds due to the fact that $\mathbb{E}[v^{s}_{k}]\!=\!\nabla\! f(x^{s}_{k})$ and the convexity of $f(\cdot)$, i.e., $f(x^{s}_{k})\!+\!\langle\nabla\! f(x^{s}_{k}), \,x^{*}\!-\!x^{s}_{k}\rangle\!\leq\!f(x^{*})$. Then the above inequality is rewritten as follows:
\begin{equation}
\begin{split}
&\:\mathbb{E}[F(x^{s}_{k+1})]-F(x^{*})\\
\leq&\:\frac{2}{\beta\!-\!1}\!\left[F(x^{s}_{k})-F(x^{*})+F(\widetilde{x}^{s-1})-F(x^{*})\right]+\frac{L\beta}{2}\mathbb{E}\!\left[\|x^{*}\!-x^{s}_{k}\|^2-\|x^{*}\!-x^{s}_{k+1}\|^2\right].
\end{split}
\end{equation}

Summing the above inequality over $k=0,1,\ldots,(m\!-\!1)$ and taking expectation over whole history, we have
\begin{equation*}
\begin{split}
&\sum^{m-1}_{k=0}\left\{\mathbb{E}[F(x^{s}_{k+1})]-F(x^{*})\right\}\\
\leq& \sum^{m-1}_{k=0}\left\{\frac{2}{\beta\!-\!1}\!\left[F(x^{s}_{k})-F(x^{*})+F(\widetilde{x}^{s-1})-F(x^{*})\right]+ \frac{L\beta}{2}\mathbb{E}\!\left[\|x^{*}-x^{s}_{k}\|^2-\|x^{*}-x^{s}_{k+1}\|^2\right]\right\}.
\end{split}
\end{equation*}
Subtracting $\frac{2}{\beta-1}\!\sum^{m-2}_{k=0}\mathbb{E}\!\left[F(x^{s}_{k+1})\!-\!F(x^{*})\right]$ from both sides of the above inequality, we obtain
\begin{equation*}
\begin{split}
&\sum^{m-1}_{k=0}\mathbb{E}\!\left[F(x^{s}_{k+1})-F(x^{*})\right]-\frac{2}{\beta\!-\!1}\!\sum^{m-1}_{k=0}\mathbb{E}\!\left[F(x^{s}_{k+1})\!-\!F(x^{*})\right]+\frac{2}{\beta\!-\!1}\mathbb{E}\!\left[F(x^{s}_{m})\!-\!F(x^{*})\right]\\
\leq& \sum^{m-1}_{k=0}\!\left\{\frac{2}{\beta\!-\!1}\!\left[F(x^{s}_{k})\!-\!F(x^{*})\!+\!F(\widetilde{x}^{s-1})\!-\!F(x^{*})\right]\right\}\!-\!\frac{2}{\beta\!-\!1}\!\sum^{m-2}_{k=0}\!\mathbb{E}\!\left[F(x^{s}_{k+1})\!-\!F(x^{*})\right]\!+\! \frac{L\beta}{2}\mathbb{E}\!\left[\|x^{*}\!-\!x^{s}_{0}\|^2\!-\!\|x^{*}\!-\!x^{s}_{m}\|^2\right].
\end{split}
\end{equation*}
Then
\begin{equation*}
\begin{split}
&\left(1-\frac{2}{\beta\!-\!1}\right)\sum^{m}_{k=1}\mathbb{E}\!\left[F(x^{s}_{k})-F(x^{*})\right]+\frac{2}{\beta\!-\!1}\mathbb{E}\!\left[F(x^{s}_{m})\!-\!F(x^{*})\right]\\
\leq& \frac{2}{\beta\!-\!1}\mathbb{E}\!\left[F(x^{s}_{0})-F(x^{*})\right]+\frac{2m}{\beta\!-\!1}\mathbb{E}\!\left[F(\widetilde{x}^{s-1})\!-\!F(x^{*})\right]+ \frac{L\beta}{2}\mathbb{E}\!\left[\|x^{*}\!-\!x^{s}_{0}\|^2\!-\!\|x^{*}\!-\!x^{s}_{m}\|^2\right].
\end{split}
\end{equation*}
Due to the settings of $\widetilde{x}^{s}\!=\!\frac{1}{m}\sum^{m}_{k=1}x^{s}_{k}$ and $x^{s+1}_{0}\!=\!x^{s}_{m}$, and the convexity of the objective function $F(\cdot)$, we have $F(\widetilde{x}^{s})\leq \frac{1}{m}\sum^{m}_{k=1}F(x^{s}_{k})$, and
\begin{equation*}
\begin{split}
&m\left(1-\frac{2}{\beta\!-\!1}\right)\mathbb{E}\!\left[F(\widetilde{x}^{s})-F(x^{*})\right]+\frac{2}{\beta\!-\!1}\mathbb{E}\!\left[F(x^{s}_{m})\!-\!F(x^{*})\right]\\
\leq&\frac{2}{\beta\!-\!1}\mathbb{E}\!\left[F(x^{s}_{0})-F(x^{*})\right]+\frac{2m}{\beta\!-\!1}\mathbb{E}\!\left[F(\widetilde{x}^{s-1})\!-\!F(x^{*})\right]+ \frac{L\beta}{2}\mathbb{E}\!\left[\|x^{*}\!-\!x^{s}_{0}\|^2\!-\!\|x^{*}\!-\!x^{s}_{m}\|^2\right].
\end{split}
\end{equation*}
Dividing both sides of the above inequality by $m$, we arrive at
\begin{equation*}
\begin{split}
&\left(1-\frac{2}{\beta\!-\!1}\right)\mathbb{E}\!\left[F(\widetilde{x}^{s})-F(x^{*})\right]+\frac{2}{(\beta\!-\!1)m}\mathbb{E}\!\left[F(x^{s}_{m})\!-\!F(x^{*})\right]\\
\leq&\frac{2}{(\beta\!-\!1)m}\mathbb{E}\!\left[F(x^{s}_{0})-F(x^{*})\right]+\frac{2}{\beta\!-\!1}\mathbb{E}\!\left[F(\widetilde{x}^{s-1})\!-\!F(x^{*})\right]+ \frac{L\beta}{2m}\mathbb{E}\!\left[\|x^{*}\!-\!x^{s}_{0}\|^2\!-\!\|x^{*}\!-\!x^{s}_{m}\|^2\right].
\end{split}
\end{equation*}

This completes the proof.
\end{proof}

\section*{Appendix E: Proof of Theorem~\ref{the3}}
\begin{proof}
Summing the inequality in (\ref{equ39}) over $s\!=\!1,2,\ldots,S$, and taking expectation with respect to the history of $i^{s}_{k}$, we have
\begin{equation*}
\begin{split}
&\sum^{S}_{s=1}\left(1\!-\!\frac{2}{\beta\!-\!1}\right)\!\mathbb{E}\!\left[F(\widetilde{x}^{s})-F(x^{*})\right]+\sum^{S}_{s=1}\frac{2}{(\beta\!-\!1)m}\mathbb{E}\!\left[F(x^{s}_{m})-F(x^{*})\right]\\
\leq&\,\sum^{S}_{s=1}\left\{\frac{2}{(\beta\!-\!1)m}\mathbb{E}\!\left[F(x^{s}_{0})-F(x^{*})\right]+\frac{2}{\beta\!-\!1}\mathbb{E}\!\left[F(\widetilde{x}^{s-1})-F(x^{*})\right]\right\}\\
&\,+\frac{L\beta}{2m}\sum^{S}_{s=1}\mathbb{E}\!\left[\|x^{*}-x^{s}_{0}\|^{2}-\|x^{*}-x^{s}_{m}\|^{2}\right].
\end{split}
\end{equation*}
Subtracting $\sum^{S}_{s=1}\!\frac{2}{(\beta-1)m}\mathbb{E}\!\left[F(x^{s}_{m})\!-\!F(x^{*})\right]\!+\!\frac{2}{\beta-1}\!\sum^{S}_{s=1}\!\left[F(\widetilde{x}^{s})\!-\!F(x^{*})\right]$ from both sides of the above inequality, and using the setting of $x^{s+1}_{0}\!=\!x^{s}_{m}$, we arrive at
\begin{equation*}
\begin{split}
&\sum^{S}_{s=1}\left(1-\frac{4}{\beta\!-\!1}\right)\mathbb{E}\!\left[F(\widetilde{x}^{s})-F(x^{*})\right]\\
\leq&\,\frac{2}{(\beta\!-\!1)m}\mathbb{E}\!\left[F(x^{1}_{0})\!-\!F(x^{S}_{m})\right]+\frac{2}{\beta\!-\!1}\mathbb{E}\!\left[F(\widetilde{x}^{0})\!-\!F(\widetilde{x}^{S})\right]+\frac{L\beta}{2m}\mathbb{E}\!\left[\|x^{*}\!-\!x^{1}_{0}\|^{2}-\|x^{*}\!-\!x^{S}_{m}\|^{2}\right].
\end{split}
\end{equation*}
It is easy to verify that $\mathbb{E}[F(\widetilde{x}^{0})\!-\!F(\widetilde{x}^{S})]\!\leq\!F(\widetilde{x}^{0})\!-\!F(x^{*})$. Dividing both sides of the above inequality by $S$, and using the choice $\widetilde{x}^{0}\!=\!x^{1}_{0}$, we obtain
\begin{equation}\label{equ43}
\begin{split}
&\:\left(1-\frac{4}{\beta\!-\!1}\right)\frac{1}{S}\sum^{S}_{s=1}\mathbb{E}\!\left[F(\widetilde{x}^{s})-F(x^{*})\right]\\
\leq&\,\frac{2}{(\beta\!-\!1)mS}\mathbb{E}\!\left[F(x^{1}_{0})\!-\!F(x^{S}_{m})\right]+\frac{2}{(\beta\!-\!1)S}[F(\widetilde{x}^{0})-F(\widetilde{x}^{S})]+\frac{L\beta}{2mS}\|x^{*}-x^{1}_{0}\|^{2}\\
\leq&\,\frac{2}{(\beta\!-\!1)mS}\!\left[F(\widetilde{x}^{0})-F(x^{*})\right]+\frac{2}{(\beta\!-\!1)S}[F(\widetilde{x}^{0})-F(x^{*})]+\frac{L\beta}{2mS}\|x^{*}-\widetilde{x}^{0}\|^{2}\\
=&\,\frac{2(m\!+\!1)}{(\beta\!-\!1)mS}[F(\widetilde{x}^{0})-F(x^{*})]+\frac{L\beta}{2mS}\|\widetilde{x}^{0}-x^{*}\|^{2}
\end{split}
\end{equation}
where the first inequality uses the fact that $\|x^{*}-x^{S}_{m}\|^{2}\geq0$; and the last inequality holds due to the facts that $\mathbb{E}\!\left[F(x^{1}_{0})\!-\!F(x^{S}_{m})\right]$ $\leq\! F(x^{1}_{0})\!-\!F(x^{*})$, $\mathbb{E}[F(\widetilde{x}^{0})\!-\!F(\widetilde{x}^{S})]\leq F(\widetilde{x}^{0})\!-\!F(x^{*})$, and $\widetilde{x}^{0}=x^{1}_{0}$.

Using the definition of $\overline{x}^{S}\!=\!\frac{1}{S}\!\sum^{S}_{s=1}\widetilde{x}^{s}$ and the convexity of the objective function $F(\cdot)$, we have $F(\overline{x}^{S})\!\leq\! \frac{1}{S}\!\sum^{S}_{s=1}\!F(\widetilde{x}^{s})$, and therefore we can rewrite the above inequality in \eqref{equ43} as
\begin{equation*}
\begin{split}
\left(1-\frac{4}{\beta\!-\!1}\right)\mathbb{E}\!\left[F(\overline{x}^{S})-F(x^{*})\right]\leq&\: \left(1-\frac{4}{\beta\!-\!1}\right)\frac{1}{S}\sum^{S}_{s=1}\mathbb{E}\!\left[F(\widetilde{x}^{s})-F(x^{*})\right]\\
\leq&\,\frac{2(m\!+\!1)}{(\beta\!-\!1)mS}[F(\widetilde{x}^{0})-F(x^{*})]+\frac{L\beta}{2mS}\|\widetilde{x}^{0}-x^{*}\|^{2}.
\end{split}
\end{equation*}
Dividing both sides of the above inequality by $(1\!-\!\frac{4}{\beta-1})\!>\!0$, we have
\begin{equation*}
\mathbb{E}\!\left[F(\overline{x}^{S})\right]-F(x^{*})\leq\frac{2(m\!+\!1)}{(\beta\!-\!5)mS}[F(\widetilde{x}^{0})-F(x^{*})]+\frac{\beta(\beta\!-\!1)L}{2(\beta\!-\!5)mS}\|\widetilde{x}^{0}-x^{*}\|^{2}.
\end{equation*}

When $F(\widetilde{x}^{S})\!\leq\!F(\overline{x}^{S})$, then $\widehat{x}^{S}\!=\!\widetilde{x}^{S}$, and
\begin{equation*}
\begin{split}
\mathbb{E}\!\left[F(\widehat{x}^{S})\right]-F(x^{*})\leq\,\frac{2(m\!+\!1)}{(\beta\!-\!5)mS}[F(\widetilde{x}^{0})-F(x^{*})]+\frac{\beta(\beta\!-\!1)L}{2(\beta\!-\!5)mS}\|\widetilde{x}^{0}-x^{*}\|^{2}.
\end{split}
\end{equation*}
Alternatively, if $F(\widetilde{x}^{S})\!\geq\!F(\overline{x}^{S})$, then $\widehat{x}^{S}\!=\!\overline{x}^{S}$, and the above inequality still holds.

This completes the proof.
 \end{proof}

\section*{Appendix F: Proof of Theorem~\ref{the7}}
\begin{proof}
Since each $f_{i}(\cdot)$ is convex and $L$-smooth, then we have
\begin{equation*}
\begin{split}
&\left(1-\frac{2}{\beta\!-\!1}\right)\mathbb{E}\!\left[F(\widetilde{x}^{s})-F(x^{*})\right]+\frac{2}{m(\beta\!-\!1)}\mathbb{E}\!\left[F(x^{s+1}_{0})\!-\!F(x^{*})\right]+\frac{L\beta}{2m}\mathbb{E}\!\left[\|x^{s+1}_{0}\!-\!x^{*}\|^2\right]\\
\leq&\,\frac{2}{\beta\!-\!1}\mathbb{E}\!\left[F(\widetilde{x}^{s-1})\!-\!F(x^{*})\right]+\frac{2}{m(\beta\!-\!1)}\mathbb{E}\!\left[F(x^{s}_{0})-F(x^{*})\right]+ \frac{L\beta}{2m}\mathbb{E}\!\left[\|x^{s}_{0}\!-\!x^{*}\|^2\right]\\
\leq&\,\frac{2}{\beta\!-\!1}\mathbb{E}\!\left[F(\widetilde{x}^{s-1})\!-\!F(x^{*})\right]+\left(\frac{2}{m(\beta\!-\!1)}+\frac{L\beta}{m\mu}\right)\mathbb{E}\!\left[F(x^{s}_{0})-F(x^{*})\right]\\
\leq&\,\left(\frac{2(m\!+\!C)}{m(\beta\!-\!1)}+\frac{CL\beta}{m\mu}\right)\mathbb{E}\!\left[F(\widetilde{x}^{s-1})-F(x^{*})\right]
\end{split}
\end{equation*}
where the first inequality follows from Lemma~\ref{lemm6}; the second inequality holds due to the fact that $\|x^{s}_{0}-x^{*}\|^2\leq({2}/{\mu})[F(x^{s}_{0})-F(x^{*})]$; and the last inequality follows from Assumption~\ref{assum3}.

Due to the definition of $\beta\!=\!1/(L\eta)$, the above inequality is rewritten as follows:
\begin{equation*}
\begin{split}
\frac{1\!-\!3L\eta}{1\!-\!L\eta}\mathbb{E}\!\left[F(\widetilde{x}^{s})-F(x^{*})\right]\leq\left(\frac{2L\eta(m\!+\!C)}{m(1\!-\!L\eta)}+\frac{C}{m\mu\eta}\right)\mathbb{E}\!\left[F(\widetilde{x}^{s-1})-F(x^{*})\right].
\end{split}
\end{equation*}
Dividing both sides of the above inequality by $(1\!-\!3L\eta)(1\!-\!L\eta)\!>\!0$, we arrive at
\begin{equation*}
\begin{split}
\mathbb{E}\!\left[F(\widetilde{x}^{s})-F(x^{*})\right]\leq\left(\frac{2L\eta(m\!+\!C)}{m(1\!-\!3L\eta)}+\frac{C(1\!-\!L\eta)}{m\mu\eta(1\!-\!3L\eta)}\right)\mathbb{E}\!\left[F(\widetilde{x}^{s-1})-F(x^{*})\right].
\end{split}
\end{equation*}
This completes the proof.
\end{proof}

\bibliographystyle{IEEEtran}
\bibliography{IEEEabrv,nips17}

\begin{thebibliography}{10}
\providecommand{\url}[1]{#1}
\csname url@samestyle\endcsname
\providecommand{\newblock}{\relax}
\providecommand{\bibinfo}[2]{#2}
\providecommand{\BIBentrySTDinterwordspacing}{\spaceskip=0pt\relax}
\providecommand{\BIBentryALTinterwordstretchfactor}{4}
\providecommand{\BIBentryALTinterwordspacing}{\spaceskip=\fontdimen2\font plus
\BIBentryALTinterwordstretchfactor\fontdimen3\font minus
  \fontdimen4\font\relax}
\providecommand{\BIBforeignlanguage}[2]{{%
\expandafter\ifx\csname l@#1\endcsname\relax
\typeout{** WARNING: IEEEtran.bst: No hyphenation pattern has been}%
\typeout{** loaded for the language `#1'. Using the pattern for}%
\typeout{** the default language instead.}%
\else
\language=\csname l@#1\endcsname
\fi
#2}}
\providecommand{\BIBdecl}{\relax}
\BIBdecl

\bibitem{johnson:svrg}
R.~Johnson and T.~Zhang, ``Accelerating stochastic gradient descent using
  predictive variance reduction,'' in \emph{Proc. Adv. Neural Inf. Process.
  Syst.}, 2013, pp. 315--323.

\bibitem{xiao:prox-svrg}
L.~Xiao and T.~Zhang, ``A proximal stochastic gradient method with progressive
  variance reduction,'' \emph{SIAM J. Optim.}, vol.~24, no.~4, pp. 2057--2075,
  2014.

\bibitem{zhu:Katyusha}
Z.~Allen-Zhu, ``Katyusha: The first direct acceleration of stochastic gradient
  methods,'' in \emph{Proc. 49th ACM Symp. Theory Comput.}, 2017.

\bibitem{krizhevsky:deep}
A.~Krizhevsky, I.~Sutskever, and G.~E. Hinton, ``Image{N}et classification with
  deep convolutional neural networks,'' in \emph{Proc. Adv. Neural Inf.
  Process. Syst.}, 2012, pp. 1097--1105.

\bibitem{sutskever:sgd}
I.~Sutskever, J.~Martens, G.~Dahl, and G.~Hinton, ``On the importance of
  initialization and momentum in deep learning,'' in \emph{Proc. 30th Int.
  Conf. Mach. Learn.}, 2013, pp. 1139--1147.

\bibitem{zhu:vrnc}
Z.~Allen-Zhu and E.~Hazan, ``Variance reduction for faster non-convex
  optimization,'' in \emph{Proc. 33rd Int. Conf. Mach. Learn.}, 2016, pp.
  699--707.

\bibitem{ouyang:sadmm}
H.~Ouyang, N.~He, L.~Q. Tran, and A.~Gray, ``Stochastic alternating direction
  method of multipliers,'' in \emph{Proc. 30th Int. Conf. Mach. Learn.}, 2013,
  pp. 80--88.

\bibitem{liu:avrrg}
Y.~Liu, F.~Shang, and J.~Cheng, ``Accelerated variance reduced stochastic
  {ADMM},'' in \emph{Proc. 31st AAAI Conf. Artif. Intell.}, 2017, pp.
  2287--2293.

\bibitem{qu:svrg}
C.~Qu, Y.~Li, and H.~Xu, ``Linear convergence of {SVRG} in statistical
  estimation,'' \emph{arXiv:1611.01957v2}, 2017.

\bibitem{paquette:catalyst}
C.~Paquette, H.~Lin, D.~Drusvyatskiy, J.~Mairal, and Z.~Harchaoui, ``Catalyst
  acceleration for gradient-based non-convex optimization,''
  \emph{arXiv:1703.10993}, 2017.

\bibitem{duchi:ssgd}
J.~Duchi and F.~Ruan, ``Stochastic methods for composite optimization
  problems,'' \emph{arXiv:1703.08570}, 2017.

\bibitem{recht:psgd}
B.~Recht and C.~R\'{e}, ``Parallel stochastic gradient algorithms for
  large-scale matrix completion,'' \emph{Math. Prog. Comp.}, vol.~5, pp.
  201--226, 2013.

\bibitem{zhang:svrgd}
X.~Zhang, L.~Wang, and Q.~Gu, ``Stochastic variance-reduced gradient descent
  for low-rank matrix recovery from linear measurements,''
  \emph{arXiv:1701.00481v2}, 2017.

\bibitem{schmidt:crf}
M.~Schmidt, R.~Babanezhad, M.~Ahmed, A.~Defazio, A.~Clifton, and A.~Sarkar,
  ``Non-uniform stochastic average gradient method for training conditional
  random fields,'' in \emph{Proc. 18th Int. Conf. Artif. Intell. Statist.},
  2015, pp. 819--828.

\bibitem{shamir:pca}
O.~Shamir, ``A stochastic {PCA} and {SVD} algorithm with an exponential
  convergence rate,'' in \emph{Proc. 32nd Int. Conf. Mach. Learn.}, 2015, pp.
  144--152.

\bibitem{garber:svd}
D.~Garber, E.~Hazan, C.~Jin, S.~M. Kakade, C.~Musco, P.~Netrapalli, and
  A.~Sidford, ``Faster eigenvector computation via shift-and-invert
  preconditioning,'' in \emph{Proc. 33rd Int. Conf. Mach. Learn.}, 2016, pp.
  2626--2634.

\bibitem{zhu:cca}
Z.~Allen-Zhu and Y.~Li, ``Doubly accelerated methods for faster {CCA} and
  generalized eigendecomposition,'' \emph{arXiv:1607.06017v2}, 2017.

\bibitem{nesterov:fast}
Y.~Nesterov, ``A method of solving a convex programming problem with
  convergence rate ${O}(1/k^2)$,'' \emph{Soviet Math. Doklady}, vol.~27, pp.
  372--376, 1983.

\bibitem{nesterov:co}
------, \emph{Introductory Lectures on Convex Optimization: A Basic
  Course}.\hskip 1em plus 0.5em minus 0.4em\relax Boston: Kluwer Academic
  Publ., 2004.

\bibitem{teng:apg}
P.~Tseng, ``On accelerated proximal gradient methods for convex-concave
  optimization,'' \emph{Technical report, University of Washington}, 2008.

\bibitem{beck:fista}
A.~Beck and M.~Teboulle, ``A fast iterative shrinkage-thresholding algorithm
  for linear inverse problems,'' \emph{SIAM J. Imaging Sci.}, vol.~2, no.~1,
  pp. 183--202, 2009.

\bibitem{robbins:sgd}
H.~Robbins and S.~Monro, ``A stochastic approximation method,'' \emph{Ann.
  Math. Statist.}, vol.~22, no.~3, pp. 400--407, 1951.

\bibitem{zhang:sgd}
T.~Zhang, ``Solving large scale linear prediction problems using stochastic
  gradient descent algorithms,'' in \emph{Proc. 21st Int. Conf. Mach. Learn.},
  2004, pp. 919--926.

\bibitem{hu:sgd}
C.~Hu, J.~T. Kwok, and W.~Pan, ``Accelerated gradient methods for stochastic
  optimization and online learning,'' in \emph{Proc. Adv. Neural Inf. Process.
  Syst.}, 2009, pp. 781--789.

\bibitem{bubeck:sgd}
S.~Bubeck, ``Convex optimization: Algorithms and complexity,'' \emph{Found.
  Trends Mach. Learn.}, vol.~8, pp. 231--358, 2015.

\bibitem{rakhlin:sgd}
A.~Rakhlin, O.~Shamir, and K.~Sridharan, ``Making gradient descent optimal for
  strongly convex stochastic optimization,'' in \emph{Proc. 29th Int. Conf.
  Mach. Learn.}, 2012, pp. 449--456.

\bibitem{shamir:sgd}
O.~Shamir and T.~Zhang, ``Stochastic gradient descent for non-smooth
  optimization: Convergence results and optimal averaging schemes,'' in
  \emph{Proc. 30th Int. Conf. Mach. Learn.}, 2013, pp. 71--79.

\bibitem{roux:sag}
N.~L. Roux, M.~Schmidt, and F.~Bach, ``A stochastic gradient method with an
  exponential convergence rate for finite training sets,'' in \emph{Proc. Adv.
  Neural Inf. Process. Syst.}, 2012, pp. 2672--2680.

\bibitem{shalev-Shwartz:sdca}
S.~Shalev-Shwartz and T.~Zhang, ``Stochastic dual coordinate ascent methods for
  regularized loss minimization,'' \emph{J. Mach. Learn. Res.}, vol.~14, pp.
  567--599, 2013.

\bibitem{defazio:saga}
A.~Defazio, F.~Bach, and S.~Lacoste-Julien, ``{SAGA}: A fast incremental
  gradient method with support for non-strongly convex composite objectives,''
  in \emph{Proc. Adv. Neural Inf. Process. Syst.}, 2014, pp. 1646--1654.

\bibitem{zhang:spdc}
Y.~Zhang and L.~Xiao, ``Stochastic primal-dual coordinate method for
  regularized empirical risk minimization,'' in \emph{Proc. 32nd Int. Conf.
  Mach. Learn.}, 2015, pp. 353--361.

\bibitem{schmidt:sag}
M.~Schmidt, N.~L. Roux, and F.~Bach, ``Minimizing finite sums with the
  stochastic average gradient,'' INRIA, Paris, Tech. Rep., 2013.

\bibitem{shalev-Shwartz:prox-sdca}
S.~Shalev-Shwartz and T.~Zhang, ``Accelerated proximal stochastic dual
  coordinate ascent for regularized loss minimization,'' \emph{Math. Program.},
  vol. 155, pp. 105--145, 2016.

\bibitem{shang:fsvrg}
F.~Shang, Y.~Liu, J.~Cheng, and J.~Zhuo, ``Fast stochastic variance reduced
  gradient method with momentum acceleration for machine learning,''
  \emph{arXiv:1703.07948}, 2017.

\bibitem{zhang:svrg}
L.~Zhang, M.~Mahdavi, and R.~Jin, ``Linear convergence with condition number
  independent access of full gradients,'' in \emph{Proc. Adv. Neural Inf.
  Process. Syst.}, 2013, pp. 980--988.

\bibitem{koneeny:mini}
J.~Kone\v{c}n\'{y}, J.~Liu, P.~Richt\'{a}rik, , and M.~Tak\'{a}\v{c},
  ``Mini-batch semi-stochastic gradient descent in the proximal setting,''
  \emph{IEEE J. Sel. Top. Sign. Proces.}, vol.~10, no.~2, pp. 242--255, 2016.

\bibitem{nitanda:svrg}
A.~Nitanda, ``Stochastic proximal gradient descent with acceleration
  techniques,'' in \emph{Proc. Adv. Neural Inf. Process. Syst.}, 2014, pp.
  1574--1582.

\bibitem{lan:rpdg}
G.~Lan and Y.~Zhou, ``An optimal randomized incremental gradient method,''
  \emph{arXiv:1507.02000v3}, 2015.

\bibitem{frostig:sgd}
R.~Frostig, R.~Ge, S.~M. Kakade, and A.~Sidford, ``Un-regularizing: approximate
  proximal point and faster stochastic algorithms for empirical risk
  minimization,'' in \emph{Proc. 32nd Int. Conf. Mach. Learn.}, 2015, pp.
  2540--2548.

\bibitem{lin:vrsg}
H.~Lin, J.~Mairal, and Z.~Harchaoui, ``A universal catalyst for first-order
  optimization,'' in \emph{Proc. Adv. Neural Inf. Process. Syst.}, 2015, pp.
  3366--3374.

\bibitem{babanezhad:vrsg}
R.~Babanezhad, M.~O. Ahmed, A.~Virani, M.~Schmidt, J.~Konecny, and S.~Sallinen,
  ``Stop wasting my gradients: Practical {SVRG},'' in \emph{Proc. Adv. Neural
  Inf. Process. Syst.}, 2015, pp. 2242--2250.

\bibitem{zhu:univr}
Z.~Allen-Zhu and Y.~Yuan, ``Improved {SVRG} for non-strongly-convex or
  sum-of-non-convex objectives,'' in \emph{Proc. 33rd Int. Conf. Mach. Learn.},
  2016, pp. 1080--1089.

\bibitem{frank:cg}
M.~Frank and P.~Wolfe, ``An algorithm for quadratic programming,'' \emph{Naval
  Res. Logist. Quart.}, vol.~3, pp. 95--110, 1956.

\bibitem{hazan:svrf}
E.~Hazan and H.~Luo, ``Variance-reduced and projection-free stochastic
  optimization,'' in \emph{Proc. 33rd Int. Conf. Mach. Learn.}, 2016, pp.
  1263--1271.

\bibitem{shang:vrsgd}
F.~Shang, Y.~Liu, J.~Cheng, K.~W. Ng, and Y.~Yoshida, ``Variance reduced
  stochastic gradient descent with sufficient decrease,''
  \emph{arXiv:1703.06807}, 2017.

\bibitem{hien:asmd}
L.~Hien, C.~Lu, H.~Xu, and J.~Feng, ``Accelerated stochastic mirror descent
  algorithms for composite non-strongly convex optimization,''
  \emph{arXiv:1605.06892v2}, 2016.

\bibitem{woodworth:bound}
B.~Woodworth and N.~Srebro, ``Tight complexity bounds for optimizing composite
  objectives,'' in \emph{Proc. Adv. Neural Inf. Process. Syst.}, 2016, pp.
  3639--3647.

\bibitem{defazio:sagab}
A.~Defazio, ``A simple practical accelerated method for finite sums,'' in
  \emph{Proc. Adv. Neural Inf. Process. Syst.}, 2016, pp. 676--684.

\bibitem{li:svrg}
X.~Li, T.~Zhao, R.~Arora, H.~Liu, and J.~Haupt, ``Nonconvex sparse learning via
  stochastic optimization with progressive variance reduction,'' in \emph{Proc.
  33rd Int. Conf. Mach. Learn.}, 2016, pp. 917--925.

\bibitem{nguyen:srg}
L.~Nguyen, J.~Liu, K.~Scheinberg, and M.~Tak\'{a}\v{c}, ``{SARAH}: A novel
  method for machine learning problems using stochastic recursive gradient,''
  \emph{arXiv:1703.00102v1}, 2017.

\bibitem{zhong:fsadmm}
L.~W. Zhong and J.~T. Kwok, ``Fast stochastic alternating direction method of
  multipliers,'' in \emph{Proc. 31st Int. Conf. Mach. Learn.}, 2014, pp.
  46--54.

\bibitem{zheng:fadmm}
S.~Zheng and J.~T. Kwok, ``Fast-and-light stochastic {ADMM},'' in \emph{Proc.
  25th Int. Joint Conf. Artif. Intell.}, 2016, pp. 2407--2613.

\bibitem{reddi:saga}
S.~J. Reddi, S.~Sra, B.~Poczos, and A.~Smola, ``Proximal stochastic methods for
  nonsmooth nonconvex finite-sum optimization,'' in \emph{Proc. Adv. Neural
  Inf. Process. Syst.}, 2016, pp. 1145--1153.

\bibitem{reddi:sgd}
S.~Reddi, A.~Hefny, S.~Sra, B.~Poczos, and A.~Smola, ``On variance reduction in
  stochastic gradient descent and its asynchronous variants,'' in \emph{Proc.
  Adv. Neural Inf. Process. Syst.}, 2015, pp. 2629--2637.

\bibitem{lee:dsgd}
J.~D. Lee, Q.~Lin, T.~Ma, and T.~Yang, ``Distributed stochastic variance
  reduced gradient methods and a lower bound for communication complexity,''
  \emph{arXiv:1507.07595v2}, 2016.

\bibitem{bengio:deep}
Y.~Bengio, ``Learning deep architectures for {AI},'' \emph{Found. Trends Mach.
  Learn.}, vol.~2, no.~1, pp. 1--127, 2009.

\bibitem{ge:sgd}
R.~Ge, F.~Huang, C.~Jin, and Y.~Yuan, ``Escaping from saddle points — online
  stochastic gradient for tensor decomposition,'' in \emph{Proc. 28th Conf.
  Learn. Theory}, 2015, pp. 797--842.

\bibitem{neelakantan:noise}
A.~Neelakantan, L.~Vilnis, Q.~V. Le, I.~Sutskever, L.~Kaiser, K.~Kurach, and
  J.~Martens, ``Adding gradient noise improves learning for very deep
  networks,'' \emph{arXiv:1511.06807}, 2015.

\bibitem{flammarion:sgd}
N.~Flammarion and F.~Bach, ``From averaging to acceleration, there is only a
  step-size,'' in \emph{Proc. 28th Conf. Learn. Theory}, 2015, pp. 658--695.

\bibitem{tan:sgd}
C.~Tan, S.~Ma, Y.~Dai, and Y.~Qian, ``Barzilai-{B}orwein step size for
  stochastic gradient descent,'' in \emph{Proc. Adv. Neural Inf. Process.
  Syst.}, 2016, pp. 685--693.

\bibitem{zhu:box}
Z.~Allen-Zhu and E.~Hazan, ``Optimal black-box reductions between optimization
  objectives,'' in \emph{Proc. Adv. Neural Inf. Process. Syst.}, 2016, pp.
  1606--1614.

\bibitem{carpenter:lazy}
B.~Carpenter, ``Lazy sparse stochastic gradient descent for regularized
  multinomial logistic regression,'' \emph{Tech. Rep.}, 2008.

\bibitem{langford:online}
J.~Langford, L.~Li, and T.~Zhang, ``Sparse online learning via truncated
  gradient,'' \emph{J. Mach. Learn. Res.}, vol.~10, pp. 777--801, 2009.

\bibitem{nesterov:smooth}
Y.~Nesterov, ``Smooth minimization of non-smooth functions,'' \emph{Math.
  Program.}, vol. 103, pp. 127--152, 2005.

\bibitem{xu:hs}
Y.~Xu, Y.~Yan, Q.~Lin, and T.~Yang, ``Homotopy smoothing for non-smooth
  problems with lower complexity than ${O}(1/\epsilon)$,'' in \emph{Proc. Adv.
  Neural Inf. Process. Syst.}, 2016, pp. 1208--1216.

\bibitem{zhao:prox-smd}
P.~Zhao and T.~Zhang, ``Stochastic optimization with importance sampling for
  regularized loss minimization,'' in \emph{Proc. 32nd Int. Conf. Mach.
  Learn.}, 2015, pp. 1--9.

\bibitem{needell:sgd}
D.~Needell, N.~Srebro, and R.~Ward, ``Stochastic gradient descent, weighted
  sampling, and the randomized {K}aczmarz algorithm,'' \emph{Math. Program.},
  vol. 155, pp. 549--573, 2016.

\bibitem{konecny:s2gd}
J.~Kone\v{c}n\'{y} and P.~Richt\'{a}rik, ``Semi-stochastic gradient descent
  methods,'' \emph{ArXiv Preprint: 1312.1666v2}, 2015.

\bibitem{lan:sgd}
G.~Lan, ``An optimal method for stochastic composite optimization,''
  \emph{Math. Program.}, vol. 133, pp. 365--397, 2012.

\end{thebibliography}

\end{document}